\documentclass[twoside,11pt]{article}
\usepackage{jmlr2e}
\usepackage{reductions,natbib,eucal,graphics,ifpdf,bm,colortbl,rotating}
\usepackage{graphicx}
\usepackage[a4paper]{geometry}
\usepackage{ifpdf}
\ifpdf
	\DeclareGraphicsRule{*}{mps}{*}{}   % Metapost output with .mps extension
\fi

% I'll eventually move these definitions into our reductions.sty file [Mark].
\ifx\theorem\undefined
\newtheorem{theorem}{Theorem}
\newtheorem{lemma}[theorem]{Lemma}

\newtheorem{corollary}[theorem]{Corollary}
\newtheorem{definition}[theorem]{Definition}

\fi

\ifx\BlackBox\undefined
\newcommand{\BlackBox}{\rule{1.5ex}{1.5ex}}  % end of proof
\fi

\ifx\proof\undefined
\newenvironment{proof}{\par\noindent{\bf Proof\ }}{\hfill\BlackBox\\[2mm]}
\fi

\newcommand{\Br}{{B}}    % Had previously distinguished
\newcommand{\Lr}{{L}}
\newcommand{\BB}{\mathbb{B}}

\newcommand{\EE}{\mathbb{E}}

\newcommand{\II}{\mathbb{I}}
\newcommand{\JJ}{\mathbb{J}}
\newcommand{\LL}{\mathbb{L}}
\newcommand{\PP}{\mathbb{P}}
\newcommand{\RR}{\mathbb{R}}
\newcommand{\RRbar}{\overline{\RR}}

\newcommand{\rv}[1]{\mathsf{#1}}
\newcommand{\Ssf}{\rv{S}}
\newcommand{\Xsf}{\rv{X}}
\newcommand{\Ysf}{\rv{Y}}

\newcommand{\Scal}{\mathcal{S}}
\newcommand{\Wb}{\overline{W}}

\newcommand{\KL}{\mathrm{KL}}

\newcommand{\inner}[2]{\left\langle #1,#2 \right\rangle}
\newcommand{\csiszar}{\Diamond}
\newcommand{\cdual}[1]{{#1}^{\csiszar}}
\newcommand{\lf}{\star}
\newcommand{\lfdual}[1]{{#1}^{\lf}}
\newcommand{\ri}{\mathrm{ri}}

% epi-addition and -multiplication
\newcommand{\etimes}{\otimes}

\newcommand{\hh}{\hat{h}}
\newcommand{\heta}{{\hat{\eta}}}
\newcommand{\minimal}[1]{\underline{#1}}
\newcommand{\minL}{\minimal{L}}
\newcommand{\minLL}{\minimal{\LL}}
\newcommand{\SI}{\Delta\minLL}

\newcommand{\sgn}{\operatorname{sgn}}
\newcommand{\aco}{\operatorname{aco}}
\newcommand{\abold}{\mathbf{a}}

\newcommand{\E}[2]{\EE_{#1}\left[ #2 \right]}

\newcommand{\obs}{M}	% Distribution over observations
\newcommand{\ROC}{\mathrm{ROC}}
\newcommand{\AUC}{\mathrm{AUC}}

\title{Information, Divergence and Risk for Binary Experiments} 
\author{Mark D. Reid
\email{Mark.Reid@anu.edu.au}\\
\addr{Australian National University}\\ 
\addr{Canberra ACT 0200, Australia} 
\AND
Robert C. Williamson
\email{Bob.Williamson@anu.edu.au}\\
\addr{Australian National University and NICTA}\\ 
\addr{Canberra ACT 0200, Australia} 
}
\editor{\ }

\begin{document}

\maketitle

\begin{abstract}
We unify $f$-divergences, Bregman divergences, surrogate loss
bounds (regret bounds), proper scoring rules, matching losses, cost curves, 
ROC-curves and information.  We
do this by systematically studying integral and variational representations of
these objects and in so doing identify their primitives  which all are
related to cost-sensitive binary classification.  As well as clarifying
relationships between generative and discriminative views of learning, the new
machinery leads to tight and more general surrogate loss bounds and generalised
Pinsker inequalities relating $f$-divergences to variational divergence.
The new viewpoint illuminates existing algorithms:
it provides a new derivation of Support Vector Machines in terms of divergences
and relates Maximum Mean Discrepancy to Fisher Linear Discriminants.  It also
suggests new techniques for estimating $f$-divergences.
\end{abstract}

%%%%%%%%%%%%%%%%%%%%%%%%%%%%%%%%%%%%%%%%%%%%%%%%%%%%%%%%%%%%%%%%%%%%%%
\section{Introduction}

Machine learning problems often concern binary experiments. 
There it is assumed that observations are drawn
from a mixture of two distributions (one for each class). These
distributions determine many important objects related to the learning problems
they underpin such as risk, divergence and information.  Our aim in this paper
is to present all of these objects in a coherent framework explaining exactly
how they relate to each other.

%%%%%%%%%%%%%%%%%%%%%%%%%
\subsection{Motivation}
There are many different notions that underpin the definition of machine
learning problems. These include information, loss, risk, regret, ROC curves
and the area under them, matching loss functions, Bregman divergences and
distance or divergence between probability distributions. On the surface, the
problem of estimating whether two distributions are the same (as measured by,
say, their Kullback-Leibler divergence) is different to the
minimisation of expected risk in a prediction problem. One of the purposes of
the present paper is to show how this superficial difference is indeed only
superficial --- deeper down they are the same problem and analytical and
algorithmic insights for one can be transferred to the other.

Machine learning as a engineering discipline is still in its 
infancy\footnote{\cite{Bousquet06} has articulated
the need for an agreed vocabulary, a clear statement of the main problems, and
to ``revisit what has been done or discovered so far with a fresh
look''.}. There is
no agreed language to describe  machine learning problems (such is usually
done with an informal mixture of English and mathematics). There is very little
in the way of composability of machine learning solutions. That is, given the
solution to one problem, use it to solve another. Of course one would like to
not merely be able to do this, but to be certain what one might lose in doing
so. In order to do that, one needs to be able to provide theoretical guarantees
on how well the original problem will be solved by solving the surrogate
problem. Related to these issues is the fact that there are no well understood
{\em primitives} for machine learning. Indeed, what does that even mean? All of
these issues are the underlying motivation for this paper.

Our long term goal (towards which this paper is but the first step) is to turn
the field of machine learning into a more well founded engineering discipline
with an agreed language and well understood composition rules. Our motivation
is that until one can start building systems modularly, one is largely
restricted to starting from scratch for each new problem, rather than obtaining
the efficiency benefits of re-use\footnote{
    \cite{AbelsonSussmanSussman1996} described the principles of constructing
    software with the aid of 
	\cite[Chapter 12, paragraph 1]{Locke1690}:
    \begin{quote}
	The acts of the mind, wherein it exerts its power over simple ideas,
	are chiefly these three: (1) {Combining} several {simple ideas} into
	one compound one; and thus all \emph{complex ideas} are made.  (2) The
	second is bringing two ideas, whether simple or complex, together, and
	setting them by one another, so as to take a view of them at once,
	without uniting them into one; by which it gets all its \emph{ideas of
	relations}.  (3) The third is separating them from all other ideas that
	accompany them in their real existence; this is called {abstraction}:
	and thus all its \emph{general ideas} are made 
    \end{quote}
    Modularity is central to computer hardware  \citep{BaldwinClarkxx,
    BaldwinClark06} and other engineering disciplines 
    \citep{GershensonPrasadZhang2003}.
}.

We are comparing problems, not solutions or algorithms. Whilst there have been
attempts to provide a degree of unification at the level of algorithms
\citep{AltunSmola2006}, there are intrinsic limits to such a research program.
The most fundamental is that (surprisingly!) there is no agreed formal
definition of what an algorithm really is, nor how two algorithms can be
compared with a view to determining if they are the same
\citep{BlassGurevich2003}.

We have started with binary experiments because they are simple and widely
used.  As we will show, by pursuing the high level research agenda summarised
above, we have managed to unify all of the disparate concepts mentioned and
furthermore have simultaneously simplified and generalised two fundamental
results: Pinsker inequalities between $f$-divergences and surrogate-loss regret
bounds. The proofs of these new results rely essentially on the decomposition
into primitive problems.

%%%%%%%%%%%%%%%%%%
\subsection{Novelty and Significance}

Our initial goal was to present existing material in a unified way.  We
have indeed done that. In doing so we have developed new (and simpler) proofs
of existing results. Additionally we have developed some novel technical
results:
1) a link between the weighted integral 
representations for proper scoring rules and those for $f$-divergences; 2) a
unified derivation of the integral representations in terms of Taylor series; 3)
use of these representations to derive new bounds for divergences, Bayes
risks and regrets (``surrogate loss bounds'' and Pinsker inequalities); 
4) showing that statistical information (and hence $f$-divergence)
are both Bregman informations; 5) showing connections between variational
representation of risks and divergences; 6) the derivation of SVMs from a
variational perspective; 7) results relating AUC (Area under the ROC Curve) to
divergences.

The significance of these new connections is that they show that the choice of
loss function (scoring rule), $f$-divergence and Bregman divergence (regret)
are intimately related --- choosing one implies choices for the others.
Furthermore we show there are more intuitively usable parameterisations for
$f$-divergences and scoring rules (their corresponding weight functions). The
weight functions have the advantage that if two weight functions match, then the
corresponding objects are identical. That is not the case for the $f$
parametrising an $f$-divergence or the convex function parametrising a Bregman
divergence. As
well as the theoretical interest in such connections, these alternate
representations suggest new algorithms for empirically estimating such
quantities.

%%%%%%%%%%%%%%%%%
\subsection{Background}

Specific results are referred to in the body of the paper. We briefly indicate
the broad sweep of prior work along the lines of the present paper. 

The most
important precursors and inspiration are the three nearly 
simultaneous\footnote{
    \citep{Nguyen:2005}
    is dated 13 October, 2005, \citep{LieVaj06} was received on 26 October
    2005 and \citep{Buja:2005} is dated 3 November 2005.  Shen's PhD thesis 
    \citep{Shen2005},
    which contains most of the material in \citep{Buja:2005}, is dated 16
    October 2005.
}
works by \cite{Buja:2005}, \cite{LieVaj06} and \cite{Nguyen:2005}.  The work by 
\citet{Dawid2007} is very similar in spirit to that presented here. A crucial
difference is that he relies on a parametric viewpoint, and can utilise the
machinery of Riemannian geometry\footnote{
\cite{Zhang:2004,ZhangMatsuzoe2008} have developed a number of connections
between convex functions, the Bregman divergences they induce, and Riemannian
geometry.}. All of the results in the present paper are,
in contrast, ``coordinate-free.''  The \emph{motivation} of the present work is
closely aligned with that of  \citet{Hand1994} whose avowed aim was to
``stimulate debate about the need to formulate research questions sufficiently
precisely that they may be unambiguously and correctly matched with statistical
techniques.''\footnote{\citet{HandVinciotti2003} develop some refined machine
learning tasks that can be viewed  as weighted problems; confer 
\cite{Buja:2005}.}

The paper presents a unification of sorts. This, in itself, is hardly new in
machine learning.
There are different {\em approaches to unification}. One distinction is between
{\em Monistic} and {\em Pluralistic} approaches \citep{James1909,
TurklePapert1992}.

\emph{Monistic} approaches aim for a single all encompassing
theory\footnote{Monistic approaches can be categorised into at least four
distinct categories. They are briefly summarised in
Appendix~\ref{section:monistic}.}.  A problem with most monistic approaches is
that  you have to
accept it ``all or nothing.'' There are many unifying approaches developed
in Statistics and Machine learning that have left  little 
trace\footnote{For example: Nelson's
use of non-standard analysis \citep{Nelson1987, LutzMusio2005} as the
foundations for probability;
Tops{\o}e's \citep{Topsoe:2006},  Shafer and Vovk's \citep{ShaferVovk2001} game
theory as a basis, and Le Cam's use  of Riesz measures on a vector 
lattice to replace the traditional sample space~\citep{LeCam1964}.}.

\emph{Pluralistic} approaches are closer to what is proposed here (where,
instead of searching for a single master representation, we study relationships
and translations between a range of different representations). It 
resonates with Kiefer's assertion that
	``Statistics is too complex to be codified in terms of a simple
	prescription that is a panacea for all settings, and \dots one must look
	as carefully as possible at a variety of possible procedures\dots''
	\citep{Kiefer1977}.
Examples of existing pluralistic attempts include limited problem catalogs such
as for different notions of {\em cost} \citep{Turney2000} or a restricted set
of problems \citep{Raudys:2001}.

The decision theoretic approach \citep{DeGroot1970,Berger1985, Kiefer1987} due
to \cite{Wald1950,Wald1949} is central to the present paper.  The idea of
seeking {\em primitives} for statistics dates back at least to the elementary
experiments of \cite{Birnbaum:1961}.  The relationship between risks and
Bregman divergences is studied by \cite{GrunwaldDawid2004,Buja:2005}.
Summaries of earlier work on surrogate regret bounds and Pinsker bounds are
given in Appendices \ref{section:surrogate} and \ref{section:history}
respectively.

There are numerous possible definitions of information. Many of them are sterile;
\cite{Csiszar1978}  and \cite{Aczel1984} provide a critical analysis. 
\citet{Floridi2004} 
discusses pluralistic versus monistic approach: is there one single
definition of information, or should there be many different definitions
depending on the particular problem? Our view, like \cite{Shannon1948} is that
there are many types. Shannon information was developed with communications
problems in mind --- there is no fundamental reason why it is the only notion
of information that makes sense for learning and inference.

There are many known relationships between risks and
    divergences between distributions many of which we explicitly discuss later
    in the paper\footnote{General results include those due to 
    \cite{Osterreicher2003,
    OsterreicherVajda1993a, Gutenbrunner:1990,LieVaj06,Goel:1979,
    Golic:1987}.  Particular relations between risk in binary
    classification problems and $f$-divergences are not
    new \citep{PoorThomas1977,Kailath:1967}.  Some more general results that
    relate  the choice of loss function in a binary learning problem to
    particular $f$-divergences between the class-conditional distributions have
    been (re)-discovered \citep{EguchiCopas2001, Nguyen:2005,
    OsterreicherVajda1993a}. Known results relating different distances between
    probability distributions are summarised by
    \cite{GibbsSu2002}
    }.
The idea of solving a machine learning problem by using a solution to some
other learning problem is now called {\em machine
learning reductions} 
\citep{BeygelzimerLangfordZadrozny2008,BeygelzimerDaniHayesLangfordZadrozny}\footnote{
The idea is not new.
Equivalences are a natural structuring device and were explicit in Ashby's
foundational work on cybernetics \citep{Ashby1956}, a precursor to Machine
Learning.
\cite{Ben-Bassat:1978} studied the concept of 
    $\epsilon$-equivalence,
    \cite{conover1981rtb} showed how rank
    tests can be derived by applying nonparametric tests to order statistics,
    and \cite{GoldmanRivestSchapire1989,BartlettLongWilliamson1996}  used 
    reductions for  
    theoretical purposes.
    However recently there has been a large number
    of explicit constructions of reductions
    \citep{ZadroznyLangfordAbe2003,
    Langford:2004,
    BeygelzimerDaniHayesLangfordZadrozny,
    Langford:2005,
    LangfordZadrozny2005,
    Langford:2006, 
    LiLin,
    BeygelzimerLangfordRavikumar2007, Langford2007,
    Scott:2007},or development of
    results which although not explicitly called reductions are effectively so
    \citep{Brown:2002,Brown:1996,brown2003dae, Chaudhuri:2002,
    CossockZhang2006,CuevasFraiman1997,Domingos:1999,steinwart2005dld,
    Tasche:2001}.}. 
    Two key differences between the recent machine learning reductions 
    literature
    and the present paper is that our relationships between problems are
    (usually) exact (instead of approximate) and we work with the true
    underlying distributions (rather than finite sample distributions).
The theory of Comparison of Experiments,
 developed by  \cite{Blackwell1951, blackwell1953ece}, and significantly
 extended by \cite{LeCam1964,LeCam1986} is also related to the overall goal set
 out here\footnote{It has been used to define notions of isomorphism for
     statistical problem settings \citep{morse1966si,Sacksteder1967} and is the
     subject of three books \citep{Strasser1985, torgersen1991cse,Heyer1982}
     and a recent review \citep{GoelGinebra2003}.  The key difference with the
     present work is that
     the comparison of experiments theory seeks results that hold 
     for {\em all} loss functions rather
     than for a particular one; with a few exceptions \citep[Chapter
     10]{torgersen1991cse}.  Blackwell related comparisons to {\em
     sufficient statistics} and characterised comparisons.
     \cite{LeCam1964} quantified comparisons in terms of the degree to
     which one experiment is ``better than'' another (the {\em deficiency
     distance}).  There are very few known examples of deficiency distance
     \citep{Carter2002}.  Furthermore LeCam's theory is formulated in a
     particularly abstract way to make its theorems elegant
     \citep{YangCam1999}.  Renowned probabilists concur that its arcane
     formulation has made it inaccessible \citep{vandervaart2002swl,
     Pollard2000, MazanecStrasser2000}.  Consequently the subject has had
     relatively limited impact.  
     }.

     Graphical representations have been used for a long while to better
     understand binary experiments\footnote{In this paper we draw 
     connections between
     Receiver Operating Characteristic (ROC) curves,
     \citep{Fawcett:2006,Fawcett:2004,Flach:2003,Flach:2005,Maxion:2004}
     the Area Under ROC Curve (AUC),
     \citep{cortes2004aov,Hand2008,Hand:2001,HanNei82} and
     Cost Curves \citep{DrummondHolte2006,torgersen1991cse}.}. These can be seen
     as {\em representations} of Binary Experiments.

%%%%%%%%%%%%%%%%%%
\subsection{Outline}
The following is an outline of the main structure of this paper.

Many of the properties of the quantities studied in this paper are directly
derived from well-known properties of \emph{convex functions}. In particular, 
a generalised form of Taylor's theorem and Jensen's inequality underpin many
of the results in this paper.

One of the simplest type of statistical problems is distinguishing between two 
distributions. Such a problem is known as a \emph{binary experiment}. Two 
classes of \emph{measures of divergence} between the distributions are 
introduced: the class of Csisz\'{a}r  $f$-divergences and the class of Bregman 
divergences. 

When additional assumptions are made about a binary experiment ---
specifically, a prior probability for each of the two distributions --- it
becomes possible to talk about \emph{risk and statistical information} of an
experiment that is defined with respect to a loss function.

First, we present a number of results connecting
risk, statistical information, $f$-divergence, and Bregman divergence and
information that are scattered about the literature 
These results show that all of these concepts are intimately related. Second,
we exploit a result that shows that proper scoring rules --- a natural class
of losses for probability estimation --- have a Choquet representation;
\emph{i.e.}~they are expressible as weighted integrals of a family of 
``primitive'' losses, namely the \emph{cost-weighted misclassification losses}.

By combining this characterisation of proper scoring rules with the results
relating risk, information and divergence we are able to identify similar
primitives and weighted integral representations for $f$-divergences,
statistical information, Bregman divergences, and Bregman information. 
These representations simplify the study of these concepts by identifying each
with its corresponding weight function.
These weight functions elucidate several properties of the risks, divergences
and informations they characterise, including their optimality and their
convexity or concavity.
We provide a  ``translation'' between weight functions that
clarifies the relationships between these concepts.  The weight function view
also illuminates various ``graphical representations'' of binary experiments,
such as ROC curves.

Finally, we present two insights obtained from this unification. 
The first is a template for deriving Pinsker-like bounds
on arbitrary $f$-divergences in terms of variational divergence and surrogate
loss bounds which bound the regret of an hypothesis under an arbitrary scoring
rules in terms of its regret under the cost-sensitive misclassification loss.
The bounds we derive are more general than those previously presented.  The
second insight concerns the apparent difference between the Bayes risk (which
involves an optimization) and the $f$-divergence (which does not). Both of
these are equivalent in ways we show. Thus we consider ``variational''
approaches to divergences. One specific  consequence of this is that maximum
mean discrepancy (MMD) --- a kernel approach to hypothesis testing and
divergence estimation --- is essentially SVM learning in disguise.

%%%%%%%%%%%%%%%%%%%%%%%%%%%%%%%%%
\subsection{Notational Conventions}

The substantive objects are defined within the body of the paper. Here we 
collect
elementary notation and the conventions we adopt throughout.
We write $x\wedge y:=\min(x,y)$, $x\vee y :=\max(x,y)$, $(x)_+:= x\vee 0$,
$(x)_-:=x\wedge 0$ and $\test{p}=1$ if $p$ is true and $\test{p}=0$ otherwise.
The generalised function $\delta(\cdot)$ is defined by $\int_a^b \delta(x)
f(x)dx=f(0)$ when $f$ is continuous at $0$ and $a<0<b$. The unit step
$U(x)=\int_{-\infty}^x \delta(t)dt$.
The real numbers are denoted $\reals$, the non-negative reals $\reals^+$ and
the extended reals $\bar{\reals}=\reals\cup\{\infty\}$; the rules of arithmetic
with extended real numbers and the need for them in convex analysis are
explained by \cite{Rockafellar:1970}.
Random variables are written in sans-serif
font: $\mathsf{S}$, $\mathsf{X}$, $\mathsf{Y}$.  Sets are in calligraphic font:
$\mathcal{X}$ (the ``input'' space), $\mathcal{Y}$ (the ``label'' space).  
Vectors are written in bold font: $\abold,\bm{\alpha},\bm{x}\in\reals^m$. We
will often have cause to take expectations ($\EE$) over the random variable
$\mathsf{X}$. We write such quantities in blackboard bold:
$\II$, $\LL$, $\BB$, $\JJ$ etc. The elementary loss is $\ell$, its conditional
expectation w.r.t.  $\mathsf{Y}$ is $L$ and the full expectation (over the
joint distribution $\mathbb{P}$ of
$(\mathsf{X},\mathsf{Y})$) is $\LL$. The
lower bound on quantities with an intrinsic lower bound (e.g. the Bayes optimal
loss) are written with an underbar: $\minL$, $\minLL$. Quantities related by
double integration recur in this paper and we notate the starting point in
lower case, the first integral with upper case, and the second integral in
upper case with an overbar: $w$, $W$, $\Wb$. Estimated quantities are hatted:
$\heta$.  In several places we overload the notation. In all cases careful
attention to the type of the arguments or subscripts reliably disambiguates.

%%%%%%%%%%%%%%%%%%%%%%%%%%%%%%%%%%%%%%%%%%%%%%%%%%%%%%%%%%%%%%%%%%%%%%%%
\section{Convex functions and their representations}\label{sec:convex}
Many of the properties of divergences and losses are best understood through 
properties of the convex functions that define them. One aim of this paper is
to explain and relate various divergences and losses by understanding the
relationships between their primitive functions. The relevant definitions and
theory of convex functions will be introduced as required. Any terms not
explicitly defined can be found in books by \cite{hiriarturruty2001fca} or
\cite{Rockafellar:1970}.

A set $\Scal \subseteq \RR^d$ is said to be \emph{convex} if it is closed 
under linear interpolation. 
That is, for all $\lambda \in [0,1]$ and for all points $s_1,s_2\in\Scal$ the
point $\lambda s_1 + (1-\lambda)s_2 \in \Scal$.
A function $\phi : \Scal \to \RR$ defined on a convex set $\Scal$ is said to be
a (proper) \emph{convex function} if all lines between points on the 
graph of $\phi$ never lie below $\phi$.\footnote{
	The restriction of the values of $\phi$ to $\RR$ will be assumed 
	throughout unless explicitly stated otherwise. This implies the properness
	of $\phi$ since it cannot take on the values $-\infty$ or $+\infty$.
}
That is, for all $\lambda\in[0,1]$ and points $s_1, s_2 \in \Scal$ the function 
$\phi$ satisfies
\begin{equation*}
	\phi(\lambda s_1 + (1-\lambda) s_2) 
	\geq \lambda \phi(s_1) + (1-\lambda) \phi(s_2).
\end{equation*}
A function is said to be \emph{concave} if its additive inverse is convex. That 
is, $\phi : \Scal \to \RR$ is concave if $-\phi$ is concave.

The remainder of this section presents  properties, representations 
and transformations of convex functions that will be used throughout this paper.

\subsection{The Perspective Transform and the Csisz\'{a}r Dual}
\label{sub:perspective}
When $\Scal = \RR^+$ we can define a transformation of a convex 
function $\phi : \RR^+ \to \RRbar $ called 
the \emph{perspective transform} of $\phi$, denoted $I_\phi$ and defined 
for $\tau \in \RR^+$ by
\begin{equation}
	I_\phi(s, \tau) := 
	\begin{cases}
		\tau \phi( s/\tau ), & \tau > 0 , s > 0\\
		0, & \tau = 0, s = 0 \\
		\tau \phi(0), & \tau > 0 , s = 0\\
		s \phi'_{\infty}, & \tau = 0, s > 0
	\end{cases}
	\label{eq:perspective}
\end{equation}
where $\phi(0) := \lim_{s\to 0} \phi(s) \in \RRbar$ and 
$\phi'_{\infty}$ is the \emph{slope at infinity} defined as
\begin{equation}\label{eq:slopeatinfty}
	\phi'_{\infty} 
	:= \lim_{s\to +\infty} \frac{\phi(s_0 + s) - \phi(s_0)}{s}
	= \lim_{s\to +\infty} \frac{\phi(s)}{s}
\end{equation}
for every $s_0\in\Scal$ where $\phi(s_0)$ is finite. This slope at infinity is 
only finite when $\phi(s) = O(s)$, that is, when $\phi$ grows at most linearly 
as $s$ increases. When $\phi'_{\infty}$ is finite it measures the slope of the
linear asymptote.
The function $I_\phi : [0,\infty)^2 \to \RRbar$ is convex 
in both arguments \citep{Hiriart-UrrutyLemarechal1993} and may take on the value 
$+\infty$ when $s$ or $\tau$ is zero. 
It is introduced here as it will form the basis of the $f$-divergences described 
in the next section.\footnote{
	The perspective transform is closely related to \emph{epi-multiplication} 
	which is defined for all
	$\tau\in[0,\infty)$ and (proper) convex functions $\phi$ to be
	$\tau \etimes \phi := s\mapsto \tau \phi(s/\tau)$ for $\tau > 0$ and
	is $0$ when $\tau=s=0$ and $+\infty$ otherwise.
	\cite{Bauschke:2008} provides an excellent summary of the properties 
	of this operation along with its relationship to other operations on convex
	functions.
}

The perspective transform can be used to define the \emph{Csisz\'{a}r dual} 
$\cdual{\phi} : [0,\infty) \to \RRbar$ of a convex function $\phi : \RR^+\to\RR$ 
by letting
\begin{equation}\label{eq:csiszardual}
	\cdual{\phi}(\tau) 
	:= I_{\phi}(1,\tau) 
	= \tau\phi\left(\frac{1}{\tau}\right)
\end{equation}
for all $\tau\in\RR^+$ and $\cdual{\phi}(0) := \phi'_{\infty}$. 
Note that the original $\phi$ can be recovered from $I_\phi$ as 
$\phi(s) = I_f(s,1)$.

The convexity of the perspective transform $I_\phi$ in 
both its arguments guarantees the convexity of the dual $\cdual{\phi}$. Some
simple algebraic manipulation shows that for all $s,\tau \in \RR^+$
\begin{equation}\label{eq:cdualpersp}
	I_{\phi}(s,\tau) = I_{\cdual{\phi}}(\tau,s).
\end{equation}
This observation leads to a natural definition of symmetry for convex functions.
We will call a convex function \emph{$\csiszar$-symmetric} (or simply 
\emph{symmetric} when the context is clear) when its perspective transform
is symmetric in its arguments. That is, $\phi$ is $\csiszar$-symmetric when 
$I_\phi(s,\tau) = I_\phi(\tau, s)$ for all $s,\tau\in[0,\infty)$. 
Equivalently, $\phi$ is symmetric if and only if $\cdual{\phi} = \phi$.

\subsection{The Legendre-Fenchel Dual Representation}
\label{sub:lfdual}
A second important dual operator for convex functions is the  
\emph{Legendre-Fenchel (LF) dual}.
The LF dual $\lfdual{\phi}$ of a function $\phi : \Scal \to \RR$ is a function
defined by
\begin{equation}\label{eq:lfdual}
	\lfdual{\phi}(\lfdual{s})
	:= \sup_{s\in\Scal} \{ \inner{s}{\lfdual{s}} - \phi(s) \}.
\end{equation}
The LF dual of any function is convex and, if the function $\phi$ is convex then
the \emph{LF bidual} is a faithful representation of the original function. That
is,
\begin{equation}\label{eq:bidual}
	\phi^{\lf\lf}(s) 
	= \sup_{\lfdual{s}\in\lfdual{\Scal}} \{ 
		\inner{\lfdual{s}}{s} - \lfdual{\phi}(\lfdual{s}) 
	\}
	= \phi(s).
\end{equation}

When $\phi(s)$ is a function of a real argument $s$ and the derivative 
$\phi'(s)$ exists, the Legendre-Fenchel conjugate $\lfdual{\phi}$ is given by the 
\emph{Legendre transform} \citep{hiriarturruty2001fca,Rockafellar:1970}
\begin{equation}
    \lfdual{\phi}(s)
    = s\cdot (\phi')^{-1}(s) - \phi\left( (\phi')^{-1}(s)\right).
    \label{eq:legendre-transform}
\end{equation}

\subsection{Integral Representations}\label{sub:intrep}
In this paper we are primarily concerned with convex and concave functions 
defined on subsets of the real line. 
A central tool in their analysis is the integral form of their Taylor expansion.
Here, $\phi'$ and $\phi''$ denote the first and second derivatives of $\phi$
respectively.

\begin{theorem}[Taylor's Theorem]
	Let $\Scal = [s_0,s]$ be a closed interval of $\RR$ and let 
	$\phi : \Scal \to \RR$ be differentiable on $[s_0,s]$ and twice 
	differentiable on $(s_0,s)$. Then
	\begin{equation}\label{eq:taylor}
		\phi(s) 
		= \phi(s_0) + \phi'(s_0)(s - s_0) + \int_{s_0}^s (s-t)\,\phi''(t)\,dt .
	\end{equation}
\end{theorem}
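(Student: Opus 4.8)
The plan is to prove \eqref{eq:taylor} by a single integration by parts starting from the Fundamental Theorem of Calculus. Since $\phi$ is differentiable on $[s_0,s]$, we have $\phi(s)-\phi(s_0)=\int_{s_0}^s \phi'(t)\,dt$; for this and the subsequent step to be valid we invoke the standing assumption, implicit in the statement and in any case needed for the right-hand side of \eqref{eq:taylor} to be well defined, that $\phi''$ is integrable on $[s_0,s]$, equivalently that $\phi'$ is absolutely continuous there.

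I would then integrate by parts, but with the deliberate choice of antiderivative $t\mapsto -(s-t)$ for $dt$ (it has derivative $1$), so that the boundary term telescopes. Concretely, taking $u(t)=\phi'(t)$ and $v(t)=-(s-t)$, integration by parts gives $\int_{s_0}^s \phi'(t)\,dt = \bigl[-(s-t)\phi'(t)\bigr]_{t=s_0}^{t=s} + \int_{s_0}^s (s-t)\phi''(t)\,dt$. The factor $(s-t)$ vanishes at $t=s$, so the boundary term is $0+(s-s_0)\phi'(s_0)$; substituting this into $\phi(s)-\phi(s_0)=\int_{s_0}^s\phi'(t)\,dt$ yields exactly \eqref{eq:taylor}.

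An equivalent route, which I would use if I wanted to minimise appeals to the integration-by-parts identity, is to introduce $g(t):=\phi(s)-\phi(t)-\phi'(t)(s-t)$ on $[s_0,s]$. One checks directly that $g'(t)=-(s-t)\phi''(t)$ (the two $\phi'(t)$ contributions cancel), that $g(s)=0$, and that $g(s_0)$ is precisely the remainder $\phi(s)-\phi(s_0)-\phi'(s_0)(s-s_0)$. Applying the Fundamental Theorem of Calculus to $g$ then gives $g(s_0)=-\int_{s_0}^s g'(t)\,dt=\int_{s_0}^s (s-t)\phi''(t)\,dt$, which is the claim. The case $s<s_0$ is covered by the usual orientation convention for the integral, or by relabelling.

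The only genuine obstacle is the regularity bookkeeping: $\phi''$ is assumed to exist only on the open interval $(s_0,s)$, so one must note that its missing values at the two endpoints are irrelevant (they form a null set and do not affect the Lebesgue integral), and that the Fundamental Theorem of Calculus applies under the absolute-continuity hypothesis on $\phi'$. Both points are routine once that hypothesis is made explicit: $\phi$ (respectively $g$) is then an indefinite integral of an integrable function, hence absolutely continuous on $[s_0,s]$, so the FTC and the integration-by-parts identity hold verbatim.
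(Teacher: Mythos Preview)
Your proof is correct and is the standard integration-by-parts derivation of the integral form of the Taylor remainder; the auxiliary-function variant you give is equally valid and yields the same identity. There is nothing to compare against: the paper states Taylor's Theorem as a classical result without proof and immediately moves on to its corollaries, so your argument simply supplies what the paper takes for granted. Your remarks on the needed regularity (integrability of $\phi''$, equivalently absolute continuity of $\phi'$) are apt; the paper is informal on this point and elsewhere allows $\phi''$ to be interpreted distributionally.
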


The argument $s$ appears in the limits of integral in the above theorem and
consequently can be 
awkward to work with. Also, it will be useful to expand $\phi$ about some point
not at the end of the interval of integration. The following corollary of
Taylor's theorem removes these problems by introducing piece-wise linear terms 
of the form $(s-t)_+ = (s-t)\vee 0$.

\begin{corollary}[Integral Representation I]\label{cor:intrep1}
	Let $\phi:[a,b] \to \RR$ be a twice differentiable function. Then, for all 	
	$s,s_0 \in [a,b]$ we have
	\begin{equation}\label{eq:intrep1}
		\phi(s)
		= \phi(s_0) + \phi'(s_0)(s - s_0) 
		+ \int_a^b \phi_{s_0}(s,t)\,\phi''(t)\,dt ,
	\end{equation}
	where 
	\begin{equation}
		\phi_{s_0}(s,t) := 
			\begin{cases}
				(s-t)_+ & s \leq s_0 \\
				(t-s)_+ & s > s_0
			\end{cases}
			\label{eq:phi-s-0-def}
	\end{equation}
	is a piece-wise linear and convex in $s$ for each $s_0, t \in [a,b]$.
\end{corollary}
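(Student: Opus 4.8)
The plan is to deduce the corollary directly from the integral form of Taylor's theorem, equation~\eqref{eq:taylor}, via a case analysis on whether $s$ lies above or below $s_0$.

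First I would observe that the remainder identity behind \eqref{eq:taylor}, namely $\phi(s)=\phi(s_0)+\phi'(s_0)(s-s_0)+\int_{s_0}^s(s-t)\,\phi''(t)\,dt$, holds for \emph{every} pair $s,s_0\in[a,b]$, not merely for $s_0\le s$, under the convention $\int_{s_0}^s=-\int_s^{s_0}$. For $s_0\le s$ it is \eqref{eq:taylor} verbatim; for $s<s_0$ one re-runs the same derivation on $[s,s_0]$, or simply notes that the difference of the two sides vanishes at $s=s_0$ and has derivative zero in $s$ (by the Leibniz rule the derivative of $\int_{s_0}^s(s-t)\,\phi''(t)\,dt$ is $\int_{s_0}^s\phi''(t)\,dt=\phi'(s)-\phi'(s_0)$), so the difference is identically zero.

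Second I would evaluate $\int_a^b\phi_{s_0}(s,t)\,\phi''(t)\,dt$ by splitting the $t$-range at $s_0$. On $[a,s_0]$ the kernel is the hinge $t\mapsto(t-s)_+$, supported on $\{t>s\}$; on $[s_0,b]$ it is the hinge $t\mapsto(s-t)_+$, supported on $\{t<s\}$. Hence when $s>s_0$ the $[a,s_0]$-part vanishes (there $t\le s_0<s$) and the $[s_0,b]$-part reduces to $\int_{s_0}^s(s-t)\,\phi''(t)\,dt$; when $s<s_0$ only the $[a,s_0]$-part survives, giving $\int_s^{s_0}(t-s)\,\phi''(t)\,dt=\int_{s_0}^s(s-t)\,\phi''(t)\,dt$; and when $s=s_0$ the integral is $0$. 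In all three cases this equals the remainder term from the first step, and substituting gives \eqref{eq:intrep1}. The claimed piecewise-linearity and convexity of $s\mapsto\phi_{s_0}(s,t)$ is then immediate, since for fixed $t$ this map is one of the hinges $(s-t)_+$, $(t-s)_+$, each piecewise linear with a single breakpoint at $s=t$ and convex.

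I do not expect a genuine obstacle; the argument is essentially bookkeeping about the supports of the hinge functions. The one point that warrants care is the orientation remark in the first step: Taylor's theorem \eqref{eq:taylor} is stated with the expansion point at the left endpoint of the interval of integration, so the case $s<s_0$ is not literally an instance of it and the remainder identity must be justified there separately before the support argument can be applied uniformly.
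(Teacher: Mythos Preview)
Your strategy is exactly the paper's own one-line justification: show that the hinge kernel restricts the $t$-integration to the interval between $s_0$ and $s$, so that $\int_a^b\phi_{s_0}(s,t)\phi''(t)\,dt$ collapses to the Taylor remainder $\int_{s_0}^s(s-t)\phi''(t)\,dt$.

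There is, however, a mismatch between your reading of $\phi_{s_0}$ and the definition as printed. You treat the case split in \eqref{eq:phi-s-0-def} as a condition on $t$ (``on $[a,s_0]$ the kernel is $(t-s)_+$; on $[s_0,b]$ it is $(s-t)_+$''), but as written the cases are conditions on $s$: for fixed $s\le s_0$ the kernel equals $(s-t)_+$ for \emph{every} $t\in[a,b]$, and for $s>s_0$ it equals $(t-s)_+$ for every $t$. Under that literal reading the support of the kernel is $\{t<s\}$ or $\{t>s\}$, not the interval between $s_0$ and $s$, and \eqref{eq:intrep1} is actually false: with $\phi(t)=t^3$ on $[0,1]$, $s_0=0$, $s=\tfrac12$ the left side is $\tfrac18$ while the right side computes to $\tfrac58$. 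The definition you use --- case on $t$, with $(t-s)_+$ for $t\le s_0$ and $(s-t)_+$ for $t>s_0$ --- is the one that makes the corollary true and is consistent with the paper's own explanatory sentence after the statement; evidently \eqref{eq:phi-s-0-def} contains a typo. Your argument is correct for that intended definition, but you should flag the correction explicitly rather than adopt it silently, since your ``split the $t$-range at $s_0$'' step is a non sequitur for $\phi_{s_0}$ as literally written.
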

This result is a consequence of the way in which the terms $\phi_t$ effectively 
restrict the limits of integration to the interval $(s_0, s) \subseteq [a,b]$
or $(s, s_0) \subseteq [a,b]$ depending on whether $s_0 < s$ or $s_0 \geq s$ 
with appropriate reversal of the sign of $(s-t)$.

\cite{LieVaj06} proved a general version of the above theorem that holds for
functions with discontinuous first derivatives. Since convex functions are 
necessarily continuous, they replace the first derivative $\phi'$ with a 
right-hand derivative $\phi'_+$ (which is guaranteed to exist) and the
second derivative $\phi''$ with the measure $d\phi'_+$. To make the exposition
simpler we will generally assume that the functions we study are suitably 
differentiable (but see the comment below on distributional derivatives).

When $a = 0$ and $b = 1$ a second integral representation for the unit interval can 
be derived from (\ref{eq:intrep1}) that removes the term involving $\phi'$. 
\begin{corollary}[Integral Representation II]\label{cor:intrep2}
	A twice differentiable function $\phi:[0,1]\to\RR$ can be expressed as
	\begin{equation}\label{eq:intrep2}
		\phi(s)
		= \phi(0) + (\phi(1) - \phi(0))s
		- \int_0^1 \psi(s,t)\,\phi''(t)\,dt ,
	\end{equation}
	where $\psi(s,t) = (1-t)s \wedge (1-s)t$ is piece-wise linear and
	concave in $s\in[0,1]$ for each $t\in[0,1]$.
\end{corollary}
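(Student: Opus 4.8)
The plan is to read Integral Representation~II straight off Taylor's theorem~(\ref{eq:taylor}) --- the case $s_0=0$ on the unit interval, which is also the specialisation of Corollary~\ref{cor:intrep1} to $[a,b]=[0,1]$ --- after a single algebraic rewrite of the kernel in terms of $\psi$. First I would expand $\phi$ about $0$: since the map $t\mapsto(s-t)_+$ equals $s-t$ on $[0,s]$ and vanishes on $(s,1]$, equation~(\ref{eq:taylor}) with $s_0=0$ gives, for every $s\in[0,1]$,
\begin{equation*}
\phi(s)=\phi(0)+\phi'(0)\,s+\int_0^1 (s-t)_+\,\phi''(t)\,dt .
\end{equation*}
Evaluating this at $s=1$ (where $(1-t)_+=1-t$ throughout $[0,1]$) records the auxiliary identity $\int_0^1 (1-t)\,\phi''(t)\,dt=\phi(1)-\phi(0)-\phi'(0)$, which I will use to remove the unwanted term $\phi'(0)$.

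The key step is the pointwise kernel identity
\begin{equation*}
(s-t)_+ + \psi(s,t) = s\,(1-t),\qquad s,t\in[0,1],
\end{equation*}
where $\psi(s,t)=(1-t)s\wedge(1-s)t$. This is a two-line case check: when $s\le t$ one has $(1-s)t-(1-t)s=t-s\ge 0$, so $\psi(s,t)=(1-t)s$ and $(s-t)_+=0$; when $s\ge t$ one has $\psi(s,t)=(1-s)t$ and $(s-t)_+=s-t$; in both cases the left-hand side collapses to $s(1-t)$. Substituting $(s-t)_+=s(1-t)-\psi(s,t)$ into the expansion of $\phi$ yields
\begin{equation*}
\phi(s)=\phi(0)+\phi'(0)\,s+s\int_0^1(1-t)\,\phi''(t)\,dt-\int_0^1\psi(s,t)\,\phi''(t)\,dt ,
\end{equation*}
and inserting the auxiliary identity for $\int_0^1(1-t)\,\phi''(t)\,dt$ makes the two occurrences of $\phi'(0)\,s$ cancel, leaving exactly~(\ref{eq:intrep2}). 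The asserted properties of $\psi$ are then immediate: as the pointwise minimum of two functions that are affine in $s$, $\psi(\cdot,t)$ is concave, and it is piecewise linear with its only kink at $s=t$.

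None of these computations is genuinely hard, so the ``main obstacle'' is really just ordering the bookkeeping correctly: one must substitute the kernel identity \emph{before} eliminating $\phi'(0)$, so that the cancellation is automatic; trying to solve for $\phi'(0)$ first leaves a residual integral and obscures the structure. A slicker but less elementary alternative would be to verify~(\ref{eq:intrep2}) directly, by checking that its right-hand side matches $\phi$ at the endpoints $s=0$ and $s=1$ (using $\psi(0,t)=\psi(1,t)=0$) and has second derivative $\phi''$ in $s$ --- the latter because the unit downward kink of $\psi(\cdot,t)$ at $t$ gives $\partial_s^2\psi(s,t)=-\delta(s-t)$ --- but this route leans on the distributional calculus that the paper otherwise prefers to sidestep.
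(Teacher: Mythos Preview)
Your proof is correct and follows essentially the same approach as the paper's: a Taylor expansion followed by a pointwise kernel identity to rewrite the remainder in terms of $\psi$, with the first-derivative term eliminated via an auxiliary identity. The only difference is a mirror symmetry --- you expand about $s_0=0$ and eliminate $\phi'(0)$ using the evaluation at $s=1$, whereas the paper expands about $s_0=1$ and eliminates $\phi'(1)$ via integration by parts of $t\phi''(t)$; the corresponding kernel identities $(s-t)_+ = s(1-t)-\psi(s,t)$ and $(t-s)_+ = t(1-s)-\psi(s,t)$ are exchanged under the symmetry $\psi(s,t)=\psi(t,s)$.
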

The result follows by integration by parts of 
$t\phi''(t)$. The proof can be found in Appendix~\ref{sub:proof-of-intrep2}.
It is used in Section~\ref{sec:intrep} below to obtain an integral 
representation of losses for binary class probability estimation.
This representation can be traced back to \cite{Temple:1954} who
notes that the kernel $\psi(s,t)$ is the Green's function for the differential
equation $\psi'' = 0$ with boundary conditions $\psi(a) = \psi(b) =0$.

Both these integral representations state that the non-linear part of $\phi$ can 
be expressed as a weighted integral of piece-wise linear terms $\phi_{s_0}$ or
$\psi$. 
When we restrict our attention to convex $\phi$ we are guaranteed the
``weights'' $\phi''(t)$ for each of these terms are non-negative. 
Since the measures of risk, information and divergence we examine below do not 
depend on the linear part of these expansions we are able to identify 
convex functions with the weights $w(t) = \phi''(t)$ that define their 
non-linear part.
The sets of piece-wise linear functions $\{\phi_{s_0}(s,t)\}_{t\in[a,b]}$ 
and $\{\psi(s,t)\}_{t\in[0,1]}$ can be though of as families of ``primitive'' 
convex functions from which others can be built through their weighted 
combination.
Representations like these are often called \emph{Choquet representations}
after work by \cite{Choquet:1953} on the representation of compact convex 
spaces \citep{Phelps2001}.

Equation \ref{eq:intrep2} is also valid when $\phi''$ only exists in a
distributional sense \citep{AntosikMikusinskiSikorski1973,Friedlander1982}. 
In fact all of the integral representation results in
this paper are so valid. being able to deal with distributions is essential in
order to understand the weight functions corresponding to the primitive
$f$-divergences and loss functions.

%%%%%%%%%%%%%%%%%%%%%%%%
\subsection{Bregman Divergence}\label{sub:Bregman}
Bregman divergences are a generalisation of the notion of distances between 
points. 
Given a differentiable\footnote{
	Technically, $\phi$ need only be differentiable on the relative interior 
	$\ri(\Scal)$ of $\Scal$. We omit this requirement for simplicity and 
	because it is not relevant to this discussion.
} convex function $\phi : \Scal \to \RR$ 
and two points $s_0, s\in\Scal$
the \emph{Bregman divergence\footnote{
Named in reference to \cite{Bregman1967} although he was not the first 
to consider
such an equation, at least in the one dimensional case
\citep[p.838]{BrunkEwingUtz1957}.}
of $s$ from $s_0$} is defined to be
\begin{equation}\label{eq:bregman}
	B_{\phi}(s,s_0) 
	:= \phi(s) - \phi(s_0) - \inner{s - s_0}{\nabla\phi(s_0)} ,
\end{equation}
where $\nabla\phi(s_0)$ is the gradient of $\phi$ at $s_0$.
A concise summary of many of the properties of Bregman divergences is given by 
\citet[Appendix A]{Banerjee:2005a}.
In particular, Bregman divergences always satisfy $B_\phi(s,s_0)\ge 0$ and 
$B_\phi(s_0,s_0) = 0$ for all $s,s_0\in\Scal$, regardless of 
the choice of $\phi$.
They are not always metrics, however, as they do not always satisfy the triangle 
inequality and their symmetry depends on the choice of $\phi$.

When $\Scal = \RR$ and $\phi$ is twice differentiable, comparing the definition 
of a Bregman divergence in (\ref{eq:bregman}) to the integral representation in 
(\ref{eq:taylor}) reveals that Bregman divergences between real numbers can be 
defined as the non-linear part of the Taylor expansion of $\phi$.
Rearranging (\ref{eq:taylor}) shows that for all $s,s_0\in\RR$ 
\begin{equation}
	\int_{s_0}^s (s-t)\,\phi''(t) dt
	= \phi(s) - \phi(s_0) - (s-s_0)\phi'(s_0)
	= B_\phi(s,s_0)
\end{equation}
since $\nabla\phi = \phi'$ and the inner product is simply multiplication over 
the reals.
This result also holds for more general convex sets $\Scal$. Importantly, it 
intuitively shows why the following holds.
\begin{theorem}
	Let $\phi$ and $\psi$ both be real-valued, differentiable convex functions 
	over the convex set $\Scal$ such that $\phi(s) = \psi(s) + as + b$ for some 
	$a,b\in\RR$. Then, for all $s$ and $s_0$, $B_\phi(s,s_0) = 
	B_\psi(s,s_0)$.
\end{theorem}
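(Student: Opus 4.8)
The plan is to prove this purely by substitution into the definition (\ref{eq:bregman}) of the Bregman divergence: the key observation is that the gradient operator kills the additive constant $b$ and converts the additive linear term $as$ into the additive constant $a$, so the contributions of the affine part $g(s) := as + b$ to $\phi(s) - \phi(s_0)$ and to $\inner{s - s_0}{\nabla\phi(s_0)}$ cancel exactly. The statement is therefore an algebraic identity, requiring no inequality, convexity estimate, or limiting argument; the hypothesis that $\phi$ and $\psi$ are convex is present only so that $B_\phi$ and $B_\psi$ are bona fide Bregman divergences, and it plays no role in the cancellation.

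Concretely, first I would record that $\nabla\phi(s_0) = \nabla\psi(s_0) + a$, reading $as$ as $\inner{a}{s}$ for a suitable $a \in \RR^d$ when $\dim\Scal > 1$ so that $\phi$ remains real-valued; here $\nabla g \equiv a$ because the gradient of $\inner{a}{\cdot}$ is the constant $a$ and the gradient of the constant $b$ is $0$. Substituting $\phi = \psi + g$ and this gradient into (\ref{eq:bregman}) and regrouping,
\begin{align*}
	B_\phi(s,s_0)
	&= \bigl(\psi(s) + \inner{a}{s} + b\bigr) - \bigl(\psi(s_0) + \inner{a}{s_0} + b\bigr) - \inner{s - s_0}{\nabla\psi(s_0) + a} \\
	&= \underbrace{\psi(s) - \psi(s_0) - \inner{s - s_0}{\nabla\psi(s_0)}}_{=\,B_\psi(s,s_0)}
	\;+\; \underbrace{\inner{a}{s} - \inner{a}{s_0} - \inner{s - s_0}{a}}_{=\,0},
\end{align*}
where the final bracket vanishes by bilinearity of $\inner{\cdot}{\cdot}$. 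Hence $B_\phi(s,s_0) = B_\psi(s,s_0)$ for all $s, s_0 \in \Scal$, which is the claim.

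I do not anticipate any real obstacle; the only point worth a sentence is the meaning of the term $as$ in the multidimensional case, which is why I phrase the affine part through the inner product. To connect with the preceding discussion I would also note the one-dimensional shortcut anticipated in the text: when $\Scal = \RR$, Taylor's theorem (\ref{eq:taylor}) gives $B_\phi(s,s_0) = \int_{s_0}^{s} (s - t)\,\phi''(t)\,dt$, and since $(as + b)'' \equiv 0$ we have $\phi'' = \psi''$ pointwise, so the two integrals — and hence the two divergences — coincide. This is precisely the sense in which $B_\phi$ depends only on the non-linear part of $\phi$.
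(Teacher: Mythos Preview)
Your proposal is correct and follows exactly the approach the paper indicates: direct substitution into the definition of the Bregman divergence and cancellation of the affine contribution. The paper itself does not spell out the computation but merely remarks that the proof ``can be obtained directly by substituting and expanding $\psi$ in the definition of a Bregman divergence,'' which is precisely what you have done.
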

A proof can be obtained directly by substituting and expanding $\psi$ in the
definition of a Bregman divergence. 

%%%%%%%%%%%%%%%%%%%%%%%%%%%%%%%%%%%%%%%%%%%%
\subsection{Jensen's Inequality and the Jensen Gap}
A central inequality in the study of convex functions is Jensen's inequality.
It relates the expectation of a convex function applied to random variable
to the convex function evaluated at its mean.
We will denote by $\E{\mu}{\cdot} := \int_{\Scal} \cdot\,d\mu$ expectation over 
$\Scal$ with respect to a probability measure $\mu$ over $\Scal$.

\begin{theorem}[Jensen's Inequality]\label{thm:jensen}
	Let $\phi : \Scal \to \RR$ be a convex function,  
	$\mu$ be a distribution and $\Ssf$ be an $\Scal$-valued 
	random variable (measurable w.r.t. $\mu$) such that 
	$\E{\mu}{|\mathsf{S}|} < \infty$.
	The following inequality holds
	\begin{equation}\label{eq:jensen}
		\JJ_{\mu}[\phi(\Ssf)] 
		:= \E{\mu}{\phi(\Ssf)} - \phi(\E{\mu}{\Ssf})
		\geq 0 .
	\end{equation}
\end{theorem}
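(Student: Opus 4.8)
The plan is to prove the inequality via the existence of a supporting affine minorant (a subgradient) of $\phi$ at the mean $\bar s := \E{\mu}{\Ssf}$. First I would observe that the hypothesis $\E{\mu}{|\Ssf|}<\infty$ makes $\bar s$ well-defined, and that $\bar s\in\Scal$ by convexity of $\Scal$ (suppressing, as the paper does elsewhere, the standard measure-theoretic caveats about boundary behaviour). If $\Ssf=\bar s$ holds $\mu$-almost surely the claim is an equality, so I may assume otherwise, in which case $\bar s$ can be taken to lie in the relative interior $\ri(\Scal)$.

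Next, I would invoke the standard fact that a proper convex function has a subgradient at every point of the relative interior of its domain: there is $g\in\RR^d$ with $\phi(s)\geq\phi(\bar s)+\inner{s-\bar s}{g}$ for all $s\in\Scal$. (In the one-dimensional, twice-differentiable setting that recurs throughout this paper this step needs no separate argument: it is immediate from Corollary~\ref{cor:intrep1} with $s_0=\bar s$, since the remainder $\int_a^b \phi_{\bar s}(s,t)\,\phi''(t)\,dt$ is non-negative because $\phi_{\bar s}(\cdot,t)\geq 0$ by definition and $\phi''\geq 0$ by convexity.) Substituting the random variable into the supporting inequality gives $\phi(\Ssf)\geq\phi(\bar s)+\inner{\Ssf-\bar s}{g}$ pointwise $\mu$-almost surely; the right-hand side is $\mu$-integrable since $\E{\mu}{|\Ssf|}<\infty$, so $\phi(\Ssf)$ is bounded below by an integrable function and $\E{\mu}{\phi(\Ssf)}$ is well-defined (possibly $+\infty$, which is harmless here).

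Finally, taking expectations, using linearity of $\EE_\mu$ and $\E{\mu}{\Ssf}=\bar s$, yields $\E{\mu}{\phi(\Ssf)}\geq\phi(\bar s)+\inner{\E{\mu}{\Ssf}-\bar s}{g}=\phi(\bar s)$, i.e.\ $\JJ_\mu[\phi(\Ssf)]\geq 0$. I expect the only delicate point to be the justification that $\phi$ admits a supporting hyperplane at $\bar s$ — the relative-interior remark above together with the separate treatment of the degenerate case where $\Ssf$ is $\mu$-a.s.\ constant; the rest is the routine act of integrating a pointwise inequality.
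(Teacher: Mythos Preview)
Your argument is correct and is precisely the standard supporting-hyperplane proof of Jensen's inequality. The paper does not actually supply its own proof of this theorem: immediately after the statement it writes ``The proof is straight-forward and can be found in \citep[\S10.2]{Dudley:2003}.'' Your proposal is essentially what one finds in that reference (and most measure-theoretic probability texts), so there is nothing to compare; you have simply filled in what the paper elected to cite.
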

The proof is straight-forward and can be found in \citep[\S10.2]{Dudley:2003}.
Jensen's inequality can also be used to characterise the class of convex 
functions.
If $\phi$ is a function such that (\ref{eq:jensen}) holds for all random 
variables and distributions then $\phi$ must be convex.\footnote{
	This can be seen by considering a distribution with a
	finite, discrete set of points as its support and applying Theorem~4.3
	of \cite{Rockafellar:1970}. 
}
Intuitively, this connection between expectation and convexity is natural since
expectation can be seen as an operator that takes convex combinations of random
variables.

We will call the difference $\JJ_\mu[\phi(\Ssf)]$ the 
\emph{Jensen gap for $\phi(\Ssf)$}.
Many measures of divergence and information studied in the subsequent
sections can be expressed as the Jensen gap of some convex function.
Due to the linearity of expectation, the Jensen gap is insensitive to the 
addition of affine terms to the convex function that defines it: 
\begin{theorem}\label{thm:jensen-affine}
	Let $\phi : \Scal \to \RR$ be convex function and $\Ssf$ and $\mu$
	be as in Theorem~\ref{thm:jensen}. Then for each $a, b \in \RR$
	the convex function $\psi(s) := \phi(s) + as + b$ satisfies
	$\JJ_{\mu}[\phi(\Ssf)] = \JJ_{\mu}[\psi(\Ssf)]$,
	where $\phi_{s_0}$ is as in (\ref{eq:phi-s-0-def}).
\end{theorem}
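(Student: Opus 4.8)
The plan is to verify the identity $\JJ_{\mu}[\phi(\Ssf)] = \JJ_{\mu}[\psi(\Ssf)]$ by direct substitution, exploiting nothing more than the linearity of expectation, exactly as the informal remark preceding the statement suggests. First I would unfold the definition of the Jensen gap for $\psi$ from (\ref{eq:jensen}):
\begin{equation*}
	\JJ_{\mu}[\psi(\Ssf)] = \E{\mu}{\psi(\Ssf)} - \psi(\E{\mu}{\Ssf})
	= \E{\mu}{\phi(\Ssf) + a\Ssf + b} - \bigl(\phi(\E{\mu}{\Ssf}) + a\,\E{\mu}{\Ssf} + b\bigr).
\end{equation*}
Then I would split the first expectation using linearity, $\E{\mu}{\phi(\Ssf) + a\Ssf + b} = \E{\mu}{\phi(\Ssf)} + a\,\E{\mu}{\Ssf} + b$, and observe that the terms $a\,\E{\mu}{\Ssf}$ and $b$ cancel against the corresponding terms coming from $\psi(\E{\mu}{\Ssf})$, leaving precisely $\E{\mu}{\phi(\Ssf)} - \phi(\E{\mu}{\Ssf}) = \JJ_{\mu}[\phi(\Ssf)]$.

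There are two small bookkeeping points worth addressing before the computation. One is that the hypotheses of Theorem~\ref{thm:jensen} must carry over to $\psi$: since $\psi = \phi + as + b$ with $a,b\in\RR$ fixed, $\psi$ is convex (a convex function plus an affine function is convex), and the integrability condition $\E{\mu}{|\Ssf|} < \infty$ together with $\phi$ being real-valued ensures $\psi(\Ssf)$ is $\mu$-integrable, so $\JJ_{\mu}[\psi(\Ssf)]$ is well defined and finite. The other is that the identity $b = \E{\mu}{b}$ uses that $\mu$ is a probability measure (total mass one), which is part of the standing assumption inherited from Theorem~\ref{thm:jensen}.

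I do not anticipate any real obstacle here; the result is an immediate consequence of linearity of the expectation operator and the fact that affine functions commute with it. The only thing to be slightly careful about is not to conflate this with the stronger-looking but analogous statement for Bregman divergences (the preceding unnumbered theorem), which it closely mirrors. I would also note, for the reader, that the trailing clause in the statement referring to $\phi_{s_0}$ and (\ref{eq:phi-s-0-def}) appears to be a vestigial remark that plays no role in the conclusion and can be ignored in the proof. A clean one-line presentation of the substitution above, preceded by the remark that $\psi$ inherits convexity and integrability, suffices.
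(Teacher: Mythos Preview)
Your proposal is correct and matches the paper's own proof essentially line for line: the paper also simply expands $\JJ_{\mu}[\psi(\Ssf)]$, applies linearity of expectation, and cancels the affine terms. Your additional remarks on convexity, integrability, and the vestigial $\phi_{s_0}$ clause are fine embellishments but not part of the paper's argument.
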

The proof is a consequence of the definition of the Jensen gap and the linearity
of expectations and can be found in Appendix~\ref{sub:proof-jensen-affine}.
An implication of this theorem is that when considering sets of convex functions
as parameters to the Jensen gap operator they only need be identified by their
non-linear part. Thus, the Jensen gap operator can be seen to impose an 
equivalence relation over convex functions where two convex 
functions are equivalent if they have the same Jensen gap, that is, if their
difference is affine. 

In light of the two integral representations in 
Section~\ref{sub:intrep}, this means the Jensen gap only depends on the integral
terms in (\ref{eq:intrep1}) and (\ref{eq:intrep2}) and so is completely 
characterised by the weights provided by $\phi''$. Specifically, for 
suitably differentiable $\phi : [a,b] \to \RR$ we have 
\[
	\JJ_\mu[\phi(\Ssf)] = \int_a^b \JJ_\mu[\phi_{s_0}(\Ssf,t)]\,\phi''(t)\,dt.
\]
Since several of the measures of divergence, information and risk we analyse can 
be expressed as a Jensen gap, this observation implies that these quantities can 
be identified with the weights provided by $\phi''$ as it is these that 
completely determine the measure's behaviour.

%%%%%%%%%%%%%%%%%%%%%%%%%%%%%%%%%%%%%%%%%%%%%%%%%%%%%%%%%%%%%%%%%%%%%%%%%%%
\section{Binary Experiments and Measures of Divergence}

The various properties of convex functions developed in the previous section
have many implications for the study of statistical inference.
We begin by considering \emph{binary experiments} $(P,Q)$ where $P$ and $Q$ are
probability measures\footnote{
	We intentionally avoid too many measure theoretic details for the sake of 
	clarity. Appropriate $\sigma$-algebras and continuity can be assumed where
	necessary. 
} over a common space $\Xcal$. We will often consider $P$ the distribution over 
\emph{positive} instances and $Q$ the distribution over \emph{negative} 
instances. The densities of $P$ and $Q$ with respect to some third reference 
distribution $M$ over $\Xcal$ will be defined by $dP = p\,dM$ and $dQ = q\,dM$ 
respectively. Unless stated otherwise we will assume that $P$ and $Q$ are both
absolutely continuous with respect to $M$. (One can always choose $M$
to ensure this by setting $M=(P+Q)/2$; but see the next section.)

There are several ways in which the ``separation'' of $P$ and $Q$ in a binary 
experiment can be quantified. Intuitively, these all measure the difficulty of 
distinguishing between the two distributions on the basis of instances drawn
from their mixture. The further apart the distributions are the easy 
discrimination becomes. This intuition is made precise through the connections
with risk and MMD later in Appendix \ref{sec:appendix-svm-mmd}.

A central statistic in the study of binary experiments and statistical 
hypothesis testing is the likelihood ratio $dP/dQ$. As the following section
outlines, the likelihood ratio is, in the sense of preserving the distinction 
between $P$ and $Q$, the ``best'' mapping from an arbitrary space $\Xcal$ to the 
real line.

%%%%%%%%%%%%%%%%%%%%%%%%

\subsection{Statistical Tests and the Neyman-Pearson Lemma}
\label{sub:NPLemma}

In the context of a binary experiment $(P,Q)$, a \emph{statistical test} is any 
function that assigns each instance $x\in\Xcal$ to either $P$ or $Q$. We will
use the labels 1 and 0 for $P$ and $Q$ respectively and so a statistical test
is any function $r : \Xcal \to \{0,1\}$. In machine learning, a function of this 
type is usually referred to as a \emph{classifier}. The link between tests and 
classifiers is explored further in Section~\ref{sec:risk}.

Each test $r$ partitions the instance space $\Xcal$ into  positive and 
negative \emph{prediction sets}:
\begin{eqnarray*}
	\Xcal^+_r & := & \{ x \in \Xcal\ :\ r(x) = 1 \}  \\
	\Xcal^-_r & := & \{ x \in \Xcal\ :\ r(x) = 0 \}.
\end{eqnarray*}
There are four \emph{classification rates} associated with these predictions 
sets: the true positive rate ($TP$), true negative rate ($TN$), false positive 
rate ($FP$) and the false negative rate ($FN$). For a given test $r$ they are 
defined as follows:
\begin{equation}\label{eq:contingency}
	\begin{array}{cc}
	TP_r := P(\Xcal^+_r)\ &\ FP_r := Q(\Xcal^+_r)\\
	FN_r := P(\Xcal^-_r)\ &\ TN_r := Q(\Xcal^-_r).
	\end{array}
\end{equation}
The subscript $r$ will often be dropped when the test made clear by the 
context.
Since $P$ and $Q$ are distributions over $\Xcal = \Xcal^+_r \cup \Xcal^-_r$ and
the positive and negative sets are disjoint we have that $TP + FN = 1$ and $FP +
TN = 1$.
As a consequence, the four values in (\ref{eq:contingency}) can be summarised by
choosing one from each column.

Often, statistical tests are obtained by applying a threshold $\tau_0$ to a 
real-valued \emph{test statistic} $\tau : \Xcal \to \RR$. 
In this case, the statistical test is $r(x) = \test{\tau(x) \ge \tau_0}$.
This leads to parametrised forms of prediction sets $\Xcal^y_{\tau}(\tau_0) :=
\Xcal^y_{\test{\tau \ge \tau_0}}$ for $y\in\{+,-\}$, and the classification
rates $TP_{\tau}(\tau_0)$, $FP_{\tau}(\tau_0)$, $TN_{\tau}(\tau_0)$, and
$TP_{\tau}(\tau_0)$ which are defined analogously. By varying the threshold 
parameter a range of classification rates can be achieved. 
This observation leads to a well known graphical representation of 
test statistics known as the ROC curve, which is discussed further in 
Section~\ref{sub:ROC}.

A natural question is whether there is a ``best'' statistical test or test 
statistic to use for binary experiments. This is usually formulated in terms of 
a test's power and size.
The \emph{power} $\beta_r$ of the test $r$ for a particular binary experiment 
$(P,Q)$ is a synonym for its true positive rate (that is, $\beta_r := TP_r$ 
and so $1-\beta_r := FN_r$\footnote{
	This is opposite to the usual definition of $\beta_r$ in the statistical
	literature. Usually, $1-\beta_r$ is used to denote the power of a test.
	We have chosen to use $\beta_r$ for the power (true positive rate) 
	as this makes it easier to compare with ROC curves.
}
) and the \emph{size} $\alpha_r$ of same test is just 
its false positive rate $\alpha_r := FP_r$.
Here, ``best'' is considered to be the \emph{uniformly most powerful} (UMP)
test of a given size. That is, a test $r$ is considered UMP of size 
$\alpha\in[0,1]$ if, $\alpha_r = \alpha$ and for all other tests $r'$ such that 
$\alpha_{r'} \le \alpha$ we have $1-\beta_r \ge 1-\beta_{r'}$.
We will denote by $\beta(\alpha):=\beta(\alpha, P,Q)$ the true positive 
rate of a UMP test between $P$ (the null hypothesis) and $Q$ 
(the alternative) at $Q$ with significance $\alpha$. \cite{torgersen1991cse} 
calls
$\beta(\cdot,P,Q)$ the {\em Neyman-Pearson function for the dichotomy} $(P,Q)$.
Formally, for each $\alpha \in [0,1]$, the Neyman-Pearson function $\beta$ 
measures the largest true positive rate $TP_r$ of any measurable classifier 
$r\colon \Xcal \to \{-1,1\}$ that has false positive rate $FP_r$ at most 
$\alpha$. 
That is, \[
	\beta(\alpha) 
	= \beta(\alpha,P,Q) 
	:= \sup_{r} \{TP_r\ \colon\ FP_r \le \alpha\}.
\]

The Neyman-Pearson lemma \citep{Neyman:1933} shows that the likelihood ratio
$\tau^*(x) = dP/dQ(x)$ is the uniformly most powerful test for each choice of 
threshold $\tau_0$. 
Since each choice of $\tau_0\in\RR$ results in a test 
$\test{dP/dQ \ge \tau_0}$ of some size $\alpha\in[0,1]$ we have that\footnote{
	Equation (\ref{eq:beta-in-terms-of-minLL}) in Section~\ref{sub:ROCandRisk} 
	below, shows that $\beta(\alpha)$ is the lower envelope of a family of 
	linear functions of $\alpha$ and is thus concave and continuous. Hence,
	the equality in (\ref{eq:beta-with-NP}) holds.
} 
\begin{equation}\label{eq:beta-with-NP}
	\beta(FP_{\tau^*}(\tau_0)) = TP_{\tau^*}(\tau_0)
\end{equation}
and so varying $\tau_0$ over $\RR$ results in a maximal ROC curve. This too is
discussed further in Section~\ref{sub:ROC}. 

The Neyman-Pearson lemma thus identifies the likelihood ratio $dP/dQ$ as a 
particularly useful statistic. Given an experiment $(P,Q)$ it is, in some sense, 
the best mapping from the space $\Xcal$ to the reals. The next section shows how 
this statistic can be used as the basis for a variety of divergences measures
between $P$ and $Q$.

%%%%%%%%%%%%%%%%%%%%%%%%%%%%%%%%%%%%

\subsection{Csisz\'{a}r $f$-divergences}
\label{sub:fdivergence}

The class of \emph{$f$-divergences} \citep{AliSilvey1966,Csiszar1967} 
provide a rich set of relations that can
be used to measure the separation of the distributions in a binary experiment.
An $f$-divergence is a function that measures the
``distance'' between a pair of distributions $P$ and $Q$ defined over a space
$\mathcal{X}$ of observations. 
% The set of convex functions $f : (0,\infty) \to \RR$ will be denoted $\Fcal$. 
Traditionally, the $f$-divergence of $P$ from $Q$ is defined for any 
convex $f:(0,\infty)\to\RR$ such that $f(1) = 0$. In this case, the 
$f$-divergence is
\begin{equation}\label{eq:fdiv1}
	\II_f(P,Q) = \EE_Q \left[ f\left(\frac{dP}{dQ}\right) \right]
			   = \int_{\Xcal} f\left(\frac{dP}{dQ}\right)\,dQ
\end{equation}
when $P$ is absolutely continuous with respect to $Q$ and equal $\infty$ 
otherwise.\footnote{
	\citet[pg. 34]{Liese2008} give a definition that does not require 
	absolute continuity. 
}

The above definition is not completely well-defined as the behaviour of $f$ is 
not specified at the endpoints of $(0,\infty)$. This is remedied via the 
perspective transform of $f$, introduced in Section~\ref{sub:perspective} above
which defines the limiting behaviour of $f$.
Given convex $f : (0,\infty) \to \RR$ such that $f(1) = 0$ the 
\emph{$f$-divergence of $P$ from $Q$} is 
\begin{equation}\label{eq:fdiv}
	\II_f(P,Q) := 
	\E{M}{I_f(p,q)} =
	\E{\Xsf\sim M}{I_f(p(\Xsf),q(\Xsf))} ,
\end{equation}
where $I_f$ is the perspective transform of $f$.

The restriction that $f(1)=0$ in the above definition is only present to 
normalise $\II_f$ so that $\II_f(Q,Q) = 0$ for all distributions $Q$.
We can extend the definition of $f$-divergences to all convex $f$ 
by performing the normalisation explicitly. 
Since $f\left(\E{Q}{dP/dQ}\right)=f(1)$ this is done most conveniently through the 
definition of the Jensen gap for the function $f$ applied to the random variable 
$dP/dQ$ with distribution $Q$. 
That is, for all convex $f : (0,\infty) \to \RR$ and for all distributions $P$ 
and $Q$
\begin{equation}\label{eq:fdivjensen}
	\JJ_Q\left[ f\left(\frac{dP}{dQ}\right) \right]
	= \II_f(P,Q) - f(1).
\end{equation}

Due to the issues surrounding the behaviour of $f$ at 0 and $\infty$ the 
definitions in (\ref{eq:fdiv1}), (\ref{eq:fdiv}) and (\ref{eq:fdivjensen}) are 
not entirely equivalent. When it is necessary to deal with the limiting
behaviour, the definition in (\ref{eq:fdiv}) will be used. However, the version
in (\ref{eq:fdivjensen}) will be most useful when drawing connections between 
$f$-divergences and various definitions of information in 
Section~\ref{sec:risk} below.

%\subsubsection{Properties}
Several properties of $f$-divergence can be immediately obtained from the above 
definitions. 
The symmetry of the perspective $I_f$ in (\ref{eq:csiszardual}) means that 
\begin{equation}\label{eq:fdivdual}
	\II_f(P,Q) = \II_{\cdual{f}}(Q,P)
\end{equation}
for all distributions $P$ and $Q$, where $\cdual{f}$ is the Csisz\'{a}r dual
of $f$.
The non-negativity of the Jensen gap ensures that $\II_f(P,Q)\ge 0$ for all $P$ 
and $Q$. Furthermore, the affine invariance of the Jensen gap 
(Theorem~\ref{thm:jensen-affine}) implies the same affine invariance for
$f$-divergences.

%\subsubsection{Examples}
Several well-known divergences correspond to specific choices of the function 
$f$ \cite[\S 5]{AliSilvey1966}.
One divergence central to this paper is the \emph{variational divergence} 
$V(P,Q)$ which is obtained by setting $f(t) = |t-1|$ in Equation~\ref{eq:fdiv}.
It is the only $f$-divergence that is a true metric on the space of 
distributions over $\Xcal$ \citep{Khosravifard:2007} and gets its name from its 
equivalent definition in the variational form
\begin{equation}
	\label{eq:fdivvar}
	V(P,Q) = 2 \| P - Q \|_{\infty} 
	:= 2 \sup_{A \subseteq \Xcal} | P(A) - Q(A) |.
\end{equation}
(Some authors define $V$ without the 2 above.)
This form of the variational divergence leads is discussed further in 
Section~\ref{sec:variational}.
Furthermore, the variational divergence is one of a family of ``primitive'' 
$f$-divergences discussed in Section~\ref{sec:intrep}. These are primitive in 
the sense that all other $f$-divergences can be expressed as a weighted sum of 
members from this family. 
 
Another well known $f$-divergence is the Kullback-Leibler (KL) divergence 
$\mathrm{KL}(P,Q)$, obtained by setting $f(t) = t \ln(t)$ in 
Equation~\ref{eq:fdiv}. Others are given in 
Table~\ref{table:symmetric-divergences} in 
Section~\ref{sub:relating-SI-and-fdiv}.

%%%%%%%%%%%%%%%%%%%%%%%%%%%%%%
\subsection{Generative Bregman Divergences}
\label{sub:genBregDiv}
Another measure of the separation of distributions can defined as the expected
Bregman divergence between the densities $p$ and $q$ with respect to the
reference measure $M$. Given a convex function $\phi : \RR^+ \to \RR$ the
\emph{generative Bregman divergence} between the distributions $P$ and $Q$ is
(confer (\ref{eq:fdiv}))
\begin{equation}
	\BB_{\phi}(P,Q) := \E{M}{B_{\phi}(p,q)} =
	\E{\Xsf\sim M}{B_{\phi}(p(\Xsf),q(\Xsf))}.
\end{equation}
We call this Bregman divergence ``generative'' to distinguish it from the 
``discriminative'' Bregman divergence introduced in Section~\ref{sec:risk}
below, where the adjectives ``generative'' and ``discriminative'' are explained
further. 

\cite{Csiszar:1995} notes that there is only one divergence common to the class 
of $f$-divergences and the generative Bregman divergences. 
In this sense, these two classes of divergences are ``orthogonal'' to each 
other.
Their only common point is when the respective convex functions satisfy 
$f(t) = \phi(t) = t \ln t - at + b$ (for $a,b\in\reals$) in which case 
both $\II_f$ and $\BB_\phi$ are the KL divergence. 

%%%%%%%%%%%%%%%%%%%%%%%%%%%%%%%%%%%%%%%%%%%%%%%%%%%%%%%%%%%%%%%%%%%%%%%%%
\section{Risk and Statistical Information}
\label{sec:risk}

The above discussion of $f$-divergences assumes an arbitrary reference measure 
$M$ over the space $\Xcal$ to define the densities $p$ and $q$. 
In the previous section, the choice of reference measure was irrelevant since 
$f$-divergences are invariant to this choice.

In this section an assumption is made that adds additional structure 
to the relationship between $P$ and $Q$. 
Specifically, we assume that the reference measure $M$ is a mixture of these
two distributions. 
That is, $M = \pi P + (1-\pi) Q$ for some $\pi\in(0,1)$. 
In this case, by construction, $P$ and $Q$ are absolutely continuous with 
respect to $M$.
Intuitively, this can be seen as defining a distribution over the observation 
space $\Xcal$ by first tossing a coin with a bias $\pi$ for heads and drawing 
observations from $P$ on heads or $Q$ on tails.

This extra assumption allows us to interpret a binary experiment $(P,Q)$ as an
generalised \emph{supervised binary task} $(\pi,P,Q)$ where the 
positive ($y = 1$) and negative ($y = -1$) \emph{labels} 
$y \in \mathcal{Y} := \{-1,1\}$ are paired with \emph{observations} 
$x\in\mathcal{X}$ through a joint distribution $\PP$ over 
$\mathcal{X}\times\mathcal{Y}$. (We formally define a task later in terms of an
experiment plus loss function.)
Given an observation drawn from $\Xcal$ according to $M$, it is natural to try 
to predict its corresponding label or estimate the probability it was drawn from 
$P$. 

Below we will introduce risk, regret, and proper scoring rules and show 
how these
relate to discriminative Bregman divergence. We then show the connection
between the generative view ($f$-divergence between the class conditional
distributions) and Bregman divergence.

%%%%%%%%%%%%%%%%%%%%%%%%%%%%%%%%%%%%%%%%%%%%%%%%%%%%%%%%%%%%%%%
\subsection{Generative and Discriminative Views}
\label{sub:generative-discriminative}
Traditionally, the joint distribution $\PP$ of inputs $x\in\mathcal{X}$ and
labels $y\in\mathcal{Y}$ is used as the starting point for 
analysing risk in statistical learning theory. To better link risks to 
divergences, our analysis we will consider two related representations of $\PP$.

The \emph{generative} view decomposes $\PP$ into two \emph{class-conditional
distributions} defined as $P(X) := \PP(X|y=1)$, $Q(X) := \PP(X|y=-1)$ for 
all $X\subseteq\mathcal{X}$ and a mixing probability or \emph{prior} $\pi :=
\PP(\mathcal{X},y=1)$. 
The \emph{discriminative} representation decomposes the joint distribution into 
an \emph{observation distribution} $\obs(X) := \PP(X,\mathcal{Y})$ for all 
$X\subseteq\mathcal{X}$ and an \emph{observation-conditional density} or
\emph{posterior} $\eta(x) = \frac{dH}{d\obs}(x)$ where $H(X) := \PP(X,y=1)$.
The terms ``generative'' and ``discriminative'' are used here to suggest a distinction
made by \cite{Ng:2002}: in the generative case, the aim is to model the 
class-conditional distributions $P$ and $Q$ and then use Bayes rule to compute 
the most likely class; in the discriminative case the focus is on estimating 
$\eta(x)$ directly. Although we are not interested in this paper 
in the problems of modelling 
or estimating we find the distinction a useful one\footnote{
    The generative-discriminative distinction usually refers to whether one is
    modelling the process that generates each class-conditional distribution,
    or instead wishes solely to perform well on a discrimination task 
    \citep{Drummond,LasserreCambridgeBishopMinka2006, Minka:2005,
    RubinsteinHastie1997}.
    There has been some recent work relating the two in the sense 
    that if the class
    conditional distributions are well estimated then will one perform well in
    discrimination \citep{LongServedio2006, LongServedioSimon,
    goldberg2001ctu, palmer2006pcb}.
}.

Both these decompositions are exact since $\PP$ can be reconstructed from 
either.
Also, translating between them is straight-forward, since
\[
    \obs = \pi P + (1-\pi)Q\ \ \ \mbox{and}\ \ \  \eta = \pi \frac{dP}{d\obs},
\]
so we will often
swap between $(\eta, \obs)$ and $(\pi, P, Q)$ as arguments to functions for 
risk, divergence and information. 
A graphical representation of the generative and discriminative views of a 
binary task is shown in Figure~\ref{fig:discrim-gen}.

\begin{figure}[t]
	\centering
	%\subfigure[Generative]{
		\includegraphics[width=0.4\textwidth]{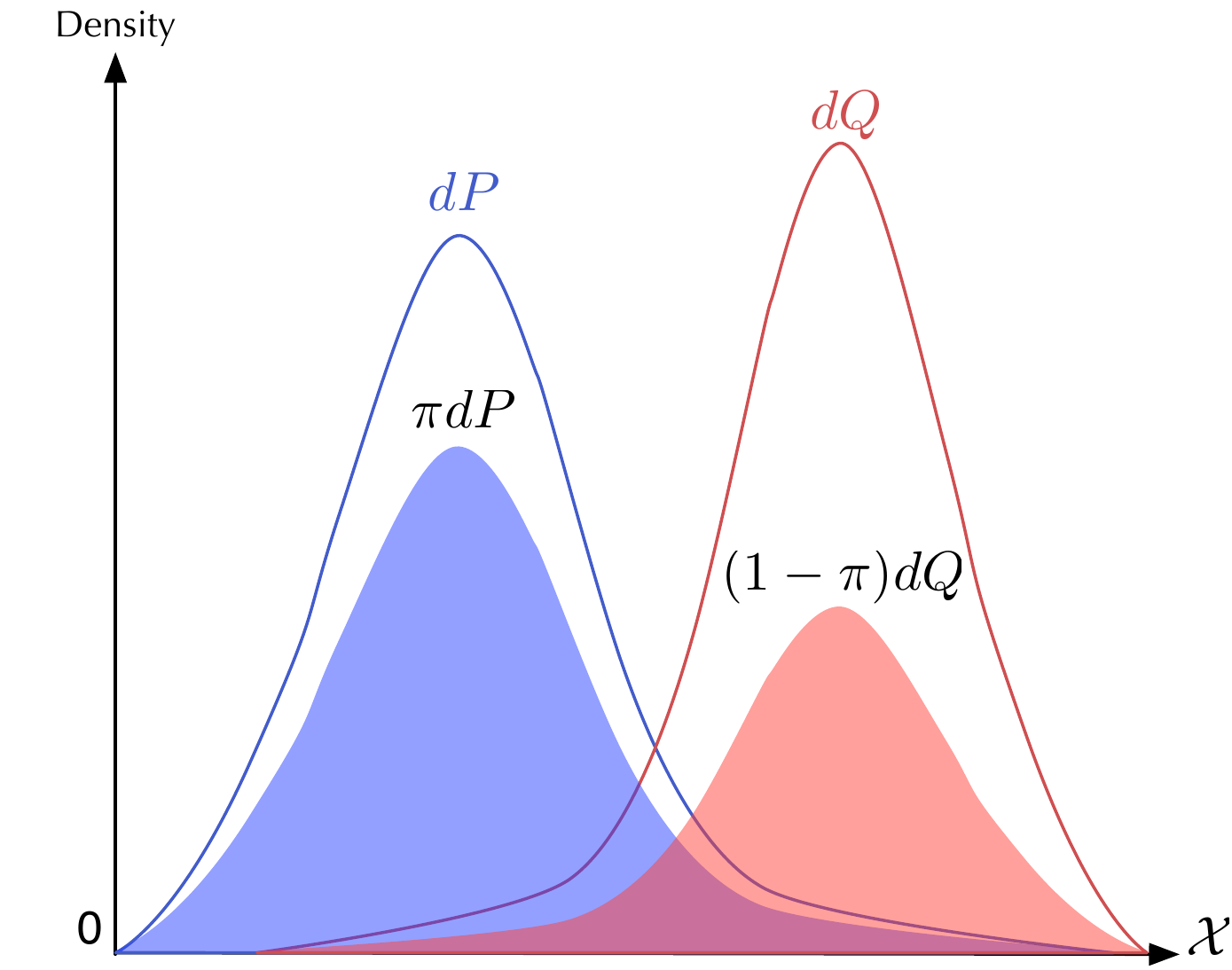}
		\label{fig:generative}
	%}
	%\subfigure[Discriminative]{
		\includegraphics[width=0.4\textwidth]{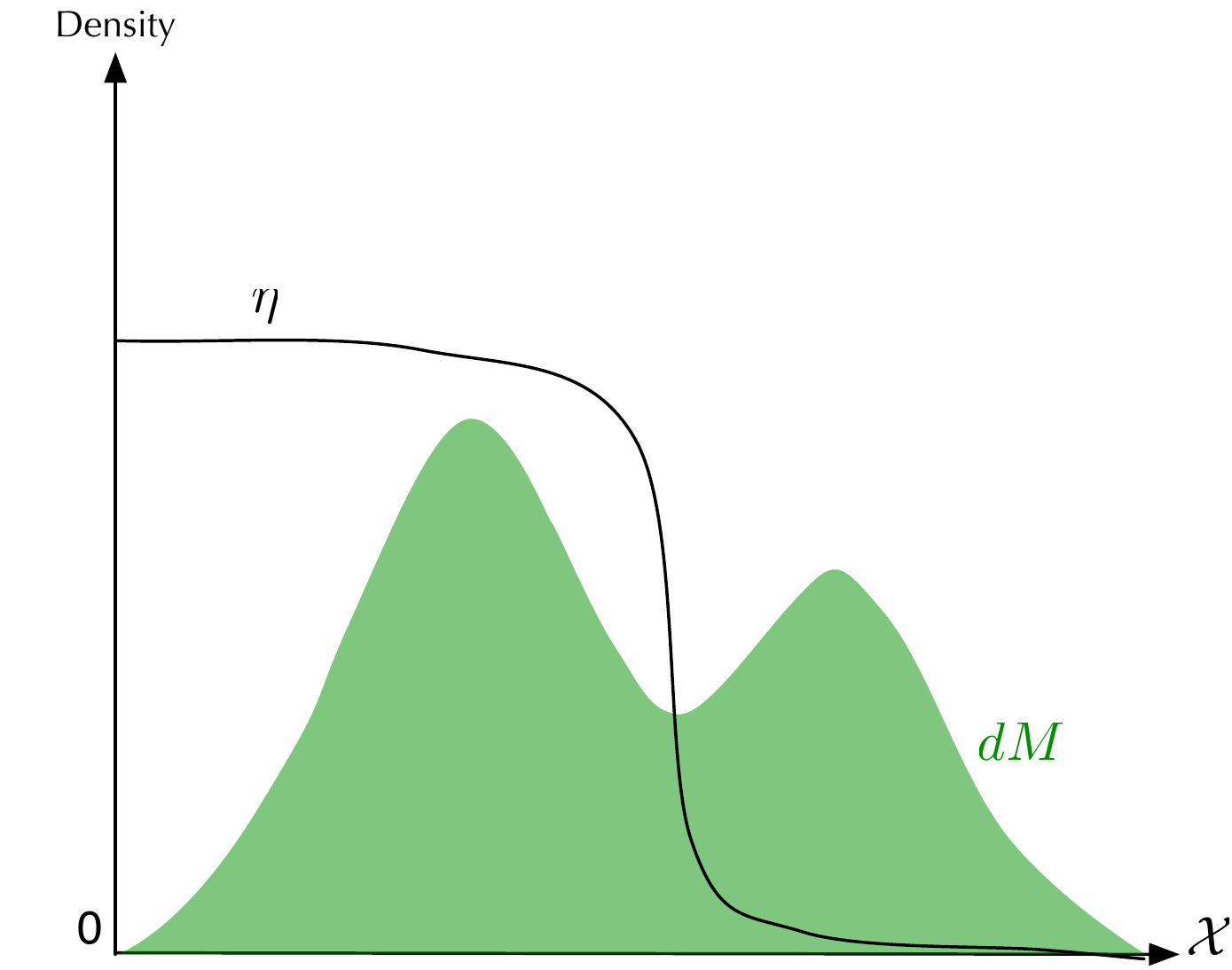}
		\label{fig:discriminative}
	%}
\caption{The generative and discriminative view of binary experiments.
\label{fig:discrim-gen}}
\end{figure}

The posterior $\eta$ is closely related to the likelihood ratio $dP/dQ$ in 
the supervised binary task setting.
For each choice of $\pi\in(0,1)$ this relationship can be expressed a mapping 
$\lambda_\pi : [0,1] \to [0,\infty]$ and its inverse $\lambda^{-1}_\pi$ defined 
by
\begin{eqnarray}
	\lambda_{\pi}(c) & := &
	\frac{1-\pi}{\pi}\frac{c}{1-c}\label{eq:lambda-pi-def} \\
	\lambda^{-1}_{\pi}(t) & = & \frac{\pi t}{\pi t + 1 - \pi}
\end{eqnarray}
for all $c\in[0,1)$ and $t\in[0,\infty)$ and $\lambda_{\pi}(1) := \infty$.
Thus
\[
	\eta = \lambda^{-1}_\pi\left(\frac{dP}{dQ}\right)
	\mbox{\ \ and, conversely,\ \ }
	\frac{dP}{dQ} = \lambda_{\pi}(\eta).
\]
These will be used later when connecting $f$-divergences and risk. 

%%%%%%%%%%%%%%%%%%%%%%%%%%%
\subsection{Estimators, Classifiers and Risk}
We will call a ($\obs$-measurable) function $\heta : \mathcal{X} \to [0,1]$ a
class probability \emph{estimator}. Overloading the notation slightly, we will
also use $\heta = \heta(x) \in [0,1]$ to denote an \emph{estimate}
for a specific observation $x\in\mathcal{X}$. Much of the subsequent arguments
rely on this conditional perspective.

Estimate quality is assessed using a \emph{loss function} 
$\loss : \mathcal{Y}\times [0,1]  \to \RR$ and the loss of the estimate $\heta$
with respect to the label $y\in\mathcal{Y}$ is denoted $\loss(y,\heta)$.
If $\eta\in[0,1]$ is the probability of observing the label $y=1$ the 
\emph{point-wise risk} of the estimate $\heta\in[0,1]$ is defined to be the 
$\eta$-average of the point-wise loss for $\heta$:
\begin{equation}
L(\eta,\heta) := \EE_{\mathsf{Y}\sim\eta}[\loss(\mathsf{Y},\heta)] 
				= \loss(0,\heta) (1-\eta) + \loss(1,\heta) \eta .
	\label{eq:L-eta-heta}
    \end{equation}
(This is what \cite{Steinwart2006} calls the \emph{inner risk}.)
When $\eta : \mathcal{X} \to [0,1]$ is an observation-conditional density, 
taking the $\obs$-average of the point-wise risk gives the \emph{(full) 
risk} of the estimator $\heta$:
\[
	\LL(\eta,\heta,\obs) := \E{M}{L(\eta,\heta)} 
	=\EE_{\Xsf\sim M} [L(\eta(\Xsf),\heta(\Xsf))]
	= \int_{\mathcal{X}} L(\eta(x),\heta(x))\,d\obs(x) 
	=: \LL(\pi,\heta,  P,Q).
\]
The convention of using $\loss$, $L$ and $\mathbb{L}$ for the loss, point-wise 
and full risk is used throughout this paper.

We call the combination of a loss $\loss$ and the distribution $\PP$ a 
\emph{task} and denote it discriminatively as $T = (\eta, \obs; \loss)$ 
or generatively as $T = (\pi, P, Q; \loss)$. A natural measure of the 
difficulty of a task is its minimal achievable risk, or \emph{Bayes risk}: 
\[
	\minLL(\eta, \obs) = \minLL(\pi,P,Q)
	:= \inf_{\heta \in[0,1]^{\mathcal{X}}} \LL(\eta,\heta,\obs)
	= \EE_{\mathsf{X}\sim\obs}\left[\minL(\eta(\mathsf{X}))\right],
\]
where 
\[
[0,1]\ni\eta\mapsto\minL(\eta) := \inf_{\heta\in[0,1]} L(\eta,\heta)
\]
is 
the \emph{point-wise Bayes risk}. Note the use of the underline on $\minLL$ and
$\minL$ to indicate that the corresponding functions $\LL$ and $L$ are 
minimised.  

%%%%%%%%%%%%%
\subsubsection{Proper Scoring Rules}
If $\heta$ is to be interpreted as an estimate of the true positive class
probability $\eta$ then it is desirable to require that $L(\eta,\heta)$ be 
minimised by $\heta = \eta$ for all $\eta\in[0,1]$. Losses that satisfy this
constraint are said to be \emph{Fisher consistent} and are known as 
\emph{proper scoring rules}
\citep{Buja:2005,GneitingRaftery2007}.
That is, a proper scoring rule $\loss$ satisfies $\minL(\eta) = L(\eta,\eta)$ 
for all $\eta\in[0,1]$. 

Proper scoring rules for probability estimation and surrogate margin losses
(confer \citet{BartlettJordanMcAuliffe2006})
for classification are closely related. (Surrogate margin losses are considered
in more detail in Appendix~\ref{section:surrogate}.)
\citet{Buja:2005} note that ``the surrogate criteria of classification
are exactly the primary criteria of class probability estimation'' and that
most commonly used surrogate margin losses are just proper scores mapped from 
$[0,1]$ to $\RR$ via a link function. The main exceptions are hinge 
losses\footnote{
	And powers of absolute divergence $|y-r|^{\alpha}$ for $\alpha \ne 2$.
} which means SVMs are ``the only case that truly bypasses estimation of 
class probabilities and directly aims at classification'' 
\cite[pg. 4]{Buja:2005}.
However, commonly used margin losses of the form $\phi(y F(x))$ are a more 
restrictive class than proper scoring rules since, as 
\citet[\S 23]{Buja:2005} note, ``[t]his dependence on the margin 
limits all theory and practice to a symmetric treatment of class 0 and 
class 1''.

The following important property of proper scoring rules is originally 
attributed to \cite{Savage:1971}.

\begin{theorem}\label{pro:minrisk_concave}
	The point-wise Bayes risk $\minL(\eta)$ for a proper scoring rule
	$\loss$ is concave function. Conversely, given a concave function
	$\Lambda : [0,1] \to \RR$ there exists a proper scoring rule $\loss$ so
	that $\minL(\eta) = \Lambda(\eta)$ and 
	\begin{equation}
		L(\eta,\heta) = \minL(\heta) - (\heta - \eta)\minL'(\heta).
		\label{eq:savage-rep}
	\end{equation}
\end{theorem}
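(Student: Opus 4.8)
The plan is to prove the two directions separately, building on the integral representation machinery (Corollary~\ref{cor:intrep2}) and the affine-invariance of the Jensen gap (Theorem~\ref{thm:jensen-affine}). For the \emph{first} (forward) direction, I would start from the defining property of a proper scoring rule, $\minL(\eta)=L(\eta,\eta)$, together with the fact that for any fixed $\heta$ the map $\eta\mapsto L(\eta,\heta)=\loss(0,\heta)(1-\eta)+\loss(1,\heta)\eta$ is \emph{affine} in $\eta$ (this is immediate from~(\ref{eq:L-eta-heta})). Hence $\minL(\eta)=\inf_{\heta}L(\eta,\heta)$ is a pointwise infimum of a family of affine functions of $\eta$, and an infimum of affine (in particular concave) functions is concave; this gives concavity of $\minL$ directly. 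I would note that no differentiability is needed here, only the elementary fact that a pointwise infimum of concave functions is concave.

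For the \emph{converse} direction, given a concave $\Lambda:[0,1]\to\RR$ I would \emph{define} a loss via~(\ref{eq:savage-rep}): set $L(\eta,\heta):=\Lambda(\heta)+(\eta-\heta)\Lambda'(\heta)$, i.e. the value at $\eta$ of the tangent line to the graph of $\Lambda$ at the point $\heta$. (If $\Lambda$ is not differentiable at some points one uses a supergradient; to keep the exposition clean I would assume $\Lambda$ differentiable, in line with the excerpt's stated convention.) From this $L$ one reads off the partial losses: $\loss(1,\heta)=L(1,\heta)=\Lambda(\heta)+(1-\heta)\Lambda'(\heta)$ and $\loss(0,\heta)=L(0,\heta)=\Lambda(\heta)-\heta\Lambda'(\heta)$, and one checks that plugging these back into~(\ref{eq:L-eta-heta}) indeed reproduces $L(\eta,\heta)=\loss(0,\heta)(1-\eta)+\loss(1,\heta)\eta$, so the definition is consistent. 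Then I would verify the two required properties: (i) $L(\eta,\eta)=\Lambda(\eta)$, which is immediate since the tangent line at $\heta=\eta$ passes through $(\eta,\Lambda(\eta))$; and (ii) properness, $\minL(\eta)=L(\eta,\eta)$, i.e. $L(\eta,\heta)\ge L(\eta,\eta)=\Lambda(\eta)$ for all $\heta$ — but $L(\eta,\heta)$ is the value at $\eta$ of a tangent line to a concave function, which lies \emph{above} the function everywhere, so $L(\eta,\heta)\ge\Lambda(\eta)$, with equality at $\heta=\eta$. This establishes both that $\loss$ is proper and that its pointwise Bayes risk equals $\Lambda$.

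The step I expect to be the main obstacle is the handling of \textbf{non-differentiable} concave $\Lambda$ (equivalently, proper scoring rules with non-differentiable Bayes risk, such as the one generating variational divergence): there $\Lambda'(\heta)$ in~(\ref{eq:savage-rep}) must be read as a choice of supergradient, and one must check that every such choice yields a valid proper scoring rule and that the construction is essentially unique up to this choice. I would remark that the excerpt's standing differentiability convention lets us sidestep this, and point to the distributional integral representation~(\ref{eq:intrep2}) (which remains valid for $\Lambda''$ a measure) as the tool that handles the general case uniformly; a fully rigorous treatment of the non-smooth endpoints would proceed via one-sided derivatives as in \cite{LieVaj06}. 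The remaining computations — the substitution check that the partial losses recover~(\ref{eq:L-eta-heta}), and the supporting-line inequality for concave functions — are routine.
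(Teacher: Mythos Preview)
Your proposal is correct but takes a different route from the paper's own proof. Your argument is the classical Savage/tangent-line approach (essentially the one the paper attributes to \cite{Buja:2005}): concavity of $\minL$ follows because it is a pointwise infimum of the affine-in-$\eta$ functions $\eta\mapsto L(\eta,\heta)$, and the converse is the supporting-line construction for a concave $\Lambda$. The paper instead derives the result as a corollary of its weight-function machinery: it first establishes the Choquet representation (Theorem~\ref{thm:choquet}) and the explicit formulas of Theorem~\ref{theorem:L-w-general-form}, namely $\minL_w(\eta)=-\Wb(\eta)+\eta(\Wb(1)+\Wb(0))-\Wb(0)$ and $L_w(\eta,\heta)=-\Wb(\heta)+W(\heta)(\heta-\eta)+\eta(\Wb(1)+\Wb(0))-\Wb(0)$, then reads off concavity from the convexity of $\Wb$ and verifies~(\ref{eq:savage-rep}) by differentiating the formula for $\minL_w$ and substituting. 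Your approach is more elementary and self-contained, and (as you note) the forward direction does not even use properness; the paper's approach, while heavier, shows that concavity and the Savage representation drop out as immediate consequences of the weight-function parametrisation that is the paper's central organising device. One minor remark: despite your opening sentence, your actual argument makes no use of Corollary~\ref{cor:intrep2} or Theorem~\ref{thm:jensen-affine}.
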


\citet[\S 17]{Buja:2005} provide a proof which relies on Fisher consistency and
the linearity of $L(\eta,\heta)$ in $\eta$ which means the functions $\eta
\mapsto L(\eta,\heta)$ are upper tangents to $\minL(\heta)$ for all
$\heta\in[0,1]$. We provide an alternate proof below immediately 
after the proof of Theorem~\ref{theorem:L-w-general-form} which provides a
general explicit formula for $\minL(\eta)$.

This characterisation of the concavity of $\minL$ means proper scoring rules 
have a natural connection to Bregman divergences.

%%%%%%%%%%%%%%%%%%%%%%%%%%%%%%%%%%%%%%%%%%%%%%%%%%%%%%
\subsection{Discriminative Bregman Divergence}\label{sub:discBregDiv}
Recall from Section~\ref{sub:Bregman} that if $\mathcal{S} \subseteq \RR^d$ is a 
convex set, then a convex function $\phi : \mathcal{S} \to \RR$ defines a 
\emph{Bregman divergence}
\[
	B_\phi(s, s_0) 
	:= \phi(s) - \phi(s_0) - \inner{s - s_0}{\nabla\phi(s_0)}.
\]
When $\mathcal{S} = [0,1]$, the concavity of $\minL$ means $\phi(s) = -\minL(s)$
is convex and so induces the Bregman divergence\footnote{
	Technically, $\mathcal{S}$ is the 2-simplex 
	$\{(s_1,s_2) \in [0,1]^2 : s_1 + s_2 = 1\}$ but we
	identify $s\in[0,1]$ with $(s,1-s)$.}
\[
	B_\phi(s, s_0) 
	= -\minL(s) + \minL(s_0) - (s_0 - s)\minL'(s_0)
	 = L(s,s_0) - \minL(s)
\]
by Theorem~\ref{pro:minrisk_concave}.
The converse also holds. Given a Bregman divergence $B_\phi$ 
over $\mathcal{S}=[0,1]$ the convexity of $\phi$ guarantees that $\minL = -\phi$ 
is concave. 
Thus, we know that there is a proper scoring rule $\loss$ with Bayes risk equal 
to $-\phi$. 
As noted by \citet[\S 19]{Buja:2005}, 
the difference 
\[
	B_\phi(\eta,\heta) = L(\eta,\heta) - \minL(\eta)
\] 
is also known as the \emph{point-wise regret} of the estimate $\heta$ w.r.t. 
$\eta$. 
The corresponding \emph{(full) regret} is the $\obs$-average point-wise regret
\[
\EE_{\Xsf\sim \obs}[B_\phi(\eta(\Xsf),\heta(\Xsf))] = 
    \LL(\eta,\heta) - \minLL(\eta).
\]

%%%%%%%%%%%%%%%%%%%%%%%%%%%%%%%%%%%%%%%%%%%%%%%%%%%%%%%%%%%
\subsection{Bregman Information}\label{sub:BregInfo}
\cite{Banerjee:2005} recently introduced the notion of the \emph{Bregman 
information} $\BB_{\phi}(\mathsf{S})$ of a random variable $\mathsf{S}$ drawn 
according to some distribution $\sigma$ over $\mathcal{S}$. 
It is the minimal $\sigma$-average Bregman divergence that can be achieved by an 
element $s^* \in \mathcal{S}$ (the \emph{Bregman representative}).  In symbols,
\[
\BB_\phi(\mathsf{S}) 
:= \inf_{s\in\mathcal{S}}\EE_{\Ssf\sim\sigma}\left[B_\phi(\mathsf{S}, s)\right]
= \EE_{\Ssf\sim\sigma}\left[ B_\phi(\mathsf{S}, s^*) \right].
\]
The authors show that the mean $\bar{s} := \EE_{\Ssf\sim\sigma}[\mathsf{S}]$, is the 
unique Bregman representative. That is, 
\(
\BB_\phi(\mathsf{S}) 
= \EE_{\sigma}[B_{\phi}(\mathsf{S}, \bar{s})].
\)
Surprisingly, this minimiser \emph{only} depends on $\mathcal{S}$ and $\sigma$, 
not the choice of $\phi$ defining the divergence and is a consequence of 
Jensen's inequality and the form of the Bregman divergence.

Since regret is a Bregman divergence, it is natural to ask what is the 
corresponding Bregman information. In this case, $\phi = -\minL$ and the
random variable $\mathsf{S} = \eta(\mathsf{X}) \in [0,1]$ where 
$\mathsf{X}\in\mathcal{X}$ is distributed 
according to the observation distribution $\obs$. Noting that 
$\EE_{\Xsf\sim \obs}[\eta(\mathsf{X})] = \pi$, the proof of the following theorem stems from the 
definition of Bregman information and some simple algebra showing that 
$\LL(\pi , \eta, \obs) = \minLL(\pi,\obs)$, since by assumption $\loss$ is
proper scoring rule.
\begin{theorem}
	\label{pro:breg_statinfo}
	Suppose $\loss$ is proper scoring rule.
	Given a discriminative task $(\eta, \obs)$ and letting
	$\phi = -\minL$, the corresponding Bregman information of 
	$\eta(\Xsf)$ satisfies
	\[
		\BB_\phi(\eta(\Xsf)) 
		= \minLL(\pi,\obs) - \minLL(\eta,\obs).
	\]
\end{theorem}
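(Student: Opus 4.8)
The plan is to unwind the definition of Bregman information for $\phi = -\minL$ and the random variable $\Ssf = \eta(\Xsf)$ with $\Xsf \sim \obs$, and to recognise the two terms that appear as the two Bayes risks in the statement. First I would invoke the result of \cite{Banerjee:2005} (quoted just above the theorem) that the Bregman representative is the mean: since $\EE_{\Xsf\sim\obs}[\eta(\Xsf)] = \pi$ (which is itself immediate from $H(\mathcal{X}) = \PP(\mathcal{X},y=1) = \pi$ and $\eta = dH/d\obs$), this gives
\[
	\BB_\phi(\eta(\Xsf)) = \EE_{\Xsf\sim\obs}\big[ B_\phi(\eta(\Xsf), \pi) \big].
\]

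Next I would use the identification from Section~\ref{sub:discBregDiv}, namely $B_\phi(\eta,\heta) = L(\eta,\heta) - \minL(\eta)$ when $\phi = -\minL$ and $\loss$ is a proper scoring rule, applied pointwise with $\heta = \pi$:
\[
	\BB_\phi(\eta(\Xsf)) = \EE_{\Xsf\sim\obs}\big[ L(\eta(\Xsf),\pi) \big] - \EE_{\Xsf\sim\obs}\big[ \minL(\eta(\Xsf)) \big].
\]
The second term is exactly $\minLL(\eta,\obs) = \minLL(\pi,P,Q)$ by the definition of Bayes risk. For the first term, the point of the argument is that the \emph{constant} estimator $\heta \equiv \pi$ is Bayes-optimal for the task $(\pi,\obs)$ in which the observation carries no information — formally, $\EE_{\Xsf\sim\obs}[L(\eta(\Xsf),\pi)] = \EE_{\Xsf\sim\obs}[\loss(0,\pi)(1-\eta(\Xsf)) + \loss(1,\pi)\eta(\Xsf)] = \loss(0,\pi)(1-\pi) + \loss(1,\pi)\pi = L(\pi,\pi) = \minL(\pi)$, using linearity of $L(\cdot,\heta)$ in its first argument, $\EE_{\Xsf\sim\obs}[\eta(\Xsf)] = \pi$, and the properness of $\loss$. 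Since $\obs = \pi P + (1-\pi)Q$, this constant is $\minL(\pi) = \minLL(\pi,\obs)$ viewed as the Bayes risk of the "uninformative" experiment. Combining the two terms yields $\BB_\phi(\eta(\Xsf)) = \minLL(\pi,\obs) - \minLL(\eta,\obs)$, as claimed.

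The only mild subtlety — and the step I would be most careful about — is the bookkeeping in the first term: one must be clear that $\minLL(\pi,\obs)$ on the right-hand side denotes the Bayes risk of the task whose posterior is the constant $\pi$ (not $\eta$), so that it equals the scalar $\minL(\pi)$, and that this coincides with $\LL(\pi,\eta,\obs)$ precisely because $\loss$ is proper and $\pi$ is the $\obs$-mean of $\eta(\Xsf)$. Once the notational conventions $\minLL(\pi,\obs) = \minL(\pi)$ and $\minLL(\eta,\obs) = \EE_{\Xsf\sim\obs}[\minL(\eta(\Xsf))]$ are pinned down, everything else is linearity of expectation plus the two facts already proved earlier (properness $\Rightarrow \minL(\eta) = L(\eta,\eta)$, and $B_{-\minL}(\eta,\heta) = L(\eta,\heta) - \minL(\eta)$). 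No new machinery beyond Jensen's inequality — which is already subsumed in the Banerjee et al.\ mean-as-representative fact — is needed.
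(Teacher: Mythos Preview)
Your proposal is correct and follows essentially the same route as the paper: the paper's own proof sketch (given in the paragraph immediately preceding the theorem) says the result ``stems from the definition of Bregman information and some simple algebra showing that $\LL(\pi,\eta,\obs) = \minLL(\pi,\obs)$'', and what you have written is exactly that simple algebra spelled out --- use the Banerjee et al.\ fact that the representative is the mean $\pi$, expand $B_\phi = L - \minL$, and then exploit the affine dependence of $L(\cdot,\pi)$ on its first argument together with properness to identify $\EE_\obs[L(\eta(\Xsf),\pi)] = L(\pi,\pi) = \minL(\pi)$. Your careful discussion of the notational bookkeeping around $\minLL(\pi,\obs)$ versus $\minLL(\eta,\obs)$ is a useful clarification of a point the paper leaves implicit.
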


%%%%%%%%%%%%%%%%%%%%%%%%%%%%%%
\subsubsection{Statistical Information}
The reduction
of risk 
\begin{equation}\label{eq:statinfo}
	\SI(\eta,\obs) = \SI(\pi, P, Q) := \minLL(\pi,\obs) - \minLL(\eta,\obs)
\end{equation}
is known as \emph{statistical information}
and was introduced by \cite{DeGroot1962}.
This reduction can be interpreted as how much risk is removed by knowing
observation-specific class probabilities $\eta$ rather than just the average
$\pi$. 

DeGroot originally introduced statistical information in terms of what he
called an \emph{uncertainty function} which, in the case of binary experiments,
is any 
function $U : [0,1] \to [0,\infty)$. The statistical information is then the 
average reduction in uncertainty which can be expressed as a concave Jensen gap 
\[
	-\JJ_M[U(\eta)] = \JJ_M[-U(\eta)] = U(\E{\Xsf\sim M}{\eta(\Xsf)}) - 
	\E{\Xsf\sim M}{U(\eta(\Xsf))}.
\]
DeGroot noted that Jensen's inequality implies that for this quantity to be 
non-negative the uncertainty function must be concave, that is, $-U$ must be
convex. 

Theorem~\ref{pro:breg_statinfo} shows that statistical
information is a Bregman information and corresponds to the Bregman divergence 
obtained by setting $\phi = -\minL$. This connection readily shows that  
$\SI(\eta, \obs) \ge 0$ \cite[Thm 2.1]{DeGroot1962} since  
the minimiser of the Bregman information is 
$\pi = \EE_{\Xsf\sim \obs}[\eta(\Xsf)]$ 
regardless of loss and $B_\phi(\eta,\pi) \geq 0$ since it is a regret.   

%%%%%%%%%%%%%%%%%%%%%%%%%%%%%%%%%%%%%%%%%%%%%%%%%%
\subsubsection{Unifying Information and Divergence}
From a generative perspective, $f$-divergences can be used to assess the
difficulty of a learning task by measuring the divergence between the 
class-conditional distributions $P$ and $Q$. The more divergent the 
distributions for the two classes, the easier the classification task.
\citet[Thm. 2]{OsterreicherVajda1993a} made this relationship precise by 
showing that $f$-divergence and statistical information have a one-to-one 
correspondence:
\begin{theorem}\label{thm:duality}
If $(\pi, P, Q; \loss)$ is an arbitrary task then defining 
\begin{equation}
	f^\pi(t) 
	:= \minL(\pi) - (\pi t + 1 - \pi)
	\minL\left(\frac{\pi t}{\pi t + 1 - \pi}\right)
	\label{eq:OV-L-to-f}
    \end{equation}
for $\pi\in[0,1]$ implies $f^\pi$ is convex and $f^\pi(1)=0$ and 
\[
	I_{f^\pi}(P,Q) = \SI(\pi, P, Q)
\]
for all distributions $P$ and $Q$.
Conversely, if $f$ is convex and $f(1)= 0$ and $\pi\in[0,1]$ then defining
\[
	\minL^{\pi}(\eta) 
	:=
	-\frac{1-\eta}{1-\pi}f\left(\frac{1-\pi}{\pi}\frac{\eta}{1-\eta}\right)
\]
implies 
\[
	I_{f}(P, Q) = \SI^\pi(\pi, P, Q)
\]
for all distributions $P$ and $Q$.
\end{theorem}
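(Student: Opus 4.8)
Looking at this, I need to prove Theorem~\ref{thm:duality} relating $f$-divergence to statistical information. Let me think about the approach.

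The plan is to work directly from the definitions, using the key observation that statistical information $\SI(\pi,P,Q) = \minLL(\pi,\obs) - \minLL(\eta,\obs)$ can be rewritten as an expectation against the reference measure $M = \obs = \pi P + (1-\pi)Q$, and then massaged into the perspective-transform form $I_{f^\pi}(p,q)$ pointwise.

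First I would expand $\minLL(\eta,\obs) = \E{M}{\minL(\eta(\Xsf))}$ and note that the first term $\minLL(\pi,\obs)$ is just the constant $\minL(\pi)$ since $\obs$ is a probability measure, so $\SI(\pi,P,Q) = \int_\Xcal \big[\minL(\pi) - \minL(\eta(x))\big]\,d\obs(x)$. Then I would substitute $d\obs = \pi\,dP + (1-\pi)\,dQ$ and, pointwise, use the relation $\eta = \lambda_\pi^{-1}(dP/dQ)$ from Section~\ref{sub:generative-discriminative}, i.e. $\eta(x) = \frac{\pi p(x)}{\pi p(x) + (1-\pi)q(x)}$, where $p,q$ are densities w.r.t.\ $M$. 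Writing $t = p(x)/q(x)$ (the likelihood ratio) and noting $d\obs/dQ = \pi t + 1 - \pi$ (equivalently $\obs$-density over $M$ is $\pi p + (1-\pi)q$), the pointwise integrand becomes $(\pi p + (1-\pi) q)\big[\minL(\pi) - \minL\!\big(\tfrac{\pi t}{\pi t + 1-\pi}\big)\big]$ which, dividing and multiplying by $q$, is exactly $q \cdot f^\pi(t)$ with $f^\pi$ as defined in \eqref{eq:OV-L-to-f}. Integrating against $M$ then gives $\E{M}{q(\Xsf) f^\pi(p(\Xsf)/q(\Xsf))} = \E{M}{I_{f^\pi}(p,q)} = I_{f^\pi}(P,Q)$, using the definition \eqref{eq:fdiv} of $f$-divergence via the perspective transform (so that the cases $q=0$ or $p=0$ are handled automatically by $I_{f^\pi}$).

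Next I would verify the two required properties of $f^\pi$. Setting $t=1$ in \eqref{eq:OV-L-to-f} gives $f^\pi(1) = \minL(\pi) - (\pi + 1 - \pi)\minL(\pi) = 0$. Convexity of $f^\pi$ follows because $\minL$ is concave (Theorem~\ref{pro:minrisk_concave}), and $f^\pi$ is, up to an affine term in $t$, the perspective-type transform $-(\pi t + 1-\pi)\minL\big(\tfrac{\pi t}{\pi t+1-\pi}\big)$; since $-\minL$ is convex and $s\mapsto \pi s + (1-\pi)$ and $t\mapsto \tfrac{\pi t}{\pi t+1-\pi}$ compose appropriately, I would appeal to the fact (Section~\ref{sub:perspective}) that the perspective transform of a convex function is jointly convex, hence convex as a function of $t$. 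The converse direction is the same computation run backwards: given convex $f$ with $f(1)=0$, define $\minL^\pi$ as stated, check it is concave (again by the perspective-transform convexity argument, now in the other direction — note $\minL^\pi(\eta) = -\frac{1-\eta}{1-\pi} f\big(\lambda_\pi(\eta)\big)$ where $\lambda_\pi(\eta) = \tfrac{1-\pi}{\pi}\tfrac{\eta}{1-\eta}$), realise it as the pointwise Bayes risk of a proper scoring rule via Theorem~\ref{pro:minrisk_concave}, and then the forward identity applied to this loss yields $I_f(P,Q) = \SI^\pi(\pi,P,Q)$.

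The main obstacle I anticipate is bookkeeping at the boundary: handling the cases where $q(x)=0$ (so the likelihood ratio $t$ is infinite) or $p(x)=0$ or $\eta\in\{0,1\}$, and making sure the identity $I_{f^\pi}(p,q) = (\pi p + (1-\pi)q)[\minL(\pi) - \minL(\cdots)]$ still holds there in the sense of the perspective transform \eqref{eq:perspective} — this requires checking the slope-at-infinity term $(f^\pi)'_\infty$ matches $\lim_{t\to\infty}$ of the expression, i.e.\ relating it to $\minL(1)$ and $\minL'$ at the endpoint, and similarly that $f^\pi(0) = \minL(\pi) - (1-\pi)\minL(0)$ is the correct limit. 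Everything in the interior is a routine substitution; the care is entirely in justifying that the pointwise algebra extends to the closure, which is exactly why the definition \eqref{eq:fdiv} via $I_f$ rather than \eqref{eq:fdiv1} is used. I would state these boundary checks but not belabor them.
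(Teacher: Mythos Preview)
Your proposal is correct and follows essentially the same approach as the paper's proof: both verify $f^\pi(1)=0$ and convexity via the perspective transform of the concave $\minL$, then perform the change of measure between $dQ$ and $dM=\pi\,dP+(1-\pi)\,dQ$ to identify $\E{Q}{f^\pi(dP/dQ)}$ with $\minL(\pi)-\E{M}{\minL(\eta)}$. Your converse, which checks that the forward map applied to $\minL^\pi$ recovers the original $f$ (indeed one computes $\minL^\pi(\pi)=-f(1)=0$ and the perspective factor cancels to give $f^\pi=f$ exactly), is a slightly cleaner variant of the paper's direct substitution, but not a different route; your extra attention to the boundary cases via the perspective definition \eqref{eq:fdiv} is more careful than the paper, which works directly with \eqref{eq:fdiv1}.
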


The proof, given in Appendix~\ref{app:infodiv-duality}, is a straight-forward 
calculation that exploits the relationships between the generative and 
discriminative views presented earlier.  
Combined with the link between Bregman and statistical information, 
this result means that they and $f$-divergences are \emph{interchangeable} as 
measures of task difficulty. The theorem leads to some correspondences between
well known losses and divergence:
log-loss with $\mathrm{KL}(P,Q)$; square loss with triangular 
discrimination; and 0-1 loss with $V(P,Q)$. (See
Section~\ref{section:example-squared-loss} for an explicitly
worked out example.)

This connection generalises the link between $f$-divergences 
and $F$-errors (expectations of concave functions of $\eta$) in 
\cite{DevGyoLug96} and can be compared the more recent work 
of \cite{Nguyen:2005} who show that each $f$-divergence 
corresponds to the negative Bayes risk for a \emph{family} of surrogate margin 
losses. The one-to-many nature of their result may seem at odds with the 
one-to-one relationship here. However, the family of margin losses given in 
their work can be recovered by combining the proper scoring rules with link 
functions. Working with proper scoring rules also addresses a limitation
pointed out by \citet[pg. 14]{Nguyen:2005}, namely that 
``asymmetric $f$-divergences cannot be generated by 
\emph{any} (margin-based) surrogate loss function'' and extends their analysis 
``to show that asymmetric $f$-divergences can be realized by general 
(asymmetric) loss functions''. 

\subsection{Summary}
The main results of this section can be summarised as follows.
\begin{theorem}
	Let $f : [0,\infty) \to \RR$ be a convex function and for each 
	$\pi \in [0,1]$ define for $c\in[0,1)$:
	\begin{eqnarray}
		\phi(c) & := & \frac{1-c}{1-\pi}f\left(\lambda_{\pi}(c)\right) \\
		\minL(c) & := & -\phi(c)
	\end{eqnarray}
	where $\lambda_\pi$ is  defined by (\ref{eq:lambda-pi-def}).
	Then for every binary experiment $(P,Q)$ we have
	\begin{equation}
		\II_f(P,Q) = \SI(\eta,M) = \BB_{\phi}(\eta, M)
	\end{equation}
	where $M := \pi P + (1-\pi) Q$, $\eta := \pi dP/dM$ and $\minLL$ is the
	expectation (in $\mathsf{X}$) of the conditional Bayes risk $\minL$.
	Equivalently,	
	\begin{equation}
		\JJ_{Q}[f(dP/dQ)] = \JJ_{M}[-\minL(\eta)] = \JJ_{M}[\phi(\eta)].
	\end{equation}
\end{theorem}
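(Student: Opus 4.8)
The plan is to assemble three facts already available --- Savage's characterisation of the Bayes risk of a proper scoring rule (Theorem~\ref{pro:minrisk_concave}), the identification of statistical information with a Bregman information (Theorem~\ref{pro:breg_statinfo}), and the duality between $f$-divergence and statistical information (Theorem~\ref{thm:duality}) --- after first supplying the one ingredient none of them gives directly: that the function $\phi$ built from $f$ is genuinely convex. Granting that, $\minL=-\phi$ is concave, so by Theorem~\ref{pro:minrisk_concave} there is a proper scoring rule whose point-wise Bayes risk is this $\minL$, and all the remaining equalities are bookkeeping relating the two parametrisations $(\pi,P,Q)$ and $(\eta,M)$ through the link maps $dP/dM$, $dQ/dM$ and $dP/dQ=\lambda_\pi(\eta)$.

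First I would check convexity of $\phi$. Writing the definition out, $\phi(c)=\frac{1-c}{1-\pi}\,f\!\left(\frac{(1-\pi)c}{\pi(1-c)}\right)$, which is precisely the perspective transform evaluated at $I_f(c/\pi,\,(1-c)/(1-\pi))$; since $(c/\pi,(1-c)/(1-\pi))$ is an affine function of $c$ with positive second coordinate on $[0,1)$, and the perspective transform is jointly convex (Section~\ref{sub:perspective}), the composition $\phi$ is convex. Next I would record the elementary identities coming straight from $M=\pi P+(1-\pi)Q$ and $\eta=\pi\,dP/dM$: namely $dQ/dM=(1-\eta)/(1-\pi)$, $dP/dQ=\lambda_\pi(\eta)$, $\E{M}{\eta}=\pi$, and $\phi(\pi)=f(1)$. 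A one-line change of measure then gives $\E{M}{\phi(\eta)}=\E{Q}{f(dP/dQ)}$, whence $\JJ_M[\phi(\eta)]=\E{M}{\phi(\eta)}-\phi(\pi)=\E{Q}{f(dP/dQ)}-f(1)=\JJ_Q[f(dP/dQ)]$, using $\E{Q}{dP/dQ}=1$. For the middle term, $\JJ_M[-\minL(\eta)]=\minL(\E{M}{\eta})-\E{M}{\minL(\eta)}=\minL(\pi)-\E{M}{\minL(\eta)}=\minLL(\pi,M)-\minLL(\eta,M)=\SI(\eta,M)$, directly from the definitions of Bayes risk and statistical information together with $\E{M}{\eta}=\pi$; and this equals $\JJ_M[\phi(\eta)]$ since $\phi=-\minL$. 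That establishes the Jensen-gap line.

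The equality $\SI(\eta,M)=\BB_\phi(\eta,M)$ is then exactly Theorem~\ref{pro:breg_statinfo} applied to the proper scoring rule produced above (the Bregman representative being the mean $\pi$ irrespective of $\phi$), and $\II_f(P,Q)=\SI(\pi,P,Q)$ is the converse half of Theorem~\ref{thm:duality}, whose $\minL^\pi$ coincides with our $\minL$ because $\lambda_\pi(\eta)=\frac{1-\pi}{\pi}\frac{\eta}{1-\eta}$; alternatively this last equality can be read off directly, since under the perspective-transform definition (\ref{eq:fdiv}) one has $I_f(p,q)=\phi(\eta)$ pointwise (including the $q=0$ case via $\phi'_\infty$), so $\II_f(P,Q)=\E{M}{\phi(\eta)}$. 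Chaining the equalities yields both displays. The only genuine content beyond bookkeeping is the convexity check; the place to be careful is the normalisation --- the chain $\II_f=\SI=\BB_\phi$ is an equality only when $f(1)=0$ (in general $\II_f(P,Q)=\SI(\eta,M)+f(1)$, since $\phi(\pi)=f(1)$), whereas the Jensen-gap chain holds for every convex $f$ because the Jensen gap (Theorem~\ref{thm:jensen-affine}) and the Bregman information (Section~\ref{sub:Bregman}) are both unchanged by adding an affine term to $f$, hence to $\phi$. This normalisation point should be stated explicitly so the two displays are not read as unconditionally identical.
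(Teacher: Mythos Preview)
Your proposal is correct and matches the paper's treatment: this theorem is presented in the paper as a summary of Theorems~\ref{pro:minrisk_concave}, \ref{pro:breg_statinfo}, and \ref{thm:duality}, with no separate proof given, so assembling those three results is exactly the intended argument. Your explicit convexity check for $\phi$ via the perspective transform and your careful remark on the $f(1)=0$ normalisation (needed for the first display but absorbed by the Jensen gap in the second) are both appropriate refinements that the paper leaves implicit.
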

What this says is that for each choice of $\pi$ the classes 
of $f$-divergences $\II_f$, statistical informations $\SI$ and (discriminative) 
Bregman informations $\BB_\phi$ can all be defined in terms of the Jensen gap of 
some convex function. Additionally, there is a bijection between each of these
classes due to the mapping $\lambda_{\pi}$ that identifies likelihood ratios
with posterior probabilities.

It is important to note that the class of $f$-divergences is more ``primitive''
than the other measures since its definition does not require the extra 
structure that is obtained by assuming that the reference measure $M$ can be
written as the convex combination of the distributions $P$ and $Q$. 
Indeed, each $\II_f$ is invariant to choice of reference measure and so is
invariant to the choice of $\pi$.
The results in the next section provide another way of looking at this 
invariance of $\II_f$. 
In particular, we see that every $f$-divergence is a weighted ``average'' of
statistical informations or, equivalently, $\II_{f_\pi}$ divergences.

\section{Primitives and Weighted Integral Representations}
\label{sec:intrep}

When given a class of functions like $f$-divergences, risks and measures of 
information it is natural to ask what the ``simplest'' elements of these classes 
are. 
We would like to know which functions are ``primitive'' in the sense that they 
can be used to express other measures but themselves cannot be so expressed.

The main result of this section is that risks and $f$-divergences (and therefore 
also statistical and Bregman information) can be expressed as weighted integrals 
of these primitive elements. In the case of $f$-divergences and information the 
weight function in these integrals completely determines their behaviour. This 
means the weight functions can be used as a proxy for the analysis of these
measures, or as a ``knob'' the user can adjust in choosing what to measure.

We also show that the close
relationships between information and $f$-divergence can be directly translated
into a relationship between the weight functions of these measures. 
That is, given the weight function that determines an $f$-divergence there is,
for each choice of the prior $\pi$, a simple transformation that yields the
weight function for the corresponding statistical information, and \emph{vice
versa}.

%%%%%%%%%%%%%%%%%%%%%%%%%%%%%%%%%%%%%%%%%%%%%%%%%%%%%%%%%%%%%
\subsection{Integral Representations of $f$-divergences}

The following result shows that the class of $f$-divergences (and, by the result 
of the previous section, statistical and Bregman information) is closed under 
linear combination.
\begin{theorem}
	For all convex functions $f_1, f_2 \colon (0,\infty)\to\RR$ and 
	all $\alpha_1, \alpha_2 \in [0,\infty)$, the function
	\begin{equation}\label{eq:convexg}
		(0,\infty)\ni t\mapsto g(t) := \alpha_1 f_1(t) + \alpha_2 f_2(t)
	\end{equation} 
	is convex. 
	Furthermore, for all distributions $P$ and $Q$, we have
	\begin{equation}\label{eq:convex-fdiv}
		\II_g(P,Q) = \alpha_1 \II_{f_1}(P,Q) + \alpha_2 \II_{f_2}(P,Q).
	\end{equation}
	Conversely, given $f_1$, $f_2$, $\alpha_1$ and $\alpha_2$, if 
	(\ref{eq:convex-fdiv}) holds for all $P$ and $Q$ then $g$ must be, up to 
	affine additions, of the form (\ref{eq:convexg}). 
\end{theorem}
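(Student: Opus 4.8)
The plan is to handle the three assertions in turn; only the converse requires genuine work. Convexity of $g$ is immediate, since a nonnegative combination of convex functions is convex. For the identity (\ref{eq:convex-fdiv}) I would first record the routine fact that the perspective transform is linear in its convex argument: $I_{\alpha_1 f_1 + \alpha_2 f_2}(s,\tau) = \alpha_1 I_{f_1}(s,\tau) + \alpha_2 I_{f_2}(s,\tau)$ for all $s,\tau\in[0,\infty)$. This is checked against the four cases of (\ref{eq:perspective}): on $\{s,\tau>0\}$ it is simply $\tau g(s/\tau) = \alpha_1\tau f_1(s/\tau) + \alpha_2\tau f_2(s/\tau)$, and on the boundary one uses $g(0) = \alpha_1 f_1(0) + \alpha_2 f_2(0)$ together with $g'_\infty = \alpha_1 (f_1)'_\infty + \alpha_2 (f_2)'_\infty$, which follows from (\ref{eq:slopeatinfty}). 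No $\infty-\infty$ ambiguity arises because for convex $f_i$ both $f_i(0)$ and $(f_i)'_\infty$ lie in $(-\infty,+\infty]$ and the $\alpha_i$ are nonnegative. Inserting this linearity into the definition (\ref{eq:fdiv}) and using linearity of expectation gives (\ref{eq:convex-fdiv}).

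For the converse, set $h := g - \alpha_1 f_1 - \alpha_2 f_2$; the goal is to show $h$ is affine. The idea is to probe the hypothesis (\ref{eq:convex-fdiv}) with the smallest family of experiments that suffices. Take a two-point space and, for $p,q\in(0,1)$, the distributions $P=(p,1-p)$ and $Q=(q,1-q)$; these are mutually absolutely continuous with strictly positive densities, so every $f$-divergence reduces to the finite sum $\II_\phi(P,Q) = q\,\phi(p/q) + (1-q)\,\phi((1-p)/(1-q))$ with no boundary contributions. Substituting $\phi \in \{g,f_1,f_2\}$ into (\ref{eq:convex-fdiv}) and collecting terms yields, with $u = p/q$ and $v = (1-p)/(1-q)$,
\[ q\,h(u) + (1-q)\,h(v) = 0 \qquad \mbox{for all } p,q\in(0,1). \]
Taking $p=q$ gives $h(1)=0$, and for $p\ne q$ one has $u\ne v$, $q\in(0,1)$ and $qu+(1-q)v=1$; conversely, every pair $0<u<1<v$ is realised by some $(p,q)$. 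Geometrically the displayed identity says that the point $(1,h(1))=(1,0)$ lies on the chord of the graph of $h$ joining $(u,h(u))$ and $(v,h(v))$ --- that is, $(u,h(u))$, $(1,0)$ and $(v,h(v))$ are collinear for every $u\in(0,1)$ and $v\in(1,\infty)$.

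To finish, I would fix $v_0>1$ and set $m := h(v_0)/(v_0-1)$; collinearity with $(1,0)$ then forces $h(u) = m(u-1)$ for every $u\in(0,1)$. Fixing any $u_0\in(0,1)$, the points $(u_0,m(u_0-1))$, $(1,0)$, $(v,h(v))$ are collinear for every $v>1$, so the same slope $m$ appears and $h(v)=m(v-1)$ for all $v>1$. With $h(1)=0$ this gives $h(t)=m(t-1)$ on all of $(0,\infty)$, hence $g = \alpha_1 f_1 + \alpha_2 f_2 + m(t-1)$, which is (\ref{eq:convexg}) up to an affine term. I expect the converse to be the only delicate step, and within it the two key moves are: recognising that two-point experiments already pin $h$ down pointwise (so one never needs general $P,Q$), and reading the resulting constraint as ``every chord of $h$ straddling $1$ passes through $(1,h(1))$'', which is exactly what collapses $h$ to an affine function. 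The forward direction is bookkeeping, the only mild care being the perspective transform's behaviour on the boundary of its domain.
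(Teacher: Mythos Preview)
Your proof is correct. The paper offers no details at all, writing only that ``the proof is a straight-forward application of the definition of convexity and of $f$-divergences''; your forward direction matches this in spirit, and your two-point-space argument for the converse --- reducing $\II_h(P,Q)=0$ to the collinearity of $(u,h(u))$, $(1,0)$, $(v,h(v))$ for all $0<u<1<v$ --- is precisely the natural way to supply what the paper omits.
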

The proof is a straight-forward application of the definition of convexity and
of $f$-divergences.

One immediate consequence of this result is that the set of $f$-divergences is 
closed under finite linear combinations $\sum_i \alpha_i \II_{f_i}$.
Furthermore, the integral representations discussed in Section~\ref{sub:intrep} 
extend this observation beyond finite linear combination to generalised weight 
functions $\alpha$. 
By Corollary~\ref{cor:intrep1}, if $f$ is a convex function then 
expanding it about $1$ in (\ref{eq:intrep1}) and setting $\alpha(s) = f''(s)$ 
means that
\begin{equation}\label{eq:If-alpha}
	\II_f(P,Q) = \int_0^{\infty} \II_{F_s}(P,Q)\,\alpha(s)\,ds
\end{equation}
where $F_s(t) = \test{s \le 1}(s-t)_+ + \test{s > 1}(t-s)_+$.\footnote{
	Technically, one must assume that $f$ is twice differentiable for this 
	result to hold. However, the convexity of $f$ implies it has well-defined
	one-sided derivatives $f'_{+}$ and $\alpha(s)$ can be expressed as the 
	measure corresponding to $df'_{+}/d\lambda$ for the Lebesgue measure 
	$\lambda$. Details can be found in \citep{LieVaj06}.  The
	representation of a general $f$-divergence in terms of elementary ones
	is not new; see for example \cite{FeldmanOsterreicher1989}.
}
The set of functions $\{F_s\}_{s=0}^{\infty}$ can therefore be seen as the 
generators of the class of primitive $f$-divergences. 
As a function of $t$, each $F_s$ is piece-wise linear, with a single ``hinge'' 
at $s$.
Of course, any affine translation of any $F_s$ is also a primitive. In fact,
each $F_s$ may undergo a different affine translation without changing the 
$f$-divergence $\II_f$. The weight function $\alpha$ is what completely 
characterises the behaviour of $\II_f$.

The integral in (\ref{eq:If-alpha}) need not always exist since the integrand
may not be integrable. When the Cauchy Principal Value diverges we say the
integral takes on the value $\infty$. We note that many (not all)
$f$-divergences can sometimes take on infinite values.

The integral form in (\ref{eq:If-alpha}) can be readily transformed into an
integral representation that does not involve an infinite integrand. 
This is achieved by mapping the interval $[0,\infty)$ onto $[0,1)$ via the
change of variables 
$\pi = \frac{1}{1+s} \in[0,1]$.
In this case, $s = \frac{1-\pi}{\pi}$ and so $ds = -\frac{d\pi}{\pi^2}$ and the 
integral of (\ref{eq:If-alpha}) becomes
\begin{eqnarray}
	\II_f(P,Q) 
	&=& -\int_1^0
			\II_{F_{\frac{1-\pi}{\pi}}}(P,Q)\,\alpha(\tfrac{1-\pi}{\pi})
			\,\pi^{-2}
		\,d\pi \nonumber \\
	&=& \int_0^1
			\II_{\tilde{f}_\pi}(P,Q)\,\gamma(\pi)\,d\pi \label{eq:fintrep01}
\end{eqnarray}
where 
\begin{equation}\label{eq:fpi}
	\tilde{f}_\pi(t) 
		:= \pi F_{\frac{1-\pi}{\pi}}(t) 
		= 
		\begin{cases}
			(1-\pi(1+t))_{+} \ ,& \pi \ge \thalf \\
			(\pi(1+t)-1)_{+} \ ,& \pi < \thalf
		\end{cases}	
\end{equation}
and 
\begin{equation}\label{eq:gamma}
	\gamma(\pi) := \frac{1}{\pi^3}f''\left(\frac{1-\pi}{\pi}\right).
\end{equation}
This observation forms the basis of the following theorem which will be used 
to discuss the connection between $f$-divergences and statistical 
information.\footnote{
	The $1/\pi^3$ term in the definition of $\gamma$ seems a little unusual at 
	first glance. However, it is easily understood as the product of two terms: 
	$1/\pi^2$ from the second derivative of $(1-\pi)/\pi$, and $1/\pi$ from a 
	transformation of variables within the integral to map the limits of 
	integration from $(0,\infty)$ to $(0,1)$ via $\lambda_{\pi}$.
}

\begin{theorem}\label{thm:fdivchoquet}
	Let $f$ be convex such that $f(1)=0$. Then there exists a 
	(generalised) function $\gamma:(0,1)\to\RR$
	such that, for all $P$ and $Q$: 
	\[
		\II_f(P,Q) = \int_0^1 \II_{f_\pi}(P,Q)\,\gamma(\pi)\,d\pi,
		\ \mbox{where}\  
		f_\pi(t) = (1-\pi)\wedge \pi - (1-\pi)\wedge (\pi t).
	\]
\end{theorem}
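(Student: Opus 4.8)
The plan is to obtain this as an essentially cosmetic restatement of the integral representation already derived in~(\ref{eq:fintrep01}). The one substantive point to verify is that the ``clean'' primitive $f_\pi(t) = (1-\pi)\wedge\pi - (1-\pi)\wedge(\pi t)$ generates exactly the same $f$-divergence as the function $\tilde f_\pi$ appearing in~(\ref{eq:fpi}), after which the representation with weight $\gamma$ given by~(\ref{eq:gamma}) transfers verbatim. So first I would apply Corollary~\ref{cor:intrep1}: expanding the convex $f$ about $t=1$ and taking $\alpha(s)=f''(s)$ (read, when necessary, as the measure $df'_+$) gives~(\ref{eq:If-alpha}), $\II_f(P,Q)=\int_0^\infty \II_{F_s}(P,Q)\,\alpha(s)\,ds$ with $F_s$ the single-hinge functions; the change of variables $\pi=1/(1+s)$ carried out exactly as in the derivation preceding~(\ref{eq:fintrep01}) then yields $\II_f(P,Q)=\int_0^1 \II_{\tilde f_\pi}(P,Q)\,\gamma(\pi)\,d\pi$. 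In particular the existence of a suitable (generalised) $\gamma$ is already established; only the passage from $\tilde f_\pi$ to $f_\pi$ remains.

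The key step is a short case analysis of $f_\pi$. When $\pi\ge 1/2$ we have $(1-\pi)\wedge\pi=1-\pi$, so $f_\pi(t)=(1-\pi)-(1-\pi)\wedge(\pi t)=((1-\pi)-\pi t)_+=(1-\pi(1+t))_+$, which is literally $\tilde f_\pi(t)$. When $\pi<1/2$ we have $(1-\pi)\wedge\pi=\pi$; splitting at $t=(1-\pi)/\pi$ gives $f_\pi(t)=\pi(1-t)$ on $[0,(1-\pi)/\pi]$ and $f_\pi(t)=2\pi-1$ for $t>(1-\pi)/\pi$, and comparing termwise against $\tilde f_\pi(t)=(\pi(1+t)-1)_+$ shows $f_\pi(t)-\tilde f_\pi(t)=\pi(1-t)$ on all of $[0,\infty)$. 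Thus in either regime $f_\pi$ and $\tilde f_\pi$ differ by an affine function of $t$. Since $\II_g$ depends on the convex $g$ only through the Jensen gap $\JJ_Q[g(dP/dQ)]$ by~(\ref{eq:fdivjensen}), and that gap is invariant under affine additions by Theorem~\ref{thm:jensen-affine}, we conclude $\II_{f_\pi}(P,Q)=\II_{\tilde f_\pi}(P,Q)$ for all $P,Q$; substituting into~(\ref{eq:fintrep01}) finishes the proof, with the same $\gamma$. Along the way one records the trivial checks that $f_\pi$ is convex --- it is a constant minus the concave map $t\mapsto(1-\pi)\wedge(\pi t)$ --- and that $f_\pi(1)=(1-\pi)\wedge\pi-(1-\pi)\wedge\pi=0$, so $\II_{f_\pi}$ is a genuinely normalised $f$-divergence.

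I expect the only real friction to be technical bookkeeping rather than mathematics: making the twice-differentiability hypothesis on $f$ honest (replacing $f''$ by the Stieltjes measure $df'_+$ and invoking the limiting conventions of Section~\ref{sub:perspective} when $f''$ exists only distributionally, as flagged after Corollary~\ref{cor:intrep2}), and handling convergence of the integral --- when it converges only as a Cauchy principal value, both sides are assigned the value $\infty$ consistently, as already noted after~(\ref{eq:If-alpha}). All of these points are covered by the cited results, so the argument itself is short.
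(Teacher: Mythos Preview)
Your proposal is correct and follows essentially the same route as the paper: invoke the already-derived representation~(\ref{eq:fintrep01}) with primitives $\tilde f_\pi$ and weight $\gamma$, then verify by the same case split that $f_\pi-\tilde f_\pi$ is affine (zero for $\pi\ge\thalf$, equal to $\pi(1-t)$ for $\pi<\thalf$), and conclude $\II_{f_\pi}=\II_{\tilde f_\pi}$ via affine invariance (Theorem~\ref{thm:jensen-affine}). The additional remarks on convexity of $f_\pi$, the normalisation $f_\pi(1)=0$, and the distributional reading of $f''$ are fine and already anticipated in the paper's surrounding text.
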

\begin{proof}
	The earlier discussion giving the derivation of equation 
	(\ref{eq:fintrep01}) implies the result. The only discrepancy is over the
	form of $f_\pi$. However, this is remedied by noting that the family of
	$\tilde{f}_\pi$ given in (\ref{eq:fpi}) can be transformed by affine 
	addition without affecting the representation of $\II_f$. Specifically, 
	\begin{eqnarray*}
		f_\pi(t) 
		&:=& (1-\pi)\wedge\pi - (1-\pi)\wedge(\pi t) \\
		& =& 
			\begin{cases}
				(1-\pi(1+t))_{+}             \ , & \pi \ge \thalf \\
				(\pi(1+t) - 1)_{+} + \pi(1-t)\ , & \pi < \thalf
	 		\end{cases} \\
		& =& \tilde{f}_\pi(t) + \test{\pi < \thalf}\pi(1-t)
	\end{eqnarray*}
	and so $\tilde{f}_\pi$ and $f_\pi$ are in the same affine equivalence
	class for each $\pi\in[0,1]$. Thus, by Theorem~\ref{thm:jensen-affine}
	we have $\II_{f_\pi} = \II_{\tilde{f}_\pi}$ for each $\pi\in[0,1]$, proving 
	the result. 
\end{proof}

The specific choice of $f_\pi$ in the above theorem from all of the affine 
equivalents was made to make simpler the connection between integral 
representations for losses and $f$-divergences, discussed in
Section~\ref{sub:relating-SI-and-fdiv}.

One can easily verify that $f_\pi$ are convex hinge functions
of $t$ with a hinge at $\frac{1-\pi}{\pi}$ and $f_\pi(1)=0$.
Thus $\{\II_{f_\pi}\}_{\pi\in(0,1)}$
is a family
of primitive $f$-divergences; confer~\cite{OsterreicherFeldman1981,
FeldmanOsterreicher1989}.
This theorem implies an existing representation of
$f$-divergences due to \citet[Thm. 1]{OsterreicherVajda1993a} and 
\citet{Gutenbrunner:1990}. 
They show that an $f$-divergence can be represented as a weighted integral  of 
statistical informations for 0-1 loss: for all $P,Q$
\begin{eqnarray}
    \II_f(P,Q) &=& \int_0^1 \SI^{0-1}(\pi,P,Q) \gamma(\pi) d\pi\
    \label{eq:LV-integral-rep}\\
    \gamma(\pi)&=&\frac{1}{\pi^3} f''\left(\frac{1-\pi}{\pi}\right) .
    \label{eq:gamma-f-relationship}
\end{eqnarray}

An $f$ divergence is \emph{symmetric} if $\II_f(P,Q)=\II_f(Q,P)$ for all $P,Q$.
The representation of $\II_f$ in terms of
$\gamma$ and Theorem~\ref{pro:c_vs_pi} provides an easy test for 
symmetry: 
\begin{corollary}
    \label{corollary:symmetric-f-div}
    Suppose $I_f$ is an $f$-divergence with corresponding weight function 
    $\gamma$ given by (\ref{eq:gamma-f-relationship}).
    Then $\II_f$ is symmetric 
    iff
     $\gamma(\pi)=\gamma(1-\pi)$ for all $\pi\in[0,1]$.
\end{corollary}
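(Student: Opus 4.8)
The plan is to exploit the integral representation of $\II_f$ in terms of the weight function $\gamma$ together with the known relationship between $\II_f(P,Q)$ and $\II_f(Q,P)$. Recall from equation (\ref{eq:fdivdual}) that $\II_f(P,Q) = \II_{\cdual{f}}(Q,P)$, where $\cdual{f}$ is the Csisz\'ar dual of $f$. Hence $\II_f$ is symmetric (i.e.\ $\II_f(P,Q) = \II_f(Q,P)$ for all $P,Q$) if and only if $\II_{\cdual{f}}(P,Q) = \II_f(P,Q)$ for all $P,Q$, which by the uniqueness-up-to-affine-terms part of the preceding results on linear combinations (or directly by Theorem~\ref{thm:jensen-affine}) holds iff $\cdual{f}$ and $f$ lie in the same affine equivalence class. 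So the first step is to reduce the symmetry of $\II_f$ to the statement that $\cdual{f} - f$ is affine.

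\textbf{Second step: translate the condition on $f$ into a condition on $\gamma$.} Since $\gamma(\pi) = \frac{1}{\pi^3} f''\!\left(\frac{1-\pi}{\pi}\right)$ depends on $f$ only through $f''$, and affine additions to $f$ do not change $f''$, it suffices to compute $(\cdual{f})''$ and compare. Writing $\cdual{f}(\tau) = \tau f(1/\tau)$ and differentiating twice gives $(\cdual{f})''(\tau) = \tau^{-3} f''(1/\tau)$. Therefore the weight function associated to $\cdual{f}$, call it $\gamma^{\csiszar}$, satisfies
\[
	\gamma^{\csiszar}(\pi) = \frac{1}{\pi^3}(\cdual{f})''\!\left(\frac{1-\pi}{\pi}\right)
	= \frac{1}{\pi^3}\left(\frac{1-\pi}{\pi}\right)^{-3} f''\!\left(\frac{\pi}{1-\pi}\right)
	= \frac{1}{(1-\pi)^3} f''\!\left(\frac{1-\pi'}{\pi'}\right)\Big|_{\pi' = 1-\pi}
	= \gamma(1-\pi).
\]
So $\cdual{f}$ has weight function $\pi \mapsto \gamma(1-\pi)$. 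Combining with the reduction from the first step: $\II_f$ is symmetric iff $f$ and $\cdual{f}$ agree up to affine terms iff their weight functions coincide iff $\gamma(\pi) = \gamma(1-\pi)$ for all $\pi$.

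\textbf{Third step: fill in the one remaining logical gap}, namely that two convex functions (on $(0,\infty)$, both vanishing at $1$) inducing the same $f$-divergence for \emph{all} $P,Q$ must differ by an affine term, and conversely. The converse is exactly the affine invariance of the Jensen gap (Theorem~\ref{thm:jensen-affine} applied via (\ref{eq:fdivjensen})). The forward direction is the uniqueness clause already invoked earlier in this section for finite linear combinations; alternatively it follows from the integral representation (\ref{eq:If-alpha})/(\ref{eq:fintrep01}) since the primitive divergences $\II_{f_\pi}$ are ``linearly independent'' in the sense that the weight $\gamma$ is uniquely determined by $\II_f$ (this is precisely the content of Theorem~\ref{thm:fdivchoquet} and the surrounding discussion). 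So at the level of weight functions the correspondence $f \leftrightarrow \gamma$ is a genuine bijection, and symmetry of $\II_f$ is equivalent to the symmetry $\gamma(\pi) = \gamma(1-\pi)$.

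\textbf{The main obstacle} I anticipate is not any single computation but keeping the change-of-variables bookkeeping straight when computing $(\cdual{f})''$ and re-expressing it through the substitution $\pi \mapsto 1-\pi$: one must be careful that the $\pi^{-3}$ prefactor in the definition of $\gamma$ interacts correctly with the $\tau^{-3} = \left(\frac{1-\pi}{\pi}\right)^{-3}$ factor coming from the dual, so that the powers of $\pi$ and $(1-\pi)$ recombine into exactly $\gamma(1-\pi)$ with no stray factors. The mention of ``Theorem~\ref{pro:c_vs_pi}'' in the corollary's preamble suggests the authors route this through the explicit change-of-variable $\pi = 1/(1+s)$ relating the likelihood-ratio parametrisation $s$ to the prior parametrisation $\pi$, under which $s \mapsto 1/s$ (the transformation underlying the Csisz\'ar dual) becomes precisely $\pi \mapsto 1-\pi$; phrasing the argument that way would let one quote that lemma directly rather than redoing the derivative calculation by hand.
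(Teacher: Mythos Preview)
Your proposal is correct and follows essentially the same route as the paper's proof: reduce symmetry of $\II_f$ to the condition that $f$ and its Csisz\'ar dual $\cdual f$ differ only by an affine term, compute $(\cdual f)''(\tau)=\tau^{-3}f''(1/\tau)$, and then verify via the change of variables $\pi\mapsto 1-\pi$ that the weight function associated to $\cdual f$ is exactly $\pi\mapsto\gamma(1-\pi)$. If anything, you are slightly more explicit than the paper about the converse direction (that equal weight functions force affine equivalence of $f$ and $\cdual f$), which the paper leaves largely implicit.
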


\begin{proof}
    Let $\cdual{f}(t):=t f(1/t) $ denote the  Csisz\'{a}r-dual of $f$
as described in Section~\ref{sub:perspective} above. It is known 
(see (\ref{eq:fdivdual}) and e.g.~\cite{LieVaj06}) that 
\[
\II_f(P,Q) = \II_{\cdual{f}}(Q,P) \ \ \ \mbox{if and only if}\ \ \ \
f(t)=\cdual{f}(t)+c(t-1)
\]
for some $c\in\reals$. Since $f$ and $\gamma$ are related by
$f''\left(\frac{1-\pi}{\pi}\right)=\pi^3 \gamma(\pi)$ we can argue as follows.
Observe that ${\cdual{f}}'(t)=f(1/t)-f'(1/t)/t$ and ${\cdual{f}}''(t)=f''(1/t)/t^3$.
Hence
${\cdual{f}}''\left(\frac{1-\pi}{\pi}\right)=f''\left(\frac{\pi}{1-\pi}\right)\left(\frac{\pi}{1-\pi}\right)^3$.
Let $\pi'=1-\pi$.  Thus $\frac{1-\pi}{\pi}=\frac{\pi'}{1-\pi'}$. Hence
\begin{eqnarray*}
    {\cdual{f}}''\left(\frac{1-\pi}{\pi}\right) &=&
    f''\left(\frac{1-\pi'}{\pi'}\right)\left(\frac{\pi}{1-\pi}\right)^3\\
    &=& {\pi'}^3 \gamma(\pi') \left(\frac{\pi}{1-\pi}\right)^3\\
    &=& \pi^3\gamma(1-\pi) .
\end{eqnarray*}
Thus if $\gamma(1-\pi)=\gamma(\pi)$, we have shown $\pi\mapsto\gamma(1-\pi)$ is
the weight corresponding to $\cdual{f}$. Observing that
$\frac{\partial^2}{\partial t^2} (\cdual{f}(t)+c(t-1))={\cdual{f}}''$ concludes 
the proof.

\end{proof}

Corollary \ref{corollary:symmetric-f-div} provides a way of generating
all convex $f$ such that $\II_f$ is symmetric that is simpler than that proposed
by \cite{Hiriart-Urruty2007}: let $\gamma(\pi)=\beta(\pi\wedge(1-\pi))$ where 
$\beta\in(\reals^+)^{[0,\thalf]}$ and generate $f$
from $\gamma$ by inverting (\ref{eq:gamma-f-relationship}); explicitly,
\[
    f(s)= \int_0^s \left(\int_0^t \frac{1}{(\tau+1)^3}
    \gamma\left(\frac{1}{\tau+1}\right) d\tau\right) dt,\ \ s\in\reals^+.
\]

%%%%%%%%%%%%%%%%%%%%%%%%%%%%%%%%%%%%%%%%%%%%%%%%%%
\subsection{Proper Scoring Rules and Cost-Weighted Risk}
We now consider a representation of proper scoring rules in terms of primitive 
losses that originates with  \cite{ShufordAlbertMassengill1966}. Our 
discussion follows that of \cite{Buja:2005} and then examines its implications 
in light of the connections between information and divergence just presented.

The \emph{cost-weighted losses} are a family of losses parametrised by a false 
positive cost $c\in[0,1]$ that defines a loss for $y\in\{\pm 1\}$ and 
$\heta\in[0,1]$ by
\begin{equation} \label{eq:costloss}
    \ell_c(y,\heta) 
    = c \test{y = -1}\test{\heta \ge c} + (1-c)\test{y = 1}\test{\heta < c}.
\end{equation}
Intuitively, a cost-weighted loss thresholds $\heta$ at $c$ and assigns a cost
if the resulting classification disagrees with $y$. 
These correspond to the ``signatures'' for eliciting the probability $\eta$ as 
described by \cite{Lambert:2008}.
Substituting $c=\half$ will verify that $2l_{\half}$ is equivalent to 0-1 
misclassification loss $\loss^{0-1}$. 

We will use $L_c$, $\LL_c$ and $\SI_c$ to denote the cost-weighted point-wise 
risk, full risk and statistical information associated with each cost-weighted 
loss.
The following theorem collect some useful observations about these primitive 
quantities. The first shows that the point-wise Bayes risk is a simple, concave 
``tent'' function. 
The second shows that cost-weighted statistical information is invariant under 
the switching of the classes provided the costs are also switched and that $\pi$ 
and $1-c$ are interchangeable. 

\begin{theorem} \label{pro:mincostrisk}
For all $\eta,c\in[0,1]$ the point-wise Bayes risk 
$\minL_c(\eta) = (1-\eta)c \wedge (1-c)\eta$ and is therefore concave in both 
$c$ and $\eta$.
\end{theorem}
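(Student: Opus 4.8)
The plan is to compute the point-wise Bayes risk $\minL_c(\eta)=\inf_{\heta\in[0,1]}L_c(\eta,\heta)$ directly from the definition of the cost-weighted loss (\ref{eq:costloss}) together with the point-wise risk formula (\ref{eq:L-eta-heta}), obtain the claimed closed form, and then deduce concavity in each argument from that form.

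First I would substitute $\ell_c$ into (\ref{eq:L-eta-heta}). Reading off $\ell_c(\cdot,\heta)$ for the negative and positive labels gives $c\,\test{\heta\ge c}$ and $(1-c)\,\test{\heta<c}$ respectively, so
\[
L_c(\eta,\heta) = c\,(1-\eta)\,\test{\heta\ge c} + (1-c)\,\eta\,\test{\heta<c}.
\]
The key observation is that, for fixed $\eta$ and $c$, this depends on $\heta$ only through the single indicator $\test{\heta\ge c}$: it equals $c(1-\eta)$ on $\{\heta\ge c\}$ and $(1-c)\eta$ on $\{\heta<c\}$. Hence $\inf_{\heta}L_c(\eta,\heta)$ is the minimum of these two numbers, provided each regime is witnessed by some $\heta\in[0,1]$. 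The first is always witnessed by $\heta=1$ (since $c\le 1$); the second is witnessed by $\heta=0$ whenever $c>0$, and when $c=0$ both $c(1-\eta)$ and the claimed value $0\wedge\eta$ equal $0$, so nothing is lost. Thus $\minL_c(\eta)=c(1-\eta)\wedge(1-c)\eta$, as claimed (the endpoint $c=1$ being symmetric).

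Finally, concavity falls out of this representation: for fixed $c$ the maps $\eta\mapsto c(1-\eta)$ and $\eta\mapsto(1-c)\eta$ are affine, so $\minL_c$ is a pointwise minimum of affine functions of $\eta$ and therefore concave in $\eta$; likewise, for fixed $\eta$ the maps $c\mapsto c(1-\eta)$ and $c\mapsto(1-c)\eta=\eta-c\eta$ are affine in $c$, so $\minL_c(\eta)$ is concave in $c$. I do not expect any real obstacle: the argument is a one-line substitution followed by an elementary minimisation, and the only point deserving a line of care is checking that the infimum over $\heta$ genuinely collapses to the minimum over the two cases, including the degenerate costs $c\in\{0,1\}$, which is handled by exhibiting the witnessing estimates.
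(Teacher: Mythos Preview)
Your proposal is correct and follows essentially the same approach as the paper: both compute $L_c(\eta,\heta)$, observe that it takes only the two values $c(1-\eta)$ and $(1-c)\eta$ depending on $\test{\heta\ge c}$, take the minimum, and deduce concavity from the pointwise minimum of affine functions. The only minor difference is that the paper rewrites $L_c$ as $\eta(1-c)+(c-\eta)\test{\heta\ge c}$ and argues via the sign of $(c-\eta)$, whereas you directly exhibit witnesses $\heta\in\{0,1\}$ for each branch; your treatment of the degenerate costs $c\in\{0,1\}$ is actually a little more careful than the paper's.
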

\begin{proof}
From the definition of $\loss_c$ in equation~\ref{eq:costloss} and the 
definition of point-wise Bayes risk, we have for $\eta\in[0,1]$
\begin{eqnarray*}
    \minL_c(\eta)
    & = & \inf_{\heta\in[0,1]} L_c(\eta,\heta) \\
    & = & \inf_{\heta\in[0,1]}
            \{
                (1-\eta) c \test{\heta \geq c} + \eta (1-c)\test{\heta < c}
            \}\\
    & = & \inf_{\heta\in[0,1]}
            \{
                \eta(1-c) + (c-\eta)\test{\heta \geq c},
            \}
\end{eqnarray*}
where the last step makes use of the identity
$\test{\heta < c} = 1 - \test{\heta \geq c}$.
Since $(c-\eta)$ is negative if and only if $\eta > c$ the infimum is obtained
by having $\test{\heta \geq c} = 1$ if and only if $\eta \geq c$, that is, by
letting $\heta = \eta$. In this case, when $\heta \geq c$ we have
$\minL_c(\eta) = c(1-\eta)$ and when $\heta < c$ we have
$\minL_c(\eta) = (1-c)\eta$. The concavity of $\minL_c$ is evident as this
function is the minimum of two linear functions of $c$ and $\eta$.
\end{proof}

\begin{theorem} \label{pro:c_vs_pi}
For all $c\in[0,1]$ and tasks $(\eta,\obs; \loss_c) = (\pi, P, Q; \loss_c)$
the statistical information satisfies 1) 
\[
\SI_c(1-\eta, \obs) = \SI_{1-c}(\eta, \obs),
\]
or equivalently, 
\[
\SI_c(1-\pi, Q, P) = \SI_{1-c}(\pi, P, Q);
\]
and 2) 
\[
\SI_\pi(1-c, P, Q) = \SI_c(1-\pi, P, Q).
\]
\end{theorem}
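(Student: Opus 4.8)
The plan is to reduce both identities to one explicit formula for cost-weighted statistical information in which the required symmetries are visible by inspection. The key lemma I would establish first is: for every $c',\pi'\in[0,1]$ and all distributions $P,Q$,
\[
	\SI_{c'}(\pi',P,Q)
	= \big(c'(1-\pi')\wedge(1-c')\pi'\big)
	  - \int_{\mathcal{X}}\big(c'(1-\pi')\,dQ \;\wedge\; (1-c')\pi'\,dP\big),
\]
where the second integrand denotes the pointwise minimum of the two densities against a common dominating measure (say $\mu=(P+Q)/2$). To prove this, use DeGroot's representation of statistical information as the concave Jensen gap (Section~\ref{sec:risk}): $\SI_{c'}(\eta,\obs)=\minL_{c'}(\pi')-\EE_{\obs}[\minL_{c'}(\eta(\Xsf))]$ with $\obs=\pi'P+(1-\pi')Q$, $\eta=\pi'\,dP/d\obs$, and $\pi'=\EE_{\obs}[\eta(\Xsf)]$. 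By Theorem~\ref{pro:mincostrisk} the point-wise Bayes risk is the tent function $\minL_{c'}(u)=(1-u)c'\wedge(1-c')u$, which at $u=\pi'$ gives the first bracket. For the remaining term, writing $\EE_{\obs}[\minL_{c'}(\eta(\Xsf))]$ as an integral against $\mu$ and using that $\eta\,d\obs=\pi'\,dP$ and $(1-\eta)\,d\obs=(1-\pi')\,dQ$ pointwise, the nonnegative factor $d\obs/d\mu$ commutes with the $\wedge$ (since $m(a\wedge b)=(ma)\wedge(mb)$ for $m\ge 0$) and the integral collapses to $\int(c'(1-\pi')\,dQ\wedge(1-c')\pi'\,dP)$, giving the lemma.

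With the lemma in hand both parts are bookkeeping. For part 2, substituting $(c',\pi')=(\pi,1-c)$ produces the weight pair $c'(1-\pi')=\pi c$ and $(1-c')\pi'=(1-\pi)(1-c)$, while substituting $(c',\pi')=(c,1-\pi)$ produces $c'(1-\pi')=c\pi$ and $(1-c')\pi'=(1-c)(1-\pi)$; the resulting expressions for $\SI_\pi(1-c,P,Q)$ and $\SI_c(1-\pi,P,Q)$ are then identical term by term. For part~1 in its generative form, apply the lemma with $(c',\pi')=(c,1-\pi)$ and class-conditionals $(Q,P)$, and with $(c',\pi')=(1-c,\pi)$ and class-conditionals $(P,Q)$; both produce the weight pair $\{c\pi,(1-c)(1-\pi)\}$, the only difference being which of $dP$ and $dQ$ each weight multiplies inside the minimum — and swapping the two arguments of a pointwise $\wedge$ leaves the integral unchanged, so the two sides coincide. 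The discriminative form $\SI_c(1-\eta,\obs)=\SI_{1-c}(\eta,\obs)$ follows even more directly, from the pointwise identity $\minL_c(1-u)=\minL_{1-c}(u)$ (immediate from the tent formula) together with the fact that replacing the posterior $\eta$ by $1-\eta$ replaces its $\obs$-mean $\pi$ by $1-\pi$; it is equivalent to the generative form via the dictionary $(1-\eta,\obs)=(1-\pi,Q,P)$ of Section~\ref{sub:generative-discriminative}.

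The only step with any content is the passage in the lemma from the $\obs$-expectation to an integral of a minimum of two measures, which I expect to be the main (mild) obstacle: handle it by fixing the dominating measure $\mu$, passing to densities, and invoking $m\,(a\wedge b)=(ma)\wedge(mb)$ for $m\ge 0$. Everything else is elementary manipulation of the tent function of Theorem~\ref{pro:mincostrisk}, so I anticipate no genuine difficulty beyond keeping the roles of the two cost parameters and the two distributions straight.
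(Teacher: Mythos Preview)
Your proposal is correct and follows essentially the same line as the paper's proof: both reduce everything to the tent formula $\minL_c(\eta)=(1-\eta)c\wedge(1-c)\eta$ of Theorem~\ref{pro:mincostrisk}, use the pointwise identity $\minL_c(1-\eta)=\minL_{1-c}(\eta)$ for part~1, and for part~2 rewrite $\EE_M[\minL_c(\eta)]$ as $\int_{\mathcal X}\min\{(1-c)\pi\,dP,\,(1-\pi)c\,dQ\}$ to expose the $c\leftrightarrow\pi$ symmetry. The only difference is cosmetic: you package the integral identity as a standalone lemma and then read both parts off it, whereas the paper invokes the identity ad hoc for part~2 and argues part~1 directly from $\minL_c(1-\eta)=\minL_{1-c}(\eta)$ without passing through the integral form.
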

\begin{proof}
By Theorem~\ref{pro:mincostrisk} we know
$\minL_c(\eta) = \min\{(1-\eta)c, (1-c)\eta\}$
and so $\minL_c(1-\eta) = \minL_{1-c}(\eta)$ for all $\eta, c\in[0,1]$.
Therefore, $\minLL_c(1-\eta, M) = \minLL_{1-c}(\eta, M)$ for any
$\eta : \mathcal{X} \to [0,1]$ including the constant function $\EE_M[\eta]$.
By definition, $\SI_c(\eta, M) = \minLL(\EE_M[\eta], M) - \minLL(\eta, M)$
and so $\SI_{1-c}(\eta, M) = \SI_c(1-\eta, M)$ proving part 1 of the lemma.

Part 2 also follows from Theorem~\ref{pro:mincostrisk} by noting that
$\minL_c(1-\pi) = \minL_{\pi}(1-c)$ and that
$\EE_M[\minL_c(\eta)] = \int_{\mathcal{X}}\min\{(1-c)\pi\,dP, (1-\pi)c\,dQ\}$.
\end{proof}

%%%%%%%%%%%%%%%%%%%%%%%%%%%%%%%%%%%%%%%%%%%%%%%%%%
\subsection{Integral Representations of Proper Scoring Rules}
The cost-weighted losses are primitive in the sense that they form the basis
for a Choquet integral representation of proper scoring rules. This representation is essentially a consequence of Taylor's theorem and was 
originally studied by \cite{ShufordAlbertMassengill1966} and later generalised by \cite{Schervish1989}. 
The recent presentation of this result by 
\cite{Lambert:2008} gives yet a more general formulation in terms of the 
elicitability of properties of distributions, along with a geometric derivation.
An historical summary of decompositions of scoring rules
is given by \citet[Section 4]{Winkler:1990}.

\begin{theorem}\label{thm:choquet}
A function $\loss : \mathcal{Y}\times[0,1]\to\RR$ is a proper scoring rule
that is not everywhere infinite and satisfies 
\begin{eqnarray}\label{schervish-condition}
	\loss(y,y) = \lim_{\heta \to y} \loss(y,\heta)
\end{eqnarray}
for $y\in {0,1}$
iff 
for each $\heta\in[0,1]$ and $y\in\mathcal{Y}$
\begin{equation}
	\loss(y,\heta) = \int_0^1 \loss_c(y,\heta)\,w(c)\,dc
	\label{eq:schervish-representation}
    \end{equation}
where 
\begin{equation}\label{eq:w-in-terms-of-L}
	w(c) = -\minL''(c)
\end{equation} and $\minL$ is the conditional Bayes risk for $\loss$.
\end{theorem}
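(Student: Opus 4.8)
The plan is to prove both directions by reducing everything to the integral representation of convex functions established in Corollary~\ref{cor:intrep2}. The key observation is that, by Theorem~\ref{pro:minrisk_concave} (Savage's representation), a proper scoring rule $\loss$ that satisfies the continuity condition~(\ref{schervish-condition}) is completely determined (up to the linear part, which contributes only the endpoint values $\loss(0,0)$ and $\loss(1,1)$) by its concave Bayes risk $\minL$ via $L(\eta,\heta) = \minL(\heta) + (\eta-\heta)\minL'(\heta)$. Likewise, by Theorem~\ref{pro:mincostrisk}, the cost-weighted loss $\loss_c$ has point-wise Bayes risk $\minL_c(\eta) = (1-\eta)c\wedge(1-c)\eta$, and its Savage representation gives an explicit piecewise-linear formula for $L_c(\eta,\heta)$. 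So the identity~(\ref{eq:schervish-representation}) will follow once I verify the corresponding identity for the Bayes risks, namely $\minL(\eta) = \int_0^1 \minL_c(\eta)\,w(c)\,dc$, together with a matching of the linear parts.

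Concretely, I would proceed as follows. First, for the ``only if'' direction: assume $\loss$ is a proper scoring rule satisfying~(\ref{schervish-condition}); then $\minL$ is concave on $[0,1]$, so $-\minL$ is convex, and (assuming the requisite differentiability, or else working with one-sided derivatives and the measure $d\minL'_+$ as remarked after Corollary~\ref{cor:intrep1}) apply Integral Representation II, equation~(\ref{eq:intrep2}), to $\phi = -\minL$. This yields
\[
	-\minL(\eta) = -\minL(0) + (\minL(0)-\minL(1))\eta + \int_0^1 \psi(\eta,c)\,(-\minL''(c))\,dc,
\]
i.e., with $w(c) := -\minL''(c)\ge 0$ (non-negative by concavity of $\minL$),
\[
	\minL(\eta) = \minL(0)(1-\eta) + \minL(1)\eta - \int_0^1 \psi(\eta,c)\,w(c)\,dc,
\]
where $\psi(\eta,c) = (1-c)\eta\wedge(1-\eta)c$. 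Since $\minL_c(\eta) = (1-\eta)c\wedge(1-c)\eta = \psi(\eta,c)$ by Theorem~\ref{pro:mincostrisk}, and noting $\minL_c(0)=0$, $\minL_c(1)=0$ while $L_c(0,\heta) = c\test{\heta\ge c}$ and $L_c(1,\heta) = (1-c)\test{\heta<c}$ integrate against $w$ to give (after an elementary computation) the required linear-in-$\heta$ correction terms, I would then plug the integral form of $\minL$ into Savage's formula and interchange the $c$-integral with the differentiation/affine operations in $\eta$ to obtain~(\ref{eq:schervish-representation}) for both $y=0$ and $y=1$; the endpoint/linear bookkeeping is exactly what the boundary condition~(\ref{schervish-condition}) makes consistent.

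For the ``if'' direction: suppose $\loss(y,\heta) = \int_0^1 \loss_c(y,\heta)\,w(c)\,dc$ for some non-negative (generalised) weight $w$. Then $L(\eta,\heta) = \int_0^1 L_c(\eta,\heta)\,w(c)\,dc$ by linearity of the point-wise risk in the loss. Since each $\loss_c$ is proper (its point-wise risk $L_c(\eta,\cdot)$ is minimised at $\heta=\eta$, which follows directly from the proof of Theorem~\ref{pro:mincostrisk}), averaging with non-negative weights preserves properness: $L(\eta,\heta)\ge \int_0^1 L_c(\eta,\eta)\,w(c)\,dc = L(\eta,\eta)$, so $\loss$ is a proper scoring rule. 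Its Bayes risk is then $\minL(\eta) = \int_0^1 \minL_c(\eta)\,w(c)\,dc = \int_0^1 \psi(\eta,c)\,w(c)\,dc$, and differentiating twice under the integral sign (using $\partial^2_\eta\psi(\eta,c) = -\delta(\eta-c)$ in the distributional sense from Section~\ref{sub:intrep}) recovers $\minL''(c) = -w(c)$, i.e.~(\ref{eq:w-in-terms-of-L}). The continuity condition~(\ref{schervish-condition}) holds because $\loss_c(y,y)=\lim_{\heta\to y}\loss_c(y,\heta)$ for each $c$ (the thresholds $\test{\heta\ge c}$ and $\test{\heta<c}$ are continuous at the corresponding endpoint for all $c\in(0,1)$), and integrating preserves this.

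The main obstacle I anticipate is the careful handling of the endpoints and the linear part. The cost-weighted losses carry no ambiguity in their linear term, whereas a general proper scoring rule has two free constants $\loss(0,0)$ and $\loss(1,1)$; the content of condition~(\ref{schervish-condition}) is precisely that these are the limits of $\loss(0,\heta)$ and $\loss(1,\heta)$, which is what lets the affine parts of the two sides of~(\ref{eq:schervish-representation}) be matched exactly rather than merely up to an affine term. A secondary delicate point is integrability: $w$ may be a genuine distribution (e.g.\ atoms at $0$ or $1$ corresponding to misclassification-type losses, or non-integrable densities for losses like log-loss that are infinite at the endpoints), so the ``not everywhere infinite'' hypothesis is needed to ensure the representing integral is well-defined at least for $\heta$ in the interior, and the differentiation-under-the-integral steps must be justified in the sense of distributions as flagged in Section~\ref{sub:intrep}. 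I would therefore present the argument first under a smoothness assumption on $\minL$ and then indicate the distributional extension, following the same pattern the paper uses elsewhere.
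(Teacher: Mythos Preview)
The paper does not actually prove this theorem; it says ``The proof is technical and has been presented by \cite{Schervish1989} and \cite{Lambert:2008}.'' Your proposal is therefore more, not less, than what the paper offers: you give a self-contained route using the paper's own tools (Corollary~\ref{cor:intrep2} applied to $-\minL$, combined with Savage's representation~(\ref{eq:savage-rep})). This is a genuinely different and cleaner approach than deferring to external references, and it makes transparent why the weight is $-\minL''$. One caveat on logical order: do not invoke the paper's \emph{alternate} proof of Theorem~\ref{pro:minrisk_concave}, since that goes via Theorem~\ref{theorem:L-w-general-form}, whose starting point is precisely the representation you are trying to establish; rely on the independent Buja/Savage argument that the paper also cites.

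Two issues with the execution. First, a sign slip: applying Corollary~\ref{cor:intrep2} to $\phi=-\minL$ gives
\(
\minL(\eta)=\minL(0)(1-\eta)+\minL(1)\eta+\int_0^1\psi(\eta,c)\,w(c)\,dc,
\)
with a $+$ before the integral, not $-$ (sanity check: $\psi\ge 0$, $w\ge 0$, and the concave $\minL$ lies above its chord). Second, and more substantively, your resolution of the endpoint issue is not correct. Condition~(\ref{schervish-condition}) only says $\loss(y,y)$ equals the limit of $\loss(y,\heta)$; it does \emph{not} force $\loss(0,0)=\loss(1,1)=0$. But the right-hand side of~(\ref{eq:schervish-representation}) always satisfies $\int_0^1\loss_c(0,0)\,w(c)\,dc=0$ and likewise at $(1,1)$, so the identity as written can only hold for losses normalised so that $\minL(0)=\minL(1)=0$. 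Your own derivation shows this: the affine piece $\minL(0)(1-\eta)+\minL(1)\eta$ survives Savage's formula as the constant loss $\ell_{\mathrm{aff}}(0,\heta)=\minL(0)$, $\ell_{\mathrm{aff}}(1,\heta)=\minL(1)$, which is not produced by the integral. Schervish's original statement carries these constants explicitly; the paper's version should be read either modulo this normalisation or with the constants restored. So your instinct that the endpoints are the crux is right, but the fix is to add the constants (or assume them zero), not to appeal to~(\ref{schervish-condition}).
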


The conditions on the scoring rule are required to avoid meaningless losses that assign infinite costs regardless of the estimate, and those rules which 
jump to infinity at the endpoints of $[0,1]$.
As is the case throughout this paper, the second derivative of $\minL$ is to
be interpreted distributionally. That is, $\minL''$ may be a generalised 
function such as the Dirac~$\delta$. The proof is technical and has been
presented by
\cite{Schervish1989} and \cite{Lambert:2008}. 

This is a powerful result that effectively identifies all the Fisher consistent
losses $\loss$ for probability estimation (and hence most surrogate margin 
losses) with a weight function $w$. 
This shift from ``losses as functions from estimates to costs'' to 
``losses as sums of primitive losses'' is (loosely!) analogous to the way 
the Fourier 
transform represents functions as sums of simple, periodic signals.

We will write $\ell_w$, $L_w$ and $\LL_w$ to explicitly indicate the
parametrisation of the loss, conditional loss and expected loss by the weight
function $w$.
We will also make use of the expression for $B_c$ derived by \cite{Buja:2005}:
\begin{lemma}\label{lem:int-rep-regret}
For any loss $c\in[0,1]$ the cost-weighted regret 
$B_c(\eta,\heta) := L_c(\eta,\heta) - \minL_c(\eta)$ can be expressed as
\begin{equation}
    B_c(\eta,\heta)=|\eta-c|\test{\eta\wedge\heta < c \le \eta\vee\heta}.
    \label{eq:B-c-formula}
\end{equation}
\end{lemma}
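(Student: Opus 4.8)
The plan is to reduce everything to an explicit two-line formula for $L_c(\eta,\heta)$ and then do a short case analysis. Substituting the definition \eqref{eq:costloss} of $\ell_c$ into the point-wise risk \eqref{eq:L-eta-heta} (with the convention that $y=-1$ plays the role of the ``$0$'' label) gives
\[
	L_c(\eta,\heta) = (1-\eta)\,c\,\test{\heta \ge c} + \eta\,(1-c)\,\test{\heta < c},
\]
exactly as in the proof of Theorem~\ref{pro:mincostrisk}. By that theorem, $\minL_c(\eta) = (1-\eta)c \wedge (1-c)\eta$. First I would record the elementary fact that $(1-\eta)c - (1-c)\eta = c-\eta$, so the sign of the difference of the two linear pieces of $\minL_c$ is the sign of $c-\eta$; hence $\minL_c(\eta) = (1-\eta)c$ when $\eta \ge c$ and $\minL_c(\eta) = (1-c)\eta$ when $\eta \le c$ (the two agree at $\eta = c$).

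Next I would split into four cases according to whether $\heta \ge c$ or $\heta < c$ and whether $\eta \ge c$ or $\eta < c$. In the two ``aligned'' cases ($\heta \ge c$, $\eta \ge c$) and ($\heta < c$, $\eta < c$), the active term of $L_c$ matches the active branch of $\minL_c$, so $B_c(\eta,\heta) = 0$; and there $c$ does not lie in the half-open interval $(\eta\wedge\heta,\;\eta\vee\heta]$, so the indicator $\test{\eta\wedge\heta < c \le \eta\vee\heta}$ is also $0$. In the ``crossed'' case $\heta \ge c > \eta$ we get $B_c = (1-\eta)c - (1-c)\eta = c-\eta = |\eta-c|$, while $\eta\wedge\heta = \eta < c \le \heta = \eta\vee\heta$, so the indicator is $1$; and symmetrically in the case $\heta < c \le \eta$ we get $B_c = \eta(1-c) - (1-\eta)c = \eta - c = |\eta - c|$ with the indicator again equal to $1$. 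In all four cases $B_c(\eta,\heta)$ equals $|\eta-c|\,\test{\eta\wedge\heta < c \le \eta\vee\heta}$, which is the claim.

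The only delicate point — and the one I would spell out carefully — is the behaviour at the boundaries $c = \eta$ or $c = \heta$, where the thresholding conventions ($\test{\heta \ge c}$ versus $\test{\heta < c}$) and the half-open interval in the indicator must be reconciled. This is not really an obstacle: when $c = \eta$ the right-hand side carries the factor $|\eta - c| = 0$, and a direct check shows $B_c = 0$ there as well; when $c = \heta$ (and $\eta \ne c$) the thresholding puts us in the ``$\heta \ge c$'' branch, which is consistent with $\eta\vee\heta \ge c$ and $\eta\wedge\heta < c$ precisely when $\eta < c$, matching the crossed case above. So the single genuinely substantive step is the algebraic identity $(1-\eta)c - (1-c)\eta = c - \eta$ together with Theorem~\ref{pro:mincostrisk}; everything else is bookkeeping over the four sign patterns.
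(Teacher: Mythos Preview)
Your proof is correct and follows essentially the same route as the paper: write $L_c(\eta,\heta)$ explicitly, invoke Theorem~\ref{pro:mincostrisk} for $\minL_c(\eta)$, use the identity $(1-\eta)c-(1-c)\eta=c-\eta$, and then do a case split on the relative positions of $c$, $\eta$, $\heta$. The paper organises the case split by rewriting $\min\{(1-\eta)c,(1-c)\eta\}$ as an indicator expression and collecting differences of indicators, whereas you enumerate the four cases directly, but this is the same argument.
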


\begin{proof}
	From Theorem~\ref{pro:mincostrisk} we know that 
	$\minL_c(\eta) = \min\left\{ (1-\eta)c, (1-c)\eta \right\}$ and 
	note that $(1-\eta) c \le (1-c)\eta \iff c \le \eta$. Then, by
	the definition of $L_c$ and the identity $1-\test{p} = \test{\neg p}$
	we have
	\begin{eqnarray*}
		B_c(\eta,\heta)
		& = & (1-\eta)c\test{\heta \ge c} + (1-c)\eta\test{\heta < c} - 
			\min\left\{ (1-\eta)c, (1-c)\eta \right\} \\
		& = & 
			(1-\eta)c\test{\heta \ge c} + (1-c)\eta\test{\heta < c} - 
			(1-\eta)c\test{\eta \ge c} - (1-c)\eta\test{\eta < c} \\
		& = &
			(1-\eta)c (\test{\heta \ge c} - \test{\eta \ge c}) 
			+ (1-c)\eta (\test{\heta < c} - \test{\eta < c}) .
	\end{eqnarray*}
	Note that $\test{\heta \ge c} - \test{\eta \ge c}$ is either 1 or -1  
	depending on whether $\heta \ge c > \eta$ or $\heta < c \le \eta$ and
	is zero otherwise. Similarly, $\test{\heta < c} - \test{\eta < c}$
	is 1 when $\heta < c \le \eta$, is -1 when $\heta \ge c > \eta$ and is
	zero otherwise. This means
	\begin{eqnarray*}
		B_c(\eta,\heta) 
		& = &
		\begin{cases}
			(1-\eta)c - (1-c)\eta ,& \heta \ge c > \eta \\
			-(1-\eta)c + (1-c)\eta ,& \eta \ge c > \heta
		\end{cases} \\
		& = &
		\begin{cases}
			c - \eta ,& \heta \ge c > \eta \\
			\eta - c ,& \eta \ge c > \heta			
		\end{cases} \\
		& = &
		|\eta - c|\test{\min\{\eta,\heta\} \le c < \max\{\eta,\heta\}}
	\end{eqnarray*}  
	as required.
\end{proof}

\begin{theorem}
    Suppose $w\colon [0,1]\rightarrow\reals^+$ is a weight function and let
    \begin{eqnarray}
	W(t)&:=&\int^t w(x) dx\label{eq:W-def}\\
	\Wb(t)&:=&\int^t W(x) dx \label{eq:Wb-def}.
    \end{eqnarray}
    Then the regret of $\heta$ with respect to a true $\eta$ under the proper
    scoring rule induced by $w$ satisfies
    \begin{equation}
	B_w(\eta,\heta)= \Wb(\eta) -\Wb(\heta) -(\eta-\heta) W(\heta) .
	\label{eq:B-w-general-form}
    \end{equation}
    \label{theorem:B-w}
\end{theorem}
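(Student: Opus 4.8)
The plan is to reduce the claim to the Choquet representation of proper scoring rules (Theorem~\ref{thm:choquet}) and to the closed form of the cost-weighted regret (Lemma~\ref{lem:int-rep-regret}), and then to evaluate the resulting one-dimensional integral by parts.

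First I would show that regret inherits the integral representation of the loss. Substituting (\ref{eq:schervish-representation}) into the definition (\ref{eq:L-eta-heta}) of the pointwise risk and using linearity of the $\eta$-average gives $L_w(\eta,\heta) = \int_0^1 L_c(\eta,\heta)\,w(c)\,dc$. By Theorem~\ref{pro:mincostrisk} (equivalently, by putting $\heta=\eta$ in (\ref{eq:B-c-formula})) each $\ell_c$ is itself proper, so $\heta=\eta$ simultaneously minimises $L_c(\eta,\cdot)$ for every $c$ and hence minimises $L_w(\eta,\cdot)$; therefore $\minL_w(\eta) = \int_0^1 \minL_c(\eta)\,w(c)\,dc$. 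Subtracting and invoking Lemma~\ref{lem:int-rep-regret},
\[
	B_w(\eta,\heta)
	= \int_0^1 \bigl( L_c(\eta,\heta) - \minL_c(\eta) \bigr)\,w(c)\,dc
	= \int_0^1 |\eta - c|\,\test{\eta\wedge\heta < c \le \eta\vee\heta}\,w(c)\,dc
	= \int_{\eta\wedge\heta}^{\eta\vee\heta} |\eta - c|\,w(c)\,dc .
\]

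Next I would split on the sign of $\eta - \heta$ and integrate by parts using $W' = w$ and $\Wb' = W$. If $\heta \le \eta$ the integrand is $(\eta-c)\,w(c)$ on $[\heta,\eta]$, and taking $v=W$ in the by-parts formula gives $\bigl[(\eta-c)W(c)\bigr]_{\heta}^{\eta} + \int_{\heta}^{\eta} W(c)\,dc = -(\eta-\heta)W(\heta) + \Wb(\eta) - \Wb(\heta)$, which is exactly (\ref{eq:B-w-general-form}). If $\heta > \eta$ the integrand is $(c-\eta)\,w(c)$ on $[\eta,\heta]$, and the same computation yields $\bigl[(c-\eta)W(c)\bigr]_{\eta}^{\heta} - \int_{\eta}^{\heta} W(c)\,dc = (\heta-\eta)W(\heta) - \bigl(\Wb(\heta)-\Wb(\eta)\bigr)$, which rearranges to the identical right-hand side; the case $\eta=\heta$ makes both sides vanish. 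Finally I would note that although $W$ is determined only up to an additive constant and $\Wb$ only up to an additive affine function, the right-hand side of (\ref{eq:B-w-general-form}) is invariant under precisely these changes, so the formula is well posed.

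The computation is essentially bookkeeping; the only points meriting care are the case split on $\sgn(\eta-\heta)$ and the identity $L_c(\eta,\eta)=\minL_c(\eta)$ that lets $\minL_w$ be written as the weighted integral of the $\minL_c$. When $w$ is merely a generalised function (as when $\minL''$ carries a Dirac mass), the same derivation applies verbatim, integration by parts being valid distributionally; alternatively one may start from $\minL_w = -\Wb''$ via (\ref{eq:w-in-terms-of-L}) together with the Savage representation (\ref{eq:savage-rep}) and arrive at the same point. I do not anticipate any genuine obstacle beyond this.
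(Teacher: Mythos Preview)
Your proposal is correct and essentially mirrors the paper's own proof: both reduce $B_w$ to $\int_{\eta\wedge\heta}^{\eta\vee\heta} |\eta-c|\,w(c)\,dc$ via (\ref{eq:schervish-representation}) and Lemma~\ref{lem:int-rep-regret}, then integrate by parts using $W'=w$, $\Wb'=W$ and handle the two cases $\eta\lessgtr\heta$. The only cosmetic difference is that the paper splits the integral at $\eta$ before integrating by parts and then specialises, whereas you do the case split first; your remark on invariance under the integration constants is also made in the paper immediately after its proof.
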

One can easily check that the arbitrary constants of integration in
(\ref{eq:W-def}) and (\ref{eq:Wb-def}) cancel out in (\ref{eq:B-w-general-form})
and thus do not matter.
\begin{proof}
    From (\ref{eq:schervish-representation}) and (\ref{eq:B-c-formula})
    we have 
    \begin{equation}
	B_w(\eta,\heta)= \int_{\eta\wedge\heta}^{\eta\vee\heta} |\eta-c| w(c)
	dc
	= \int_{\eta\wedge\heta}^\eta (\eta-c) w(c) dc + 
	\int_\eta^{\eta\vee\heta} (c-\eta) w(c) dc.\label{eq:B-w-split-integral}
    \end{equation}
    Now using integration by parts we have
    \[
	\int(c-\eta) w(c) dc = (c-\eta) W(c) - \int W(c) dc
	  = (c-\eta) W(c) -\Wb(c).
    \]
    Similarly
    \[
	 \int (\eta-c) w(c) dc = -(c-\eta) W(c) +\Wb(c).
    \]
    Thus from (\ref{eq:B-w-split-integral}) we have
    \begin{eqnarray}
	B_w(\eta,\heta)&=& 
	\left.\left[(c-\eta)W(c)-\Wb(c)\right]\right|_{\eta\wedge\heta}^\eta -
	\left.\left[(c-\eta)W(c)-\Wb(c)\right]\right|_\eta^{\eta\vee\heta}
	\nonumber\\
	&=& 2\Wb(\eta)
	-\Wb(\eta\wedge\heta)-\Wb(\eta\vee\heta)-
	(\eta-\eta\wedge\heta)W(\eta\wedge\heta)-
	(\eta-\eta\vee\heta)W(\eta\vee\heta).\nonumber
    \end{eqnarray}
    If $\eta\le\heta$, then $\eta\wedge\heta=\eta$ and $\eta\vee\heta=\heta$ and 
     we obtain
    \begin{eqnarray*}
	B_w(\eta,\heta)&= &
	2\Wb(\eta)-\Wb(\eta)-\Wb(\heta)-(\eta-\eta)W(\eta)-(\eta-\heta)W(\heta)\\
	&=& \Wb(\eta)-\Wb(\heta)-(\eta-\heta) W(\heta).
    \end{eqnarray*}
    If instead $\eta>\heta$, then $\eta\wedge\heta=\heta$ and
    $\eta\vee\heta=\eta$ and 
    we have
    \begin{eqnarray*}
	B_w(\eta,\heta)&=& \Wb(\eta)-\Wb(\heta)-\Wb(\eta)-(\eta-\heta)W(\heta)\\
	&=& \Wb(\eta)-\Wb(\heta)-(\eta-\heta) W(\heta).
    \end{eqnarray*}
    Thus in either case we obtain (\ref{eq:B-w-general-form}).
\end{proof}
Using (\ref{eq:B-w-general-form}) we can take a Taylor series expansion of 
$B_w(\eta,\heta)$ in $\heta$ about $\eta$ to obtain
\[
B_w(\eta,\heta)= \frac{1}{2} w(\eta) (\heta-\eta)^2 +\frac{1}{3}
    w'(\eta)(\heta-\eta)^3 + \frac{1}{8}w''(\eta)(\heta-\eta)^4 + \cdots
\]
This matches the second order result presented by \cite{Buja:2005}.

We consider three examples.
First, consider $w(c)=1$ for $c\in(0,1)$. Thus $W(c)=c$ and
$\Wb(c)=c^2/2$ and thus
\[
B_w(\eta,\heta)=\frac{\eta^2}{2}-\frac{\heta^2}{2}-(\eta-\heta)\heta =
\frac{(\eta-\heta)^2}{2}
\]
which is also apparent from the above Taylor series.
Second, consider  $w(c)=\frac{1}{c(1-c)}$. We have
$W(c)=\ln\left(\frac{c}{1-c}\right)$ and
$\Wb(c)=(1-c)\ln(1-c)+c\ln(c)$ and thus
\begin{eqnarray*}
    B_w(\eta,\heta) &=& (1-\eta)\ln(1-\eta)+\eta\ln \eta
	-(1-\heta)\ln(1-\heta)-
	\heta\ln\heta-(\eta-\heta)\ln\left(\frac{\heta}{1-\heta}\right)\\
    &=&(1-\eta)\ln\left(\frac{1-\eta}{1-\heta}\right)+
    \eta\ln\left(\frac{\eta}{\heta}\right)
\end{eqnarray*}
which agrees with the expression given by \cite{Buja:2005}. Finally consider
$w(c)=\delta(c-c_0)$, $c_0\in(0,1)$. We have $W(c)=U(c-c_0)$ and
$\Wb(c)=(c-c_0)_+$ and thus substituting into (\ref{eq:B-w-general-form}) we
obtain
\[
B_{c_0}(\eta,\heta)=(\eta-c_0)_+ -(\heta-c_0)_+ -(\eta-\heta)U(\heta-c_0),
\]
which agrees with (\ref{eq:B-c-formula}).
This can be written as
\begin{equation}
    B_{c_0}(\eta,\heta)= \left\{
    \begin{array}{ll}
	c_0-\eta & \ \ \mbox{if $\eta\le c_0$ and $\heta > c_0$}\\
	\eta-c_0 & \ \ \mbox{if $\eta>c_0$ and $\heta\le c_0$}\\
	0        & \ \ \mbox{otherwise.}
    \end{array}
    \right.
    \label{eq:B-c-0}
\end{equation}
In the special case that $c_0=\thalf$ we have
\begin{equation}
    B_{\frac{1}{2}}(\eta,\heta) = \left\{\begin{array}{ll} (\eta-\frac{1}{2})
	\sgn(\eta-\frac{1}{2}) & \ \ \mbox{if\ }
	\sgn(\eta-\frac{1}{2})\ne\sgn(\heta-\frac{1}{2})\\
	0 & \ \ \mbox{otherwise}.
    \end{array}\right.
    \label{eq:B-half}
\end{equation}

A similar approach allows a direct calculation of a general form for the
$w$-weighted conditional loss $L_w(\eta,\heta)$. 
\begin{theorem}
    Let $w$, $W$ and $\Wb$ be as in Theorem \ref{theorem:B-w}. Then for all
    $\eta,\heta\in[0,1]$,
    \begin{equation}
	L_w(\eta,\heta) =-\Wb(\heta)+W(\heta)(\heta-\eta)+\eta(\Wb(1)+
	\Wb(0))-\Wb(0).
	\label{eq:L-w-formula}
    \end{equation}
    Furthermore the conditional Bayes risk satisfies
    \begin{equation}
	\minL_w(\eta)=L_w(\eta,\eta)= -\Wb(\eta)+\eta(\Wb(1)+\Wb(0))-\Wb(0).
	\label{eq:L-bar-general-form}
    \end{equation}
    \label{theorem:L-w-general-form}
\end{theorem}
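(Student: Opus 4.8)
The plan is to follow the template of the proof of Theorem~\ref{theorem:B-w}: push the Choquet representation (\ref{eq:schervish-representation}) through the definition of the conditional risk, split the resulting integral at $c=\heta$, and integrate by parts twice. First I would use (\ref{eq:L-eta-heta}) and (\ref{eq:schervish-representation}) to write
\[
	L_w(\eta,\heta) = \loss_w(0,\heta)(1-\eta)+\loss_w(1,\heta)\eta
	= \int_0^1 \big[\loss_c(0,\heta)(1-\eta)+\loss_c(1,\heta)\eta\big]\,w(c)\,dc.
\]
Reading $\loss_c$ off (\ref{eq:costloss}) gives $\loss_c(0,\heta)(1-\eta)+\loss_c(1,\heta)\eta = (1-\eta)c\,\test{\heta\ge c}+\eta(1-c)\,\test{\heta<c}$, so the two indicators partition $[0,1]$ at $c=\heta$ and
\[
	L_w(\eta,\heta)=(1-\eta)\int_0^{\heta} c\,w(c)\,dc+\eta\int_{\heta}^{1}(1-c)\,w(c)\,dc.
\]

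Next I would evaluate the two integrals by parts exactly as in the proof of Theorem~\ref{theorem:B-w}, using $\int c\,w(c)\,dc = cW(c)-\Wb(c)$ and $\int (1-c)\,w(c)\,dc = (1-c)W(c)+\Wb(c)$. Substituting the limits and collecting terms, the coefficient of $W(\heta)$ is $(1-\eta)\heta-\eta(1-\heta)=\heta-\eta$, the coefficient of $\Wb(\heta)$ is $-(1-\eta)-\eta=-1$, and the boundary contributions at $0$ and $1$ combine into an affine-in-$\eta$ expression in $\Wb(0)$ and $\Wb(1)$; after fixing the constants of integration in (\ref{eq:W-def})--(\ref{eq:Wb-def}) this is exactly $\eta(\Wb(1)+\Wb(0))-\Wb(0)$, giving (\ref{eq:L-w-formula}). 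The only real work is keeping these boundary terms straight; there is no conceptual content beyond the integration by parts already carried out for $B_w$.

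For (\ref{eq:L-bar-general-form}) I would first note that the mixture loss $\loss_w$ is proper, so that $\minL_w(\eta)=L_w(\eta,\eta)$: each primitive $\loss_c$ is proper by Theorem~\ref{pro:mincostrisk} (the infimum of $L_c(\eta,\cdot)$ is attained at $\heta=\eta$), hence $\minL_c(\eta)=L_c(\eta,\eta)\le L_c(\eta,\heta)$ for every $\heta$, and integrating this inequality against $w(c)\,dc\ge 0$ yields $L_w(\eta,\eta)\le L_w(\eta,\heta)$ for all $\heta$. Setting $\heta=\eta$ in (\ref{eq:L-w-formula}) then kills the term $W(\heta)(\heta-\eta)$ and leaves $\minL_w(\eta)=-\Wb(\eta)+\eta(\Wb(1)+\Wb(0))-\Wb(0)$, as claimed. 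I expect the main obstacle to be purely clerical — matching the leftover boundary terms and constants of integration to the precise affine expression in the statement — rather than anything structural.
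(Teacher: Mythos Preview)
Your proposal is correct and follows essentially the same route as the paper: start from the Choquet representation, split the integral at $c=\heta$, integrate by parts using $\int c\,w(c)\,dc = cW(c)-\Wb(c)$ and $\int(1-c)w(c)\,dc=(1-c)W(c)+\Wb(c)$, and collect terms. Your argument for $\minL_w(\eta)=L_w(\eta,\eta)$ --- integrating the pointwise inequality $L_c(\eta,\eta)\le L_c(\eta,\heta)$ from Theorem~\ref{pro:mincostrisk} against $w(c)\,dc\ge 0$ --- is actually cleaner than the paper's, which just asserts that the formula is minimised at $\heta=\eta$ because $W$ and $\Wb$ are non-negative.
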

\begin{proof}
Starting from the expression given by 
\citet[Equation 17]{Buja:2005} and again integrating by parts we have
\begin{eqnarray}
    L_w(\eta,\heta)&=& \int_0^1
    \left[\eta(1-t)\test{t\ge\heta}+(1-\eta)t\test{t<\heta}\right] w(t)
    dt\nonumber\\
    &=& \eta\int_\heta^1 (1-t)w(t)dt + (1-\eta)\int_0^\heta t w(t)
    dt\nonumber\\
    &=& \eta \left.\left[(1-t) W(t)+\Wb(t)\right]\right|_\heta^1 +
        (1-\eta)\left.\left[tW(t)-\Wb(t)\right]\right|_0^\heta\nonumber\\
    &=&
    W(\heta)[(1-\eta)\heta-\eta(1-\heta)]-\Wb(\heta)+\eta\Wb(1)-
      (1-\eta)\Wb(0)\nonumber\\
    &=&-\Wb(\heta)+W(\heta)(\heta-\eta)+\eta(\Wb(1)+\Wb(0))-\Wb(0) .
    \label{eq:R-general-form}
\end{eqnarray}
Since $w$ is everywhere non-negative, $W$ and $\Wb$ are too 
(we deal with the constants of
integration shortly --- see e.g.~(\ref{eq:constants-of-integration})).
Consequently (\ref{eq:R-general-form}) is minimised by setting $\heta=\eta$
in which case we obtain (\ref{eq:L-bar-general-form}).
\end{proof}

The above theorem leads to a simple direct proof of
Theorem~\ref{pro:minrisk_concave}.

\begin{proof}{\bf{}(Theorem \ref{pro:minrisk_concave})}
    The concavity of $\minL_w(\eta)$ follows immediately from
    (\ref{eq:L-bar-general-form}) since $\Wb$ is convex, being the integral of a
    monotonically increasing function $W$, the integral of a non-negative
    function $w$.
    From (\ref{eq:L-bar-general-form}) we have
    \[
    \minL_w'(\heta)=-W(\heta)+\Wb(1)+\Wb(0).
    \]
    Thus
    \begin{eqnarray*}
	& & \minL_w(\heta)-(\heta-\eta) \minL_w'(\heta)\\
	&=& -\Wb(\heta)+\heta(\Wb(1)+\Wb(0))-\Wb(0) -
	(\heta-\eta)[-W(\heta) +\Wb(1)+\Wb(0)]\\
	&=& -\Wb(\heta) +W(\heta)(\heta-\eta)+\eta(\Wb(1)+\Wb(0))-\Wb(0)\\
	&=& L_w(\eta,\heta)  
    \end{eqnarray*}
    where the last step follows from  (\ref{eq:L-w-formula}).
    This proves (\ref{eq:savage-rep}).
\end{proof}

%%%%%%%%%%%%%%%%%%%%%%%%%%
\subsubsection{Convexity, Matching Losses and Canonical Links}
Recall from Section~\ref{sub:lfdual} that the Legendre-Fenchel dual of $f$ can
be expressed in terms of its derivative and inverse. 
Furthermore in this case (writing $Df:=f'$) $f'=(D\lfdual{f})^{-1}$. Thus with
$w$, $W$, and $\Wb$ defined as above,
\begin{equation}
    W=(D(\lfdual{\Wb}))^{-1},\ \ \ W^{-1}=D(\lfdual{\Wb}), \ \
    \lfdual{\Wb}=\int W^{-1} .
    \label{eq:W-star-relationships}
\end{equation}
We now further consider $B_w$ as given by (\ref{eq:B-w-general-form}). It will
be convenient to parametrise $B$ by $W$ instead of $w$. Note that the standard
parametrisation for a Bregman divergence is in terms of the convex function
$\Wb$. Thus will write $B_{\Wb}$, $B_W$ and $B_w$  to all represent
(\ref{eq:B-w-general-form}).
The following theorem is well known (e.g \cite{Zhang:2004}) 
but as will be seen, stating it
in terms of $B_W$ provides some advantages.
\begin{theorem}
    Let $w$, $W$, $\Wb$ and $\Br_W$ be as above. Then for all
    $x,y\in[0,1]$,
    \begin{equation}
	\Br_W(x,y) = \Br_{W^{-1}}(W(y),W(x)).
	\label{eq:B-W-dual}
    \end{equation}
\end{theorem}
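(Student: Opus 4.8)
The plan is to prove the duality formula $\Br_W(x,y) = \Br_{W^{-1}}(W(y),W(x))$ by direct substitution into the closed form \eqref{eq:B-w-general-form}, using the relationships \eqref{eq:W-star-relationships} between $W$, $\Wb$ and the Legendre--Fenchel dual $\lfdual{\Wb}$. First I would record the ingredients: from Theorem~\ref{theorem:B-w}, $\Br_W(x,y) = \Wb(x) - \Wb(y) - (x-y)W(y)$, and analogously $\Br_{W^{-1}}(u,v) = \lfdual{\Wb}(u) - \lfdual{\Wb}(v) - (u-v)\,W^{-1}(v)$, since $W^{-1}$ plays the role of ``$W$'' for the convex potential $\lfdual{\Wb}$ (by \eqref{eq:W-star-relationships}, $W^{-1} = D\lfdual{\Wb} = (\lfdual{\Wb})'$, and the ``$\overline{(\cdot)}$'' of $W^{-1}$ is $\lfdual{\Wb} = \int W^{-1}$). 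Substituting $u = W(y)$, $v = W(x)$, and noting $W^{-1}(v) = W^{-1}(W(x)) = x$, gives
\[
\Br_{W^{-1}}(W(y),W(x)) = \lfdual{\Wb}(W(y)) - \lfdual{\Wb}(W(x)) - (W(y)-W(x))\,x.
\]

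The key step is then to invoke the Legendre transform identity \eqref{eq:legendre-transform}: since $W = (\Wb)'$, we have $\lfdual{\Wb}(W(s)) = W(s)\cdot s - \Wb(s)$ for $s \in \{x,y\}$. Plugging these in,
\[
\Br_{W^{-1}}(W(y),W(x)) = \big(W(y)y - \Wb(y)\big) - \big(W(x)x - \Wb(x)\big) - (W(y)-W(x))x,
\]
and expanding the last term as $-W(y)x + W(x)x$ cancels the $W(x)x$ contributions and leaves $\Wb(x) - \Wb(y) - W(y)y + W(y)x = \Wb(x) - \Wb(y) - (y-x)W(y) = \Wb(x) - \Wb(y) - (x-y)W(y)$ after sign rearrangement, wait --- one must be careful: $-W(y)y + W(y)x = W(y)(x-y)$, so indeed the result is $\Wb(x) - \Wb(y) + W(y)(x-y)$. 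Comparing with $\Br_W(x,y) = \Wb(x) - \Wb(y) - (x-y)W(y)$ one sees a sign must be double-checked; the resolution is simply to track the orientation convention in \eqref{eq:B-w-general-form} carefully and, if needed, observe that $\Br_W(x,y)$ as defined is symmetric-looking only in the potential, so the bookkeeping resolves once the correct $\Br_{W^{-1}}(u,v) = \lfdual{\Wb}(u) - \lfdual{\Wb}(v) - (u-v)W^{-1}(v)$ (note which argument is ``$\eta$'' and which is ``$\heta$'') is used.

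The main obstacle is not analytical but notational: getting the roles of the two arguments of $\Br$ straight. In \eqref{eq:B-w-general-form} the first argument is the ``true'' $\eta$ and the second the ``estimate'' $\heta$, and the linear correction is anchored at the second argument; so in $\Br_{W^{-1}}(W(y),W(x))$ the anchor is $W(x)$, and the evaluation point of $W^{-1}$ must be the anchor, giving $W^{-1}(W(x)) = x$ exactly as needed to make the cross terms telescope. Provided one is disciplined about which argument is which, everything reduces to the Legendre identity $\lfdual{\Wb}\circ W = W\cdot\mathrm{id} - \Wb$, and the computation closes in two lines. I would also note briefly that the differentiability/invertibility of $W$ needed for \eqref{eq:legendre-transform} is exactly the standing assumption that $w > 0$ (so $W$ is strictly increasing), and remark that the constants of integration are immaterial by the observation already made after Theorem~\ref{theorem:B-w}.
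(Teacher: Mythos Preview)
Your approach is essentially the same as the paper's: both arguments rest on the Legendre identity $\lfdual{\Wb}(W(s)) = sW(s) - \Wb(s)$ and a direct expansion of the two Bregman expressions. The only difference is cosmetic: the paper computes each side separately and shows both equal the symmetric ``mixed'' form $\Wb(x) + \lfdual{\Wb}(v) - xv$ (with $v=W(y)$), which makes the equality immediate and avoids any sign bookkeeping.

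The sign ``discrepancy'' you flag is not a genuine issue but an arithmetic slip in your expansion. After cancelling the $W(x)x$ terms you should have
\[
\Wb(x) - \Wb(y) + W(y)y - W(y)x = \Wb(x) - \Wb(y) + W(y)(y-x) = \Wb(x) - \Wb(y) - (x-y)W(y),
\]
which is exactly $\Br_W(x,y)$. You wrote the $W(y)y$ and $W(y)x$ terms with swapped signs, giving $+W(y)(x-y)$ instead of $+W(y)(y-x)$; once that is corrected there is nothing further to resolve and the proof closes cleanly.
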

\begin{proof}
    Using (\ref{eq:legendre-transform}) we have
    \begin{eqnarray}
    & & 	\lfdual{\Wb}(u) = u\cdot W^{-1}(u) -\Wb(W^{-1}(u)) \nonumber\\
    &\Rightarrow& \Wb(W^{-1}(u)) = u\cdot W^{-1}(u) - \lfdual{\Wb}(u).
    \label{eq:W-bar-W-inverse}
    \end{eqnarray}
    Equivalently (using (\ref{eq:W-star-relationships}))
    \begin{equation}
	\lfdual{\Wb}(W(u)) = u\cdot W(u) -\Wb(u).
	\label{eq:W-bar-star-W}
    \end{equation}
    Thus substituting and then using (\ref{eq:W-bar-W-inverse}) we have
    \begin{eqnarray}
	\Br_W(x,W^{-1}(v)) &=& \Wb(x) - \Wb(W^{-1}(v)) - (x-W^{-1}(v))\cdot
	    W(W^{-1}(v))\nonumber\\
	&=& \Wb(x)+\lfdual{\Wb}(v) -v W^{-1}(v) - (x-W^{-1}(v))\cdot v
	\nonumber\\
	&=& \Wb(x) +\lfdual{\Wb}(v) - x\cdot v. \label{eq:B-W-int-res-1}
    \end{eqnarray}
    Similarly (this time using (\ref{eq:W-bar-star-W}) we have
    \begin{eqnarray}
	 \Br_{W^{-1}}(v,W(x))&=&\lfdual{\Wb}(v)-\lfdual{\Wb}(W(x))
	 -(v-W(x))\cdot W^{-1}(W(x))\nonumber\\
	 &=&\lfdual{\Wb}(v)-x W(x)+\Wb(x)-v\cdot x +x W(x)\nonumber\\
	 &=& \lfdual{\Wb}(v)+\Wb(x) - v\cdot x\label{eq:B-W-int-res-2}
    \end{eqnarray}
    Comparing (\ref{eq:B-W-int-res-1}) and (\ref{eq:B-W-int-res-2}) we  see
    that
    \[
       \Br_W(x,W^{-1}(v)) = \Br_{W^{-1}}(v,W(x)) 
    \]
    Let $y=W^{-1}(v)$. Thus subsitituting $v=W(y)$ leads to
    (\ref{eq:B-W-dual}).
\end{proof}
The weight function corresponding to $\Br_{W^{-1}}$ is 
$
\frac{\partial}{\partial x} W^{-1}(x) = \frac{1}{w(W^{-1}(x))} .
$

Often in estimating $\eta$ one uses a parametric representation of
$\heta\colon\Xcal\rightarrow$[0,1]
which has a natural scale not matching $[0,1]$. In such cases it is common to
use a \emph{link function} \citep{McCullaghNelder1989, KivinenWarmuth2001,
HelmboldKivinenWarmuth1999}.
Traditionally one writes
$
\heta = \psi^{-1}(\hh)
$
where $\psi^{-1}$ is the ``inverse link'' (and $\psi$ is of course the forward
link). The function $\hh\colon\Xcal\rightarrow\reals$ is the 
{\em hypothesis}. Often $\hh=\hh_{\bm\alpha}$
is parametrised linearly in a parameter vector $\bm{\alpha}$. In such a 
situation it is
computationally convenient if $\Lr_W(\eta,\psi^{-1}(\hh))$ is convex in $\hh$
(which implies it is convex in $\bm{\alpha}$ when $\hh$ is linear in 
$\bm{\alpha}$). The
following result provides a simple sufficient condition for the ``composite
loss'' $\Lr_W(\eta,\psi^{-1}(\hh))$ to be convex in $\hh$. It was previously
shown (with a more intricate proof) by \cite{Buja:2005}. The result also
corresponds to the notion of ``matching loss'' as developed by 
\citet{HelmboldKivinenWarmuth1999} and
\citet{KivinenWarmuth2001}.
\begin{theorem}
    Let $w$, $W$, $\Wb$ and $\Br_W$ be as above. Denote by $\Lr_W$ the
    $w$-weighted conditional loss parametrised by $W=\int w$.
    If the inverse link $\psi^{-1}=W^{-1}$ (and thus $\heta=W^{-1}(\hh)$) then
    \begin{eqnarray*}
	\Br_W(\eta,\heta) &=& \Br_W(\eta,W^{-1}(\hh)) =
	\Wb(\eta)+\lfdual{\Wb}(\hh) -\eta\cdot\hh\\
	\Lr_W(\eta,\heta) &=& \Lr_W(\eta,W^{-1}(\hh)) = \lfdual{\Wb}(\hh) -
	\eta\cdot\hh + \eta(\Wb(1)+\Wb(0))-\Wb(0)\\
	\frac{\partial}{\partial\hh} \Lr_W(\eta,W^{-1}(\hh)) & =& \heta-\eta
    \end{eqnarray*}
    and furthermore $\Br_W(\eta,W^{-1}(\hh))$ and $\Lr_W(\eta,W^{-1}(\hh))$ are
    convex in $\hh$.
\end{theorem}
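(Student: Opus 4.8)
The plan is to obtain all three displayed identities by direct substitution into formulas already established earlier in this section, and then to read off convexity from the fact that a Legendre--Fenchel dual is always convex. Nothing genuinely new needs to be proved here; the work is purely algebraic bookkeeping under the change of variables $\heta = W^{-1}(\hh)$, i.e.\ $W(\heta)=\hh$.

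For the Bregman identity I would simply specialise equation~(\ref{eq:B-W-int-res-1}) from the proof of the previous theorem, namely $\Br_W(x,W^{-1}(v)) = \Wb(x) + \lfdual{\Wb}(v) - x\cdot v$, to $x=\eta$ and $v=\hh$, which gives $\Br_W(\eta,W^{-1}(\hh)) = \Wb(\eta) + \lfdual{\Wb}(\hh) - \eta\cdot\hh$ immediately. For the conditional-loss identity I would start from the general form~(\ref{eq:L-w-formula}), substitute $\heta=W^{-1}(\hh)$ so that $W(\heta)=\hh$, obtaining $\Lr_W(\eta,W^{-1}(\hh)) = -\Wb(W^{-1}(\hh)) + \hh\,(W^{-1}(\hh)-\eta) + \eta(\Wb(1)+\Wb(0)) - \Wb(0)$, and then eliminate the $\Wb(W^{-1}(\hh))$ term using the Legendre relation~(\ref{eq:W-bar-W-inverse}), $\Wb(W^{-1}(u)) = u\,W^{-1}(u) - \lfdual{\Wb}(u)$, with $u=\hh$. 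The two occurrences of $\hh\,W^{-1}(\hh)$ cancel and one is left with $\lfdual{\Wb}(\hh) - \eta\cdot\hh + \eta(\Wb(1)+\Wb(0)) - \Wb(0)$, as claimed. As a sanity check, subtracting the two expressions gives $\Lr_W(\eta,\heta) - \Br_W(\eta,\heta) = \eta(\Wb(1)+\Wb(0)) - \Wb(0) - \Wb(\eta) = \minL_w(\eta)$, consistent with $\Br_W = \Lr_W - \minL_w$ from Section~\ref{sub:discBregDiv}.

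For the derivative I would differentiate the closed form for $\Lr_W(\eta,W^{-1}(\hh))$ in $\hh$: only the terms $\lfdual{\Wb}(\hh)$ and $-\eta\cdot\hh$ depend on $\hh$, and $(\lfdual{\Wb})' = W^{-1}$ by~(\ref{eq:W-star-relationships}), so the derivative is $W^{-1}(\hh) - \eta = \heta - \eta$. Convexity in $\hh$ is then immediate for both objects, since in each closed form the only nonconstant, nonaffine term in $\hh$ is $\lfdual{\Wb}(\hh)$, which is convex because the Legendre--Fenchel dual of any function is convex (Section~\ref{sub:lfdual}); the remaining terms are affine in $\hh$. The only point needing a word of care — and the nearest thing to an obstacle — is that this argument presupposes $W$ is invertible and that the Legendre-transform identities of~(\ref{eq:W-star-relationships}) and~(\ref{eq:W-bar-W-inverse}) apply, which holds under the standing assumption that $w$ is (essentially) positive so that $W=\int w$ is strictly increasing; as elsewhere in the paper, boundary behaviour and the constants of integration in $W$ and $\Wb$ are dealt with by observing that they cancel in all the differences that appear.
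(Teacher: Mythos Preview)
Your proposal is correct and matches the paper's approach essentially line for line: the paper likewise obtains the Bregman identity from~(\ref{eq:B-W-int-res-1}), gets the derivative by differentiating $\lfdual{\Wb}(\hh)-\eta\hh$ using $(\lfdual{\Wb})'=W^{-1}$, and argues convexity from the convexity of $\lfdual{\Wb}$ plus an affine remainder. The only cosmetic difference is that for the $\Lr_W$ formula you substitute directly into~(\ref{eq:L-w-formula}) and simplify via~(\ref{eq:W-bar-W-inverse}), whereas the paper's terse proof just cites the intermediate identities~(\ref{eq:B-W-int-res-1}) and~(\ref{eq:B-W-int-res-2}); your route is arguably clearer and your sanity check $\Lr_W-\Br_W=\minL_w$ is a nice touch the paper omits.
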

\begin{proof}
    The first two expressions follow immediately from (\ref{eq:B-W-int-res-1})
    and (\ref{eq:B-W-int-res-2}) by substitution. The derivative follows from
    calculation: $\frac{\partial}{\partial\hh} \Lr_W(\eta,W^{-1}(\hh)) 
    = \frac{\partial}{\partial\hh}(\lfdual{\Wb}(\hh)-\eta\cdot\hh) 
    = W^{-1}(\hh)-\eta=\heta-\eta$.  The convexity follows from the fact that 
    $\lfdual{\Wb}$ is
    convex (since it is the LF dual of a convex function $\Wb$) and the overall
    expression is the sum of this and a linear term, and thus convex.
\end{proof}
\cite{Buja:2005} call $W$ the \emph{canonical link}.  

Importantly, the linearity of expectation means that the same weight function
can be used to write a loss's risk and statistical information as a weighted
integral of the primitives $\LL_c$ and $\SI_c$, respectively. When combined
with Theorem~\ref{thm:duality}, these results give a similar 
weighted integral representation for  Bregman divergences. 

%%%%%%%%%%%%%%%%%%%%%%%%%%%%%%%%%%%%%%%%%%%%%%%%%%%%%%%%%%%%%%%%%%%
\subsection{Relating Integral Representations for $\LL$ and $\II_f$}
\label{sub:relating-SI-and-fdiv}
We 
can also give a translation between the weight functions $\gamma$ for an 
$f$-divergence and $w$ for the corresponding statistical information. 

\begin{theorem}\label{pro:wgamma}
Let $f$ be convex (with $f(1)=0$) define $\II_f$ and the weight function 
$\gamma$. Then
for each $\pi\in(0,1)$ the weight function $w_\pi$ in Theorem 
\ref{thm:choquet} for the loss
$\loss^\pi$
given by Theorem~\ref{thm:duality} satisfies
\[
	w_\pi(c) 
	= \frac{\pi(1-\pi)}{\nu(\pi,c)^3}
		\gamma\left(\frac{(1-c)\pi}{\nu(\pi,c)}\right)
\] 
or, inversely,
\[
	\gamma(c) = \frac{\pi^2(1-\pi)^2}{\nu(\pi,c)^3}
		w\left(\frac{\pi(1-c)}{\nu(\pi,c)}\right)
\]
where $\nu(\pi,c) = (1-c)\pi + (1-\pi)c$.
\end{theorem}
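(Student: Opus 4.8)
The plan is to chain together three facts already in hand: the explicit Savage representation $\minL^\pi(\eta)=-\frac{1-\eta}{1-\pi}f(\lambda_\pi(\eta))$ for the loss $\loss^\pi$ coming from the ``conversely'' half of Theorem~\ref{thm:duality}; the identity $w_\pi=-(\minL^\pi)''$ from Theorem~\ref{thm:choquet}; and the relation $\gamma(\pi)=\pi^{-3}f''((1-\pi)/\pi)$ from (\ref{eq:gamma-f-relationship}). The whole proof is then a single change of variables, and the only thing to get right is the bookkeeping of the factors $\pi$, $1-\pi$, $c$, $1-c$.

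First I would differentiate $\minL^\pi$ twice in $\eta$. Writing $\lambda:=\lambda_\pi$ as in (\ref{eq:lambda-pi-def}), so that $\lambda'(\eta)=\frac{1-\pi}{\pi(1-\eta)^2}$ and $\lambda''(\eta)=\frac{2(1-\pi)}{\pi(1-\eta)^3}$, the key observation is the Möbius identity $(1-\eta)\lambda''(\eta)=2\lambda'(\eta)$. Expanding $(\minL^\pi)''$ by the product and chain rules, the three terms containing $f'(\lambda)$ combine, via this identity, to zero, leaving only
\[
	(\minL^\pi)''(\eta)=-\frac{1-\eta}{1-\pi}\,f''(\lambda(\eta))\,\lambda'(\eta)^2=-\frac{(1-\pi)\,f''(\lambda_\pi(\eta))}{\pi^2(1-\eta)^3}.
\]
(Equivalently, one can note $\minL^\pi(\eta)=-I_f(\eta/\pi,(1-\eta)/(1-\pi))$ and apply the known formula for the Hessian of the perspective transform $I_f$ along the affine curve $\eta\mapsto(\eta/\pi,(1-\eta)/(1-\pi))$; it gives the same thing.) By Theorem~\ref{thm:choquet} this yields $w_\pi(c)=-(\minL^\pi)''(c)=\dfrac{(1-\pi)\,f''(\lambda_\pi(c))}{\pi^2(1-c)^3}$.

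Next I would eliminate $f''$ using (\ref{eq:gamma-f-relationship}) rewritten as $f''(s)=(1+s)^{-3}\gamma\big((1+s)^{-1}\big)$ (obtained by the substitution $s=(1-\pi')/\pi'$). Plugging $s=\lambda_\pi(c)$ and computing $1+\lambda_\pi(c)=\dfrac{\nu(\pi,c)}{\pi(1-c)}$, where $\nu(\pi,c)=(1-c)\pi+(1-\pi)c$, so that $(1+\lambda_\pi(c))^{-1}=\dfrac{\pi(1-c)}{\nu(\pi,c)}$, substitution into the displayed expression for $w_\pi(c)$ and cancellation of the powers of $\pi(1-c)$ gives the first claimed formula $w_\pi(c)=\dfrac{\pi(1-\pi)}{\nu(\pi,c)^3}\,\gamma\!\big(\tfrac{(1-c)\pi}{\nu(\pi,c)}\big)$. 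For the inverse formula I would either simply invert this relation algebraically, or run the substitution the other way: solving $\lambda_\pi(c')=\frac{1-c}{c}$ gives $c'=\pi(1-c)/\nu(\pi,c)$ and $1-c'=(1-\pi)c/\nu(\pi,c)$; substituting these into $w_\pi(c')=\frac{(1-\pi)f''(\lambda_\pi(c'))}{\pi^2(1-c')^3}$ and then using $\gamma(c)=c^{-3}f''((1-c)/c)$ produces $\gamma(c)=\dfrac{\pi^2(1-\pi)^2}{\nu(\pi,c)^3}\,w_\pi\!\big(\tfrac{\pi(1-c)}{\nu(\pi,c)}\big)$.

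There is no real obstacle here: it is a routine calculus computation. The two places that need care are (i) the cancellation of the $f'$ terms in the second derivative, which is exactly where the Möbius structure of $\lambda_\pi$ is used and without which the formula would not be as clean, and (ii) keeping track of which affine representative of the primitive families the weight functions $\gamma$ and $w_\pi$ correspond to, so that the constants of integration drop out as in Theorems~\ref{theorem:B-w} and \ref{theorem:L-w-general-form}. As elsewhere in the paper, when $f$ is not twice differentiable the second derivatives above are to be read in the distributional sense, which does not affect the argument.
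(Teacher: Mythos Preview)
Your proposal is correct and follows essentially the same route as the paper: both compute $w_\pi=-(\minL^\pi)''$ directly from the Savage representation of $\minL^\pi$, observe that the $f'$ terms cancel via $(1-\eta)\lambda_\pi''=2\lambda_\pi'$ (the paper writes this as $-2r_\pi'+(1-\eta)r_\pi''=0$), and then substitute into the $\gamma$--$f''$ relation (\ref{eq:gamma-f-relationship}). Your rewriting $f''(s)=(1+s)^{-3}\gamma((1+s)^{-1})$ is a mildly cleaner way to organise the final substitution than the paper's version, but the argument is the same.
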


\begin{proof}
Theorem~\ref{thm:duality} shows that 
\begin{equation}\label{eq:propwgamma1}
	\minL^\pi(\eta) 
	= -\frac{1-\eta}{1-\pi}f\left(\frac{1-\pi}{\pi}\frac{\eta}{1-\eta}\right).
\end{equation}
and we have seen from (\ref{eq:w-in-terms-of-L}) that
$w^\pi(c) = -(\minL^\pi)''(c)$. 
The remainder of this proof involves taking the second derivative of $\minL$, 
doing some messy algebra and matching the result to the relationship
between $\gamma$ and $f''$ in (equation~\ref{eq:gamma-f-relationship}).

Letting $r_\pi = \frac{1-\pi}{\pi}\frac{\eta}{1-\eta}$ and taking derivatives of 
(\ref{eq:propwgamma1}) yields
\begin{eqnarray*}
	-(\minL^\pi)'(\eta) 
	& = & (1-\pi)^{-1}[-f(r_\pi) + (1-\eta)f'(r_\pi)r_{\pi}'] \\
	-(\minL^\pi)''(\eta)
	& = & (1-\pi)^{-1}[-f'(r_\pi) r_{\pi}' 
			+ (1-\eta)(f'(r_\pi)r_{\pi}'' + f''(r_\pi)(r_{\pi}')^2)
			- f'(r_\pi)r_{\pi}'] \\
	& = & (1-\pi)^{-1}[(-2r_{\pi}' + (1-\eta)r_{\pi}'')f'(r_\pi)
			+ (1-\eta)(r_{\pi}'')^2f''(r_\pi) ].
\end{eqnarray*}
However, the form of $r_\pi$ means 
$r_{\pi}' = \frac{1-\pi}{\pi}\frac{1}{(1-\eta)^2}$
and so
$r_{\pi}'' = \frac{1-\pi}{\pi}\frac{2}{(1-\eta)^3}$. This means the coefficient
of $f'(r_\pi)$ in the above expression vanishes
\[
	(-2r_{\pi}' + (1-\eta)r_{\pi}'') 
	= \frac{1-\pi}{\pi}
		\left[
			\frac{-2}{(1-\eta)^2} + (1-\eta)\frac{2}{(1-\eta)^3}
		\right]
	= 0.
\]
Substituting this back into $-(\minL)''$ gives us
\begin{eqnarray*}
	-(\minL^\pi)''(\eta)
	& = & \frac{1-\eta}{1-\pi} f''(r_\pi)(r_{\pi}')^2 \\
	& = & \frac{1-\eta}{1-\pi} 
		f''\left(
			\frac{1-\pi}{\pi}\frac{\eta}{1-\eta}
		\right)
		\frac{(1-\pi)^2}{\pi^2}\frac{1}{(1-\eta)^4} \\
	w(\eta) & = & \frac{1-\pi}{\pi^2(1-\eta)^3}
		f''\left(
			\frac{1-\pi}{\pi}\frac{\eta}{1-\eta}
		\right) .
\end{eqnarray*}
By equation~\ref{eq:gamma-f-relationship} we have 
\begin{equation}\label{eq:propwgamma2}
	\gamma(t) = \frac{1}{t^3}f''\left(\frac{1-t}{t}\right).
\end{equation}
Letting $t = \frac{(1-c)\pi}{(1-c)\pi + (1-\pi)c}$ in that expression gives
\[
	\gamma\left(\frac{(1-c)\pi}{\nu(\pi,c)}\right) 
	= \frac{\nu(\pi,c)^3}{(1-c)^3\pi^3}
		f''\left(
			\frac{1-\pi}{\pi}\frac{c}{1-c}
		\right).
\]
Thus
\[
\frac{\pi(1-\pi)}{\nu(\pi,c)^3}\gamma\left(\frac{(1-c)\pi}{\nu(\pi,c)}\right) 
	= \frac{1-\pi}{\pi^2(1-c)^3}f''\left(\frac{1-\pi}{\pi}\frac{c}{1-c}\right)
	= w(c)
\]
as required.
The argument to show the inverse relationship is essentially the same.
\end{proof}

\begin{sidewaystable}
    \begin{center}
    \arrayrulecolor{tabgrey}
    \renewcommand{\arraystretch}{1.3}
    \begin{tabular*}{18cm}{lcl}
	\hline
	 Symbol & $\gamma(\pi)$ & $f(t)$ \hfill (Divergence Name)\hfill\\
	\hline\hline
	$\rule{0pt}{2.4ex} V(P,Q)$ & $4\delta\left(\pi-\frac{1}{2}\right)$ 
	   & $|t-1|$\hfill (Variational Divergence)\\[0.2mm]
	\hline
	$\rule{0pt}{2.4ex}\Delta(P,Q)$ &  8 & $(t-1)^2/(t+1)$\hfill  (Triangular 
	     Discrimination) \\
	\hline
	$\rule{0pt}{2.4ex}\mathrm{I}(P,Q)=\frac{1}{2}[\mathrm{KL}(P,\frac{P+Q}{2})+$ &
	     $\frac{1}{2\pi(1-\pi)}$ & $\frac{t}{2}\ln t - 
	     \frac{(t+1)}{2}\ln(t+1) +\ln 2$ \\
	     \hspace*{2cm}$\mathrm{KL}(Q,\frac{P+Q}{2})]$ & 
	       &\hfill (Jensen-Shannon Divergence)\\[0.2mm]
	\hline
	$\rule{0pt}{3ex} \mathrm{T}(P,Q)=\frac{1}{2}[\mathrm{KL}(\frac{P+Q}{2},P) +$
	&$\frac{(2\pi-\frac{1}{2})^{2}+\frac{1}{2}}{4\pi^2(\pi-1)^2}$ &
	$\left(\frac{t+1}{2}\right)\ln\left(\frac{t+1}{2\sqrt{t}}\right)$ \\
	\hspace*{2cm} $\mathrm{KL}(\frac{P+Q}{2},Q)]$ &
	  &\hfill(Arithmetic-Geometric Mean Divergence) \\[0.2mm]
	\hline
	$\rule{0pt}{2.4ex}\mathrm{U}(P,Q)$ \hfill(``Uninformative prior'')
	& $\frac{1}{\pi\wedge(1-\pi)}$ &
	     $\test{t\le 1}(t\ln(t/(t+1)) + \ln(2))$\\
	& & $+\test{t> 1}((t+1)\ln(t+1)-(\frac{1}{2}+2\ln 2)t+\frac{1}{2})$
	\\[0.2mm]
	     \hline
	     $\rule{0pt}{2.4ex}\mathrm{J}(P,Q)=\mathrm{KL}(P,Q)+\mathrm{KL}(Q,P)$ &$\frac{1}{\pi^2 (1-\pi)^2}$ & $(t-1)\ln(t)$
	         \hfill(Jeffreys Divergence) \\[0.2mm]
	     \hline
	     $\rule{0pt}{2.4ex}h^2(P,Q)$ & $\frac{1}{2[\pi(1-\pi)]^{3/2}}$ & $(\sqrt{t}-1)^2$
	     \hfill(Hellinger Divergence)\\[0.2mm]
	     \hline
	     $\rule{0pt}{3ex} \chi^2(P,Q)$ & $\frac{2}{\pi^3}$ & $(t-1)^2$
	     \hfill (Pearson $\chi$-squared)\\[0.2mm]
	     \hline
	     $\rule{0pt}{3ex}\Psi(P,Q)=\chi^2(P,Q)+\chi^2(Q,P) $ &
	     $\frac{2}{\pi^3}+\frac{2}{(1-\pi)^3}$ &
	     $\frac{(t-1)^2(t+1)}{t}$ \hfill(Symmetric $\chi$-squared)\\[0.2mm]
	     \hline
	     $\mathrm{KL}(P,Q) $ & $ \frac{1}{\pi^2 (1-\pi)}$ & $t\ln t$
	     \hfill (Kullback-Liebler Divergence)\\[0.2mm]
	     \hline
     $\mathrm{KL}_\epsilon(P,Q) $ & $\frac{1}{\pi^2 (1-\pi)}\indicator_{[\epsilon,1-\epsilon]}(\pi)$
	 & $\test{t<\frac{\epsilon}{1-\epsilon}}(t(\ln(
	    \frac{\epsilon}{1-\epsilon})+1)-\frac{\epsilon}{1-\epsilon})$\
	    \hfill
	    (Truncated KL)\\
	 & $\epsilon\in[0,1)$& $+\test{\frac{\epsilon}{1-\epsilon}\le t \le\frac{1-\epsilon}{\epsilon}} 
	     t\ln t$\\
	     & & $+\test{\frac{1-\epsilon}{\epsilon} <t}(t(\ln(
	     \frac{1-\epsilon}{\epsilon})+1)-\frac{1-\epsilon}{\epsilon})$\\
	     \hline
    \end{tabular*}
   
    \hspace{1cm}\parbox{16cm}{\caption{
     $f$-divergences and their corresponding weights;
    confer~\cite{Taneja2005,LieVaj06}. 
    \cite{Topsoe:2000} calls $C(P,Q)=2 \mathrm{I}(P,Q)$ and
    $\tilde{C}(P,Q)=2\mathrm{T}(P,Q)$ the Capacitory and Dual Capacitory
    discrimination respectively. 
    \label{table:symmetric-divergences}}}
\end{center}
\end{sidewaystable}

The representation (\ref{eq:LV-integral-rep},\ref{eq:gamma-f-relationship})
allows the determination of  weights for common $f$-divergences.
$\mathrm{KL}(P,Q)$ corresponds
to $\gamma(\pi)=\frac{1}{\pi^2(1-\pi)}$. Thus
$J(P,Q)=\mathrm{KL}(P,Q)+\mathrm{KL}(Q,P)$ corresponds to $\gamma(\pi)=\frac{1}{\pi^2(1-\pi)^2}$.
Several $f$-divergences are presented with their corresponding weight
function
in Table~\ref{table:symmetric-divergences}. 
The weight for $\mathrm{KL}(P,Q)$ has a double pole at $\pi=0$ which is why
KL-divergence is hard to estimate\footnote{
Considering KL-divergence from the weight
function perspective immediately suggests a scheme to estimate it: avoid
attempting to estimate the regions near zero and one where the weight function
diverges. A particular example of this is the divergence we have called
$\mathrm{KL}_\epsilon$ in 
Table~\ref{table:symmetric-divergences}.  This approach to regularizing the
KL-divergence was suggested by
\citet[page 454]{Gutenbrunner:1990}.}.

%%%%%%%%%%%%%%%%%%%%%%%%%%%%%%%%%%%%%%%%%
\subsection{Example --- Squared Loss}
\label{section:example-squared-loss}

We illustrate some of the above concepts with  a simple example. Consider
squared loss. We have
\[
    L(\eta,\heta)=\heta^2(1-\eta) +(\heta-1)^2 \eta 
\]
and thus $\minL(\eta)=L(\eta,\eta)=\eta(1-\eta)$ and $\minL''(\eta)=-2$ and
thus by (\ref{eq:w-in-terms-of-L})  $w(\eta)=2$. From (\ref{eq:OV-L-to-f})
we thus have
\[
f^\pi(t)=\frac{\pi(1-\pi)(\pi t +1 -\pi) -(1-\pi)\pi t}{\pi t+1-\pi} .
\]
Choosing $\pi=\thalf$ this becomes $f^{\frac{1}{2}}(t)=\frac{1-t}{4t+4}$. One
can check that $8\cdot f^{\frac{1}{2}}(t)+t-1=\frac{(t-1)^2}{t+1}$ which agrees
with the $f$ corresponding to Capacitory Discrimination in
Table~\ref{table:symmetric-divergences}. Scaling is just a question of
normalisation and we have already seen that $\II_f$ is insensitive to affine
offsets in $f$.  This illustrates the awkwardness of parameterising $\II_f$ in
terms of $f$: at first sight  $\frac{1-t}{4t+4}$ and $\frac{(t-1)^2}{t+1}$ seem
different. Using weight functions automatically filters out the effect of 
any affine
offsets --- if the weight functions corresponding to $f_1$ and $f_2$ match,
then $\II_{f_1}=\II_{f_2}$. Finally observe that substituting 
$\gamma(\pi)=8$ from the table into
Theorem~\ref{pro:wgamma} we obtain $w_{\frac{1}{2}}(c)=\frac{1/4}{\nu(\pi,c)^3}
\cdot 8 = 2$ consistent  with the weight obtained above.

%%%%%%%%%%%%%%%%%%%%%%%%%%%%%%%%%%%%%%%%%%%%%%%%%%%%%%%%%%%%%%%%%%%%%%
\section{Graphical Representations}
The last section described representations of risks and $f$-divergences in 
terms of weighted integrals of primitive functions. The weight functions and
values of the primitive functions lend themselves to a graphical interpretation
that is explored in this section. In particular, a diagram called a 
\emph{risk curve} is introduced. This is shown to be closely related to 
the \emph{cost curves} of \cite{DrummondHolte2006} as well as an idealised
\emph{receiver operating characteristic, or ROC curve} 
\citep{Fawcett:2004}.
Risk curves are useful aids to intuition when reasoning about 
risks, divergences and information and they are used extensively in 
Section~\ref{sec:inequalities} to derive bounds between various divergences
and risks.

\subsection{ROC Curves}\label{sub:ROC}
Plotting a \emph{receiver operating curve} or \emph{ROC curve} is a way of 
graphically summarising the performance of a test statistic. Recall from
Section~\ref{sub:NPLemma} that in the context of a binary experiment $(P,Q)$
on a space $\Xcal$, a test statistic $\tau$ is any function that maps points in 
$\Xcal$ to the real line. Each choice of threshold $\tau_0\in\RR$ results in
a classifier $r(x) = \test{\tau \ge \tau_0}$ and its corresponding 
classification rates. 
An ROC curve for the test statistic $\tau$ is simply a plot of the true positive
rate of these classifiers as a function of their false positive rate as 
the threshold $\tau_0$ varies over $\RR$.
Formally, 
\[
	\ROC(\tau) 
	:= \{ (FP_{\tau}(\tau_0), TP_{\tau}(\tau_0)) : \tau_0 \in \RR \}
	\subset [0,1]^2.
\]
A graphical example of an ROC curve is shown as the solid black line in 
Figure~\ref{fig:roc}.

\begin{figure}[t]
    \begin{center}
	\includegraphics[width=0.5\textwidth]{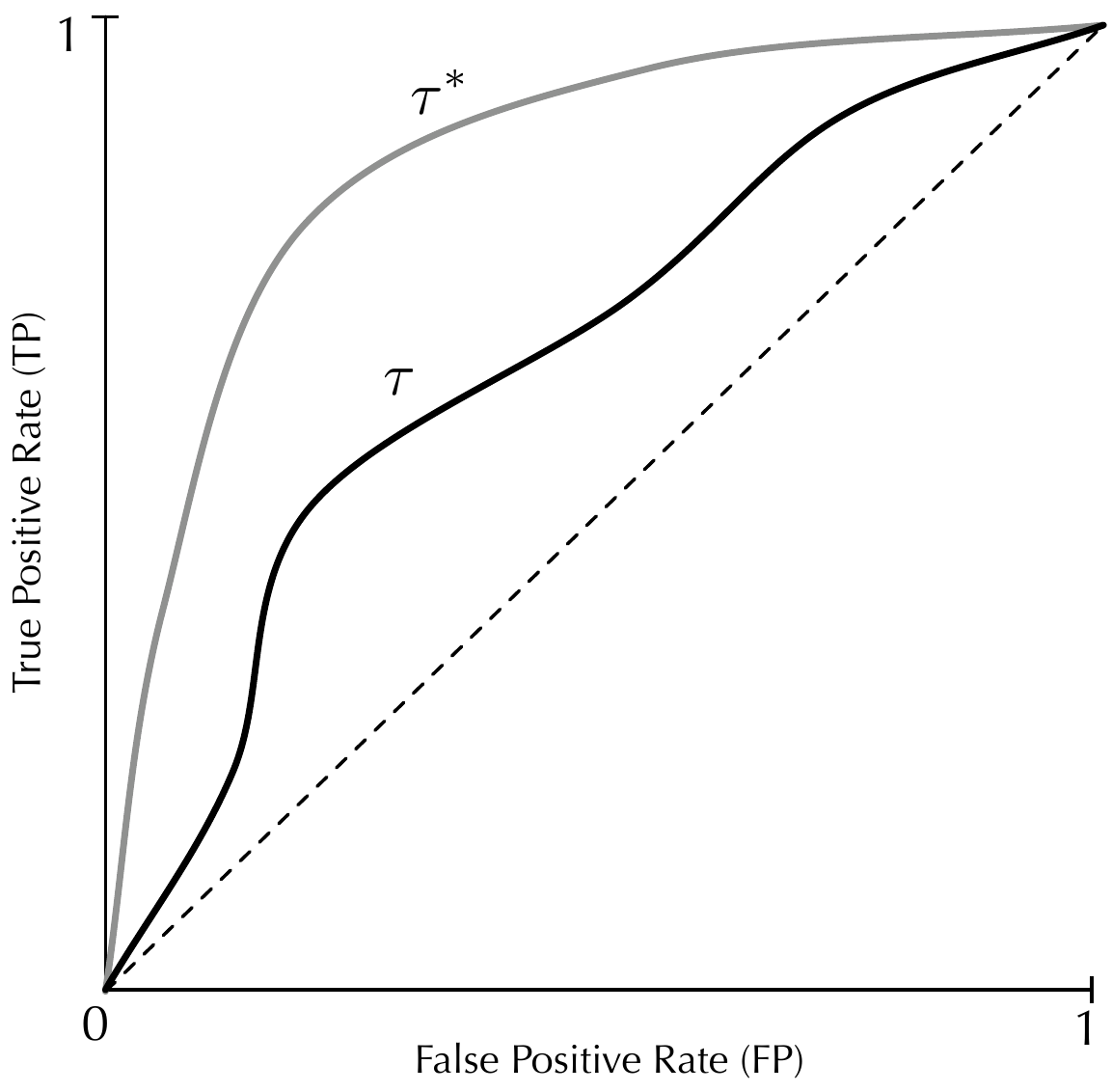}
    \end{center}
\caption{Example of an ROC diagram showing an ROC curve for an arbitrary 
statistical test $\tau$ (middle, bold curve) as well as an optimal statistical 
test $\tau^*$ (top, grey curve). The dashed line represents the ROC curve for a 
random, or uninformative statistical test.\label{fig:roc}}
\end{figure}

For a fixed experiment $(P,Q)$, the Neyman-Pearson lemma provides an upper 
envelope for ROC curves. It guarantees that the ROC curve for the likelihood
ratio $\tau^* = dP/dQ$ will lie above, or \emph{dominate}, that of any other 
test statistic $\tau$ as shown in Figure~\ref{fig:roc}. 
This is an immediate consequence of the 
likelihood ratio being the uniformly most powerful test since for each
false positive rate (or size) $\alpha$ it will have the largest true positive 
rate (or power) $\beta$ of all tests \citep{EguchiCopas2001}.

The performance of a test statistic $\tau$ shown in an ROC curve is commonly 
summarised by the \emph{Area Under the ROC Curve}, $\AUC(\tau)$, 
and is closely related to the Mann-Whitney-Wilcoxon statistic.
Formally, if $(P,Q)$ is a binary experiment and $\tau$ a test statistic
the AUC is
\begin{eqnarray}
	\AUC(\tau) &:=& \int_0^1 \beta_\tau(\alpha)\,d\alpha \label{eq:AUC}\\
			  &=& \int_{-\infty}^\infty
			 		TP_{\tau}(\tau_0)\,FP'_\tau(\tau_0)
				  \,d\tau_0, \label{eq:AUC-TP-FP}
\end{eqnarray}
where $\beta_\tau(\alpha) = TP_{\tau}(\tau_0)$ for a $\tau_0\in\RR$ such that
$FP_\tau(\tau_0) = \alpha$.

In Section~\ref{sub:NPLemma} the Neyman-Pearson lemma was used to argue that the 
curve $\beta(\alpha)$ for the likelihood ratio dominates all other curves. 
As the likelihood ratio is used to define $f$-divergences, it is natural to
ask whether the area under the maximal ROC curve is an $f$-divergence. 
That is, does there exist a convex $f$ such that $\II_f(P,Q) = \AUC(dP/dQ)$?
Interestingly, the answer is ``no''. 
To see this, note that an $f$-divergence's integral can be decomposed as follows
\begin{equation}\label{eq:If-for-AUC}
	\II_f(P,Q) = \int_0^\infty f(t) \int_{\Xcal_t} dQ\,dt
\end{equation}
where $\Xcal_t := \{ x \in \Xcal : dP/dP(x) = t \} = (dP/dQ)^{-1}(t)$. 
Compare this to the definition of AUC given in (\ref{eq:AUC-TP-FP}) when 
$\tau = dP/dQ$
\begin{eqnarray}
	\AUC(dP/dQ) 
	& = & \int_{-\infty}^\infty  TP_{\tau}(t)\,FP'_\tau(t) \,dt \nonumber \\ 
	& = & -\int_0^{\infty} (P\circ\tau^{-1})([t,\infty)) \label{eq:AUC-int}
			\int_{\Xcal_t} dQ\,dt
\end{eqnarray}
since 
\(
	FP'_\tau(t) 
	= d/dt \int_t^{\infty} \int_{\Xcal_t} dQ(x)\,dt
	= - \int_{\Xcal_t} dQ
\) 
and $dP/dQ \ge 0$.
If we assume there exists an $f$ such that for all binary experiments $(P,Q)$ 
that $\II_f(P,Q) = \AUC(dP/dQ)$ we would require the integrals in 
(\ref{eq:If-for-AUC}) and (\ref{eq:AUC-int}) to be equal for all $(P,Q)$. 
This would require $f(t) = -(P\circ(dP/dQ)^{-1})([t,\infty))$ for all 
$t\in[0,\infty)$ which is not possible for all binary experiments $(P,Q)$
simultaneously.

Interestingly, even though maximal AUC for $(P,Q)$ cannot be expressed as an 
$f$-divergence, \cite{torgersen1991cse} shows how it can be expressed as
the variational divergence between the \emph{product measures} 
$P\times Q$ and $Q \times P$. That is, 
$\AUC(dP/dQ) = V(dP\times dQ,dQ\times dP)$. Following up this connection and
considering other $f$-divergences of product measures is left as future work.

It is important to realise that AUC is not a particularly intrinsic measure --- 
just a common one. As the earlier discussion of integral representations
have shown, there is value in considering weighted versions of integrals such
as (\ref{eq:AUC}). As \cite{Hand2008} notes in his commentary on a recent
paper (outlining another type of performance curve):
``To use all the values of the diagnostic instrument,
when integrating to yield the overall AUC measure,
it is necessary to decide what weight to give to each value in
the integration. The AUC implicitly does this using a weighting
derived empirically from the data.''
Along these lines, \cite{xie2002wgm} and \citet{EguchiCopas2001} have suggested
generalisations of the AUC that incorporates weights and show that certain
choice of weight functions yield well-known losses. 

A closer investigation of these generalisations of AUC and their connection to
measures of divergence is also left as future work.

\subsection{Risk Curves}

Risk curves are graphical representation closely related to 
ROC curves that take into account a prior $\pi$ in addition to the binary
experiment $(P,Q)$. 
They provide a concise summary of the 
risk of an estimator $\hat{\eta}$ for the full range of costs $c\in[0,1]$ for
a fixed prior $\pi\in[0,1]$, or, alternatively, for the full range of priors 
$\pi$ given a fixed cost $c$.

Formally, \emph{a risk curve for costs} for the estimator $\heta$ is the set 
$\{ (c, \LL_c(\hat{\eta}, \pi, P, Q)) : c\in[0,1]\}$  
of points parametrised by cost\footnote{
	Unlike the cost curves originally described by \cite{DrummondHolte2006}, 
	the version presented here does not normalise the risk and plots the cost 
	on the horizontal axis rather than the product of the prior probability and
	cost.
}.
A \emph{risk curve for priors} for the estimator $\heta$ is the set 
$\{ (\pi, \LL^{\textrm{0-1}}(\heta, \pi,P,Q)) : \pi\in [0,1] \}$.

\begin{figure}[t]
    \begin{center}
	\includegraphics[width=0.5\textwidth]{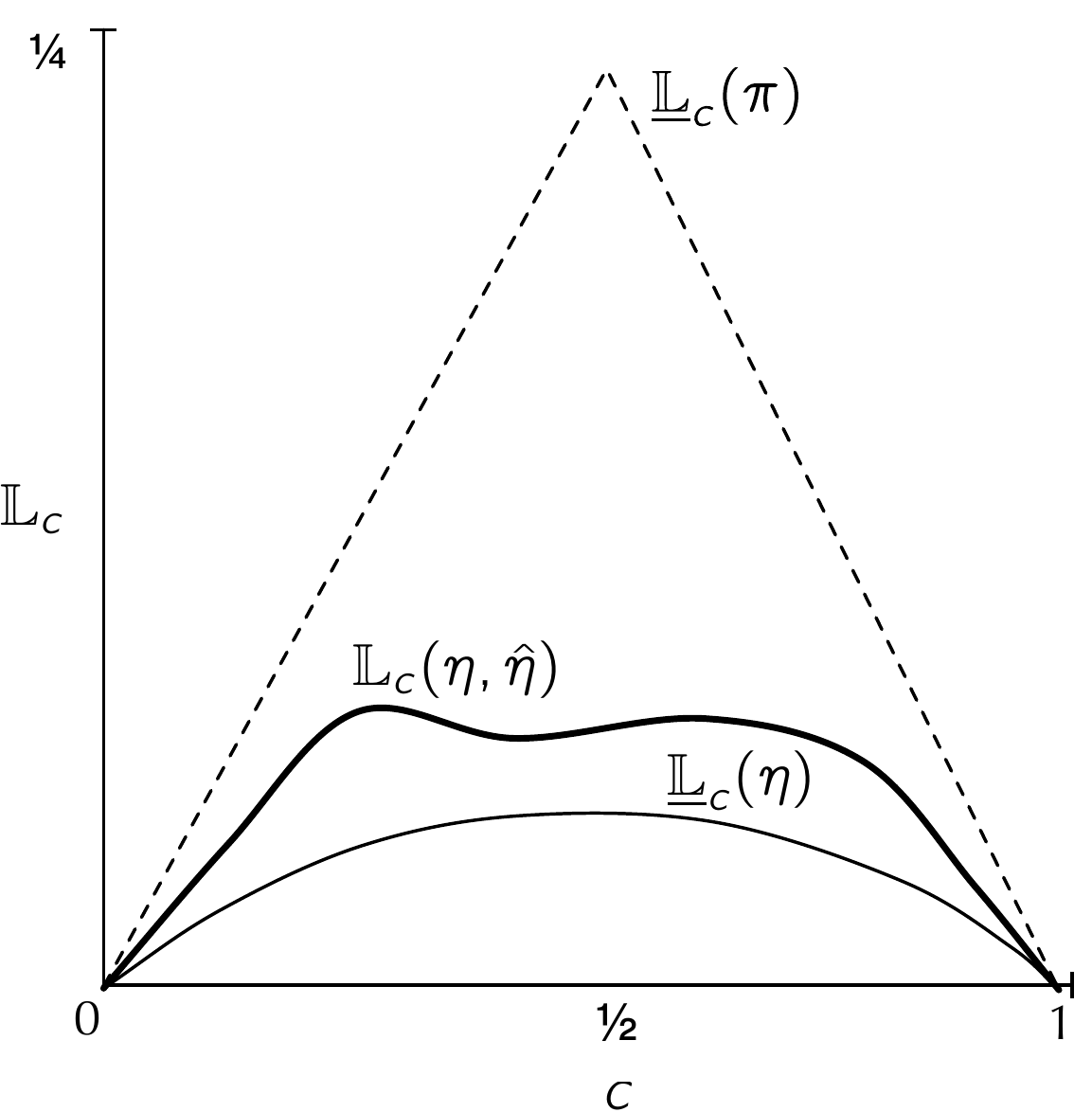}
    \end{center}
\caption{Example of a risk curve diagram showing risk curves for costs for the 
true posterior probability $\eta$ (bottom, solid curve), an estimate 
$\hat{\eta}$ (middle, bold curve) and the majority class or prior estimate (top, 
dashed curve).\label{fig:costcurve}}
\end{figure}

Figure~\ref{fig:costcurve} shows an example of a \emph{risk curve diagram}. 
On it is plotted the cost curves for an estimate 
$\hat{\eta}$ of a true posterior $\eta$ on the same graph. The ``tent'' function
also shown is the risk curve for the majority class predictor 
$\min((1-\pi)c, (1-c)\pi)$. Here $\pi = \thalf$. Other choices of $\pi\in(0,1)$
skew the tent and the curves under it towards 0 or 1.  

In light of the weighted integral representations described in 
Theorem~\ref{thm:choquet}, several of the quantities can be associated with 
properties of a cost curve diagram. The weight function $w(c)$ associated with
a loss $\ell$ can be interpreted as a weighting on the horizontal axis of a 
risk curve diagram. When the area under a risk curve is computed with respect
to this weighting the result is the full risk $\LL$ since 
$\LL(\eta,\hat{\eta}) = \int_0^1 \LL_c(\eta,\hat{\eta})\,w(c)\,dc$.

Furthermore, the weighted area between the risk curves for an estimate 
$\hat{\eta}$ and the true posterior $\eta$ is the regret 
$\LL(\eta,\hat{\eta}) - \minLL(\eta)$ and the statistical information 
$\SI(\eta,M) = \minLL(\pi,M) - \minLL(\eta,M)$ is the weighted area between 
the ``tent'' risk curve for $\pi$ and the risk curve for $\eta$.

The correspondence between ROC and risks curves is due to the relationship 
between the true class probability $\eta$ and the likelihood ratio $dP/dQ$
for a fixed $\pi$. As shown in Section~\ref{sub:generative-discriminative},
this relationship is
\[
	\frac{dP}{dQ}(x) = \lambda_\pi(\eta) 
	                 = \frac{1-\pi}{\pi}\frac{\eta}{1-\eta}.
\]
Each cost $c\in[0,1]$ can be mapped to a corresponding test statistic 
threshold $\tau_0 = \lambda_\pi(c)$ and \emph{vice versa}. 

\cite{DrummondHolte2006} show that their cost curves have a point-line dual 
relationship with ROC curves. The same result holds for our risk diagrams.
Specifically, for a given point $(FP, TP)$ on an ROC diagram the 
corresponding line in a risk diagram is 
\[
	\LL_c = (1-\pi)\,c\,FP + \pi\,(1-c)\,(1-TP).
\] 
Conversely, the line in ROC space corresponding to a point $(c, \LL_c)$ in risk 
space is
\[
	TP = \frac{(1-\pi)c}{\pi(1-c)}FP + \frac{(1-\pi)c - L}{\pi(1-c)}.
\]
An example of this relationship is shown graphically in 
Figure~\ref{fig:variational_roc_bound} between the point A and the line A*. 

 \begin{figure}[t]
     \begin{center}
 	\includegraphics[width=1.0\textwidth]{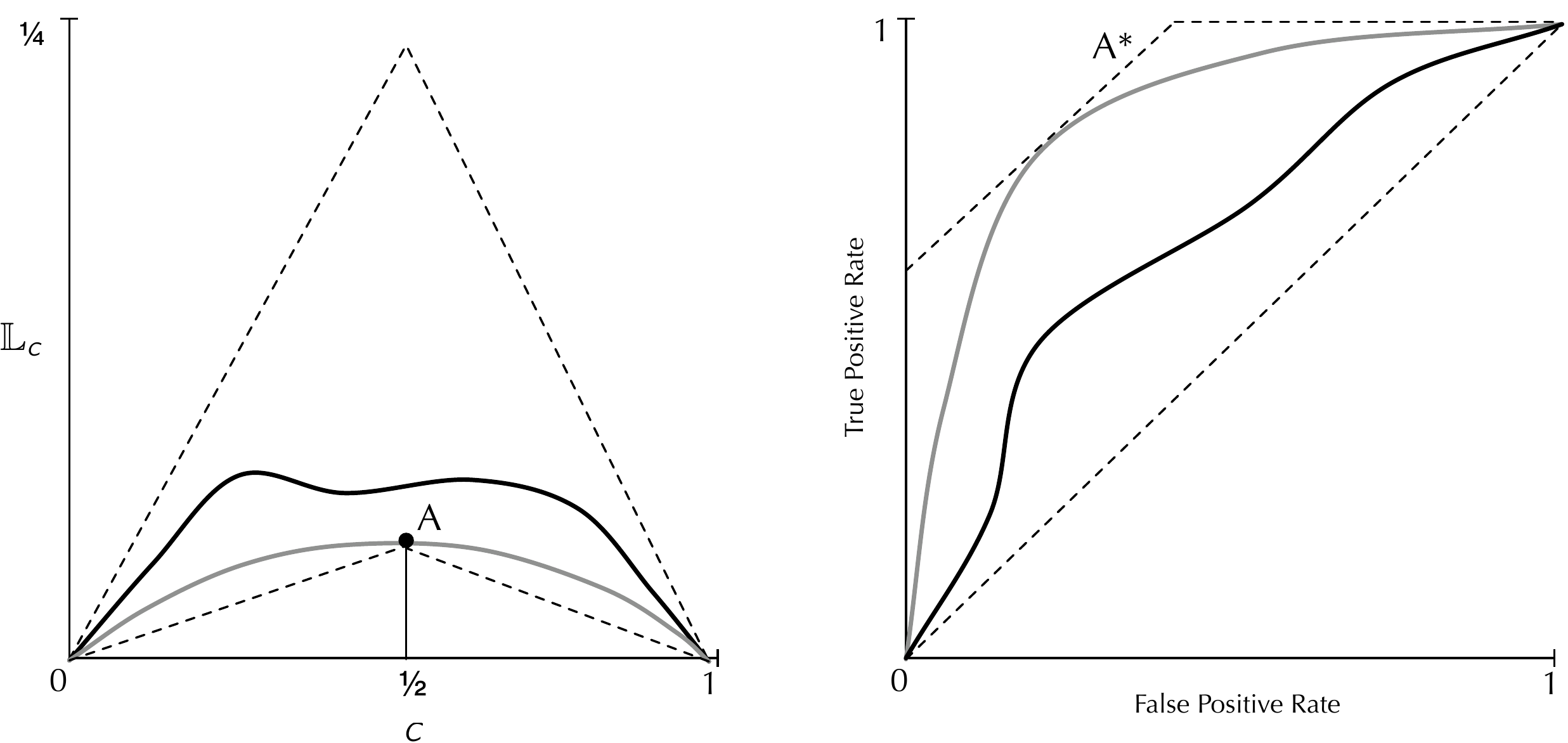}
     \end{center}
 \caption{Cost curve diagram (left) and corresponding ROC diagram (right).
 The black curves on the left and right represent risk and classification rates 
 of an example predictor. 
 The grey Bayes risk curve on the left corresponds to the dominating grey ROC
 curve on the right for the likelihood statistic. 
 Similarly, the dashed tent on the left corresponds to the 
 dashed diagonal ROC line on the right. 
 The point labelled A in the risk diagram corresponds to the line labelled 
 A* in the ROC diagram. 
 \label{fig:variational_roc_bound}}
 \end{figure}

As mentioned earlier, the Neyman-Pearson lemma guarantees the
ROC curve for $\eta$ is maximal. This corresponds to the cost curve being
minimal. In fact, these relationships are dual in the sense that there exists
an explicit transformation from one to the other.

%%%%%%%%%%%%%%%%%%%%%%%%%%%%%%%%%%%%%%%%%%
\subsection{Transforming from ROC to Risk curves and Back}
\label{sub:ROCandRisk}

Recall from Section~\ref{sub:NPLemma} the Neyman-Pearson function 
$\beta(\alpha,P,Q)$ for the binary experiment $(P,Q)$.
Since the true positive rate for $r$ is 
$TP_r = P(r^{-1}(1))$ and the false positive rate for $r$ is 
$FP_r = Q(r^{-1}(1))$ we have
\[
	\beta(\alpha,P,Q) 
	= \sup_{r\in\{-1,1\}^{\mathcal{X}}} 
		\{P(\Xcal_r^+)\ \colon\ Q(\Xcal_r^+)\le\alpha\}
\]
where $\Xcal_r^+ := r^{-1}(1)$.

Noting that the 0-1 loss of $r$ is simply its probability of error --- that is, 
the average of the false positive and false negative rates --- we have for each 
$\pi\in[0,1]$ that the Bayes optimal 0-1 loss is
\begin{equation}
   \minLL(\pi,P,Q) = \inf_{r} \{ (1-\pi) Q(\Xcal_r^+) + \pi(1-P(\Xcal_r^+)) \}.
\end{equation}
since the false negative rate $FN_r = P(\Xcal - \Xcal_r^+) = 1 - P(\Xcal_r^+)$.
Thus for all $\pi,\alpha\in[0,1]$, and all measurable functions
$r\colon\mathcal{X}\rightarrow \{-1,1\}$,
\begin{eqnarray*}
    \minLL(\pi,P,Q) &\le & (1-\pi) Q(\Xcal_r^+) + \pi(1-P(\Xcal_r^+))\\
    &\le & (1-\pi) \alpha + \pi(1-P(\Xcal_r^+))\\
    &\le & (1-\pi)\alpha + \pi(1-\beta(\alpha,P,Q)).
\end{eqnarray*}
Thus, we see that $\minLL(\pi,P,Q)$ is the largest number $\minLL$ such that 
$(1-\pi)\alpha+\pi(1-\beta(\alpha) ) \ge \minLL$ for all $\alpha\in[0,1]$
and hence one can set 
\begin{equation}
    \label{eq:minLL-in-terms-of-beta}
    \minLL(\pi,P,Q) = \minLL = \min_{\alpha\in[0,1]} ( (1-\pi)\alpha +
    \pi(1-\beta(\alpha) )
\end{equation}
for each $\pi\in[0,1]$.

Conversely, we can express the Neyman-Pearson function $\beta$ in terms of the
Bayes risk. 
That is, for any $\alpha\in[0,1]$, $\beta(\alpha,P,Q)$ is the largest number
$\beta$ such that 
\begin{eqnarray*}
    & \forall\pi\in[0,1]\ \ & (1-\pi)\alpha+\pi(1-\beta) \ge \minLL(\pi)\\
   \Leftrightarrow &  \forall\pi\in[0,1] \ \ &(1-\pi)\alpha -\minLL(\pi) \ge
   \pi(\beta-1)\\
   \Rightarrow  & \forall\pi\in(0,1] \ \  & \frac{1}{\pi} ( (1-\pi)\alpha
   -\minLL(\pi)) \ge \beta -1\\
   \Leftrightarrow &  \forall\pi\in(0,1]\ \ & \beta \le \frac{1}{\pi}
   ( (1-\pi)\alpha +\pi-\minLL(\pi) ).
\end{eqnarray*}
Thus we can set 
\begin{equation}
    \label{eq:beta-in-terms-of-minLL}
    \beta(\alpha) = \inf_{\pi\in(0,1]} \frac{1}{\pi} (
    (1-\pi)\alpha+\pi-\minLL(\pi) ), \ \ \alpha\in[0,1].
\end{equation}

The expressions (\ref{eq:beta-in-terms-of-minLL}) and 
(\ref{eq:minLL-in-terms-of-beta}) are due to
\cite{torgersen1991cse}. When $\beta(\cdot)$ and $\minLL(\cdot)$ are smooth,
explicit closed form formulas can be found:
\begin{theorem}
     Suppose $\beta$ and $\minLL$ are differentiable on $(0,1]$ and $[0,1]$
     respectively. Then
     \begin{equation}
	 \minLL(\pi) = (1-\pi) \check{\beta}(\pi) +
	 \pi(1-\beta(\check{\beta}(\pi))),\ \ \pi\in[0,1],
	 \label{eq:minLL-in-terms-of-beta-explicit}
     \end{equation}
     where
     \[
     \check{\beta}(\pi):= {\beta'}^{-1}
       \left(\frac{1-\pi}{\pi}\right)
     \]
     and
     \begin{equation}
	 \beta(\alpha) = \frac{1}{\check{\minLL}(\alpha)}
	 \left[(1-\check{\minLL}(\alpha))\alpha +\check{\minLL}(\alpha) -
	 \minLL(\check{\minLL}(\alpha))\right], \ \ \alpha\in(0,1],
	 \label{eq:beta-in-terms-of-minLL-explicit}
     \end{equation}
     where
     \begin{eqnarray*}
	  \check{\minLL}(\alpha) &:=& \tilde{\minLL}^{-1}(\alpha) \wedge 1,\\
	  \tilde{\minLL}(\pi) &:=& \minLL(\pi) - \pi \minLL'(\pi).
     \end{eqnarray*}
\end{theorem}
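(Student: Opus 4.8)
The plan is to derive both closed forms by carrying out \emph{explicitly} the two optimisations in the variational identities (\ref{eq:minLL-in-terms-of-beta}) and (\ref{eq:beta-in-terms-of-minLL}) established immediately above, using the assumed differentiability to reduce each $\min$/$\inf$ to a first-order stationarity condition. Two structural facts are used repeatedly: by (\ref{eq:beta-in-terms-of-minLL}) the map $\alpha\mapsto\beta(\alpha)$ is a lower envelope of affine functions of $\alpha$ and hence concave, and dually by (\ref{eq:minLL-in-terms-of-beta}) the map $\pi\mapsto\minLL(\pi)$ is a lower envelope of affine functions of $\pi$ and hence concave; moreover $\minLL(0)=\minLL(1)=0$, $\beta(0)=0$ and $\beta(1)=1$.

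For (\ref{eq:minLL-in-terms-of-beta-explicit}): fix $\pi\in[0,1]$ and set $g(\alpha):=(1-\pi)\alpha+\pi(1-\beta(\alpha))$ on $[0,1]$. Since $-\beta$ is convex, $g$ is convex, so the minimiser in (\ref{eq:minLL-in-terms-of-beta}) solves $g'(\alpha)=(1-\pi)-\pi\beta'(\alpha)=0$, i.e.\ $\beta'(\alpha)=\frac{1-\pi}{\pi}$. As $\beta'$ is non-increasing it is (pseudo-)invertible, so the minimiser is $\alpha=\check{\beta}(\pi)={\beta'}^{-1}\left(\frac{1-\pi}{\pi}\right)$; substituting this value back into $g$ gives exactly (\ref{eq:minLL-in-terms-of-beta-explicit}).

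For (\ref{eq:beta-in-terms-of-minLL-explicit}): fix $\alpha\in(0,1]$ and set $h(\pi):=\frac1\pi\left((1-\pi)\alpha+\pi-\minLL(\pi)\right)$ on $(0,1]$. A short computation gives $h'(\pi)=\pi^{-2}\left(\tilde{\minLL}(\pi)-\alpha\right)$ with $\tilde{\minLL}(\pi)=\minLL(\pi)-\pi\minLL'(\pi)$, and $\tilde{\minLL}'(\pi)=-\pi\minLL''(\pi)\ge0$ by concavity of $\minLL$, so $\tilde{\minLL}$ is non-decreasing and (pseudo-)invertible. If $\tilde{\minLL}^{-1}(\alpha)\le1$, that value is the unique interior stationary point and, since $\tilde{\minLL}(\pi)-\alpha$ passes from negative to positive there, it is the minimiser; if $\tilde{\minLL}^{-1}(\alpha)>1$ then $h'<0$ throughout $(0,1]$ and the infimum sits at the endpoint $\pi=1$. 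Either way the optimal $\pi$ is $\check{\minLL}(\alpha)=\tilde{\minLL}^{-1}(\alpha)\wedge1$, and substituting into $h$ and using (\ref{eq:beta-in-terms-of-minLL}) yields (\ref{eq:beta-in-terms-of-minLL-explicit}).

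The main obstacle is not the calculus but the boundary bookkeeping. One must check that $\check{\beta}(\pi)$ and $\check{\minLL}(\alpha)$ actually lie in $[0,1]$ and $(0,1]$ respectively (so that the interior first-order conditions are legitimate rather than the optima sitting at $\alpha\in\{0,1\}$ or $\pi\to0$), and that $h(\pi)\to+\infty$ as $\pi\to0^+$ (so the infimum over the half-open interval $(0,1]$ is genuinely attained); both follow from the concavity of $\beta$ and $\minLL$ together with the endpoint values noted above, but they need to be assembled with care. A secondary subtlety is the interpretation of ${\beta'}^{-1}$ and $\tilde{\minLL}^{-1}$ when the underlying function is only weakly monotone: these should be read as pseudo-inverses, and one checks that the final substituted expressions are insensitive to which value in a flat region is selected.
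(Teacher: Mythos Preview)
Your proposal is correct and follows essentially the same route as the paper's proof: differentiate the objective in each of the variational identities (\ref{eq:minLL-in-terms-of-beta}) and (\ref{eq:beta-in-terms-of-minLL}), use concavity of $\beta$ (resp.\ $\minLL$) to guarantee the stationary point is a minimiser and that the relevant derivative is monotone hence invertible, and substitute back. Your treatment is in fact slightly more careful than the paper's in spelling out the boundary case $\tilde{\minLL}^{-1}(\alpha)>1$ and in flagging the pseudo-inverse interpretation when $\beta'$ or $\tilde{\minLL}$ is only weakly monotone.
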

\begin{proof}
     Consider the right side of (\ref{eq:minLL-in-terms-of-beta}) and
     differentiate with respect to $\alpha$:
     \[
     \frac{\partial}{\partial\alpha} (1-\pi)\alpha +\pi(1-\beta(\alpha)) =
     (1-\pi)-\pi\beta'(\alpha).
     \]
     Setting this to zero we have $(1-\pi)=\pi\beta'(\alpha)$ and thus
     $\beta'(\alpha)=\frac{1-\pi}{\pi}$. Since $\beta$ is
     monotonically increasing and concave, $\beta'$ is monotonically 
     decreasing and non-negative. Thus we can set
     \[
     \alpha = {\beta'}^{-1}\left(\frac{1-\pi}{\pi}\right) \in [0,1].
     \]
	 Substituting back into $(1-\pi)\alpha+\pi(1-\beta(\alpha))$ we obtain
	 (\ref{eq:minLL-in-terms-of-beta-explicit}).

     Now consider the right side of (\ref{eq:beta-in-terms-of-minLL}):
     \begin{equation}
	 \frac{1}{\pi} ( (1-\pi)\alpha+\pi-\minLL(\pi)) .
	 \label{eq:beta-expression}
     \end{equation}
     Differentiating with respect to $\pi$ we have
     \(
     \frac{-\alpha}{\pi} - \frac{\minLL'(\pi)}{\pi} +
     \frac{\minLL(\pi)}{\pi^2}.
     \)
     Setting this equal to zero we obtain
     \begin{eqnarray*}
     & & \frac{-\alpha}{\pi} - \frac{\minLL'(\pi)}{\pi} + \frac{\minLL(\pi)}{\pi^2}
     = 0,\ \ \pi\in(0,1]\\
     \Rightarrow & & \alpha+\pi\minLL'(\pi)-\minLL(\pi)=0.
     \end{eqnarray*}
     Observing the definition of $\tilde{\minLL}$ we thus have that 
     $\tilde{\minLL}(\pi)=\alpha$. Now 
     \begin{eqnarray*}
	 \tilde{\minLL}'(\pi) & =& \frac{\partial}{\partial\pi} (-\pi
	 \minLL'(\pi) +\minLL(\pi))\\
	 &=& -\pi\minLL''(\pi)-\minLL'(\pi) +\minLL'(\pi)\\
	 &=& -\pi\minLL''(\pi)\\
	 &\ge & 0
     \end{eqnarray*}
     since $\minLL$ is concave. Thus $\tilde{\minLL}(\cdot)$ is monotonically
     non-decreasing and we can write  $\pi=\tilde{\minLL}^{-1}(\alpha)$.
     In order to ensure $\pi\in[0,1]$ we substitute
     $\pi=\check{\minLL}(\alpha)$ into (\ref{eq:beta-expression}) to obtain
	 (\ref{eq:beta-in-terms-of-minLL-explicit}).
\end{proof}
 Using (\ref{eq:beta-in-terms-of-minLL-explicit}) we present an
	 example. Consider (for $\gamma\in[0,1]$)
	 $
	 \minLL(\pi)=\gamma \pi(1-\pi).
	 $
	 One can readily check that $\tilde{\minLL}(\pi)= \gamma\pi^2$. Hence
	 $
	 \tilde{\minLL}^{-1}(\alpha) =
	 \sqrt{\frac{\alpha}{\gamma}}\in\left[0,\frac{1}{\gamma}\right].
	 $
	 Thus $\check{\minLL}(\alpha)= 0\vee \tilde{\minLL}^{-1}(\alpha)\wedge
	 1 = \sqrt{\alpha/\gamma} \wedge 1$.  Substituting and rearranging we
	 find that the corresponding $\beta$ is given by
	 \[
	 \beta_\gamma(\alpha) = \frac{\alpha+\gamma+(\sqrt{\alpha/\gamma}\wedge
	 1)(1-\alpha-\gamma)}{\sqrt{\alpha/\gamma}\wedge 1}.
	 \]
	 A graph of this $\beta(\cdot)$ is given in
	 figure~\ref{figure:beta-family}.
	 \begin{figure}[t]
	     \begin{center}
		\includegraphics[width=0.7\textwidth]{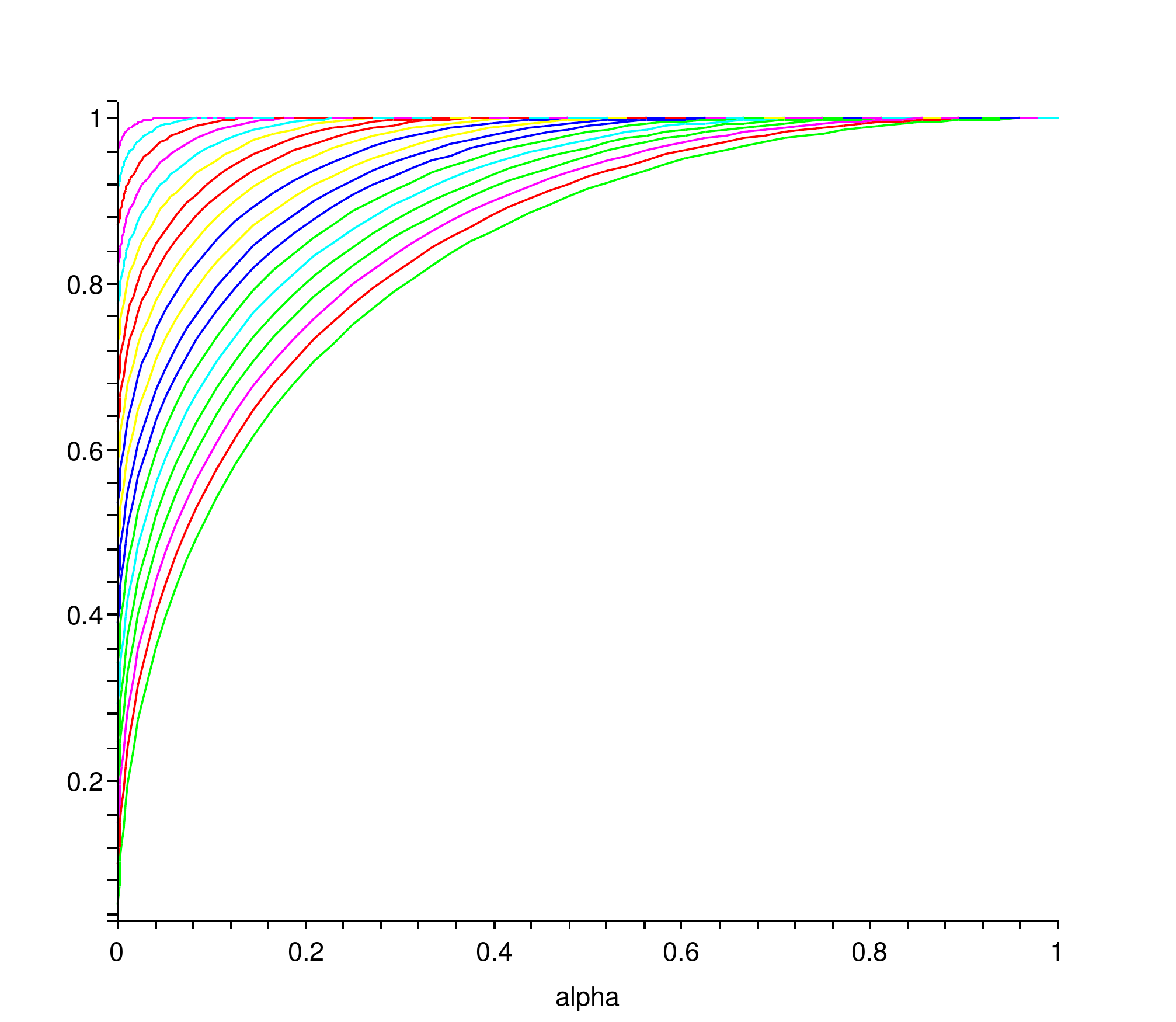}
	    \end{center}
	    \caption{Graph of $\alpha\mapsto\beta_\gamma(\alpha,P,Q)$ for
	    $\gamma=i/20$, $i=1,\ldots,20$.
	    \label{figure:beta-family}}
	 \end{figure}

By construction $\beta(1)=1$ and $\beta$ is concave and continuous on $(0,1]$.
The following lemma is due to \cite{torgersen1991cse}.

\begin{lemma}
    Suppose $\mathcal{X}$ contains a connected component $\mathcal{C}$.
     Let $\phi\colon[0,1]\rightarrow [0,1]$ be an arbitrary function that is
     concave and continuous on $(0,1]$ such that $\phi(1)=1$. Then there exists
     $P$ and $Q$ such that $\beta(\alpha,P,Q)=\phi(\alpha)$ for all
     $\alpha\in[0,1]$.
\end{lemma}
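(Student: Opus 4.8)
The plan is to realise $\phi$ as the ROC curve of a likelihood ratio, i.e.\ as the Neyman--Pearson function of a suitably chosen pair $(P,Q)$. By the Neyman--Pearson lemma (Section~\ref{sub:NPLemma}), for any $(P,Q)$ the value $\beta(\alpha,P,Q)$ is attained by a test of the form $\{dP/dQ\ge\tau\}$, so it is enough to arrange that $dP/dQ$ has the right distribution. First I would reduce to the unit interval: using the hypothesis that $\mathcal{X}$ has a non-degenerate connected component $\mathcal{C}$, fix a non-atomic Borel probability measure $\mu$ on $\mathcal{C}$ together with a measurable map $u\colon\mathcal{C}\to[0,1]$ with $u_{*}\mu=\mathrm{Leb}_{[0,1]}$; a non-degenerate connected space is uncountable and this is precisely what lets it carry such a $\mu$, and every non-atomic probability space admits such a $u$. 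I would also fix one point $x_{0}\in\mathcal{C}$, noting $\mu(\{x_{0}\})=0$. One caveat to record up front: since the $Q$-singular part of $P$ forces $\beta(\cdot,P,Q)$ to be continuous at $0$, the statement implicitly requires $\phi(0)=\lim_{\alpha\downarrow0}\phi(\alpha)$, which I take as part of the hypothesis.

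Next I would record the elementary facts about $\phi$ needed for the construction. From concavity of $\phi$ on $(0,1]$, the bound $\phi\le1$, and $\phi(1)=1$, one checks that $\phi$ is non-decreasing on $(0,1]$: if $\phi'_{+}(c)<0$ for some $c\in(0,1)$, then since $\phi'_{+}$ is non-increasing we would get $\phi(1)\le\phi(c)+(1-c)\phi'_{+}(c)<\phi(c)\le1$, a contradiction. Hence the right derivative $\phi'_{+}$ is non-negative and non-increasing on $(0,1)$, and it is integrable with $\int_{0}^{\alpha}\phi'_{+}(t)\,dt=\phi(\alpha)-\phi_{0}$ for every $\alpha\in(0,1]$, where $\phi_{0}:=\lim_{\alpha\downarrow0}\phi(\alpha)\in[0,1)$ (the case $\phi_{0}=1$ forces $\phi\equiv1$ and is trivial); in particular $\int_{0}^{1}\phi'_{+}=1-\phi_{0}$.

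Then comes the construction itself. Set $Q:=\mu$ and $P:=\phi_{0}\,\delta_{x_{0}}+(\phi'_{+}\circ u)\cdot\mu$, both viewed as measures on $\mathcal{X}$ supported on $\mathcal{C}$; $P$ is a probability measure since $\phi_{0}+\int_{0}^{1}\phi'_{+}=1$. The likelihood ratio is $dP/dQ=+\infty$ on $\{x_{0}\}$ and equals $\phi'_{+}(u(x))$ off $\{x_{0}\}$. Because $\phi'_{+}$ is non-increasing, $\{x:\phi'_{+}(u(x))\ge\tau\}=u^{-1}([0,s_{\tau}])$ up to a $\mu$-null set, so the sets $A_{\alpha}:=\{x_{0}\}\cup u^{-1}([0,\alpha])$, $\alpha\in[0,1]$, run through the Neyman--Pearson optimal tests (ties on the linear pieces of $\phi$ are resolved by splitting sets of positive $\mu$-measure into subsets of arbitrary intermediate measure, using non-atomicity of $\mu$). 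Since $u_{*}\mu=\mathrm{Leb}$ and $\mu(\{x_{0}\})=0$, we get $Q(A_{\alpha})=\alpha$ and $P(A_{\alpha})=\phi_{0}+\int_{0}^{\alpha}\phi'_{+}(t)\,dt=\phi(\alpha)$. The Neyman--Pearson lemma makes $A_{\alpha}$ maximise $P(A)$ over $\{Q(A)\le\alpha\}$ (using that $\phi$ is non-decreasing to pass from ``$Q(A)\le\alpha$'' to ``$Q(A)=\alpha$''), whence $\beta(\alpha,P,Q)=\phi(\alpha)$ for all $\alpha\in[0,1]$, including $\beta(0,P,Q)=P(\{x_{0}\})=\phi_{0}=\phi(0)$.

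I expect the genuinely routine part to be the ROC-to-density computation, once one spots that the required likelihood ratio is exactly $\phi'_{+}$. The main obstacles lie at the two ``edges'': making precise the measure-theoretic consequence of having a connected component (the existence of $\mu$ and $u$), and the behaviour at $\alpha=0$, where $\phi_{0}>0$ genuinely forces the singular atom $\phi_{0}\delta_{x_{0}}$ (no absolutely continuous $P$ can give $\beta(0)>0$) and where one must also notice the hidden continuity-at-$0$ requirement on $\phi$.
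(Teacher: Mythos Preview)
Your approach is the same as the paper's at its core: work on $[0,1]$, take $Q$ to be Lebesgue measure and $P$ the distribution with CDF $\phi$ (equivalently, density $\phi'_{+}$ with respect to Lebesgue plus the atom $\phi_{0}\delta_{x_{0}}$), check that Neyman--Pearson tests are the intervals $[0,\alpha]$, and then transport back to $\mathcal{C}\subset\mathcal{X}$. The paper's proof is much terser and in fact swaps the roles of $P$ and $Q$ relative to its own definitions of $TP$ and $FP$; it also asserts that $\phi$ is increasing without argument and appeals to an invertible $g\colon\mathcal{C}\to[0,1]$ ``since $\mathcal{C}$ is connected'' without further justification. Your additional care --- proving monotonicity from concavity and the bound $\phi\le 1$, making the singular part $\phi_{0}\delta_{x_{0}}$ explicit when $\phi(0^{+})>0$, using a measurable $u$ with $u_{*}\mu=\mathrm{Leb}$ rather than a bare bijection, and flagging the implicit right-continuity requirement $\phi(0)=\phi(0^{+})$ forced by $\beta(\cdot,P,Q)$ --- are all genuine refinements of the paper's argument rather than a different route.
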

\begin{proof}
    Let $\mathcal{X'}=[0,1]$ and $P$ be the uniform distribution on
    $\mathcal{X'}$. Overload $P$ and $Q$ to also denote the respective
    cumulative distribution functions (i.e. $P(x)=P([0,x])$). Thus
    $P(\pi)=\pi)$. Set $Q(\pi)=\phi(\pi)$. Since $\phi(\cdot)$ is increasing it
    suffices to consider $r(\cdot)$ of the form $r_\pi(x)=\test{x<\pi}$. Hence
    \[
    \beta(\alpha)= \max\{\phi(\pi)\colon 0\le \pi\le 1,\ \pi\le\alpha\}, \
    \alpha\in[0,1].
    \]
    The maximum will always be obtained for $\pi=\alpha$ and thus
    $\beta(\alpha)=\phi(\alpha)$ for $\alpha\in[0,1]$. Finally, a pair of
    distributions on $\mathcal{X}$ can be constructed by embedding the
    connected component $\mathcal{C}\subset\mathcal{X}$ into $\mathcal{X}'$.
    Choose $g\colon\mathcal{C}\rightarrow\mathcal{X}'$ such that $g$ is
    invertible. Such a $g$ always exists since $\mathcal{C}$ is connected. Then
    $g^{-1}$ induces distributions $P'$ and $Q'$ on $\mathcal{C}$ and thus on
    $\mathcal{X}$ by subsethood.
\end{proof}
\begin{corollary}
    Let $\psi\colon [0,1]\rightarrow[0,1]$ be an arbitrary concave function
    such that  for all $\pi\in[0,1]$, $0\le\psi(\pi)\le \pi\wedge(1-\pi)$. Then
    there exists $P$ and $Q$ such that $\minLL(\pi,P,Q)=\psi(\pi)$ for all
    $\pi\in[0,1]$.
    \label{corollary:all-bayes-risk-curves-possible}
\end{corollary}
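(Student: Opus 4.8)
The plan is to realise $\psi$ as a Bayes-optimal $0$-$1$ risk curve by first turning it into a candidate Neyman--Pearson function via the ``inverse'' transformation (\ref{eq:beta-in-terms-of-minLL}), invoking the preceding Lemma to realise that function by some pair $(P,Q)$, and then checking with (\ref{eq:minLL-in-terms-of-beta}) that the round trip returns $\psi$ exactly. First I would note the boundary values forced by the hypothesis: evaluating $0\le\psi(\pi)\le\pi\wedge(1-\pi)$ at $\pi=0$ and $\pi=1$ gives $\psi(0)=\psi(1)=0$. Then, mimicking (\ref{eq:beta-in-terms-of-minLL}) with $\minLL$ replaced by $\psi$, define
\[
	\beta(\alpha) := \inf_{t\in(0,1]}\frac{1}{t}\bigl((1-t)\alpha + t - \psi(t)\bigr),\qquad \alpha\in[0,1].
\]
For each fixed $t$ the bracketed quantity is affine in $\alpha$, so $\beta$ is an infimum of affine functions, hence concave, hence continuous on $(0,1]$. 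The bound $\psi(t)\le t$ gives $(1-t)\alpha + t - \psi(t)\ge 0$, so $\beta\ge 0$; taking $t=1$ in the infimum and using $\psi(1)=0$ gives $\beta(\alpha)\le 1$; and $\psi(t)\le 1-t$ gives $\frac{1}{t}(1-\psi(t))\ge 1$ with equality at $t=1$, so $\beta(1)=1$. Thus $\beta$ satisfies the hypotheses of the Lemma (with $\mathcal{X}$ containing a connected component, as assumed there, or simply $\mathcal{X}=[0,1]$), so there exist $P,Q$ with $\beta(\cdot,P,Q)=\beta(\cdot)$.

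It then remains to verify $\minLL(\pi,P,Q)=\psi(\pi)$. Since (\ref{eq:minLL-in-terms-of-beta}) holds for every pair of distributions, $\minLL(\pi,P,Q)=\min_{\alpha\in[0,1]}\bigl((1-\pi)\alpha+\pi(1-\beta(\alpha))\bigr)$. The inequality ``$\ge\psi(\pi)$'' is immediate: taking $t=\pi$ in the infimum defining $\beta(\alpha)$ gives $\pi\beta(\alpha)\le(1-\pi)\alpha+\pi-\psi(\pi)$, hence $(1-\pi)\alpha+\pi-\pi\beta(\alpha)\ge\psi(\pi)$ for every $\alpha$. For the reverse inequality, fix $\pi_0$ and exhibit an $\alpha$ making this tight, namely $\alpha:=\tilde\psi(\pi_0)$ where $\tilde\psi(t):=\psi(t)-t\psi'(t)$ (using a supergradient of $\psi$ when it is not differentiable). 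Writing $F_\alpha(t):=\frac{1}{t}\bigl((1-t)\alpha+t-\psi(t)\bigr)$ one computes $F_\alpha'(t)=(\tilde\psi(t)-\alpha)/t^2$, and $\tilde\psi$ is non-decreasing because $\tilde\psi'(t)=-t\psi''(t)\ge0$; so $F_\alpha$ decreases then increases and attains its minimum at $t=\pi_0$, whence $\beta(\tilde\psi(\pi_0))=F_{\tilde\psi(\pi_0)}(\pi_0)$. Substituting this into $(1-\pi_0)\alpha+\pi_0-\pi_0\beta(\alpha)$ and expanding collapses the expression to exactly $\psi(\pi_0)$. One also checks $\alpha=\tilde\psi(\pi_0)\in[0,1]$: it is the value at $0$ of the supporting line $\ell$ of the concave $\psi$ at $\pi_0$, which lies above $\psi$, so $\alpha=\ell(0)\ge\psi(0)=0$ and $\ell(1)\ge\psi(1)=0$; linearity of $\ell$ gives $\psi(\pi_0)=\ell(\pi_0)=(1-\pi_0)\alpha+\pi_0\ell(1)\ge(1-\pi_0)\alpha$, so $\alpha\le\psi(\pi_0)/(1-\pi_0)\le 1$. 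The endpoints $\pi_0\in\{0,1\}$ are handled directly: $\minLL(0,P,Q)=\min_\alpha\alpha=0=\psi(0)$ and $\minLL(1,P,Q)=1-\max_\alpha\beta(\alpha)=1-1=0=\psi(1)$.

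The main obstacle is this last biconjugacy step: showing that the two Torgersen transformations (\ref{eq:beta-in-terms-of-minLL}) and (\ref{eq:minLL-in-terms-of-beta}) genuinely invert one another on the class of admissible $\psi$. Everything hinges on the monotonicity of $\tilde\psi$ (so that $F_\alpha$ is quasiconvex with the claimed minimiser) and on handling non-smoothness and boundary behaviour of $\psi$ — points where $\psi$ fails to be differentiable, and the limits $t\to0^+$ and $t=1$ — for which one replaces derivatives by one-sided derivatives or supergradients and checks the relevant infima and minima are attained (or approached) as asserted.
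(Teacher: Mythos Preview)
Your proposal is correct and follows the same strategy as the paper: define $\beta$ from $\psi$ via (\ref{eq:beta-in-terms-of-minLL}), verify concavity and $\beta(1)=1$, and invoke the preceding Lemma to realise $\beta$ by some $(P,Q)$. The paper's own proof stops there and leaves the biconjugacy step implicit (it simply asserts that the resulting $\beta(\cdot,P,Q)$ ``corresponds to'' $\minLL(\cdot,P,Q)$), whereas you go further and explicitly verify, via the quasiconvexity of $F_\alpha$ and the monotonicity of $\tilde\psi$, that applying (\ref{eq:minLL-in-terms-of-beta}) to your $\beta$ genuinely recovers $\psi$ --- so your argument is the same route but more complete.
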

\begin{proof}
    Choose a $\psi$ satisfying the conditions and substitute into 
    (\ref{eq:beta-in-terms-of-minLL}). This gives a corresponding
    $\phi(\cdot)$. We 
    know from the preceeding lemma that there exist $P$ and $Q$ such that 
    $\beta(\cdot,P,Q)=\phi(\cdot)$ which corresponds to $\minLL(\cdot,P,Q)$.
    Thus it remains to show that the function $\phi$ defined by 
    \[
    \phi(\alpha)=\inf_{\pi\in(0,1]} \frac{1}{\pi}( (1-\pi)\alpha +\pi -
    \psi(\pi))
    \]
    is concave and satisfies $\phi(1)=1$. Observe that 
    $
    \beta(1) = \inf_{\pi\in(0,1]} \frac{1-\psi(\pi)}{\pi}.
    $
    Now by the upper bound on $\psi$, we have
    $
    \frac{1-\psi(\pi)}{\pi} \ge \frac{1-1+\pi}{\pi}=\frac{1}{\pi}\ge 1.
    $
    But $\lim_{\pi\rightarrow 1} \frac{1-\psi(\pi)}{\pi} =1$ and thus
    $\beta(1)=1$. Finally note that 
    \[
    \beta(\alpha)=\inf_{\pi\in(0,1]} \left(\frac{1-\pi}{\pi}\right) \alpha +
    (1-\psi(\pi)).
    \]
    This is the lower envelope of  a parametrized (by $\pi$) family of affine
    functions (in $\alpha$) and is thus concave.
\end{proof}

%%%%%%%%%%%%%%%%%%%%%%%%%%%%%%%%%%%%%%%%%%%%%%%%%%%%%%%%%%%%%%%%%%%%%%%%%%%%%%
\section{Bounding General Objects in Terms of Primitives}\label{sec:inequalities}

All of the above results are exact --- they are exact representations of
particular primitives or general objects in terms of other primitives. 
Another type of
relationship is an inequality. In this section we consider how we can (tightly)
bound the value of a general object ($\II_f$ or $B_w$) in terms of primitive
objects ($V_\pi$ --- defined below --- or $B_c$). Bounding $\II_f(P,Q)$ in terms of $V_\pi(P,Q)$ is a
generalisation of the classical Pinsker inequality. Bounding
$B_w(\eta,\heta)$ in terms of $B_c(\eta,\heta)$ is a generalisation of the
so-called ``surrogate loss bounds.''

As explained previously, we work with the \emph{conditional} Bregman divergence
$B_w(\eta,\heta)$. Results in terms of $B_w(\eta,\heta)$, $\eta,\heta\in[0,1]$
immediately imply results for
$\BB_w(\eta,\heta)$, $\eta,\heta\colon\mathcal{X}\rightarrow[0,1]$
by taking expectations with respect to $\Xsf$.

%%%%%%%%%%%%%%%%%%%%%%%%%%%%%%%%%%%%%%%%%%%%%
\subsection{Surrogate Loss Bounds}

\label{section:surrogate-loss-bounds}
Suppose for some fixed $c_0\in(0,1)$ that $B_{c_0}(\eta,\heta)=\alpha$. What
can be said concerning the value of $B_w(\eta,\heta)$ for an arbitrary weight
function $w$? This is known as a surrogate loss bound. Previous works on this
problem are summarised in Appendix \ref{section:surrogate}. Apart from its
theoretical interest, the question has direct practical implications: it
can often be much simpler to minimise $B_w(\eta,\heta)$ over $\heta$ than to
minimise $B_c(\eta,\heta)$. The bounds below will tell the user of such a
scheme the maximum price they will have to pay in terms of statistical
performance.
\begin{theorem}
    Let $c_0\in(0,1)$ and suppose it is known that 
    $B_{c_0}(\eta,\heta)=\alpha\in(0,c_0)$. 
    Let $w$, $W$ and $\Wb$ be as in Theorem
    \ref{theorem:B-w}. Then
    \begin{equation}
	B_w(\eta,\heta) \ge [\Wb(c_0-\alpha)+\alpha W(c_0)] \wedge
	[\Wb(c_0+\alpha)-\alpha W(c_0)] - \Wb(c_0).
	\label{eq:B-w-bound}
    \end{equation}
    If $c_0=\frac{1}{2}$ and $w$ is
    symmetric about $\thalf$ ($w(\thalf+\alpha)=w(\thalf-\alpha)$ for
    $\alpha\in(0,\thalf)$)
    then
    \begin{eqnarray}
	B_w(\eta,\heta) &\ge & \Wb(\thalf+\alpha)-\alpha W(\thalf)-\Wb(\thalf)
	\label{eq:B-w-bound-c-1-2}\\
	&=&\Wb(\thalf-\alpha)+\alpha W(\thalf)-\Wb(\thalf).
    \end{eqnarray}
    Furthermore (\ref{eq:B-w-bound}) and (\ref{eq:B-w-bound-c-1-2}) are 
    the best possible.
    \label{theorem:surrogate-loss}
\end{theorem}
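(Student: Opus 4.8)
The plan is to first pin down exactly which pairs $(\eta,\heta)\in[0,1]^2$ satisfy the constraint $B_{c_0}(\eta,\heta)=\alpha$, and then minimise the closed form for $B_w$ from Theorem~\ref{theorem:B-w} over each such family. By Lemma~\ref{lem:int-rep-regret}, $B_{c_0}(\eta,\heta)=|\eta-c_0|\test{\eta\wedge\heta<c_0\le\eta\vee\heta}$, so $B_{c_0}(\eta,\heta)=\alpha>0$ forces the indicator to equal $1$ and $|\eta-c_0|=\alpha$; hence either (i) $\eta=c_0-\alpha$ and $\heta\in[c_0,1]$, or (ii) $\eta=c_0+\alpha$ and $\heta\in[0,c_0)$ (the latter family non-empty only when $c_0+\alpha\le1$). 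By Theorem~\ref{theorem:B-w}, on each family $B_w(\eta,\heta)=\Wb(\eta)-\Wb(\heta)-(\eta-\heta)W(\heta)$, which I regard as a function of $\heta$ alone, with $\eta$ fixed.

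Next I differentiate in $\heta$. Since $\Wb'=W$ and $W'=w$, a short computation gives $\frac{\partial}{\partial\heta}B_w(\eta,\heta)=-(\eta-\heta)\,w(\heta)$ --- the two $W(\heta)$ terms cancel, so the sign is controlled purely by $w\ge0$ and by the sign of $\eta-\heta$. In case (i), $\eta=c_0-\alpha<c_0\le\heta$, so $\eta-\heta<0$ and the derivative is $\ge0$: $B_w$ is non-decreasing on $[c_0,1]$ and minimised at $\heta=c_0$, giving $\Wb(c_0-\alpha)+\alpha W(c_0)-\Wb(c_0)$. In case (ii), $\eta=c_0+\alpha>c_0>\heta$, so $\eta-\heta>0$ and the derivative is $\le0$: $B_w$ is non-increasing on $[0,c_0)$, so its infimum is attained in the limit $\heta\to c_0$, giving $\Wb(c_0+\alpha)-\alpha W(c_0)-\Wb(c_0)$. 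Since the hypothesis does not tell us which family we are in, the valid lower bound is the minimum of these two expressions, which is precisely (\ref{eq:B-w-bound}); and since each value is attained (case (i)) or approached (case (ii)) by an admissible configuration, no larger constant works, so the bound is the best possible.

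For the symmetric refinement, set $c_0=\thalf$ and assume $w(\thalf+t)=w(\thalf-t)$ for $t\in(0,\thalf)$. Integrating this symmetry once yields $W(\thalf+t)+W(\thalf-t)=2W(\thalf)$ (so $W$ is antisymmetric about the point $(\thalf,W(\thalf))$), and integrating once more yields $\Wb(\thalf+t)-\Wb(\thalf-t)=2tW(\thalf)$. Substituting $t=\alpha$ and using the last identity to rewrite $\Wb(\thalf+\alpha)$ shows the case-(ii) value equals the case-(i) value, so the minimum in (\ref{eq:B-w-bound}) collapses to the single expression $\Wb(\thalf+\alpha)-\alpha W(\thalf)-\Wb(\thalf)=\Wb(\thalf-\alpha)+\alpha W(\thalf)-\Wb(\thalf)$, which is (\ref{eq:B-w-bound-c-1-2}); tightness is inherited from the general case.

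I expect the only delicate points to be bookkeeping ones: checking that the derivative computation genuinely cancels the $W(\heta)$ contributions, and noting that in case (ii) the optimal $\heta$ sits on the excluded boundary $\heta=c_0$, so the bound is an approached infimum rather than a minimum --- still enough for ``best possible''. One should also record the mild side condition $c_0+\alpha\le1$ needed for case (ii) to be realisable; when it fails, only the case-(i) term survives.
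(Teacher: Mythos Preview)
Your proof is correct and follows essentially the same route as the paper's: split into the two cases dictated by the explicit form of $B_{c_0}$, fix $\eta$ accordingly, differentiate the closed form $B_w(\eta,\heta)=\Wb(\eta)-\Wb(\heta)-(\eta-\heta)W(\heta)$ in $\heta$ to obtain $-(\eta-\heta)w(\heta)$, read off monotonicity on each branch, and evaluate at $\heta=c_0$; the symmetric case is handled by integrating the symmetry of $w$ twice, exactly as the paper does. Your treatment is in fact slightly more careful than the paper's on two bookkeeping points: you observe that in case~(ii) the point $\heta=c_0$ lies on the excluded boundary (so the branch contributes an approached infimum rather than an attained minimum), and you flag the side condition $c_0+\alpha\le 1$ needed for case~(ii) to be non-empty --- neither of which the paper makes explicit.
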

\begin{proof}
    By hypothesis $B_{c_0}(\eta,\heta)=\alpha$ and thus from (\ref{eq:B-c-0})
    it must be true that either ($\eta\le c_0$ and $\heta > c_0$) or
    ($\eta>c_0$ and $\heta\le c_0$). Suppose for now it is the former. We need
    to determine the minimum possible value of $B_w(\eta,\heta)$. From
    (\ref{eq:B-w-general-form}) we thus seek
    \begin{equation}
    \min_{\begin{array}{l} \eta\in[0,c_0],\ \heta\in[c_0,1]\\
	B_{c_0}(\eta,\heta)=\alpha \end{array}}
	\Wb(\eta)-\Wb(\heta)-(\eta-\heta)W(\heta) .
	\label{eq:surrogate-argmin}
    \end{equation}
    From case 1 of (\ref{eq:B-c-0}) we know $c_0-\eta=\alpha$ and hence
    $\eta=c_0-\alpha$ and the problem is reduced to determining
    \[
    \min_{\heta\in [c_0,1]}
	\Wb(c_0-\alpha)-\Wb(\heta)-(c_0-\alpha-\heta)W(\heta).
    \]
    Differentiating the above expression with respect to  $\heta$ we obtain
    \begin{equation}
	\frac{\partial}{\partial\heta}
	\Wb(c_0-\alpha)-\Wb(\heta)-(c_0-\alpha-\heta)W(\heta)
	= -(c_0-\alpha-\heta)w(\heta) =:\gamma.\label{eq:gamma-temp}
    \end{equation}
    By assumption (for now) we have $\heta>c_0$ and thus
    \begin{equation}
	c_0-\alpha-\heta\in[c_0-\alpha-1,c_0-\alpha-c_0]=
	[\underbrace{c_0-1}_{\le 0} -\alpha,\underbrace{-\alpha}_{<0}].
	\label{eq:c-0-interval}
    \end{equation}
    Equations \ref{eq:gamma-temp} and \ref{eq:c-0-interval}
    together imply that $\gamma\ge 0$. Clearly $\gamma$ can only equal zero if
    $w(\heta)=0$ for some $\heta\in [c_0,1]$.
    Since the derivative is consequently everywhere non-negative, the minimum
    occurs at the minimum value $\heta$; that is at $\heta=c_0$. Subsituting
    for this value of $\heta$ into (\ref{eq:B-w-general-form}) we obtain 
    \begin{equation}
	 B_w(\eta,\heta) \ge \Wb(c_0-\alpha)-\Wb(c_0)+\alpha W(c_0).
	 \label{eq:B-w-bound-case-1}
    \end{equation}

    If instead we have $\eta>c_0$ and $\heta<c_0$, we have (from case 2 of
    (\ref{eq:B-c-0})) that $\alpha=\eta-c_0$ and thus $\eta=\alpha+c_0$ and we
    need to determine
    \[
    \min_{\heta\in [0,c_0]}
	\Wb(\alpha+c_0)-\Wb(\heta)-(\alpha+c_0-\heta)W(\heta).
    \]
    Again differentiating with respect to $\heta$ we obtain 
    \[
    \frac{\partial}{\partial\heta}
	\Wb(\alpha+c_0)-\Wb(\heta)-(\alpha+c_0-\heta)W(\heta) =
	-(\alpha+c_0-\heta)w(\heta)=:\gamma.
    \]
    Furthermore we have $\heta\in[0,c_0]$ and so
    $
    (\alpha+c_0-\heta)\in[\alpha+c_0-c_0,\alpha+c_0]
    $
    and thus $\gamma\ge 0$ and can only equal zero if $w(\heta)=0$.
    Since the derivative is consequently everywhere
    non-positive, the minimum occurs at the maximum possible value of $\heta$
    namely $\heta=c_0$. Substituting
    for this value of $\heta$ into (\ref{eq:B-w-general-form}) we obtain 
    \begin{equation}
	B_w(\eta,\heta) \ge \Wb(c_0+\alpha)-W(c_0)-\alpha W(c_0) .
	 \label{eq:B-w-bound-case-2}
    \end{equation}
    Combining (\ref{eq:B-w-bound-case-1}) and (\ref{eq:B-w-bound-case-2}) gives
	(\ref{eq:B-w-bound}).

    If $c_0=\thalf$ and $w$ is symmetric about $\thalf$ then for 
    $\alpha\in[0,\thalf]$ we have
    \begin{eqnarray*}
	& & w(\thalf-\alpha) = w(\thalf+\alpha)\\
	&\Rightarrow & \int w(\thalf-\alpha) d\alpha = \int w(\thalf+\alpha)d\alpha\\
	&\Rightarrow & W(\thalf) -W(\thalf-\alpha) = W(\thalf+\alpha)-W(\thalf)\\
	&\Rightarrow & \int  W(\thalf) -W(\thalf-\alpha)d\alpha
	   = \int W(\thalf+\alpha)-W(\thalf) d\alpha\\
       &\Rightarrow & \Wb(\thalf-\alpha) +\alpha W(\thalf) 
        = \Wb(\thalf+\alpha) -\alpha W(\thalf) ,
    \end{eqnarray*}
    in which case (\ref{eq:B-w-bound}) reduces to 
    (\ref{eq:B-w-bound-c-1-2}).  

    We finally demonstrate the tightness of the bound.  
    Since $\eta=\pi\frac{dP}{dM}$, by choosing and arbitrary $\pi\in(0,1)$ and
    $M$ uniform on $\mathcal{X}$ we
    have $\eta(x)=\eta$ for all $x\in\Xcal$. Furthermore given any desired
    $\eta\colon\Xcal\rightarrow [0,1]$ there exists a $P$ and $Q$ that 
    generates $\eta(\cdot)$. Furthermore $\heta(\cdot)$ can be an arbitrary
    function on $\Xcal$. Thus one can take $(\eta,\heta)$ to be 
    induced by the arg min in (\ref{eq:surrogate-argmin}). By the above
    construction there exists $\eta(\cdot)$ and $\heta(\cdot)$ such that the
    constructed $(\eta,\heta)$ are the corresponding values conditioned on
    $x\in\Xcal$. Thus there exists $(\pi,P,Q)$ such that 
    (\ref{eq:B-w-bound}) is obtained and thus the bound is tight.
\end{proof}
So far we have glossed over the constants of integration implicit in defining
$W$ and $\Wb$ in terms of $w$. Replacing $W(c)$ by $W(c)+k_1$ and $\Wb(c)$ by
$\Wb(c)+ck_1 +k_2$ and substituting into (e.g) (\ref{eq:B-w-bound-c-1-2}) we
obtain
\begin{eqnarray}
& & \Wb(\thalf+\alpha)+k_1(\thalf+\alpha) +k_2 
    -\alpha W(\thalf)-\alpha k_1
  -\Wb(\thalf)-k_\thalf-k_2\\
&=& \Wb(\thalf+\alpha)-\alpha W(\thalf)-\Wb(\thalf)
+\underbrace{k_1/2 -k_1/2 + k_1\alpha -k_1\alpha  +k_2 -k_2}_{=0}.
\label{eq:constants-of-integration}
\end{eqnarray}
Thus the choice of the constants of integration do not affect the bound. 

One can take a Taylor series expansion of 
(\ref{eq:B-w-bound-c-1-2}) in $\alpha$ about zero to obtain
\[
B_w(\eta,\heta) \ge \frac{w(\thalf)}{2}  \alpha^2 +\frac{w''(\thalf)}{24}
    \alpha^4 +\cdots
\]
There is no third order term since for (\ref{eq:B-w-bound-c-1-2}) to hold $w$
is symmetric and thus $w'(\thalf)=0$. This corresponds to the second order result
presented in \cite{Buja:2005}.

%%%%%%%%%%%%%%%%%%%%%%%%%%%%%%%%%%%%%%%%%%%%%%%%%%%%%%%%%%%%%%%%%%%%%%%
\subsection{General Pinsker Inequalities for Divergences}
The many different $f$ divergences are single number summaries of the
relationship between two distributions $P$ and $Q$. Each $f$-divergence
emphasises different aspects. Merely considering the functions $f$ by which
$f$-divergences are traditionally defined makes it hard to understand these
different aspects, and harder still to understand how knowledge of $\II_{f_1}$
constrains the possible values of $\II_{f_2}$. When $\II_{f_1}=V$ (a special
primitive for $\II_f$) and
$\II_{f_2}=\mathrm{KL}$, this a classical problem that has been studied for
decades.

\cite{Vajda1970} posed the question of a \emph{tight lower bound} on
KL-divergence in terms of variational divergence. This ``best possible Pinsker
inequality'' takes the form
\begin{equation}
L(V) :=\inf_{V(P,Q)=V} \mathrm{KL}(P,Q), \ \ \ \ V\in[0,2)
\label{eq:vajda-tight-lower-bound}
\end{equation}
Recently Fedotov et al.~\cite{FedotovTopsoe2003}
presented an {\em implicit} (parametric) version of the form 
\begin{eqnarray}
     & & (V(t),L(t))_{t\in\reals^+} \label{eq:implicit-pinsker}\\
    & & V(t) = t\left(1-\left(\coth(t)-\frac{1}{t}\right)^2\right),\ \ \ \ \ 
    L(t) = \log\left(\frac{t}{\sinh(t)}\right)+t\coth(t)
    -\frac{t^2}{\sinh^2(t)}\nonumber.
\end{eqnarray}
We will now show how viewing $f$-divergences in terms of their weighted
integral representation simplifies the problem of understanding the
relationship between different divergences and leads, amongst other things, to
an explicit formula for~(\ref{eq:vajda-tight-lower-bound}).

Fix a positive integer $n$. 
Consider a sequence $0<\pi_1 <\pi_2 < \cdots <\pi_n < 1$. Suppose we
``sampled'' the value of $\SI(\pi,P,Q)$ at these discrete values of $\pi$. This
is equivalent to knowing the values of the ``narrowband'' (so called because
its weight is $\gamma(\pi)=4\delta(\pi-\pi_i)$) primitive generalised 
variational divergence $V_{\pi_i}(P,Q):=V_{[-1,1]^{\mathcal{X}},\pi_i}(P,Q)$.

Since $\pi\mapsto\minLL(\pi,P,Q)$ is concave, the piece-wise linear  
concave function passing through points $\{(\pi_i,
\minLL^{0-1}(\pi_i,P,Q))\}_{i=1}^n$ is guaranteed to be an upper bound on the
Bayes risk curve $(\pi, \minLL^{0-1}(\pi,P,Q))_{\pi\in(0,1)}$.  This therefore
gives a lower bound on the statistical information for a task with loss given
by a weight function $\gamma$ and therefore a lower bound on the $f$-divergence
$\II_f(P,Q)$ corresponding to the statistical information. This observation forms
the basis of the theorem stated below.

\begin{theorem}
    \label{theorem:general-pinsker}
For a  positive integer $n$
consider a sequence $0<\pi_1<\pi_2 <\cdots <\pi_n< 1$. Let $\pi_0:=0$ and
$\pi_{n+1}:=1$. Let
\[
\psi_i:=\minLL^{0-1}(\pi_i,P,Q),\ \ \  i=0,\ldots,n+1
\]
(observe that consequently
$\psi_0=\psi_{n+1}=0$). Let 
\begin{equation}
    A_n:=\left\{\abold=(a_1,\ldots,a_n)\in\reals^n
   \colon \frac{\psi_{i+1}-\psi_i}{\pi_{i+1}-\pi_i} \le
   a_i \le\frac{\psi_i-\psi_{i-1}}{\pi_i-\pi_{i-1}},\ i=1,\ldots,n\right\}
   . \label{eq:A-n}
\end{equation}
The set $A_n$ defines the allowable slopes of a piecewise linear function
majorizing $\pi\mapsto\minLL^{0-1}(\pi,P,Q)$ at each of $\pi_1,\ldots,\pi_n$.
For $\abold=(a_1,\ldots,a_n)\in A_n$, let 
\begin{eqnarray}
\tilde{\pi}_i &:= &\frac{\psi_i-\psi_{i+1}+a_{i+1}
    \pi_{i+1}-a_i\pi_i}{a_{i+1}-a_i},\ \  i=0,\ldots,n,\\
j &:= & \{k\in\{1,\ldots,n\}: \tilde{\pi}_{k} <\textstyle\frac{1}{2}\le 
	\tilde{\pi}_{k+1}\}.  \label{eq:j-def}\\
\bar{\pi}_i &:=&\test{i<j}\tilde{\pi}_i +
    \test{i=j} \textstyle\frac{1}{2} + \test{j<i}\tilde{\pi}_{i-1},
      \label{eq:bar-pi}\\
\alpha_{\abold,i} &:= & \test{i\le
j}(1-a_i)+\test{i>{j}}(-1-a_{i-1}),\label{eq:alpha-abold}\\
\beta_{\abold,i}&:=&\test{i\le{j}}(\psi_i-a_i\pi_i)+
\test{i>{j}}(\psi_{i-1}-a_{i-1}\pi_{i-1})\label{eq:beta-abold}
\end{eqnarray}
for $i=0,\ldots,n+1$ and let $\gamma_f$ be the weight corresponding to  
$f$ given by
(\ref{eq:gamma-f-relationship}).

For arbitrary $\II_f$ and 
    for all distributions $P$ and $Q$ the following bound holds.
    If in addition $\mathcal{X}$ contains a connected component, it is tight.
    \begin{eqnarray}
	\II_f(P,Q) &\ge & \min_{\abold\in A_n} \sum_{i=0}^{n}
	\int_{\bar{\pi}_i}^{\bar{\pi}_{i+1}} 
	(\alpha_{\abold,i}\pi +\beta_{\abold,i}) \gamma_f(\pi) d\pi 
	\label{eq:general-pinsker}\\
	&=& \min_{\abold\in A_n} \sum_{i=0}^{n} \left[
	    \left(\alpha_{\abold,i}\bar{\pi}_{i+1}+
	    \beta_{\abold,i}\right)\Gamma_f(\bar{\pi}_{i+1})
	    -\alpha_{\abold,i}\bar{\Gamma}_f(\bar{\pi}_{i+1}) \right.\nonumber\\
	&& \ \ \ \ \ \ \ \ \ \ \ \left. -\left(\alpha_{\abold,i}\bar{\pi}_{i} +
	    \beta_{\abold,i}\right)\Gamma_f(\bar{\pi}_i)
	+\alpha_{\abold,i}\bar{\Gamma}_f(\bar{\pi}_i)\right] ,
	\label{eq:general-pinsker-evaluated}
\end{eqnarray}
where $\Gamma_f(\pi):=\int^\pi \gamma_f(t) dt$ and 
$\bar{\Gamma}_f(\pi):=\int^\pi \Gamma_f(t) dt$.
\end{theorem}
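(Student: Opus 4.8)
The plan is to convert everything to the Bayes-risk picture for $0$--$1$ loss and then bound the (concave) Bayes-risk curve from above by the lower envelope of a family of lines pinned to the sampled values. Write $\psi(\pi):=\minLL^{0-1}(\pi,P,Q)$, so that $\psi$ is concave and continuous on $[0,1]$, $\psi(0)=\psi(1)=0$, $0\le\psi(\pi)\le\pi\wedge(1-\pi)$, and $\psi(\pi_i)=\psi_i$. By the integral representation (\ref{eq:LV-integral-rep})--(\ref{eq:gamma-f-relationship}), together with $\SI^{0-1}(\pi,P,Q)=(\pi\wedge(1-\pi))-\psi(\pi)$ (the prior-only Bayes $0$--$1$ risk being $\pi\wedge(1-\pi)$ by Theorem~\ref{pro:mincostrisk}), we have $\II_f(P,Q)=\int_0^1[(\pi\wedge(1-\pi))-\psi(\pi)]\,\gamma_f(\pi)\,d\pi$, and $\gamma_f(\pi)=\pi^{-3}f''((1-\pi)/\pi)\ge 0$ since $f$ is convex. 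Hence for \emph{any} pointwise majorant $\bar\psi\ge\psi$ one gets $\II_f(P,Q)\ge\int_0^1[(\pi\wedge(1-\pi))-\bar\psi(\pi)]\,\gamma_f(\pi)\,d\pi$, and the task is to choose $\bar\psi$ as small as possible subject to being a valid majorant.

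For $\abold\in A_n$ set $\ell_0(\pi):=\pi$, $\ell_{n+1}(\pi):=1-\pi$, $\ell_i(\pi):=\psi_i+a_i(\pi-\pi_i)$ for $1\le i\le n$, and $\bar\psi_{\abold}:=\min_{0\le i\le n+1}\ell_i$. This is concave (a minimum of affine functions), vanishes at $0$ and $1$, and satisfies $0\le\bar\psi_{\abold}(\pi)\le\pi\wedge(1-\pi)$ because the two ``tent'' lines $\ell_0,\ell_{n+1}$ are among the minimands; since the $(\pi_i,\psi_i)$ are samples of the concave $\psi$, the chord slopes $\frac{\psi_{i+1}-\psi_i}{\pi_{i+1}-\pi_i}$ are non-increasing, and the constraints cutting out $A_n$ in (\ref{eq:A-n}) then force every $\ell_k$ to dominate the piecewise-linear interpolant of the data, so $\bar\psi_{\abold}(\pi_i)=\psi_i$. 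Taking in particular the $\abold$ whose $i$-th coordinate lies in the superdifferential of $\psi$ at $\pi_i$ --- admissible, since the one-sided-derivative versus chord inequalities for concave functions place that superdifferential inside the interval prescribed by (\ref{eq:A-n}) --- makes each $\ell_i$ a supporting line of $\psi$, hence $\psi\le\bar\psi_{\abold}$. Substituting this $\bar\psi_{\abold}$ into the displayed inequality and then weakening by minimising over all of $A_n$ gives (\ref{eq:general-pinsker}).

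To pass from (\ref{eq:general-pinsker}) to the closed form (\ref{eq:general-pinsker-evaluated}) one records the piecewise-linear structure: $\bar\psi_{\abold}$ switches its active affine piece exactly at the successive line intersections $\tilde\pi_i$; the integrand $(\pi\wedge(1-\pi))-\bar\psi_{\abold}(\pi)$ carries one further break at $\pi=\frac{1}{2}$, inserted into the knot sequence at the index $j$ of (\ref{eq:j-def}) to produce the refined knots $\bar\pi_i$ of (\ref{eq:bar-pi}); and on $[\bar\pi_i,\bar\pi_{i+1}]$ this integrand equals the affine function $\alpha_{\abold,i}\pi+\beta_{\abold,i}$ of (\ref{eq:alpha-abold})--(\ref{eq:beta-abold}) --- case $i\le j$ uses $\pi\wedge(1-\pi)=\pi$ with active line $\ell_i$, case $i>j$ uses $1-\pi$ with active line $\ell_{i-1}$ --- while the endpoint pieces $[0,\bar\pi_0]$ and $[\bar\pi_{n+1},1]$, where $\bar\psi_{\abold}$ coincides with the tent, contribute nothing. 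Integration by parts, $\int_a^b(\alpha\pi+\beta)\gamma_f(\pi)\,d\pi=\left[(\alpha\pi+\beta)\Gamma_f(\pi)-\alpha\bar\Gamma_f(\pi)\right]_a^b$, then yields (\ref{eq:general-pinsker-evaluated}).

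For tightness, assume $\mathcal{X}$ has a connected component; by Corollary~\ref{corollary:all-bayes-risk-curves-possible} every concave $\psi\colon[0,1]\to[0,1]$ with $0\le\psi(\pi)\le\pi\wedge(1-\pi)$ is $\minLL^{0-1}(\cdot,P,Q)$ for some $P,Q$. Pick $\abold^{\star}$ attaining the minimum in (\ref{eq:general-pinsker}); by the second paragraph $\bar\psi_{\abold^{\star}}$ is precisely such a curve and has the prescribed samples $\psi_i$, so the corresponding $(P^{\star},Q^{\star})$ realises $\II_f(P^{\star},Q^{\star})=\int_0^1[(\pi\wedge(1-\pi))-\bar\psi_{\abold^{\star}}(\pi)]\,\gamma_f(\pi)\,d\pi$, which is the right-hand side, so the bound cannot be improved. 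I expect the only genuinely laborious step to be the third paragraph: correctly interleaving the at most $n$ knots of $\bar\psi_{\abold}$ with the single kink of $\pi\wedge(1-\pi)$ at $\frac{1}{2}$, tracking the index shift between ``piece $i$'' and ``active line $\ell_{i-1}$'' for $i>j$, and fixing the endpoint conventions ($\psi_0=\psi_{n+1}=0$, $\ell_0=\pi$, $\ell_{n+1}=1-\pi$) so that (\ref{eq:bar-pi})--(\ref{eq:beta-abold}) match term by term. The conceptual core --- concavity $\Rightarrow$ supporting-line envelope $\Rightarrow$ lower bound against the non-negative weight $\gamma_f$, plus Corollary~\ref{corollary:all-bayes-risk-curves-possible} for tightness --- is short.
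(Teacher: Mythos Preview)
Your proposal is correct and follows essentially the same route as the paper's own proof: express $\II_f$ via the integral representation $\int_0^1[(\pi\wedge(1-\pi))-\psi(\pi)]\gamma_f(\pi)\,d\pi$ with $\gamma_f\ge 0$, majorize the concave Bayes-risk curve $\psi$ by a piecewise-linear envelope built from lines through the sampled points $(\pi_i,\psi_i)$ with slopes in the superdifferential (the paper calls this the sup-differential construction of $g_i^{\tilde\psi}$ and $\psi^*$), parametrize that envelope by $\abold\in A_n$, insert the extra knot at $\tfrac12$, and invoke Corollary~\ref{corollary:all-bayes-risk-curves-possible} for tightness. The only framing difference is cosmetic: the paper phrases step two as ``maximize $\psi$ over all concave curves with the prescribed samples, then observe the maximizer is piecewise linear'', whereas you go directly to ``pick the supporting-line envelope of the given $\psi$ and then weaken by minimising over $A_n$''; the underlying argument and all the ingredients (concavity, supporting lines, $A_n$ from chord-slope bounds, integration by parts for (\ref{eq:general-pinsker-evaluated}), the realizability corollary) are identical.
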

Equation \ref{eq:general-pinsker-evaluated} follows from
(\ref{eq:general-pinsker}) by integration by parts. The remainder of the 
proof is in appendix \ref{section:pinsker-proofs}. 
Although (\ref{eq:general-pinsker}) looks daunting, we observe: 
(1) the
constraints on $\abold$ are convex (in fact they are a box constraint); and (2)
the objective is a relatively benign function of $\abold$.  
By theorem
\ref{pro:c_vs_pi} and the fact that $2\ell_{\frac{1}{2}}=\ell^{0-1}$,
$\psi_i=\minLL^{0-1}(\pi_i,P,Q)
=2\minLL_{\pi_i}(\frac{1}{2},P,Q)$.

When $n=1$ the result simplifies considerably. If in addition $\pi_1=\thalf$  then
$V_{\frac{1}{2}}(P,Q)=V(P,Q)=2-4\minLL^{0-1}(\frac{1}{2},P,Q)$. It is
then a straightforward exercise to explicitly evaluate
(\ref{eq:general-pinsker}), especially when $\gamma_f$ is symmetric. The
following theorem expresses the result in terms of $V(P,Q)$ for comparability
with previous results. The result for $\mathrm{KL}(P,Q)$ is a
(best-possible) improvement on the classical Pinsker
inequality. The various divergences are
defined in Table~\ref{table:symmetric-divergences}.
\begin{theorem}\label{theorem:special-cases-pinsker}
    For any distributions $P,Q$ on $\mathcal{X}$, let $V:=V(P,Q)$. 
    Then the following bounds hold and, if in 
    addition $\mathcal{X}$ has a connected component, are tight.

    When $\gamma$ is symmetric about $\thalf$ and convex,
    \begin{equation}
	\II_f(P,Q) \ge 2 \left[ 
	\bar{\Gamma}_f\left(\textstyle\frac{1}{2}-\textstyle\frac{V}{4}\right)+
	\textstyle\frac{V}{4}\Gamma_f\left(\textstyle\frac{1}{2}\right)-
	\bar{\Gamma}_f\left(\textstyle\frac{1}{2}\right)\right]
	\label{eq:general-symmetric-pinsker}
    \end{equation}
    and $\Gamma_f$ and $\bar{\Gamma}_f$ are as in 
    Theorem~\ref{theorem:general-pinsker}.
    The following special cases hold.
    \begin{eqnarray}
	h^2(P,Q) &\!\!\!\!\!\ge \!\!\!\! &2 - \sqrt{4-V^2};\hspace*{6mm}
	\mathrm{J}(P,Q) \ge \textstyle 2V\ln
	\left(\frac{2+V}{2-V}\right);\hspace*{6mm}
	\Psi(P,Q)\ge \frac{8V^2}{4-V^2}\nonumber\\
	\mathrm{I}(P,Q) &\!\!\!\!\!\ge\!\!\!\! &\textstyle
	    \left(\frac{1}{2}-\frac{V}{4}\right)\ln(2-V)
	    +\left(\frac{1}{2}+\frac{V}{4}\right)\ln(2+V)-\ln(2)\nonumber\\
	\mathrm{T}(P,Q)&\!\!\!\!\!\ge\!\!\!\! &\textstyle
	   \ln\left(\frac{4}{\sqrt{4-V^2}}\right)-\ln(2). \nonumber
	\end{eqnarray}

	When $\gamma$ is not symmetric, the following special cases hold:
	\begin{eqnarray}
	\chi^2(P,Q) &\!\!\!\!\!\ge\!\!\!\! & \textstyle\test{V<1} V^2 + \test{V\ge
	    1}\frac{V}{(2-V)}\label{eq:my-chi-squared-bound}\\
	\mathrm{KL}(P,Q) &\!\!\!\!\ge &\!\! \min_{\beta\in[V-2,2-V]}\textstyle 
    \left(\frac{V+2-\beta}{4}\right)\ln\left(\frac{\beta-2-V}{\beta-2+V}\right)
    +
    \left(\frac{\beta+2-V}{4}\right)\ln\left(\frac{\beta+2-V}{\beta+2+V}\right).
    \label{eq:my-pinsker-KL}
\end{eqnarray}
\end{theorem}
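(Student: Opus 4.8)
The plan is to obtain Theorem~\ref{theorem:special-cases-pinsker} as the case $n=1$, $\pi_1=\thalf$ of Theorem~\ref{theorem:general-pinsker}. Concretely: from the integral representation $\II_f(P,Q)=\int_0^1 \SI^{0-1}(\pi,P,Q)\,\gamma_f(\pi)\,d\pi$ of (\ref{eq:LV-integral-rep}) together with $\SI^{0-1}(\pi,P,Q)=\pi\wedge(1-\pi)-\minLL^{0-1}(\pi,P,Q)$, and the fact that $\pi\mapsto\minLL^{0-1}(\pi,P,Q)$ is concave (an infimum of affine functions of $\pi$), we have for every supporting slope $a$ of this curve at $\thalf$ that $\minLL^{0-1}(\pi,P,Q)\le g_a(\pi):=\min\{\psi_1+a(\pi-\thalf),\,\pi,\,1-\pi\}$, where $\psi_1:=\minLL^{0-1}(\thalf,P,Q)=\thalf-\frac{V}{4}$ because $V_{1/2}(P,Q)=V(P,Q)=2-4\minLL^{0-1}(\thalf,P,Q)$. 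Since the true slope is unknown but constrained to the box $A_1=[-(2-V)/2,(2-V)/2]$ coming from (\ref{eq:A-n}), this gives $\II_f(P,Q)\ge\min_{a\in A_1}\int_0^1\big(\pi\wedge(1-\pi)-g_a(\pi)\big)\,\gamma_f(\pi)\,d\pi$, which is (\ref{eq:general-pinsker}) specialised to $n=1$; tightness when $\mathcal{X}$ has a connected component is inherited from Theorem~\ref{theorem:general-pinsker} (equivalently Corollary~\ref{corollary:all-bayes-risk-curves-possible}, since the extremal curve $g_a$ satisfies $0\le g_a\le\pi\wedge(1-\pi)$ and is thus realisable).

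For the symmetric case I would show the minimiser is $a=0$. For each fixed $\pi$ the map $a\mapsto-g_a(\pi)$ is a maximum of affine functions of $a$, hence convex, so $a\mapsto\int_0^1(\pi\wedge(1-\pi)-g_a(\pi))\gamma_f(\pi)\,d\pi$ is convex; and the change of variables $\pi\mapsto1-\pi$ together with $\gamma_f(\pi)=\gamma_f(1-\pi)$ shows it is even in $a$, so it is minimised at $a=0\in A_1$. With $a=0$, $g_0$ equals $\pi$ on $[0,\thalf-\frac V4]$, the constant $\psi_1$ on $[\thalf-\frac V4,\thalf+\frac V4]$, and $1-\pi$ on $[\thalf+\frac V4,1]$, so $\pi\wedge(1-\pi)-g_0$ is supported on $[\thalf-\frac V4,\thalf+\frac V4]$; splitting at $\thalf$, using symmetry of $\gamma_f$, and integrating by parts gives (\ref{eq:general-symmetric-pinsker}). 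The five symmetric special cases then follow mechanically: read $\gamma_f$ off Table~\ref{table:symmetric-divergences} (e.g.\ $\frac{2}{\pi^3}+\frac{2}{(1-\pi)^3}$ for $\Psi$, $\frac{1}{2[\pi(1-\pi)]^{3/2}}$ for $h^2$, $\frac{1}{\pi^2(1-\pi)^2}$ for $\mathrm{J}$, and the (dual) capacitory weights for $\mathrm{I},\mathrm{T}$), form the elementary antiderivatives $\Gamma_f=\int\gamma_f$, $\bar{\Gamma}_f=\int\Gamma_f$, and evaluate (\ref{eq:general-symmetric-pinsker}) at $\thalf-\frac V4$ and $\thalf$; the arbitrary constants of integration cancel.

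For $\chi^2$ and $\mathrm{KL}$ the weights $\gamma_f(\pi)=2/\pi^3$ and $\gamma_f(\pi)=1/(\pi^2(1-\pi))$ are not symmetric, so one must carry out $\min_{a\in A_1}$ honestly. Differentiating the objective in $a$ by the Leibniz rule (the two boundary terms vanish because the integrand is zero at the breakpoints of $g_a$) reduces stationarity to $\int_{\pi_L(a)}^{\pi_R(a)}(\pi-\thalf)\,\gamma_f(\pi)\,d\pi=0$ with $\pi_L(a)=\frac{(2-V)-2a}{4(1-a)}$ and $\pi_R(a)=\frac{(2+V)+2a}{4(1+a)}$, after which one checks this against the box constraint. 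For $\chi^2$ both the stationarity equation and the feasibility check are solvable in closed form, and the unconstrained optimum leaves $A_1$ exactly at $V=1$, which is what produces the case split in (\ref{eq:my-chi-squared-bound}). For $\mathrm{KL}$ the stationarity equation is transcendental, so after the harmless substitution $\beta:=-2a$ (a bijection of $A_1$ onto $[V-2,2-V]$ under which the breakpoints of $g_a$ become the arguments of the logarithms in (\ref{eq:my-pinsker-KL})) one records the one-parameter minimisation (\ref{eq:my-pinsker-KL}); it is vacuous unless $P\ll Q$ and holds trivially otherwise.

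The step I expect to be the main obstacle is the faithful specialisation of the bookkeeping of Theorem~\ref{theorem:general-pinsker} to $n=1$, $\pi_1=\thalf$ --- pinning down the breakpoints $\bar{\pi}_i$, the index $j$ of (\ref{eq:j-def}), and the slope/intercept data $\alpha_{\abold,i},\beta_{\abold,i}$ --- so that the abstract sum in (\ref{eq:general-pinsker}) collapses to the transparent two-breakpoint picture above; and, in the non-symmetric cases, correctly locating the threshold $V=1$ at which the unconstrained optimiser for $\chi^2$ meets the boundary of $A_1$. The per-divergence integral evaluations themselves are routine once the reduction is set up.
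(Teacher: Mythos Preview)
Your approach is essentially the paper's: specialise Theorem~\ref{theorem:general-pinsker} to $n=1$, $\pi_1=\thalf$, reduce to the one-parameter family $g_a(\pi)=\min\{\psi_1+a(\pi-\thalf),\pi,1-\pi\}$ with $\psi_1=\thalf-\frac V4$, minimise over $a\in[-2\psi_1,2\psi_1]$, and then evaluate the resulting integrals case by case; the breakpoints $\pi_L,\pi_R$ you write down coincide with the paper's $L,U$, and your Leibniz-rule stationarity condition $\int_{\pi_L}^{\pi_R}(\pi-\thalf)\gamma_f(\pi)\,d\pi=0$ is exactly what drives the $\chi^2$ and $\mathrm{KL}$ computations.

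Two small remarks. First, your justification that $a=0$ is optimal in the symmetric case (the objective is convex in $a$ because $-g_a(\pi)$ is a pointwise maximum of affine functions of $a$, and even in $a$ by the change of variables $\pi\mapsto1-\pi$ together with $\gamma_f(1-\pi)=\gamma_f(\pi)$) is cleaner than the paper's bare assertion and in fact does not use the convexity of $\gamma_f$ that appears in the hypothesis; this is a genuine, if modest, sharpening. Second, the substitution that reproduces the displayed $\mathrm{KL}$ bound is $\beta:=2a$, not $\beta:=-2a$; both are bijections of $A_1$ onto $[V-2,2-V]$, but only $\beta=2a$ yields the formula exactly as stated in (\ref{eq:my-pinsker-KL}).
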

This theorem gives the first explicit representation of 
the optimal Pinsker bound.\footnote{
     A summary of existing results and their relationship to those presented
     here is given in appendix \ref{section:history}.
}
By plotting both (\ref{eq:implicit-pinsker}) and
(\ref{eq:my-pinsker-KL}) one can confirm 
that  the two bounds (implicit and explicit) coincide; see Figure
\ref{figure:pinsker-curves}. Equation \ref{eq:general-symmetric-pinsker} should
be compared with (\ref{eq:B-w-bound-c-1-2}).
\begin{figure}[t]
    \begin{center}
	\includegraphics[width=0.4\textwidth]{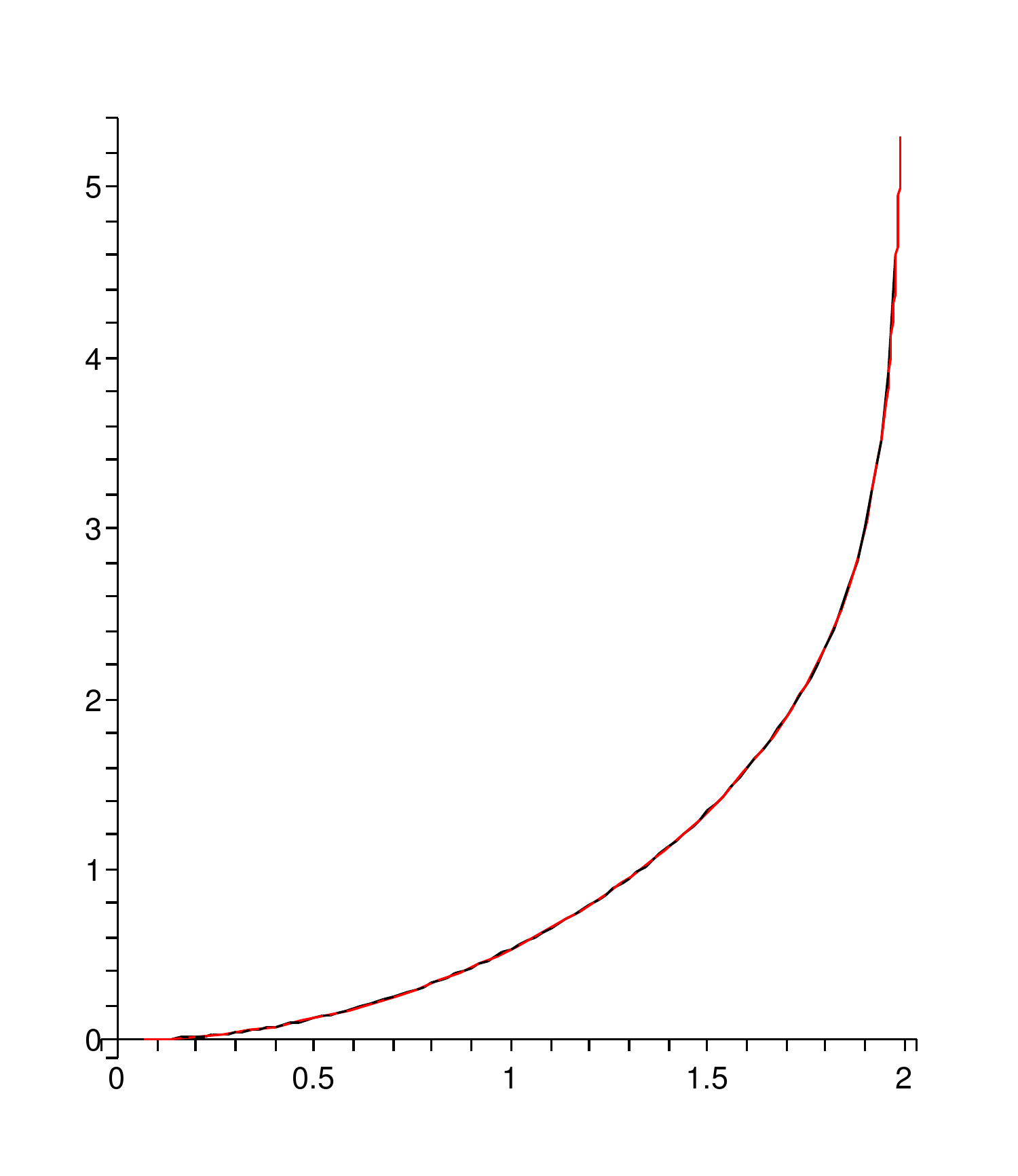}
    \end{center}
\caption{Lower bound on  $\mathrm{KL}(P,Q)$ 
as a function of the variational divergence $V(P,Q)$. Both the explicit bound
(\ref{eq:my-pinsker-KL}) and Fedotorev et al.'s implicit bound
(\ref{eq:implicit-pinsker}) are plotted.
\label{figure:pinsker-curves}}
\end{figure}

The above theorem suggests a means by which one can \emph{estimate} an
$f$-divergence by estimating a sequence $(\minLL_{c_i}(\pi,P,Q))_{i=1}^n$. A
simpler version of such an idea (more directly using the representation 
(\ref{eq:LV-integral-rep})) has been studied by
\citet{SongReidSmolaWilliamson2008}

%%%%%%%%%%%%%%%%%%%%%%%%%%%%%%%%%%%%%%%%%%%%%%%%%%%%%%%%%%%%%%%%%%%%%%

\section{Variational Representations}\label{sec:variational}

We have already seen a number of connections between the Bayes risk
\begin{equation}
    \minLL(\pi,P,Q) = \inf_{\hat{\eta}\in[0,1]^{\mathcal{X}}} 
    \E{\Xsf\sim M}{\ell(\eta(\Xsf),\hat{\eta}(\Xsf))}
    \label{eq:classical-minLL-def}
\end{equation}
and the $f$-divergence 
\begin{equation}
    \II_f(P,Q) = \E{Q}{f\left(\frac{dP}{dQ}\right)}.
    \label{eq:classical-I-f-def}
\end{equation}
Comparing these definitions leads to an obvious and intriguing point: the
definition of $\minLL$ involves an optimisation, whereas that for $\II_f$ does
not. Observe that the normal usage of these quantities is that one normally
wishes to not just know the real number $\minLL(\pi,P,Q)$, but one would like
the estimate $\hat{\eta}\colon\mathcal{X}\rightarrow[0,1]$ that attains the 
minimal risk.  In this section we
will explore two views of $\II_f$ --- relating the standard definition to a {\em
variational} one that explains where the optimisation is hidden in
(\ref{eq:classical-I-f-def}). The easiest place to start, unsurprisingly!, is
with the variational divergence. Below we derive a straight-forward  extension
of the classical result relating $\minLL^{\mathrm{0-1}}(\frac{1}{2},P,Q)$ to
$V(P,Q)$. We then explore variational representations for general
$f$-divergences and consequently develop some new generalisations.

%%%%%%%%%%%%%%%%%%%%%%%%%%%%%%%%%%%%%%%%%%%%%%%
\subsection{Generalised Variational Divergence}

Let $\mathcal{C} \subseteq \{-1,1\}^{\mathcal{X}}$ denote a collection of 
binary \emph{classifiers} on $\mathcal{X}$. Consider the (constrained\footnote{
\cite{TongKoller2000} call this the {\em restricted} Bayes risk.}) Bayes 
risk for 0-1 loss
minimised over this set
\begin{equation}\label{eq:minLLF-def}
    \minLL_{\mathcal{C}}^{0-1}(\pi,P,Q)
    = \inf_{r\in\mathcal{C}}\EE_{(\Xsf,\Ysf)\sim\PP}
    [\loss^{0-1}(r(\Xsf),\Ysf)].
\end{equation}
The variational divergence is so called because it
can be written 
\begin{equation}
    V(P,Q)=2\sup_{A\subseteq\mathcal{X}} |P(A)-Q(A)|,
\label{eq:classic-variational-divergence-def}
\end{equation}
where the supremum is over all measurable
subsets of $\mathcal{X}$. 
Since $V(P,Q)=\sup_{r\in[-1,1]^{\mathcal{X}}} |\EE_P r - \EE_Q r|$, 
consider 
the following generalisation of $V$:
\begin{equation}
    V_{{\mathcal{R}},\pi}(P,Q) := 2\sup_{r\in\mathcal{R}\subseteq[-1,1]^{\mathcal{X}}}
    |\pi\EE_P r - (1-\pi)\EE_Q r| ,
    \label{eq:VGpi-def}
\end{equation}
where $\pi\in(0,1)$.  When $\pi=\thalf$ this is a
scaled version of what  \cite{Muller1997,Muller1997a} 
calls an {\em integral 
probability metric}.\footnote{
	\cite{Zolotarev1984} calls this a  {\em probability metric
	with $\zeta$-structure}.  There are probability metrics that
	are neither $f$-divergences nor integral probability metrics. A large
	collection is due to \cite{Rachev1991}.   A recent survey on
	relationships (inequalities and some representations) has been given by
	\cite{GibbsSu2002}.
}

If $\mathcal{R}$ is {\em symmetric about zero} ($r\in\mathcal{R}\Rightarrow
-r\in\mathcal{R}$),  then the absolute value signs in (\ref{eq:VGpi-def}) can
be removed. To see this, suppose the supremum was attained at $\overline{r}$
and that $\alpha:=\pi\EE_P \overline{r} -(1-\pi)\EE_Q \overline{r} <0$.
Choose $\overline{r}':= -\overline{r}$ and observe that $\pi\EE_P
\overline{r}'-(1-\pi)\EE_Q \overline{r}'= -\alpha>0$.  Thus
$V_{\mathcal{R},\pi}(P,Q)=2\sup_{r\in\mathcal{R}\subseteq[-1,1]^{\mathcal{X}}}
(\pi\EE_P r -(1-\pi)\EE_Q r)$. 

Let $\sgn\mathcal{R}:=\{\sgn r\colon r\in\mathcal{R}\}$ and for $a,b\in\reals$, 
let $a\mathcal{R}+b:=\{ar+b\colon r\in\mathcal{R}\}$.
\begin{theorem}
    Suppose $\mathcal{R}\subseteq [-1,1]^{\mathcal{X}}$ is symmetric about 
    zero and
    $\sgn\mathcal{R}\subseteq\mathcal{R}$.
    For all $\pi\in(0,1)$ and all $P$ and $Q$
    \begin{equation}
	\textstyle\minLL_{(\sgn\mathcal{R} +1)/2}^{0-1}(\pi,P,Q)
	= \frac{1}{2} - \frac{1}{4} V_{\mathcal{R},\pi}(P,Q)
	\label{eq:LL-Vpi}
    \end{equation}
    and the infimum in (\ref{eq:minLLF-def}) corresponds to the supremum in
    (\ref{eq:VGpi-def}).
    \label{theorem:minLL-V-pi}
\end{theorem}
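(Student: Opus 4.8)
The plan is to establish the identity by a direct calculation relating the $0$-$1$ risk of a classifier to its performance as a discriminator between $\pi P$ and $(1-\pi)Q$. First I would write out the $0$-$1$ risk of a classifier $r\in\{-1,1\}^{\mathcal{X}}$ in its generative form. Using $M=\pi P+(1-\pi)Q$ and $\PP$ the joint distribution with prior $\pi$, for any $r$ we have
\[
    \EE_{(\Xsf,\Ysf)\sim\PP}[\loss^{0-1}(r(\Xsf),\Ysf)]
    = \pi\,P(\Xcal_r^-) + (1-\pi)\,Q(\Xcal_r^+),
\]
where $\Xcal_r^+=r^{-1}(1)$ and $\Xcal_r^-=r^{-1}(-1)$ as in Section~\ref{sub:NPLemma}. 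The first step is to rewrite this purely in terms of $\EE_P r$ and $\EE_Q r$: since $r$ takes values in $\{-1,1\}$, we have $\test{r(x)=1}=(1+r(x))/2$ and $\test{r(x)=-1}=(1-r(x))/2$, so $P(\Xcal_r^-)=\tfrac12(1-\EE_P r)$ and $Q(\Xcal_r^+)=\tfrac12(1+\EE_Q r)$. Substituting gives
\[
    \EE_{\PP}[\loss^{0-1}(r(\Xsf),\Ysf)]
    = \tfrac12 - \tfrac12\bigl(\pi\EE_P r - (1-\pi)\EE_Q r\bigr).
\]

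The second step is to take the infimum over $r\in(\sgn\mathcal{R}+1)/2$. Because every classifier in $(\sgn\mathcal{R}+1)/2$ is of the form $(\sgn s+1)/2$ with $s\in\mathcal{R}$, and because classifiers are already $\{-1,1\}$-valued when viewed through the identity $(\sgn r+1)/2\leftrightarrow \sgn r$, the infimum of the risk equals $\tfrac12-\tfrac12\sup\{\pi\EE_P(\sgn s)-(1-\pi)\EE_Q(\sgn s)\colon s\in\mathcal{R}\}$. Here I use the hypothesis $\sgn\mathcal{R}\subseteq\mathcal{R}$: it guarantees that $\sgn s\in\mathcal{R}$ for every $s\in\mathcal{R}$, so the supremum over $\{\sgn s:s\in\mathcal{R}\}$ is bounded above by the supremum over $\mathcal{R}$. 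For the reverse inequality, I would argue that the linear functional $r\mapsto\pi\EE_P r-(1-\pi)\EE_Q r$ is maximised over $[-1,1]^{\mathcal{X}}$ at a $\{-1,1\}$-valued function (pointwise one should set $r(x)=\sgn(\pi p(x)-(1-\pi)q(x))$ where densities exist), and since $\mathcal{R}$ is symmetric about zero the absolute value can be dropped as explained in the paragraph preceding the theorem; combining these shows $\sup_{r\in\mathcal{R}}(\pi\EE_P r-(1-\pi)\EE_Q r)=\sup_{s\in\mathcal{R}}(\pi\EE_P(\sgn s)-(1-\pi)\EE_Q(\sgn s))$. Recognising the right-hand side as $\tfrac12 V_{\mathcal{R},\pi}(P,Q)$ by definition~(\ref{eq:VGpi-def}) yields $\minLL^{0-1}_{(\sgn\mathcal{R}+1)/2}(\pi,P,Q)=\tfrac12-\tfrac14 V_{\mathcal{R},\pi}(P,Q)$, which is (\ref{eq:LL-Vpi}), and the minimiser $r$ and maximiser $s$ coincide under $r=(\sgn s+1)/2$.

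The main obstacle I anticipate is the careful handling of the two suprema: one must show that enlarging the feasible set from $\sgn\mathcal{R}$ to $\mathcal{R}$ does not increase the value, i.e.\ that among all functions in $\mathcal{R}\subseteq[-1,1]^{\mathcal{X}}$ the functional $\pi\EE_P r-(1-\pi)\EE_Q r$ is already optimised by a sign function in $\mathcal{R}$. This is where the hypotheses $\sgn\mathcal{R}\subseteq\mathcal{R}$ (so the optimal sign function is available) and symmetry about zero (so we may drop the absolute value and need only maximise, not also minimise) are both essential; without them one only gets an inequality. The rest is bookkeeping with the bijection $r\leftrightarrow(\sgn r+1)/2$ between $\{-1,1\}$-valued classifiers and $\{0,1\}$-valued ones, and with the constant $\tfrac14$ coming from the two factors of $\tfrac12$ (one from $\test{r=1}=(1+r)/2$, one from the $2$ in the definition of $V_{\mathcal{R},\pi}$).
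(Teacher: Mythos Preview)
Your overall route is the same as the paper's: rewrite the $0$-$1$ risk of a $\{-1,1\}$-valued classifier $\rho$ as $\tfrac12-\tfrac12\bigl(\pi\EE_P\rho-(1-\pi)\EE_Q\rho\bigr)$, take the infimum over $\rho\in\sgn\mathcal{R}$ to get $\tfrac12-\tfrac14\,V_{\sgn\mathcal{R},\pi}(P,Q)$, and then argue $V_{\sgn\mathcal{R},\pi}(P,Q)=V_{\mathcal{R},\pi}(P,Q)$. The algebraic portion (your first displayed computation) is correct and matches the paper's calculation after the change of variables $\rho=2r-1$.

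The genuine gap is in your ``reverse inequality''. From $\sgn\mathcal{R}\subseteq\mathcal{R}$ you correctly get $V_{\sgn\mathcal{R},\pi}\le V_{\mathcal{R},\pi}$. For the other direction you invoke the fact that the linear functional $r\mapsto\pi\EE_P r-(1-\pi)\EE_Q r$ is maximised over \emph{all} of $[-1,1]^{\mathcal{X}}$ at a $\{-1,1\}$-valued function. That is true but does not transfer to a restricted $\mathcal{R}$: the unconstrained pointwise maximiser $\sgn(\pi p-(1-\pi)q)$ need not lie in $\mathcal{R}$, and for a given $r\in\mathcal{R}$ the replacement $r\mapsto\sgn r$ can strictly \emph{decrease} the value of the functional. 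Concretely, take $\mathcal{X}=\{a,b\}$, $P=\delta_a$, $Q=\delta_b$, $\pi=\tfrac12$, and
\[
\mathcal{R}=\{r,\,-r,\,\mathbf{1},\,-\mathbf{1}\},\qquad r(a)=0.5,\ r(b)=0.3.
\]
Then $\mathcal{R}$ is symmetric about zero and $\sgn\mathcal{R}=\{\mathbf{1},-\mathbf{1}\}\subseteq\mathcal{R}$, yet $V_{\mathcal{R},\pi}=|r(a)-r(b)|=0.2$ while $V_{\sgn\mathcal{R},\pi}=0$. Correspondingly $(\sgn\mathcal{R}+1)/2$ contains only the two constant classifiers, so $\minLL^{0-1}_{(\sgn\mathcal{R}+1)/2}=\tfrac12\neq\tfrac12-\tfrac14\cdot 0.2$. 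So your ``combining these shows'' step does not go through as written. The paper handles this same step with the one-line assertion ``since $\sgn\mathcal{R}\subseteq\mathcal{R}$, the supremum in~(\ref{eq:VGpi-def}) will be $\pm1$-valued everywhere'', which is no more justified; you were right to flag it as the main obstacle. An additional structural hypothesis on $\mathcal{R}$ (beyond symmetry and $\sgn\mathcal{R}\subseteq\mathcal{R}$) is needed to force equality, or else the conclusion should be weakened to $\minLL^{0-1}_{(\sgn\mathcal{R}+1)/2}(\pi,P,Q)\ge\tfrac12-\tfrac14\,V_{\mathcal{R},\pi}(P,Q)$, which your argument does establish.
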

\begin{proof}
    Let $\mathcal{C} :=(\sgn\mathcal{R}+1)/2 \subseteq\{0,1\}^{\mathcal{X}} $ and
    so $\sgn\mathcal{R}=2\mathcal{C} -1$. Then
    \begin{eqnarray}
	\minLL_{\mathcal{C}}^{0-1}(\pi,P,Q) &=& \inf_{r\in\mathcal{C}} 
	\EE_{(\Xsf,\Ysf)\sim\PP}
	\ell^{0-1}(r(\mathsf{X}),\mathsf{Y})\nonumber\\
	&=& \inf_{r\in\mathcal{C}} \left(\pi\EE_{\Xsf\sim P} 
	\ell^{0-1}(r(\mathsf{X}),0) +
	(1-\pi) \EE_{\Xsf\sim Q} \ell^{0-1}(r(\mathsf{X}),1)\right)\nonumber\\
	&=& \inf_{r\in\mathcal{C}} \left(\pi\EE_{\Xsf\sim P}
	\test{r(\mathsf{X})=1}
	+(1-\pi)\EE_{\Xsf\sim Q}\test{r(\mathsf{X})=0}\right)\nonumber\\
	&=& \inf_{r\in\mathcal{C}} \left(\pi\EE_P r +(1-\pi)\EE_Q
	(1-r)\right)\nonumber
    \end{eqnarray}
    since $\ran r=\{0,1\}\Rightarrow \EE_{\Xsf\sim P}\test{r(\mathsf{X})=1}=
    \EE_{\Xsf\sim P} r(\mathsf{X})$ and 
    $\EE_{\Xsf\sim Q}\test{r(\mathsf{X})=0}=\EE_{\Xsf\sim Q}(1-r(\mathsf{X}))$. 
    Let $\rho=2r-1\in 2\mathcal{C}-1$. Thus $r=\frac{\rho+1}{2}$. Hence
    \begin{eqnarray*}
    \minLL_{\mathcal{C}}^{0-1}(\pi,P,Q) 
    &=& \inf_{\rho\in 2\mathcal{C}-1}\left(\pi\EE_P \left(\frac{\rho+1}{2}\right) - 
	   (1-\pi)\EE_Q \left(1-\frac{\rho+1}{2}\right)\right)\\
	&=& \frac{1}{2}\inf_{\rho\in 2\mathcal{C}-1} (\pi\EE_P (\rho+1)
	   +(1-\pi)\EE_Q(1-\rho))\\
	&=&\frac{1}{2}\inf_{\rho\in 2\mathcal{C}-1} 
	   (\pi\EE_P\rho+(1-\pi)\EE_Q(-\rho)+\pi+(1-\pi))\\
	&=&\frac{1}{2}+\frac{1}{2}\inf_{\rho\in2\mathcal{C}-1}(\pi\EE_P\rho-
	  (1-\pi)\EE_Q\rho) \\
        &=& \frac{1}{2}-\frac{1}{2}\sup_{\rho\in 2\mathcal{C}-1} (\pi\EE_P
	  (-\rho) -(1-\pi)\EE_Q(-\rho)) .
    \end{eqnarray*}
    Since $\mathcal{R}$ is symmetric about zero, 
    $\sgn(\mathcal{R})=2\mathcal{C}-1$, 
    $\mathcal{C}\subseteq\{0,1\}^{\mathcal{X}}$
    is symmetric about $\thalf$; i.e. $\rho\in\mathcal{C}\Rightarrow
    (1-\rho)\in\mathcal{C}$. Thus
    \begin{eqnarray}
	\minLL_{\mathcal{C}}^{0-1}(\pi,P,Q)  &=& \frac{1}{2}-\frac{1}{2}
	\sup_{\rho\in 2\mathcal{C}-1} (\pi\EE_P\rho-(1-\pi)\EE_Q\rho)\nonumber\\
	&=&\frac{1}{2}-\frac{1}{4} V_{2\mathcal{C} -1,\pi}(P,Q)\nonumber\\
	&=& \frac{1}{2}-\frac{1}{4} V_{\sgn\mathcal{R},\pi}(P,Q).
	\label{eq:LLC-expression}
    \end{eqnarray}
    Since by assumption $\sgn\mathcal{R}\subseteq\mathcal{R}$, the supremum 
    in (\ref{eq:VGpi-def}) will be $\pm 1$-valued everywhere. 
    Thus $V_{\sgn\mathcal{R},\pi}(P,Q)=V_{\mathcal{R},\pi}(P,Q)$.
    Combining this fact with
    (\ref{eq:LLC-expression}) leads to (\ref{eq:LL-Vpi}).

    Finally observe that by replacing $\inf$ and $\sup$ by $\argmin$ and
    $\argmax$ the final part of the theorem is apparent.
\end{proof}

%%%%%%%%%%%%%%%%%%%%%%%%%%%%
\subsubsection{The Linear ``Loss''}
\label{section:linear-loss}
This theorem shows that computing $V_{\mathcal{R},\pi}$ involves an
optimisation problem equivalent to that arising in the determination 
of~$\minLL$. The $\argmin$
in the definition of $\minLL$ is usually called the {\em hypothesis} (or 
{\em Bayes optimal hypothesis}). Following 
\citet{BorgwardtGrettonRaschKriegelScholkopfSmola2006}  we will call the
$\argmax$ in (\ref{eq:VGpi-def}) the {\em witness}.

When $\mathcal{R}=[-1,1]^{\mathcal{X}}$ and $\pi=\thalf$,
$\sgn\mathcal{R}\subseteq\mathcal{R}$ and furthermore
$\mathcal{C}=(\sgn\mathcal{R}+1)/2=\{0,1\}^{\mathcal{X}}$ and so
Theorem \ref{theorem:minLL-V-pi} reduces to the classical result that
$\minLL^{0-1}(\frac{1}{2},P,Q)=\frac{1}{2}-\frac{1}{4} V(P,Q)$ 
\citep{DevGyoLug96}.

The requirement that
$\sgn\mathcal{R}\subseteq\mathcal{R}$ is unattractive. It is necessitated by
the use of 0-1 loss.  It can be removed by instead considering the linear 
loss.

\iffalse
considering the {\em hinge loss} $\ell^{\mathrm{hinge}}(y,\heta):=
\max((1-y\heta), 0)$. Considering the
general case of $\pi\ne\frac{1}{2}$ we can show (assuming $\mathcal{R}$ is
symmetric and $|r(x)|\le 1$ for all $x\in\mathcal{X}$ and all $r\in\mathcal{R}$)
that
\begin{eqnarray}
    \minLL_{\mathcal{R}}^{{\mathrm{hinge}}}(\pi,P,Q) &=&
    \inf_{r\in\mathcal{R}\subseteq[-1,1]^{\mathcal{X}}} \pi \EE_P (1-r)
    +(1-\pi)\EE_Q(1+r)\label{eq:inf-hinge-loss}\\
    &=&  1 + \sup_{r\in\mathcal{R}}\ \  (1-\pi)\EE_Q r -\pi \EE_P r .\nonumber
\end{eqnarray}
Observing that the Maximum Mean Discrepancy (a new method for measuring
distances between distributions \cite{GrettonBorgwardtRaschScholkopfSmola2007})
$\mathrm{MMD}_{\mathcal{R}}[P,Q]=\sup_{r\in\mathcal{R}} (\EE_P r -\EE_Q r) =
\sup_{r\in\mathcal{R}} (\EE_Q r - \EE_P r) = 2 V_{\mathcal{R},\frac{1}{2}}(P,Q)$ (since $\mathcal{R}$ is symmetric),
we have shown:
\begin{corollary}
    \label{cor:hinge-correspondence}
    Suppose $\mathcal{R}$ is symmetric and $\|r\|_\infty\le 1$ for all
    $r\in\mathcal{R}$. Then 
    \begin{equation}
	\minLL_{\mathcal{R}}^{\mathrm{hinge}}(\textstyle\frac{1}{2},P,Q)=1+\frac{1}{2}
	\mathrm{MMD}_{\mathcal{R}}[P,Q]
    \end{equation}
    and the infimum in (\ref{eq:inf-hinge-loss}) corresponds to the supremum 
    in the definition of MMD.
\end{corollary}
\fi
%%%%%% new Stuff

It is convenient to temporarily switch conventions so that the labels
$y\in\{-1,1\}$. Consider the {\em linear loss}
\[
\loss^{\mathrm{lin}}(r(x),y):=1-yr(x),\ \ \ y\in\{-1,1\}.
\]
If $r$ is unrestricted, then there is no guarantee that
$\loss^{\mathrm{lin}}>-\infty$ and is thus a legitimate loss function. Below we will always consider $r\in\mathcal{R}$
such that the linear loss is bounded from below.
Observe that the common hinge loss
\citep{Steinwart2008}
is simply $\loss^{\mathrm{hinge}}(f(x),y)=
0\vee \loss^{\mathrm{lin}}(f(x),y)$.  
\begin{theorem}
    \label{theorem:linear-loss-equivalence}
    Assume that $\mathcal{R}\subseteq [-a,a]^{\mathcal{X}}$ for some $a>0$ and
    is symmetric about zero.
    Then for all $\pi\in(0,1)$ and all
distributions $P$ and $Q$ on $\mathcal{X}$
\begin{equation}
    \minLL_{\mathcal{R}}^{\mathrm{lin}}(\pi,P,Q) =
    1- \frac{1}{2} V_{\mathcal{R},\pi}(P,Q)\label{eq:minLL-equals-V-R-pi}
\end{equation}
and the $r$ that attains $\minLL_{\mathcal{R}}^{\mathrm{lin}}(\pi,P,Q) $
corresponds to the $r$ that obtains the supremum in the definition of
$V_{\mathcal{R},\pi}(P,Q)$.
\end{theorem}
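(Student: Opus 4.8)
The plan is to unwind the definition of $\minLL_{\mathcal{R}}^{\mathrm{lin}}$ exactly as in the proof of Theorem~\ref{theorem:minLL-V-pi}, but with the linear loss replacing $0$-$1$ loss, and then appeal to the symmetry of $\mathcal{R}$ about zero to turn the resulting \emph{signed} supremum over $r$ into the absolute-value supremum that defines $V_{\mathcal{R},\pi}$ in (\ref{eq:VGpi-def}).

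First I would record that $\mathcal{R}\subseteq[-a,a]^{\mathcal{X}}$ forces $\loss^{\mathrm{lin}}(r(x),y)=1-yr(x)\ge 1-a$ for all $r\in\mathcal{R}$, so the linear loss is bounded below and every expectation below is well defined (this is the only role of the $[-a,a]$ hypothesis). Using the generative view of $\PP$ --- label $y=1$ with $\Xsf\sim P$ with probability $\pi$, and label $y=-1$ with $\Xsf\sim Q$ with probability $1-\pi$ --- the risk of a fixed $r\in\mathcal{R}$ is
\[
\EE_{(\Xsf,\Ysf)\sim\PP}\bigl[1-\Ysf r(\Xsf)\bigr]
= \pi\,\EE_{P}[1-r] + (1-\pi)\,\EE_{Q}[1+r]
= 1 - \bigl(\pi\,\EE_P r - (1-\pi)\,\EE_Q r\bigr).
\]
Taking the infimum over $r\in\mathcal{R}$ converts the trailing $-(\cdot)$ into a supremum, so
\[
\minLL_{\mathcal{R}}^{\mathrm{lin}}(\pi,P,Q) = 1 - \sup_{r\in\mathcal{R}}\bigl(\pi\,\EE_P r - (1-\pi)\,\EE_Q r\bigr).
\]

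Next I would invoke symmetry: for each $r\in\mathcal{R}$ we have $-r\in\mathcal{R}$, and $|\pi\EE_P r-(1-\pi)\EE_Q r|$ equals whichever of $\pm(\pi\EE_P r-(1-\pi)\EE_Q r)$ is non-negative, hence $\sup_{r\in\mathcal{R}}|\pi\EE_P r-(1-\pi)\EE_Q r| = \sup_{r\in\mathcal{R}}(\pi\EE_P r-(1-\pi)\EE_Q r)$ --- this is precisely the sign-flipping remark made just before Theorem~\ref{theorem:minLL-V-pi}. By (\ref{eq:VGpi-def}) the left-hand side is $\tfrac12 V_{\mathcal{R},\pi}(P,Q)$, which gives (\ref{eq:minLL-equals-V-R-pi}). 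Finally, since the only optimisation over $r$ in the whole argument is this single $\inf\leftrightarrow\sup$ pass, replacing $\inf$ by $\argmin$ and $\sup$ by $\argmax$ throughout shows that the $r$ attaining $\minLL_{\mathcal{R}}^{\mathrm{lin}}(\pi,P,Q)$ is exactly the $r$ attaining the supremum in $V_{\mathcal{R},\pi}(P,Q)$.

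There is essentially no hard step here; the bookkeeping is the same as in Theorem~\ref{theorem:minLL-V-pi} but cleaner, because the linear loss is already affine in $r$ and one therefore avoids the auxiliary hypothesis $\sgn\mathcal{R}\subseteq\mathcal{R}$ that the quantisation to $\{0,1\}$-valued classifiers had forced. The only points deserving a word of care are the well-definedness of the expectations (handled by the $[-a,a]$ bound) and the removal of the absolute value under symmetry, both of which have already been dealt with in the surrounding text.
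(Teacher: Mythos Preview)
Your proof is correct and is essentially the paper's own argument: expand the linear loss under the generative decomposition, pull out the constant $1$, convert the infimum to a supremum, and use symmetry of $\mathcal{R}$ to identify the result with $\tfrac12 V_{\mathcal{R},\pi}$. The only cosmetic difference is the label convention---the paper pairs $P$ with $y=-1$ (yielding $1+\inf(\cdot)$ before a sign flip via $r\mapsto-r$), whereas you pair $P$ with $y=+1$ (yielding $1-\sup(\cdot)$ directly)---but by symmetry of $\mathcal{R}$ both routes coincide.
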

\begin{proof}
\begin{eqnarray}
    \minLL_{\mathcal{R}}^{\mathrm{lin}}(\pi,P,Q) &=&\inf_{r\in\mathcal{R}}\left(
    \pi\EE_{\Xsf\sim P} \ell^{\mathrm{lin}}(r(\mathsf{X}),-1) + (1-\pi)
    \EE_{\Xsf\sim Q}
	\ell^{\mathrm{lin}}(r(\mathsf{X}),+1)\right) \label{eq:lin-loss-def}\\
	&=& \inf_{r\in\mathcal{R}}\left(\pi\EE_{\Xsf\sim P}(1+r(\mathsf{X})) +
	(1-\pi)\EE_{\Xsf\sim Q}(1-r(\mathsf{X}))\right)\nonumber\\
	&=& \inf_{r\in\mathcal{R}}\left(\pi+\pi\EE_P r +(1-\pi)-(1-\pi)\EE_Q
	r\right)\nonumber\\
	&=& 1+\inf_{r\in\mathcal{R}} \left(\pi\EE_P r -(1-\pi)\EE_Q r\right)\\
	&=& 1-\sup_{r\in\mathcal{R}} \left(\pi\EE_P (-r)
	-(1-\pi)\EE_Q(-r)\right)\nonumber\\
	&=& 1-\sup_{r\in\mathcal{R}} \left(\pi\EE_P r -(1-\pi)\EE_Q
	r\right)\nonumber\\
	&=& 1- \frac{1}{2} V_{\mathcal{R},\pi}(P,Q),\nonumber
\end{eqnarray}
where the penultimate step exploits the symmetry of $\mathcal{R}$. 
\end{proof}

Now suppose that $\mathcal{R}=B_{\mathcal{H}}:=\{r\colon \|r\|_{\mathcal{H}}\le
1\}$, the unit ball in  $\mathcal{H}$, a Reproducing Kernel Hilbert Space 
(RKHS) \citep{ScholkopfSmola2002}. Thus
for all $r\in\mathcal{R}$ there exists a {\em feature map}
$\phi\colon\mathcal{X}\rightarrow\mathcal{H}$ such that $r(x)=\langle
r,\phi(x)\rangle_{\mathcal{H}}$ and
$\langle\phi(x),\phi(y)\rangle_{\mathcal{H}}=k(x,y)$, where $k$ is a positive
definite {\em kernel} function. 
\cite{BorgwardtGrettonRaschKriegelScholkopfSmola2006} show that 
\begin{equation}
    \label{eq:borgwardt}
    V_{B_{\mathcal{H}},\frac{1}{2}}^2(P,Q)=
    \frac{1}{4}\|\EE_P\phi-\EE_Q\phi\|_{\mathcal{H}}^2.
\end{equation}
Thus 
\begin{equation}
    \label{eq:LLlin}
\minLL_{\mathcal{R}}^{\mathrm{lin}}(\pi,P,Q)=
   1-\frac{1}{4}\|\EE_P\phi-\EE_Q\phi\|_{\mathcal{H}}.
\end{equation}
Empirical estimators derived from the correspondence between
(\ref{eq:borgwardt}) and (\ref{eq:LLlin}) 
lead to the $\nu$-Support Vector Machine and Maximum Mean Discrepancy;
see appendix \ref{sec:appendix-svm-mmd}.

Let $\aco\mathcal{R}$ denote the absolute convex hull of $\mathcal{R}$:
\[
\aco \mathcal{R} :=\left\{\sum_i \alpha_i r_i \colon r_i\in\mathcal{R},\
\sum_i |\alpha_i|\le1,\ \alpha_i\in\reals\right\}
\]
The following is a minor generalisation of a result due to \citet{Muller1997}.
\begin{theorem}
    For all $P,Q$ and $\pi\in(0,1)$,
    $
    V_{\aco\mathcal{R},\pi}(P,Q) = V_{\mathcal{R},\pi}(P,Q).
    $
\end{theorem}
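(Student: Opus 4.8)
The plan is to exploit the linearity, in $r$, of the functional appearing inside the supremum that defines $V_{\mathcal{R},\pi}$. Write $\Phi(r) := \pi\,\EE_P r - (1-\pi)\,\EE_Q r$, so that $V_{\mathcal{R},\pi}(P,Q) = 2\sup_{r\in\mathcal{R}}|\Phi(r)|$ and $V_{\aco\mathcal{R},\pi}(P,Q) = 2\sup_{r\in\aco\mathcal{R}}|\Phi(r)|$. Since every $r_i\in\mathcal{R}$ is bounded (taking values in $[-1,1]$), so is every finite absolute convex combination of such functions, hence all the expectations involved are finite and $\Phi$ is a well-defined linear functional on the span of $\mathcal{R}$.

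First I would note the trivial inclusion $\mathcal{R}\subseteq\aco\mathcal{R}$, obtained by taking the single-term combination $\alpha_1=1$, $r_1=r$; this immediately gives $\sup_{r\in\aco\mathcal{R}}|\Phi(r)| \ge \sup_{r\in\mathcal{R}}|\Phi(r)|$, i.e.\ $V_{\aco\mathcal{R},\pi}(P,Q)\ge V_{\mathcal{R},\pi}(P,Q)$. For the reverse inequality, take any $r = \sum_i \alpha_i r_i \in \aco\mathcal{R}$ with $r_i\in\mathcal{R}$ and $\sum_i|\alpha_i|\le 1$. By linearity of expectation, $\Phi(r) = \sum_i \alpha_i\,\Phi(r_i)$, so by the triangle inequality
\[
|\Phi(r)| \;\le\; \sum_i |\alpha_i|\,|\Phi(r_i)| \;\le\; \Bigl(\sum_i|\alpha_i|\Bigr)\sup_{r'\in\mathcal{R}}|\Phi(r')| \;\le\; \sup_{r'\in\mathcal{R}}|\Phi(r')|.
\]
Taking the supremum over $r\in\aco\mathcal{R}$ yields $V_{\aco\mathcal{R},\pi}(P,Q)\le V_{\mathcal{R},\pi}(P,Q)$, and combining the two inequalities gives the claim.

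There is essentially no serious obstacle here; the only points deserving a moment's care are (i) that $\aco\mathcal{R}$ as defined consists of \emph{finite} sums, so linearity of $\EE$ applies term by term with no convergence subtleties, and (ii) that, unlike Theorems~\ref{theorem:minLL-V-pi} and~\ref{theorem:linear-loss-equivalence}, the statement needs no symmetry of $\mathcal{R}$ nor the condition $\sgn\mathcal{R}\subseteq\mathcal{R}$ — it is a purely convex-analytic fact about the supremum of $|\Phi|$ over a set versus over its absolute convex hull, valid directly from the definition~(\ref{eq:VGpi-def}).
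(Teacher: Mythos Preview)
Your proof is correct and, in fact, slightly cleaner than the paper's. Both arguments rest on the same idea --- linearity of the functional $\Phi(r)=\pi\EE_P r-(1-\pi)\EE_Q r$ together with the constraint $\sum_i|\alpha_i|\le 1$ --- but they package it differently. The paper drops the absolute value at the outset (legitimate because $\aco\mathcal{R}$ is automatically symmetric about zero) and then runs a chain of equalities, separating the double supremum over $(\alpha_i)_i$ and $\{r_i\}_i$. You instead keep the absolute value throughout and obtain the nontrivial direction via the triangle inequality $|\Phi(\sum_i\alpha_i r_i)|\le\sum_i|\alpha_i||\Phi(r_i)|$, which is a one-line estimate. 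Your route is marginally more elementary and, as you note, makes it transparent that no symmetry hypothesis on $\mathcal{R}$ is needed; the paper's chain-of-equalities presentation requires a moment's thought about the sign of each $\alpha_i$ at the step where the inner $\sup_{r_i}$ is pulled through.
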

\begin{proof} Let $B_1:=\{(\alpha_i)_i\colon\sum_i |\alpha_i|\le 1\}$. Then
    \begin{eqnarray*}
	V_{\aco\mathcal R,\pi}(P,Q) &=& 2\sup_{r\in\aco\mathcal{R}} \pi\EE_P r
	-(1-\pi)\EE_Q r\\
	&=& 2\sup_{(\alpha_i)_i\in B_1}
	\sup_{\{r_i\}_i\subset\mathcal{R}} \pi\EE_P\sum_i\alpha_i r_i
	-(1-\pi)\EE_Q\sum_i \alpha_i r_i\\
	&=& 2\sup_{(\alpha_i)_i\in B_1}
	\sup_{\{r_i\}_i\subset\mathcal{R}} \sum_i \alpha_i \left(\pi\EE_P r_i
	-(1-\pi)\EE_Q r_i\right)\\
	&=& 2\sup_{(\alpha_i)_i\in B_1} \sum_i\alpha_i
	\sup_{r_i\in\mathcal{R}} \left(\pi\EE_P r_i -(1-\pi)\EE_Q r_i\right)\\
	&=& 2\sup_{(\alpha_i)_i\in B_1} \sum_i \alpha_i
	V_{\mathcal{R},\pi}(P,Q)\\
	&=& V_{\mathcal{R},\pi}(P,Q) .
    \end{eqnarray*}
\end{proof}
Combining this theorem with Theorem \ref{theorem:linear-loss-equivalence}
shows that for all $P,Q$ and all $\pi\in(0,1)$,
\begin{equation}
    \label{eq:aco-bayes-lin}
	\minLL_{\aco(\mathcal{R})}^{\mathrm{lin}}(\pi,P,Q)=
	\minLL_{\mathcal{R}}^{\mathrm{lin}}(\pi,P,Q);
\end{equation}
that is, taking the absolute convex hull does not change the Bayes risk when
using linear loss.  
Let ${S}(P,Q,\mathcal{F},\epsilon):=\Pr\{
\minLL_{\mathcal{F}}^{\mathrm{lin}}(\pi,P,Q) -
\minLL_{\mathcal{F}}^{\mathrm{lin}}(\pi,P_{n},Q_{n}) > \epsilon\}$ denote the
probability of being misled by more than $\epsilon$ on a sample of size $n$,
where $P_{n}$ and $Q_{n}$ are
the respective empirical distributions induced by the empirical distribution
$\PP_n$.
Since $P,Q$ are arbitrary in (\ref{eq:aco-bayes-lin}) we conclude that
\(
    {S}(P,Q,\aco(\mathcal{F}),\epsilon)=
    {S}(P,Q,\mathcal{F},\epsilon),
\)
which is hinted at by the Rademacher average upper bounds on sample complexity
and invariance to forming absolute convex hulls \citep{BartlettMendelson2002},
but as far as we are aware has never been stated as above. Note that the use of
linear loss is essential here and it is only well defined for suitable
$\mathcal{F}$. Appendix \ref{sec:appendix-svm-mmd} shows that
the standard SVM can be derived using linear loss.

%%%%%%%%%%%%%%%%%%%%%%%%%%%%%%%%%%%%%%%%%%%%%%%%%%%%%%%%%%%%%%%%%%%%%%%%%%%
\subsection{Variational Representation of $\II_f$ and its Generalizations}

The variational representation of the Variational divergence
(\ref{eq:classic-variational-divergence-def})
suggests the question of whether there is a variational representation for a
general $f$-divergence. This has been considered previously. We briefly
summarise the approach, and then explore some (new) implications of the
representation.

One can obtain a variational representation for $\II_f$ by substituting
a variational representation for $f$ into the definition of $\II_f$
\citep{Keziou2003,Keziou2003a,Broniatowski:2004a,Broniatowski:2005}.
Let $p$ and $q$ denote the densities corresponding to $P$ and $Q$ and assume
for now they exist. Recall from Section~\ref{sub:lfdual} above, that the 
Legendre-Fenchel conjugate of $f$ is given by
\(
\lfdual{f}(s) = \sup_{u\in{\mathrm{Dom}}f} us- f(u)  .
\)
In general $\ran \lfdual{f}=\lfdual{\reals}:=\reals\cup\{+\infty\}$.
Since $f(u)=\sup_{\rho\in\reals} u\rho -\lfdual{f}(\rho)$, we can write
\begin{eqnarray*}
    \II_f(P,Q) & = &\int_{\mathcal{X}} q(x) \sup_{\rho\in\reals} \left(\rho
    \frac{p(x)}{q(x)} -\lfdual{f}(\rho)\right) dx\\
    &=&\sup_{\rho\in\reals^{\mathcal{X}}} \int_{\mathcal{X}} \rho(x)p(x)
    -\lfdual{f}(\rho(x))q(x) dx.\\
    &=& \sup_{\rho\in\reals^{\mathcal{X}}} 
    (\EE_P \rho - \EE_Q \lfdual{f}(\rho)) .
    \label{eq:variational-f-div}
\end{eqnarray*}
We make this concrete by considering the variational
divergence. 
The corresponding $f$ is given by $f(t)=|t-1|$  and (adopting the 
convention that $0\cdot\infty=0$)	
$\lfdual{f}(x)=\test{x\not\in[-1,1]}\infty+\test{x\in[-1,1]} x$.
Since the supremum in (\ref{eq:variational-f-div}) will not be attained if 
the second term is infinite, one can restrict the supremum to be over
$\mathcal{F}=\{\rho\in\reals^{\mathcal{X}}\colon \|\rho\|_\infty\le 1\}$.
Thus
\begin{eqnarray*}
V(P,Q) &=&\sup_{\rho\colon \|\rho\|_\infty\le 1}(\EE_P \rho - \EE_Q \rho)
= \sup_{\rho\in\{-1,1\}^\mathcal{X}}(\EE_P \rho - \EE_Q \rho)\\
&=& \sup_{\rho\in\{0,2\}^\mathcal{X}}(\EE_P \rho - \EE_Q \rho) = 
   2\sup_{\rho\in\{0,1\}^\mathcal{X}}(\EE_P \rho - \EE_Q \rho)
= 2\sup_{A}  |P(A) -Q(A)|
\end{eqnarray*}
since the supremum will be attained for functions  $\rho$ taking on
values only  in $\{-1,1\}$ and the remaining steps are simply a shift and 
rescaling (to $\{0,2\}$ by adding 1, and then to $\{0,1\}$).

The representation (\ref{eq:variational-f-div}) suggests the generalisation 
\begin{eqnarray*}
    \II_{f,\mathcal{F}}(P,Q) &:=& 
     \sup_{\rho\in\mathcal{F}\subseteq\reals^{\mathcal{X}}}
	\int_{\mathcal{X}} \rho(x)p(x) -\lfdual{f}(\rho(x))q(x) dx\\
    &=& \sup_{\rho\in\mathcal{F}} (\EE_P \rho - \EE_Q \lfdual{f}(\rho)) .
\end{eqnarray*}
Observing this is not symmetric in $p$ and $q$ suggests a further
generalisation:
\begin{eqnarray}
    \II_{f,g,\mathcal{F}}(P,Q) &:=&
     \sup_{\rho\in\mathcal{F}\subseteq\reals^{\mathcal{X}}}
    \int_{\mathcal{X}} -\lfdual{g}(\rho(x))p(x) 
			-\lfdual{f}(\rho(x))q(x) dx \label{eq:IfgF}\\
    &=& \sup_{\rho\in\mathcal{F}} (-\EE_P \lfdual{g}(\rho) 
			- \EE_Q \lfdual{f}(\rho)) .
\end{eqnarray}
Here $\lfdual{g}$ is the $\lfdual{\reals}$-valued LF conjugate of 
a convex function $g$. Set
$\II_{f,g}:=\II_{f,g,\reals^{\mathcal{X}}}$.  

An alternative generalisation of $\II_f$  is
\begin{equation}
    \tilde{\II}_{f,g,\mathcal{F}} (P,Q) :=
    \sup_{\rho\in\mathcal{F}} \left(\EE_P \lfdual{g}(\rho) -
    \EE_Q \lfdual{f}(\rho)\right)
\end{equation}
which is identical to (\ref{eq:IfgF}) except for removal of the minus sign
preceeding $\lfdual{g}$. Set
$\tilde{\II}_{f,g}:=\tilde{\II}_{f,g,\reals^{\mathcal{X}}}$. 
If $\rho\in\mathcal{F}$
are such that $\|\rho\|_\infty$ is unbounded, then in general
$\tilde{I}_{f,g,\mathcal{F}} (P,Q)$ will be infinite.  
Properties of the alternative definition relate to the extended infimal 
convolution between two convex functions.
\begin{definition}
    Suppose $f,g\colon\reals^+\rightarrow\reals^*$ are convex.
    The extended infimal convolution is 
    \[
    (f\Box g)(\tau):=\inf_{x\in\reals^+} f(x) +\tau g(x/\tau),\ \ 
    \tau\in\reals^+.  
    \]
\end{definition}
Note that the second term in this convolution is the perspective function
(Section~\ref{sub:perspective}) applied to $g$, that is, $I_g(x, \tau)$.
\begin{theorem}
Suppose $f,g\colon\reals^+\rightarrow\reals^*$ are convex. Then
\begin{enumerate}
    \item $\II_f(P,Q)=\II_{f,\reals^{\mathcal{X}}}(P,Q)$,
	$\tilde{\II}_{f,\mathrm{id},\mathcal{F}}(P,Q)=\II_{f,\mathcal{F}}(P,Q)$,
	and $\II_{t\mapsto |t-1|,\mathcal{F}}(P,Q)=
	2 V_{\mathcal{F},\thalf}(P,Q)$.
	\label{part:easy-relations}
    \item $\tilde{\II}_{f_1,g_1,\mathcal{F}}= \II_{f_2,g_2,\mathcal{F}}$ only 
	if $f_1-f_2=f_a$ and $g_1-g_2=g_a$ and $f_1,f_2,f_a,g_1,g_2,g_a$
        are affine.
	\label{part:affine}
 \item $\II_{f,f,\mathcal{F}} = 
		\II_{\mathrm{id},\mathrm{id},\lfdual{f}(\mathcal{F})}(P,Q)$.
     \label{part:Iff}
 \item $\tilde{\II}_{f,f,\mathcal{F}}= \tilde{\II}_{\mathrm{id},\mathrm{id},
     \lfdual{f}(\mathcal{F})}(P,Q) = 2V_{\lfdual{f}(\mathcal{F})} (P,Q)$.
     \label{part:tilde-Iff}
    \item $\II_{f,g}=\II_{f\Box g}$.
     \label{part:I-f-g-extended-convolution}
\end{enumerate}
\label{theorem:generalised-f-div}
\end{theorem}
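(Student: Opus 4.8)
The plan is to prove the five items in turn, each essentially by unwinding definitions and applying results already established. Item~\ref{part:easy-relations} is immediate: the identity $\II_f(P,Q)=\II_{f,\reals^{\mathcal{X}}}(P,Q)$ is exactly the Keziou--Broniatowski derivation reproduced above equation~(\ref{eq:IfgF}) (take $\rho$ ranging over all of $\reals^{\mathcal{X}}$); the relation $\tilde{\II}_{f,\mathrm{id},\mathcal{F}}=\II_{f,\mathcal{F}}$ follows because $\mathrm{id}^{\lf}(\rho)=\test{\rho\ne 1}\cdot(+\infty)$ has the wrong form---so instead I would observe $\mathrm{id}^{\lf}$ should be read as the LF conjugate that makes $-\lfdual{g}(\rho)=\rho$, i.e.\ one identifies $\EE_P\lfdual{\mathrm{id}}(\rho)$ with $\EE_P\rho$ after the sign is absorbed; concretely $-(-\mathrm{id})^\lf(\rho)=\rho$, so the ``$\mathrm{id}$'' slot in $\tilde\II$ reproduces the $+\EE_P\rho$ term of $\II_{f,\mathcal{F}}$. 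And $\II_{t\mapsto|t-1|,\mathcal{F}}(P,Q)=2V_{\mathcal{F},\thalf}(P,Q)$ is the explicit variational-divergence computation carried out just after~(\ref{eq:variational-f-div}), now with the supremum restricted to $\mathcal{F}$ rather than all of $\reals^{\mathcal{X}}$; the restriction to $\|\rho\|_\infty\le 1$ is automatic because $\lfdual{f}$ is $+\infty$ off $[-1,1]$.

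For item~\ref{part:affine} I would use the familiar fact (Theorem~\ref{thm:jensen-affine} in the $f$-divergence guise, i.e.\ (\ref{eq:fdivdual}) and the surrounding discussion) that $\II_f$ determines $f$ only up to affine terms, combined here with the extra rigidity coming from the \emph{two} densities: $\tilde\II_{f_1,g_1,\mathcal{F}}(P,Q)=\II_{f_2,g_2,\mathcal{F}}(P,Q)$ for \emph{all} $P,Q$ forces the integrands $\lfdual{g_1}(\rho)p-\lfdual{f_1}(\rho)q$ and $-\lfdual{g_2}(\rho)p-\lfdual{f_2}(\rho)q$ to agree pointwise after optimising, hence $\lfdual{g_1}=-\lfdual{g_2}$ and $\lfdual{f_1}=-\lfdual{f_2}$ up to constants; taking LF biduals and using that a convex function whose conjugate is the negative of another's conjugate (both finite) must itself be affine gives the claim. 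I would make the ``for all $P,Q$'' step precise by specialising to pairs $(P,Q)$ that are point masses or mixtures of two point masses, exactly as in the footnote to Theorem~\ref{thm:jensen} on characterising convexity via finitely supported distributions.

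Items~\ref{part:Iff} and~\ref{part:tilde-Iff} are a change of variables in the supremum: writing $\sigma=\lfdual{f}(\rho)$ (equivalently $\rho\in\partial f^{\lf\lf}(\sigma)=\partial f(\sigma)$, using (\ref{eq:bidual})), the term $\lfdual{f}(\rho)$ becomes $\sigma$ and the term $\rho$ becomes an element of $\partial f(\sigma)$; since the objective in $\II_{f,f,\mathcal{F}}$ is $\EE_P(\text{something in }\partial f(\sigma))-\EE_Q\sigma$, sweeping $\rho$ over $\mathcal{F}$ is the same as sweeping $\sigma$ over $\lfdual f(\mathcal{F})$ against the \emph{identity} pair, giving $\II_{\mathrm{id},\mathrm{id},\lfdual f(\mathcal{F})}$; for $\tilde\II$ the two signs match up and this last object is literally $2V_{\lfdual f(\mathcal{F}),\thalf}$ by item~\ref{part:easy-relations}. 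Finally item~\ref{part:I-f-g-extended-convolution}, $\II_{f,g}=\II_{f\Box g}$, is the heart of the theorem and where I expect the real work: I would compute, for fixed $x$, the pointwise supremum $\sup_{\rho}\bigl(-\lfdual{g}(\rho)p(x)-\lfdual{f}(\rho)q(x)\bigr)$ and show it equals $I_{f\Box g}(p(x),q(x))$, i.e.\ $q(x)\,(f\Box g)(p(x)/q(x))$; dividing through by $q(x)$ this is the statement $\sup_\rho\bigl(-\lfdual g(\rho)\tfrac{p}{q}-\lfdual f(\rho)\bigr)=(f\Box g)(p/q)$, which is precisely the Legendre--Fenchel duality identity $(f\Box g)^{\lf}(\cdot)$-type computation: $(f\Box g)(\tau)=\inf_x f(x)+\tau g(x/\tau)$ has conjugate (in a suitable variable) expressible through $\lfdual f$ and $\lfdual g$, and one inverts this. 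The main obstacle is handling the extended-real-valued arithmetic and the edge cases $q(x)=0$ or $p(x)=0$ (where the perspective $I_g$ involves $g'_\infty$), plus justifying the interchange of $\int_{\mathcal X}$ and $\sup_\rho$; for the interchange I would invoke the same pointwise-optimisation argument used in the derivation of~(\ref{eq:variational-f-div}) (the supremum over $\rho\in\reals^{\mathcal X}$ of an integral of a pointwise-separable integrand is the integral of the pointwise suprema), and for the edge cases I would appeal to the definition of the perspective transform~(\ref{eq:perspective}) and the slope at infinity~(\ref{eq:slopeatinfty}), which were introduced precisely to make $I_{f\Box g}$ well defined there.
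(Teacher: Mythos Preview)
Your approach is essentially the same as the paper's, and correct in outline, but you over-complicate several parts. For items~\ref{part:Iff} and~\ref{part:tilde-Iff} there is no ``lone $\rho$'' term to worry about: in $\II_{f,f,\mathcal{F}}$ and $\tilde\II_{f,f,\mathcal{F}}$ both the $P$ and $Q$ terms carry $\lfdual f(\rho)$, so the substitution $\psi=\lfdual f\circ\rho$ finishes immediately without any appeal to subdifferentials or $\partial f$. For item~\ref{part:affine} the paper's argument is a one-liner: since $\tilde\II$ and $\II$ differ only by the sign on $\lfdual g$, equality would force $\lfdual{g_1}=-\lfdual{g_2}$, and a convex function equal to the negative of a convex function is affine; your point-mass specialisation works but is unnecessary machinery. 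For item~\ref{part:I-f-g-extended-convolution} you have the right plan (pointwise Fenchel duality plus interchange of $\sup$ and $\int$); the paper makes this rigorous by citing \cite[Theorem~14.60]{RockafellarWets2004} for the interchange and computing directly via the identity that $h(x)=t\phi(x)$ has conjugate $h^\lf(s)=t\,\phi^\lf(s/t)$, which is the perspective form you anticipated. Your confusion in item~\ref{part:easy-relations} about $\mathrm{id}^\lf$ reflects a genuine notational looseness in the statement: the symbol $\mathrm{id}$ in the subscript is being used to indicate that the corresponding conjugate acts as the identity, not that the function in the slot is literally $t\mapsto t$.
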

\begin{proof}
    Part \ref{part:easy-relations} 
    follows immediately from the various definitions.
Since affine functions are the only functions that are
simultaneously convex and concave, $\tilde{\II}_{f_1,g_1,\mathcal{F}}=
\II_{f_2,g_2,\mathcal{F}}$ only if $f_1,f_2$ (resp.~$g_1,g_2$) are affine and 
their differences are affine (since an affine offset will not change 
$\tilde{\II}$).  This proves part \ref{part:affine}.

We have by change of variables
\begin{equation}
    \tilde{\II}_{f,f,\mathcal{F}}(P,Q)
	=\sup_{\rho\in\mathcal{F}}(\EE_P \lfdual{f}(\rho) -\EE_Q
    \lfdual{f}(\rho))  = \sup_{\psi\in \lfdual{f}(\mathcal{F})} (\EE_P \psi
    -\EE_Q\psi)
    = \tilde{\II}_{\mathrm{id},\mathrm{id},\lfdual{f}(\mathcal{F})} (P,Q),
\end{equation}
where 
$\lfdual{f}(\mathcal{F}):=\{\lfdual{f}\circ\rho\colon\rho\in\mathcal{F}\}$.
(Exactly the same argument applies to $I_{f,f,\mathcal{F}}$ although 
$\sup_{\psi\in \lfdual{g}(\mathcal{F})} (-\EE_P \psi-\EE_Q \psi)$ does not 
correspond to a generalised variational divergence.) This proves
parts \ref{part:Iff} and \ref{part:tilde-Iff}.

\end{proof}
The proof of part \ref{part:I-f-g-extended-convolution}
is in Appendix \ref{section:proof-of-extended-convolution}. 
It suggests the question: given a suitable convex $f$, does there always exist
$g$ such that $f=g\Box g$? This is analogous to the question of spectral
factorisation \citep{SayedKailath2001} for ordinary linear convolution. We do
not know the answer to this question, but have collected a few examples in
Appendix \ref{section:examples} that demonstrates it is certainly true for {\em
some} $f$. There does not appear to be a result analogous to part
\ref{part:I-f-g-extended-convolution} of Theorem
\ref{theorem:generalised-f-div} for $\tilde{\II}_{f,g}$.

We have seen how $f$-divergences are related to integral probability metrics
$V_{\mathcal{F}}$. It turns out that the variational divergence is special in
being both. Many integral probability metrics are true metrics
\citep{Muller1997,Muller1997a}. The only $f$-divergence that is a metric is the
variational divergence. Whether there exist $\mathcal{F}$ such that
$V_{\mathcal{F}}(\cdot,\cdot)$ is not a metric but equals $\II_f(\cdot,\cdot)$
for some $f\ne t\mapsto|t-1|$ (or affine transformation thereof) is left as an
open problem.

We end with another open problem. We have seen how $\minLL_{\mathcal{F}}$ and
$V_{\mathcal{F}}$ are related. This begs the question whether there is a
representation of the form
\[
\II_{f,\mathcal{F}}(P,Q) \stackrel{?}{=} \int_0^1 
\Delta\minLL_{\mathcal{F}}^{0-1}(\pi,P,Q)
\gamma_f(\pi) d\pi.
\]
%%%%%%%%%%%%%%%%%%%%%%%%%%%%%%%%%%%%%%%%%%%%%%%%%%%%%%%%%%%%%%%%%%%%%%
\section{Conclusions}

There are several existing concepts that can be used to quantify the amount of
information in a task and its difficulty: Uncertainty, Bregman information,
statistical information, Bayes risk and regret, and $f$-divergences.
Information is a difference in uncertainty; regret is a difference in risk.  In
the case of supervised binary class probability estimation, we have connected
and extended several existing results in the literature to show how to
translate between these perspectives. The representations allow a precise
answer to the question of what are the primitives for binary experiments.

We have derived the integral representations in a simple and unified manner, and
illustrated the value of the representations.  Along the way we have drawn
connections to a diverse set of concepts related to binary experiments: risk
curves, cost curves, ROC curves and the area under them; variational
representations of $f$-divergences, risks and regrets.

Two key consequences are surrogate regret bounds that are at once more general
and simpler than those in the literature, and a generalisation of the
classical Pinkser inequality providing, \emph{inter alia}, an explicit form
for the best possible Pinsker inequality relating Kullback-Liebler divergence
and Variational divergence.  The parametrisation of regret in terms of weighted
integral representations also shows the connection with matching losses and
provides a simple proof of the convexity of the composite loss induced by
a proper scoring rule with its canonical link function.
We have also presented a
new derivation of support vector machines and their relationship to Maximum
Mean Discrepancy (integral probability metrics).

The key relationships between the basic objects of study are summarised in
Table~\ref{table:summary} and Figure~\ref{figure:big-picture}.
\begin{table}[t]
\begin{center}
    \begin{tabular}{cll}
	\hline
	Given & Assumed & Derived\\
	\hline
	$(P,Q)$ & $f\leftrightarrow\gamma$ &  $\II_f(P,Q)$\\
	\hline
	$(\pi,P,Q)$ & $U\leftrightarrow w, W,\Wb$ & $\JJ(U(\eta))=\SI(\pi,P,Q)$\\
	& & $\minLL(\eta)$\\
	\hline
	   & $\heta$ & $L_w(\eta,\heta)$, $B_w(\eta,\heta)$\\
	   \hline
    \end{tabular}
    \caption{Summary relationships between key objects arising in Binary
    Experiments. ``Given'' indicates the object is given or provided by the
    world; ``Assumed'' is something the user of assumes or imposes in order to
    create a well defined problem; ``Derived'' indicates quantities that are
    derived from the primitives. \label{table:summary}}
\end{center}
\end{table}

All of the results we present demonstrate the fundamental and elementary nature
of the cost-weighted misclassification loss, which is becoming increasingly
appreciated in the Machine Learning literature~\citep{BachHeckermanHorvitz2006,
BeygelzimerLangfordZadrozny2008}.

\begin{sidewaysfigure}
    \begin{center}
	\hspace*{-2.5cm}\includegraphics[width=26.5cm]{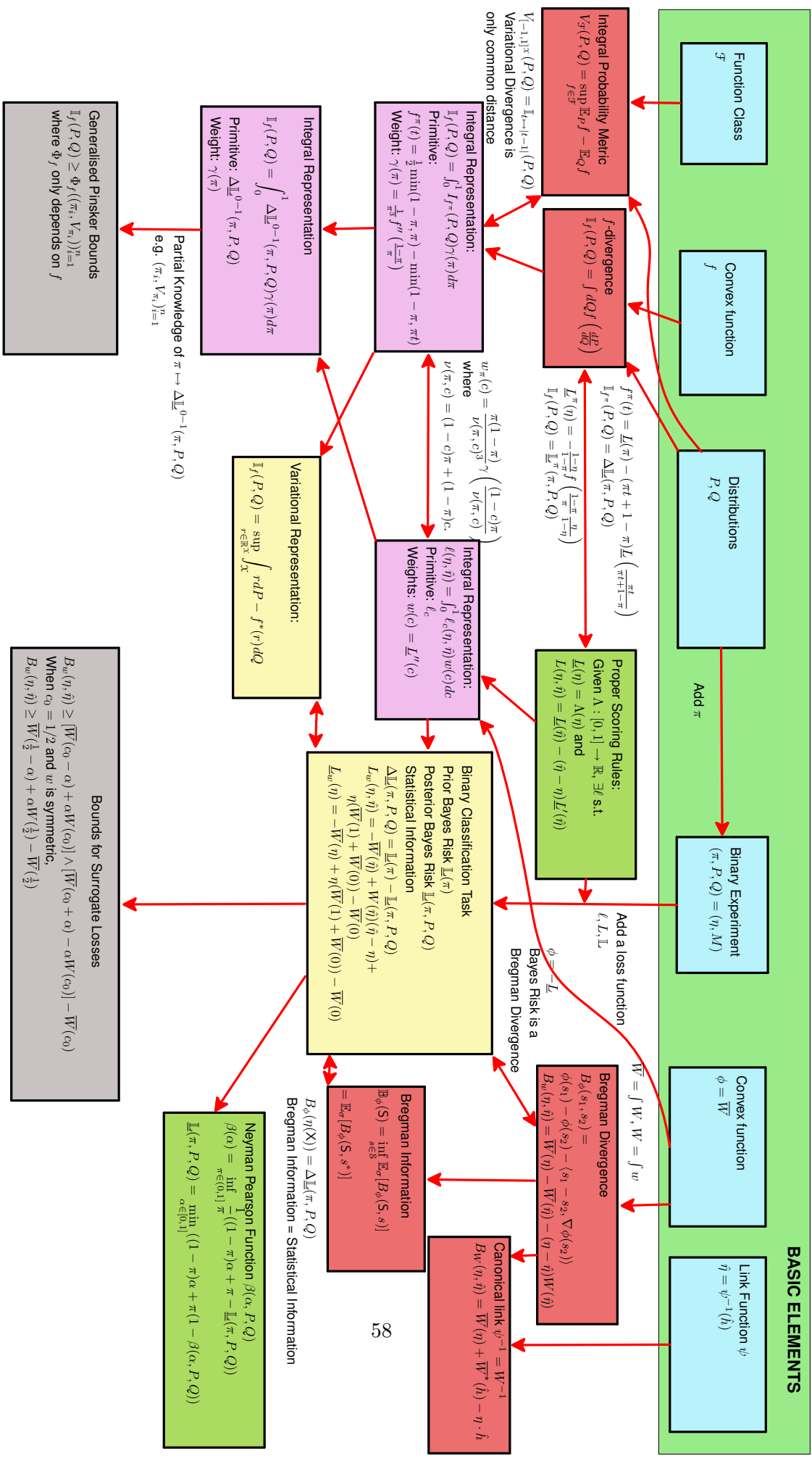}
	\hspace{3cm}\parbox{16cm}{\caption{Diagrammatic summary 
	   of key relationships developed in the paper.  
	   \label{figure:big-picture}}}
    \end{center}
\end{sidewaysfigure}

More generally, the present work is small part of a larger research agenda to
understand the whole field of machine learning in terms of \emph{relations}
between problems. We envisage these relations being richer and more powerful
than the already valuable \emph{reductions} between learning problems.
Much of the present literature on machine learning is highly
solution focussed. Of course one does indeed like to \emph{solve} problems, and
we do not suggest otherwise. But it is hard to see structure in the panoply of
solutions which continue to grow each year. The present paper is a first step
to a pluralistic unification of a diverse set of machine learning problems.
The goal we have in mind can be explained by analogy:

Within the field of
computational complexity (especially NP-completeness):
\cite{GareyJohnson1979,Johnson1982+} lead to a detailed and structured
understanding of the relationships between many fundamental
problems and consequently guides the search for solutions for new problems.
Compare Machine Learning problems with mathematical {\em functions}.  In the
19th century, each function was considered seperately.  Functional Analysis
\citep{Dieudonne1981}  {\em catalogued} them by considering {\em sets} of
functions and {\em relations} (mappings) between them and subsequently
developed many new and powerful tools. The increasing abstraction and focus
on relations has remained a powerful force in mathematics
\citep{GrothendieckRelative}.  A systematic {\em cataloging} (taxonomy)
resonates with Biology's Linnean past --- and taxonomies can indeed lead to
standardisation and efficiency \citep{BowkerStar99}. But taxonomies alone are
inadequate --- it seems necessary to understand the relationships in a manner
analogous to {\em Systems Biology}  which ``is about putting together rather
than taking apart, integration rather than reduction\dots.  Successful
integration at the systems level must be built on successful reduction, but
reduction alone is far from sufficient.'' \citep{Noble2006}.  Finally, Lyell's
{\em Principles of Geology} \citep{Lyell1830}  was a watershed in Geology's
history \citep{bowker2005mps}; prior work is {\em pre}-historical. Lyell's key
insight was to explain the huge diversity of geological formations in terms of
a relative simple set of transformations applied repeatedly. 

These analogies encourage our aspiration that by more systematically
understanding the \emph{relationships} between machine learning problems and
how they can be \emph{transformed} into each other,
we will develop a better organised and more powerful toolkit for solving
existing and future problems, and will make progress along the lines suggested
by \citet{Hand1994}.

\subsubsection*{Acknowledgments}
 This work was motivated in part by discussions with Alekh Agarwal, Arthur
 Gretton, Ulrike von
 Luxburg and Bernhard Sch\"{o}lkopf in 2006 and was supported by the Australian
 Research Council and NICTA. NICTA is supported by the Australian Government
 through Backing Australia's Ability. The authors were also aided by helpful 
 conversations with Suvrit Sra and Bharath Sriperumbudur in T\"{u}bingen
 in 2008.

\appendix
%%%%%%%%%%%%%%%%%%%%%%%%%%%%%%%%%%%%%%%%%%%%%%%%%%%%%%%%%%%%%%%
\section{Proofs}
\label{section:proofs}

\subsection{Proof of Corollary \ref{cor:intrep2}}
\label{sub:proof-of-intrep2}

Integration by parts of $t\phi''(t)$ gives
$\int_0^1 t\,\phi''(t)\,dt = \phi'(1) - (\phi(1) - \phi(0))$
which can be rearranged to give 
\[
	\phi'(1) = \int_0^1 t\,\phi''(t)\,dt + (\phi(1) - \phi(0)).
\]
Substituting this into the Taylor expansion of $\phi(s)$ about 1 yields
\begin{eqnarray*}
	\phi(s) 
	& = & \phi(1) + \phi'(1)(s-1) + \int_s^1 (t-s)\,\phi''(t)\,dt \\
	& = & \phi(1) + \left[
						\int_0^1 t\,\phi''(t)\,dt + (\phi(1) - 
						\phi(0))
					\right](s-1)
				  + \int_0^1 (t-s)_+\,\phi''(t)\,dt \\
	& = & \phi(1) + (\phi(1) - \phi(0))(s-1)
				  + \int_0^1 t(s-1)\,\phi''(t)\,dt 
				  + \int_0^1 (t-s)_+\,\phi''(t)\,dt \\
	& = & \phi(0) + (\phi(1) - \phi(0))s - \int_0^1 \psi(s,t)\,\phi''(t)\,dt,
\end{eqnarray*}
where $\psi(s,t) := \min\{(1-t)s, (1-s)t\}$. This form of $\psi$ is valid since
\begin{eqnarray*}
	-(t(s-1) + (t-s)_+) 
	& = & \begin{cases}
		-ts + t - t + s , & t \geq s \\
		-ts + t 		, & t < s
	\end{cases}\\ 
	& = & \begin{cases}
		s - ts ,& t \geq s \\
		t - ts ,& t < s
	\end{cases} \\
	& = & \min\{(1-t)s, (1-s)t\}
\end{eqnarray*}
as required.

\subsection{Proof of Theorem \ref{thm:jensen-affine}}
\label{sub:proof-jensen-affine}
Expanding the definition of the Jensen gap using the definition of $\psi$ gives
	\begin{eqnarray*}
		\JJ_{\mu}[\psi(\Ssf)] 
		& = & \EE_{\mu}[\psi(\Ssf)] - \psi(\EE_{\mu}[\Ssf]) \\
		& = & \EE_{\mu}[\phi(\Ssf)+b\Ssf+a] 
			- ( \phi(\EE_\mu[\Ssf]) + b \EE_\mu[\Ssf] + a) \\
		& = & \EE_{\mu}[\phi(\Ssf)] + b\EE_\mu[\Ssf] + a
			- \phi(\EE_\mu[\Ssf]) - b \EE_\mu[\Ssf] - a \\
		& = & \JJ_\mu[\phi(\Ssf)]
	\end{eqnarray*}
	as required.

%%%%%%%%%%%%%%%%%%%%%%%%%%%%%%%%%%%%%%%
\subsection{Proof of Theorem~\ref{thm:duality}}
\label{app:infodiv-duality}
\begin{proof}
Given a task $(\pi,P,Q; \ell)$ we need to first check that
\begin{equation}\label{eq:proof-fpi}
	f^{\pi}(t) := \minL(\pi) - (\pi t + 1 - \pi)
	\minL\left(\frac{\pi t}{\pi t+1-\pi}\right)
\end{equation}
is convex and that $f^{\pi}(1) = 0$. This latter fact is obtained immediately by 
substituting $t=1$ into $f^\pi(t)$ yielding $\minL(\pi) - \minL(\pi) = 0$.
The convexity of $f^\pi$ is guaranteed by Theorem~\ref{pro:minrisk_concave},
which shows that $\minL$ is concave and the fact that the perspective 
transform of a convex function is always convex (see 
Section~\ref{sub:perspective}). Thus the function
\[
	t 
	\mapsto I_{-\minL}(\pi t, \pi t + 1 - \pi)
	= - (\pi t + 1 - \pi)
	\minL\left(\frac{\pi t}{\pi t+1-\pi}\right)
\]
is the composition of a convex function and an affine one and therefore convex.

Substituting (\ref{eq:proof-fpi}) into the definition of $f$-divergence in 
(\ref{eq:fdiv1}) yields
\begin{eqnarray*}
\E{Q}{f^{\pi}(dP/dQ)} 
	&=& \E{Q}{\minL(\pi)
		-\left(\pi\frac{dP}{dQ}+1-\pi\right)
		\minL\left(\frac{\pi dP}{\pi dP+(1-\pi)dQ}\right)}\\
	&=& \minL(\pi)-\int_{\Xcal} \minL\left(\pi \frac{dP}{dM}\right) dM
\end{eqnarray*}
since $dM = \pi dP + (1-\pi) dQ$. Recall that $\eta = \pi dP/dM$.
As $\minL(\pi)$ is constant we note that 
$\minL(\pi) = \E{M}{\minL(\pi)} = \minLL(\pi,M)$ and so
\begin{eqnarray*}
	\E{Q}{f^{\pi}(dP/dQ)} 
	&=& \minL(\pi) - \E{M}{\minL(\eta)} \\
	&=& \minLL(\pi,M) - \minLL(\eta,M) \\
	&=& \SI(\eta,M)
\end{eqnarray*}
as required for the forward direction.

Starting with 
\[
	\minL^\pi(\eta) 
	:= -\frac{1-\eta}{1-\pi} f\left(\frac{1-\pi}{\pi}\frac{\eta}{1-\eta}\right)
\]
and substituting into the definition of statistical information in 
(\ref{eq:statinfo}) gives us
\begin{eqnarray*}
	\SI^\pi(\eta,M) 
	&=& \E{M}{\minL^\pi(\pi)} - \E{M}{\minL^\pi(\eta)}\\
	&=& \int_{\Xcal} -\frac{1-\pi}{1-\pi}f(1)\,dM 
		- \int_{\Xcal} 
		-\frac{1-\eta}{1-\pi} 
		f\left(\frac{1-\pi}{\pi}\frac{\eta}{1-\eta}\right)\,dM\\
	&=& 0 + \int_{\Xcal} f\left(\frac{dP}{dQ}\right)\,dQ
\end{eqnarray*}
since $f(1) = 0$, $dQ = (1-\eta)/(1-\pi)dM$ and 
\[
	dP/dQ = \frac{1-\pi}{\pi}\frac{\eta}{1-\eta}
\] 
by the discussion in Section~\ref{sub:generative-discriminative}.
This proves the converse statement of the theorem.
\end{proof}
%%%%%%%%%%%%%%%%%%%%%%%%%%%%%%%%%%%%%%%
\subsection{Proof of part \ref{part:I-f-g-extended-convolution} of
Theorem \ref{theorem:generalised-f-div}}
\label{section:proof-of-extended-convolution}
We need the following lemma.
\begin{lemma}
	Let $f\colon\reals\rightarrow\reals$ and
	$K\colon\reals\times\reals\rightarrow\reals$ be convex and bounded
	from below. Then the extended infimal convolution
	\[
	(f\Box K)(x) = \inf_{y\in\reals} f(y) + K(x,y), \ \ \ x\in\reals
	\]
	is convex in $x\in\reals$.
    \end{lemma}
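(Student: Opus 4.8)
The plan is to recognize this as the standard fact that partial minimization of a jointly convex function yields a convex function. First I would set $g(x,y) := f(y) + K(x,y)$ and observe that $g$ is jointly convex on $\reals\times\reals$: the map $(x,y)\mapsto f(y)$ is convex because $f$ is convex and the composition with the (linear) coordinate projection preserves convexity, and $K$ is jointly convex by hypothesis, so $g$ is a sum of two jointly convex functions. Since both $f$ and $K$ are bounded from below, so is $g$, and hence $(f\Box K)(x)=\inf_{y\in\reals} g(x,y) > -\infty$ for every $x$; on the other hand, fixing any $y_0$ gives $(f\Box K)(x)\le g(x,y_0)<\infty$, so $f\Box K$ is real-valued.

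Next I would verify convexity directly from the definition using $\epsilon$-approximate minimizers. Fix $x_1,x_2\in\reals$ and $\lambda\in[0,1]$, and let $\epsilon>0$. Choose $y_1,y_2$ with $g(x_i,y_i)\le (f\Box K)(x_i)+\epsilon$ for $i=1,2$. Writing $x_\lambda:=\lambda x_1+(1-\lambda)x_2$ and $y_\lambda:=\lambda y_1+(1-\lambda)y_2$, joint convexity of $g$ gives
\[
(f\Box K)(x_\lambda)\le g(x_\lambda,y_\lambda)\le \lambda g(x_1,y_1)+(1-\lambda)g(x_2,y_2)\le \lambda (f\Box K)(x_1)+(1-\lambda)(f\Box K)(x_2)+\epsilon.
\]
Letting $\epsilon\downarrow 0$ yields the convexity inequality for $f\Box K$.

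There is essentially no serious obstacle here; the only points requiring a little care are: (i) justifying that $g$ is genuinely jointly convex rather than merely separately convex (which is immediate since $f(y)$ does not depend on $x$ and $K$ is assumed jointly convex), and (ii) ensuring $f\Box K$ does not take the value $-\infty$, which is exactly why the hypothesis ``bounded from below'' is included. Using approximate minimizers avoids any need to argue that the infimum is attained. Finally, I would remark that the lemma is applied in the proof of part \ref{part:I-f-g-extended-convolution} with $K(x,y)=I_g(y,x)$ the perspective transform, which is jointly convex by the discussion in Section~\ref{sub:perspective}, so the hypotheses are met and $f\Box g$ (as a function of its argument $\tau$) is convex.
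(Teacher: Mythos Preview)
Your proof is correct and follows essentially the same approach as the paper: both define $g(x,y)=f(y)+K(x,y)$, note it is jointly convex and bounded below, and conclude convexity of the marginal function $x\mapsto\inf_y g(x,y)$. The only difference is that the paper cites \cite{Hiriart-UrrutyLemarechal1993} for the marginal-function convexity step, whereas you spell it out directly with $\epsilon$-approximate minimizers; your version is thus slightly more self-contained but otherwise identical in spirit.
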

    Observe that if $K(x,y)=g(x-y)$ for convex $g$, then $f\Box K = f\oplus
    g$, the standard infimal convolution \citep{Hiriart-UrrutyLemarechal1993}. 
    This extended infimal convolution seems
    little studied apart from by \cite{Cepedello-Boiso1998}.
    \begin{proof}
	Let $\tilde{f}(x,y):=f(y)$, $x\in\reals$. Clearly $\tilde{f}$ is convex
	on $\reals\times\reals$. Let $\tilde{h}(x,y)=\tilde{f}(x,y)+K(x,y)$.
	\citet[Proposition 2.1.1]{Hiriart-UrrutyLemarechal1993} show that
	$\tilde{h}$ is convex on $\reals\times\reals$. Observe that
	$
	(f\Box K)(x) = \inf\{\tilde{h}(x,y)\colon y\in\reals\},
	$
	i.e.~the {\em marginal} function of $\tilde{h}$. Since by construction
	$\tilde{h}$ is bounded from below, using the result of 
	\citet[p.169]{Hiriart-UrrutyLemarechal1993} proves the result.
    \end{proof}
    \begin{corollary}
	For any convex $f$ and $g$, $f\Box g$ is convex.
	\label{cor:box-convex}
    \end{corollary}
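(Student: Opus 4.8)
The plan is to recognise $f\Box g$ as an instance of the extended infimal convolution $f\Box K$ from the Lemma proved just above, taking for the bivariate kernel $K$ (essentially) the perspective transform of $g$. Concretely, I would set $K(\tau,x):=I_g(x,\tau)=\tau\,g(x/\tau)$, so that, comparing with the definition of $\Box$,
\[
(f\Box g)(\tau)=\inf_{x\in\reals^+}\bigl(f(x)+\tau g(x/\tau)\bigr)
=\inf_{x\in\reals}\bigl(f(x)+K(\tau,x)\bigr)=(f\Box K)(\tau),
\]
where $f$ and $K$ are extended by $+\infty$ outside $\reals^+$ (resp.\ $\reals\times\reals^+$).

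The key input is the joint convexity of $K$. This is exactly the fact, recalled in Section~\ref{sub:perspective} and attributed there to \cite{Hiriart-UrrutyLemarechal1993}, that the perspective transform $I_g\colon[0,\infty)^2\to\RRbar$ of a convex function $g$ is convex in both arguments simultaneously; hence $(\tau,x)\mapsto K(\tau,x)$ is convex on $\reals\times\reals$. With this in hand, the Lemma applies directly and gives that $\tau\mapsto(f\Box K)(\tau)=(f\Box g)(\tau)$ is convex, which is the assertion of Corollary~\ref{cor:box-convex}.

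The only delicate point — and the step I expect to need the most care — is reconciling the hypotheses: the Lemma as stated is for finite-valued $f$ and $K$ bounded from below, whereas $I_g$ takes the value $+\infty$ on the boundary of $[0,\infty)^2$ and $f\Box g$ is a priori only $\RRbar$-valued. This is not a genuine obstacle. The two facts invoked in the proof of the Lemma — joint convexity of a sum of convex functions (\citet[Proposition~2.1.1]{Hiriart-UrrutyLemarechal1993}) and convexity of a marginal function (\citet[p.~169]{Hiriart-UrrutyLemarechal1993}) — hold for proper extended-real-valued convex functions, so the argument of the Lemma transfers verbatim to $\tilde h(\tau,x):=f(x)+I_g(x,\tau)$. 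Boundedness from below, where needed, may be arranged without loss of generality by subtracting an affine minorant of $f$ (and of $g$), which changes $f\Box g$ only by an affine term and therefore leaves its convexity unaffected; alternatively it is automatic when $f$ and $g$ are themselves bounded below, which is the case for all the $f$-divergence generators considered in this paper. I would include a one-line remark to this effect and then conclude.
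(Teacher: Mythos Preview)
Your proposal is correct and follows essentially the same route as the paper: both recognise $\tau g(x/\tau)$ as the perspective transform $I_g$, invoke its joint convexity via \cite{Hiriart-UrrutyLemarechal1993}, and then apply the preceding Lemma. Your additional care about extended-real values and boundedness from below is more than the paper itself provides, which simply cites the perspective-convexity result and the Lemma without further comment.
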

    \begin{proof}
    Observe that $(f\Box g)(x)=\inf_{y\in\reals^+} f(y) + x g(y/x)=
    \inf_{y\in\reals^+} f(y) + I_g(x,y)$,
    $x\in\reals^+$, where $I_g$ is the perspective function 
    (\ref{eq:perspective}).
    \citet[Proposition 2.2.1]{Hiriart-UrrutyLemarechal1993} show that if
    $g\colon\reals^n\rightarrow\reals$ is convex then the { perspective}
    $I_g$ 
    is convex on $\reals^{n+1}$.  The corollary then follows from the lemma.
    \end{proof}

    \begin{proof} {\bf (part \ref{part:I-f-g-extended-convolution} of
	Theorem \ref{theorem:generalised-f-div})}
    Observe that if $h(x)=t\phi(x)$ then the LF conjugate $h^*(s)=t \phi(s/t)$.
    Thus using the Fenchel duality theorem \citep{Rockafellar:1970} we have, 
    using \cite[Theorem 14.60]{RockafellarWets2004} to justify the swapping the
    order of the supremum and integration,
    \begin{eqnarray*}
    \II_{f,g}(P,Q)   &=&
    \sup_{\rho\in\bar{\reals}^{\mathcal{X}}}
	\int_{\mathcal{X}} -g^\lf(\rho(x))p(x) -f^\lf(\rho(x))q(x) dx\\
	&=& \int_{\mathcal{X}} \sup_{\rho\in\bar{\reals}} -g^\lf(\rho)p(x)
	-f^\lf(\rho)q(x)dx\\
	&=& \int_{\mathcal{X}} \inf_{\rho\in\bar{\reals}}
	f\left(\frac{\rho}{q(x)}\right) + g\left(\frac{\rho}{p(x)}\right) dx\\
	&=& \int_{\mathcal{X}} \inf_{\rho\in\bar{\reals}} q(x)
	f\left(\frac{\rho}{q(x)}\right) +p(x)g\left(\frac{\rho}{p(x)}\right)
	dx,\\
	&=& \int_{\mathcal{X}} i_{f,g}(p,q)(x) dx
\end{eqnarray*}
where
\[
i_{f,g}(p,q)(\cdot):= \inf_{\rho\in\bar{\reals}} q(\cdot) 
f\left(\frac{\rho}{q(\cdot)}\right) +
p(\cdot) g\left(\frac{\rho}{p(\cdot)}\right).
\]
Let
$x:=\frac{\rho}{q}\in\bar{\reals}^+$. Thus $\rho=xq$ and 
\[
i_{f,g}(p,q)=\inf_{x\in\bar{\reals}^+} qf(x)+pg(xq/p).
\]
Let $\tau=\frac{p}{q}\in\bar{\reals}^+$. Thus
\begin{eqnarray}
    i_{f,g}(p,q)(\tau) &=& \inf_{x\in\reals^+} q f(x) + pg(x/\tau)\nonumber\\
    &=& q \left[\inf_{x\in\bar{\reals}^+} f(x) + \tau g(x/\tau)\right]\nonumber\\
    &=& q \cdot (f\Box g)(\tau).\label{eq:ibox}
\end{eqnarray}
Let $h:=f\Box g$. Observe from (\ref{eq:ibox}) that
$ i_{f,g}(p,q) = q h(p/q)$ and thus 
\[
\II_{f,g}(p,q) = \int_{\mathcal{X}} q(x) h\left(\frac{p(x)}{q(x)}\right) dx 
= \II_h(p,q)
\]
if $h$ is convex, which we know to be the case from Corollary
\ref{cor:box-convex}. 
\end{proof}

%%%%%%%%%%%%%%%%%%%%%%%%%%%%%%%%%%%%%%%
\subsection{Pinsker Theorems}
\label{section:pinsker-proofs}
\begin{proof} {\bf (Theorem \ref{theorem:general-pinsker})}
    Given a binary experiment $(P,Q)$ denote the corresponding
    statistical information as
    \begin{equation}
	 \phi(\pi)=\phi_{(P,Q)}(\pi):=\SI^{0-1}(\pi,P,Q)=
	 \pi\wedge(1-\pi)-\psi_{(P,Q)}(\pi),
    \end{equation}
    where $\psi_{(P,Q)}(\pi)=\psi(\pi)=\minLL^{0-1}(\pi,P,Q)$. We
    know that $\psi$ is non-negative and  concave and satisfies 
    $\psi(\pi)\le\pi\wedge(1-\pi)$
    and thus $\psi(0)=\psi(1)=0$.

    Since 
    \begin{equation}
	\II_f(P,Q) = \int_0^1 \phi(\pi) \gamma_f(\pi) d\pi,
	\label{eq:I-f-gamma}
    \end{equation}
    $\II_f(P,Q)$ is minimized by minimizing $\phi_{(P,Q)}$ over all
    $(P,Q)$ such that
    \[
	\phi(\pi_i)=\phi_i=\pi_i\wedge(1-\pi_i)-\psi_{(P,Q)}(\pi_i).
    \]
    Let
    $\psi_i:=\psi(\pi_i)=\frac{1}{2}-\frac{1}{4}V_{\pi_i}(P,Q)$. 
    The problem becomes:
    \begin{eqnarray}
	\mbox{Given\ } (\pi_i,\psi_i)_{i=1}^n
	& & \mbox{find the maximal\ }
    \psi\colon[0,1]\rightarrow[0,\thalf]\ \mbox{such that}
    \label{eq:S-problem}\\
    & &\ \ \psi(\pi_i) = \psi_i, \ \ 
           i=0,\ldots,n+1,\label{eq:psi-i-constraint}\\
	& &\ \ \psi(\pi) \le \pi\wedge (1-\pi), \ \
	\pi\in[0,1],\label{eq:psi-overbound-constraint}\\
	& &\ \  \psi  \mbox{\ is concave}.\label{eq:psi-concave-constraint}
    \end{eqnarray}
    This will tell us the optimal $\phi$ to use since optimising over $\psi$ is
    equivalent to optimizing over $\minLL(\cdot,P,Q)$. Under the additional
    assumption on $\mathcal{X}$, Corollary
    \ref{corollary:all-bayes-risk-curves-possible} implies that for any $\psi$
    satisfying (\ref{eq:psi-i-constraint}),
    (\ref{eq:psi-overbound-constraint})  and (\ref{eq:psi-concave-constraint})
    there exists $P,Q$ such that $\minLL(\cdot,P,Q)=\psi(\cdot)$.

    Let $\Psi$ be the set of piecewise linear concave functions on $[0,1]$
    having $n+1$ segments such that
    $\psi\in\Psi\Rightarrow\psi$ satisfies 
    (\ref{eq:psi-i-constraint}) and (\ref{eq:psi-overbound-constraint}). 
    We now show
    that in order to solve (\ref{eq:S-problem}) it suffices to consider
    $\psi\in\Psi$.

    \begin{figure}[t]
	\begin{center}
	\includegraphics[width=0.98\textwidth]{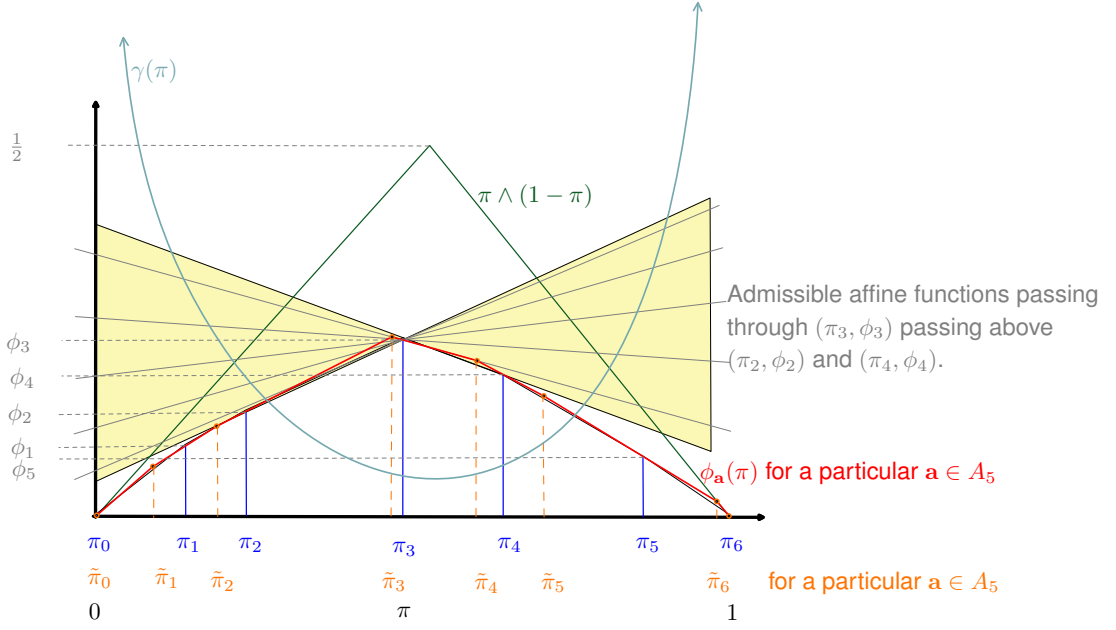}
    \end{center}
	\caption{Illustration of construction of optimal
	$\phi(\pi)=\minLL(\pi,P,Q)$. The optimal $\phi$ is piece-wise linear
	such that $\phi(\pi_i)=\phi_i$, $i=0,\ldots,n+1$. }
    \end{figure}

    If $g$ is a concave function on $\reals$, then
    \[
    \eth g(x):= \{s\in\reals\colon g(y) \le g(x) + \langle s, y-x\rangle, \
    y\in\reals\}
    \]
    denote the {\em sup-differential} of $g$ at $x$. (This is 
    the obvious analogue of the \emph{sub}-differential for convex functions
    \citep{Rockafellar:1970}.) 
    Suppose $\tilde{\psi}$ is a general concave function satisfying
    (\ref{eq:psi-i-constraint}) and (\ref{eq:psi-overbound-constraint}).
    For $i=1,\ldots,n$, let
    \[
    G_i^{\tilde{\psi}}:=\left\{
    [0,1]\ni g_i^{\tilde{\psi}}:\pi_i\mapsto \psi_i\in\reals  \ \ 
    \mbox{is linear and}\ \
    \textstyle\left.\frac{\partial}{\partial\pi}g_i^{\tilde{\psi}}(\pi)
    \right|_{\pi=\pi_i} \in\eth\tilde{\psi}(\pi_i)\right\}.
    \]
    Observe that by concavity, for all concave $\tilde{\psi}$ satisfying 
    (\ref{eq:psi-i-constraint}) and (\ref{eq:psi-overbound-constraint}), for
    all $g\in\bigcup_{i=1}^n G_i^{\tilde{\psi}}$, $g(\pi)\ge\psi(\pi)$,
    $\pi\in[0,1]$. 
    
    Thus given any such $\tilde{\psi}$, one can always
    construct
    \begin{equation}
	\psi^*(\pi)=\min(g_1^{\tilde{\psi}}
	(\pi),\ldots,g_n^{\tilde{\psi}}(\pi))
	\label{eq:psi-general}
    \end{equation}
    such that $\psi^*$ is concave,  satisfies
    (\ref{eq:psi-i-constraint})   and
    $\psi^*(\pi)\ge\tilde{\psi}(\pi)$, for all $\pi\in[0,1]$. It remains to
    take account of (\ref{eq:psi-overbound-constraint}). That is trivially done
    by setting
    \begin{equation}
	 \psi(\pi) = \min (\psi^*(\pi),\pi\wedge(1-\pi))
	 \label{eq:psi-general-2}
    \end{equation}
    which remains concave and piecewise linear (although with potentially one
    additional linear segment). Finally, the pointwise smallest concave 
    $\psi$
    satisfying 
    (\ref{eq:psi-i-constraint}) and (\ref{eq:psi-overbound-constraint}) is the
    piecewise linear function connecting the points 
    $(0,0), (\pi_1,\psi_1),\ldots, (\pi_m,\psi_m), (1,0)$.

    Let $g\colon [0,1]\rightarrow[0,\frac{1}{2}]$ be this function which can be written
    explicitly as
    \[
    g(\pi) =\left(\psi_i +\frac{(\psi_{i+1}-\psi)(\pi-\pi_i)}{\pi_{i+1}-\pi_i}
    \right)\cdot \test{\pi\in[\pi_i,\pi_{i+1}]},\ \ \ i=0,\ldots,n,
    \]
    where we have defined $\pi_0:=0$, $\psi_0:=0$, $\pi_{n+1}:=1$ and
    $\psi_{n+1}:=0$. 
    
    We now explicitly parametrize  this family of functions.
    Let $p_i\colon[0,1]\rightarrow\reals$ denote the
    affine segment the graph of which passes through 
    $(\pi_i,\psi_i)$, $i=0,\ldots,n+1$.
    Write $p_i(\pi)=a_i\pi +b_i$. We know that $p_i(\pi_i)=\psi_i$ and thus
    \begin{equation}
	 b_i=\psi_i -a_i\pi_i, \ \ \ i=0,\ldots,n+1 .
    \end{equation}
    In order to determine the constraints on $a_i$, since $g$ is concave and
    minorizes $\psi$, it suffices to only consider $(\pi_{i-1}, g(\pi_{i-1}))$
    and $(\pi_{i+1},g(\pi_{i+1}))$ for $i=1,\ldots,n$.  We have (for
    $i=1,\ldots,n$)
    \begin{eqnarray}
	  & p_i(\pi_{i-1}) &\ge g(\pi_{i-1})\nonumber\\
	 \Rightarrow & a_i \pi_{i-1}+b_i &\ge \psi_{i-1}\nonumber\\
	 \Rightarrow & a_i \pi_{i-1}+\psi_i-a_i\pi_i &\ge \psi_{i-1}\nonumber\\
	 \Rightarrow & a_i \underbrace{(\pi_{i-1}-\pi_i)}_{< 0}& \ge \psi_{i-1}
	 -\psi_i\nonumber\\[-3mm]
	 \Rightarrow & a_i &\le \frac{\psi_{i-1}-\psi_i}{\pi_{i-1}-\pi_i} .
	 \label{eq:ai-upperbound}
    \end{eqnarray}
    Similarly we have (for $i=1,\ldots,n$)
    \begin{eqnarray}
	 & p_i(\pi_{i+1}) &\ge g(\pi_{i+1})\nonumber\\
	 \Rightarrow & a_i\pi_{i+1}+b_i &\ge\psi_{i+1}\nonumber\\
	 \Rightarrow & a_i\pi_{i+1}+\psi_i-a_i\pi_i &\ge \psi_{i+1}\nonumber\\
	 \Rightarrow & a_i\underbrace{(\pi_{i+1}-\pi_i)}_{>0} &\ge
	 \psi_{i+1}-\psi_i\nonumber\\[-3mm]
	 \Rightarrow & a_i & \ge
	 \frac{\psi_{i+1}-\psi_i}{\pi_{i+1}-\pi_i} .\label{eq:ai-lowerbound}
    \end{eqnarray}
    We now determine the points at which $\psi$ defined by
    (\ref{eq:psi-general}) and (\ref{eq:psi-general-2}) change slope. That
    occurs at the points $\pi$ when
    \begin{eqnarray*}
      & p_i(\pi) &=p_{i+1}(\pi)\\
      \Rightarrow & a_i\pi+\psi_i-a_i\pi_i
      &=a_{i+1}\pi+\psi_{i+1}-a_{i+1}\pi_{i+1}\\
      \Rightarrow & (a_{i+1}-a_i)\pi &=
      \psi_i-\psi_{i+1}+a_{i+1}\pi_{i+1}-a_i\pi_i\\
      \Rightarrow &\pi &=
      \frac{\psi_i-\psi_{i+1}+a_{i+1}\pi_{i+1}}{a_{i+1}-a_i} \\
      &  & =:\tilde{\pi}_i
\end{eqnarray*}
    for $i=0,\ldots,n$. Thus
    \[
    \psi(\pi)=p_i(\pi),\ \ \ \pi\in[\tilde{\pi}_{i-1},\tilde{\pi}_i], \
    i=1,\ldots,n.
    \]
    Let $\abold=(a_1,\ldots,a_n)$. We explicitly denote the dependence of
    $\psi$ on $\abold$ by writing $\psi_{\abold}$.
    Let
    \begin{eqnarray*}
    \phi_{\abold}(\pi)&:=&\pi\wedge(1-\pi) -\psi_{\abold}(\pi)\\
    &=&\alpha_{\abold,i}\pi+\beta_{\abold,i}, \ \ \
    \pi\in[\bar{\pi}_{i-1},\bar{\pi}_{i}], \ i=1,\ldots,n+1,
    \end{eqnarray*}
    where $\abold\in A_n$ (see (\ref{eq:A-n})), $\bar{\pi}_i$, 
    $\alpha_{\abold,i}$ and
    $\beta_{\abold,i}$ are defined by (\ref{eq:bar-pi}), 
    (\ref{eq:alpha-abold}) and
    (\ref{eq:beta-abold}) respectively. The extra segment induced at index $j$
    (see (\ref{eq:j-def})) is needed since $\pi\mapsto\pi\wedge(1-\pi)$ has a
    slope change at $\pi=\thalf$. Thus in general, $\phi_{\abold}$ is
    piecewise linear with $n+2$ segments (recall $i$ ranges from $0$ to $n+2$);
    if $\tilde{\pi}_{k+1}=\frac{1}{2}$ for some $k\in\{1,\ldots,n\}$, then there will
    be only $n+1$ non-trivial segments.

    Thus  
    \[
    \left\{\pi\mapsto\sum_{i=0}^{n} \phi_{\abold}(\pi)
    \cdot\test{\pi\in[\bar{\pi}_{i},
    \bar{\pi}_{i+1}]}\colon \abold\in A_n\right\}
    \]
    is the set of $\phi$ consistent with the constraints and $A_n$ is defined
    in (\ref{eq:A-n}).
    Thus substituting into (\ref{eq:I-f-gamma}), interchanging the order of
    summation and integration and optimizing we have
    shown~(\ref{eq:general-pinsker}). The tightness has already been argued: 
    under the additional assumption on $\mathcal{X}$, since there is no
    slop in the argument above since every $\phi$ satisfying the
    constraints is the Bayes risk function for some $(P,Q)$.
\end{proof}
\begin{proof} {\bf (Theorem \ref{theorem:special-cases-pinsker})}
    In this case $n=1$  and the optimal $\psi$ function will be piecewise
    linear, concave, and its graph will 
    pass through $(\pi_1,\psi_1)$. Thus the optimal $\phi$
    will be of the form
    \[
    \phi(\pi) = \left\{
    \begin{array}{ll}
	 0, & \pi\in [0,L]\cup [U,1]\\
	 \pi -(a\pi+b), \ & \pi\in [L,\thalf]\\
	 (1-\pi)-(a\pi+b), & \pi\in[\thalf,U].
    \end{array}
    \right.
    \]
    where $a\pi_1+b=\psi_1\Rightarrow b=\psi_1-a\pi_1$ and $a\in
    [-2\psi_1,2\psi_1]$ (see Figure \ref{figure:SimplePinsker}).
    \begin{figure}[t]
	\begin{center}
	\includegraphics[width=0.7\textwidth]{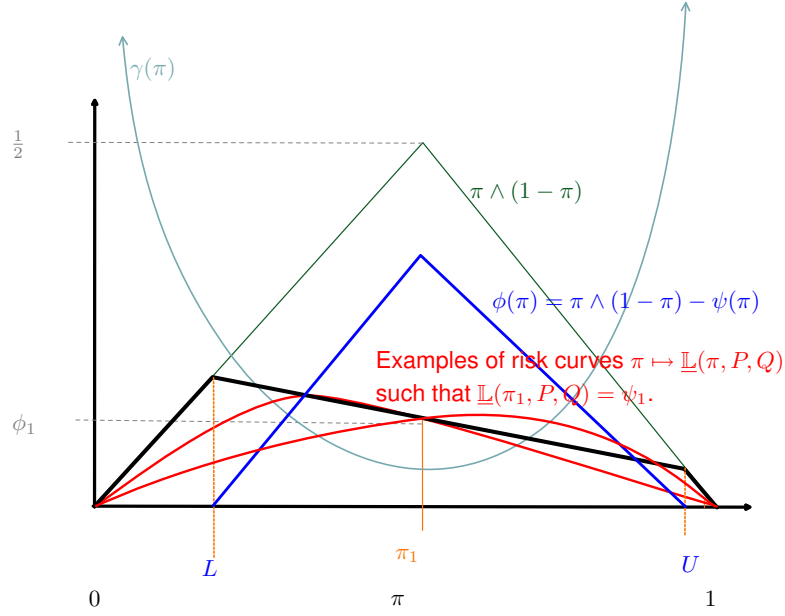}
	\end{center}
	\caption{The optimisation problem when $n=1$. Given $\phi_1$, there are
	many risk curves consistent with it. The optimisation problem involves
	finding the risk curve that maximises $\II_f$. $L$ and $U$ are defined
	in the text.
	\label{figure:SimplePinsker}}
    \end{figure}
    For variational divergence, $\pi_1=\thalf$ and thus
    \begin{equation}
	\psi_1=\pi_1\wedge(1-\pi_1) - \frac{V}{4}=\frac{1}{2}-\frac{V}{4}
	\label{eq:psi-1-variational}
    \end{equation}
    and so $\phi_1=V/4$. We can thus determine $L$ and $U$: 
    \begin{eqnarray*}
	 & aL +b & = L\\
	 \Rightarrow & aL+\psi_1-a\pi_1& =L\\
	 \Rightarrow  & L & = \frac{a\pi_1 -\psi_1}{a-1}.
    \end{eqnarray*}
    Similarly $ aU+b =1-U \Rightarrow U =\frac{1-\psi_1+a\pi_1}{a+1}$
    and thus
    \begin{equation}
	\II_f(P,Q) \ge\!\!\! \min_{a\in[-2\psi_1,2\psi_1]}\!\!\!
	\int\limits_{\frac{a\pi_1-\psi_1}{a-1}}^{\frac{1}{2}}\!\!\!\!\!
	[(1-a)\pi-\psi_1+a\pi_1]
	\gamma_f(\pi) d\pi +\!\!\!\!
	\int\limits_{\frac{1}{2}}^{\frac{1-\psi_1+a\pi_1}{a+1}}\!\!\!\!\!
	[(-a-1)\pi-\psi_1+a\pi_1+1]\gamma_f(\pi)d\pi.
	\label{eq:general-I-f-in-terms-of-psi-1}
    \end{equation}
If $\gamma_f$ is symmetric about $\pi=\thalf$ (so by
Corollary~\ref{corollary:symmetric-f-div} $\II_f$ is symmetric) and convex and
$\pi_1=\thalf$, then the optimal $a=0$.  Thus in that case, 
\begin{eqnarray}
    \II_f(P,Q) &\ge& 2 \int_{\psi_1}^{\frac{1}{2}} 
      (\pi-\psi_1) \gamma_f(\pi) d\pi \\
      &=& 2\left[(\thalf-\psi_1)\Gamma_f(\thalf) + \bar{\Gamma}_f(\psi_1)
      -\bar{\Gamma}_f(\thalf)\right]\nonumber\\
      &=& 2 \left[\textstyle\frac{V}{4} \Gamma_f(\thalf)
      +\bar{\Gamma}_f \left(\thalf-\textstyle\frac{V}{4}\right)
      -\bar{\Gamma}_f(\thalf)\right] . \label{eq:simpler-pinsker-Gamma}
\end{eqnarray}
Combining the above with (\ref{eq:psi-1-variational}) leads to a range of
Pinsker style bounds for symmetric $\II_f$:

\begin{description}
    \item[Jeffrey's Divergence] Since $J(P,Q)=\KL(P,Q)+\KL(Q,P)$ we have
	$\gamma(\pi)=\frac{1}{\pi^2 (1-\pi)^2}
	+\frac{1}{\pi(1-\pi)^2}=\frac{1}{\pi^2(1-\pi)^2}$. (As a check,
	$f(t)=(t-1)\ln(t)$, $f''(t)=\frac{t+1}{t^2}$ and so
	$\gamma_f(\pi)=\frac{1}{\pi^3}f'\left(\frac{1-\pi}{\pi}\right)=
	\frac{1}{\pi^2 (1-\pi)^2}$.)
	Thus
	\begin{eqnarray*}
	    J(P,Q) &\ge & 2 \int_{\psi_1}^{1/2} \frac{(\pi-\psi_1)}{\pi^2
	    (1-\pi)^2} d\pi\\
	    &=& (4\psi_1 -2)(\ln(\psi_1)-\ln(1-\psi_1)).
	\end{eqnarray*}
	Substituting $\psi_1=\frac{1}{2}-\frac{V}{4}$ gives
	\[
	    J(P,Q) \ge 
	    V\ln\left(\frac{2+V}{2-V}\right).
	\]

	Observe that the above bound behaves like $V^2$ for small $V$, and 
	$V\ln\left(\frac{2+V}{2-V}\right)\ge V^2$ for $V\in[0,2]$.
	Using
	the traditional Pinkser inequality ($\KL(P,Q)\ge V^2/2$) we have
	\begin{eqnarray*}
	    J(P,Q)&=& \KL(P,Q)+\KL(Q,P)\\
	    &  \ge & \frac{V^2}{2}+\frac{V^2}{2}\\
	    &=& V^2
	\end{eqnarray*}
    %%%%%%%%%%%%%%%%%%%%%%%%%
    \item[Jensen-Shannon Divergence] Here
	$f(t)=\frac{t}{2}\ln t - 
	     \frac{(t+1)}{2}\ln(t+1) +\ln 2$ 
	 and thus
	$\gamma_f(\pi)=\frac{1}{\pi^3}
	f''\left(\frac{1-\pi}{\pi}\right)=\frac{1}{2\pi(1-\pi)}$. Thus
	\begin{eqnarray*}
	    JS(P,Q) &=& 2 \int_{\psi_1}^{\frac{1}{2}}
	    \frac{\pi-\psi_1}{2\pi(1-\pi)}d\pi\\
	    &=& \ln(1-\psi_1)-\psi_1\ln(1-\psi_1)+\psi_1\ln\psi_1+\ln(2).
	\end{eqnarray*}
	Substituting $\psi_1=\frac{1}{2}-\frac{V}{4}$ leads to
	\[
	JS(P,Q) \ge \left(\frac{1}{2}-\frac{V}{4}\right)\ln(2-V) +
	\left(\frac{1}{2}+\frac{V}{4}\right)\ln(2+V) -\ln(2).
	\]
%%%%%%%%%%%%%%%%%%%
\item[Hellinger Divergence] Here $f(t)=(\sqrt{t}-1)^2$. Consequently
    $\gamma_f(\pi) = \frac{1}{\pi^3} f''\left(\frac{1-\pi}{\pi}\right)=
    \frac{1}{\pi^3} \frac{1}{2 \left( (1-\pi)/
    \pi\right)^{3/2}}=\frac{1}{2[\pi(1-\pi)]^{3/2}}$
    and thus
    \begin{eqnarray*}
	h^2(P,Q) &\ge & 2\int_{\psi_1}^{\frac{1}{2}}
	\frac{\pi-\psi_1}{2[\pi(1-\pi)]^{3/2}} d\pi\\
	&=& \frac{4 \sqrt{\psi_1} (\psi_1-1)
	+2\sqrt{1-\psi_1}}{\sqrt{1-\psi_1}}\\
	&=&
	\frac{4\sqrt{\frac{1}{2}-\frac{V}{4}}\left(\frac{1}{2}-\frac{V}{4}-1\right)+2\sqrt{1-\frac{1}{2}+\frac{V}{4}}}{\sqrt{1-\frac{1}{2}+\frac{V}{4}}}\\
	&=&2- \frac{(2+V)\sqrt{2-V}}{\sqrt{2+V}}\\
	&=& 2-\sqrt{4-V^2}.
    \end{eqnarray*}
    For small $V$, $2-\sqrt{4-V^2}\approx V^2/4$.
%%%%%%%%%%%%%%%%%%%%%%%%%%
\item[Arithmetic-Geometric Mean Divergence] Here $f(t)=\frac{t+1}{2}
    \ln\left(\frac{t+1}{2\sqrt{t}}\right)$. Thus $f''(t)=\frac{t^2+1}{4t^2
    (t+1)}$ and hence $\gamma_f(\pi)=\frac{1}{\pi^3}
    f''\left(\frac{1-\pi}{\pi}\right) = 
    \gamma_f(\pi)=\frac{2\pi^2-2\pi+1}{\pi^2(\pi-1)^2}$
    and thus
    \begin{eqnarray*}
	T(P,Q) &\ge & 2\int_{\psi_1}^{\frac{1}{2}} (\pi-\psi_1)
    \frac{2\pi^2-2\pi+1}{\pi^2(\pi-1)^2} d\pi\\
    &=& -\frac{1}{2}\ln(1-\psi)-\frac{1}{2}\ln(\psi)-\ln(2).
    \end{eqnarray*}
    Substituting $\psi_1=\frac{1}{2}-\frac{V}{4}$ gives
    \begin{eqnarray*}
    T(P,Q) &\ge &  -\frac{1}{2}\ln\left(\frac{1}{2}+\frac{V}{4}\right)
    -\frac{1}{2}\ln\left(\frac{1}{2}-\frac{V}{4}\right)-\ln(2)\\
    & = & \ln\left(\frac{4}{\sqrt{4-V^2}}\right)-\ln(2) .
\end{eqnarray*}
%%%%%%%%%%%%%%%%%%%%%%%%%%
\item[Symmetric $\chi^2$-Divergence] Here $\Psi(P,Q)=\chi^2(P,Q)+\chi^2(Q,P)$
    and thus (see below) $\gamma_f(\pi)=\frac{2}{\pi^3}+\frac{2}{(1-\pi)^3}$.
    (As a check, from $f(t)=\frac{(t-1)^2(t+1)}{t}$ we have
    $f''(t)=\frac{2(t^3+1)}{t^3}$ and thus
    $\gamma_f(\pi)=\frac{1}{\pi^3}f''\left(\frac{1-\pi}{\pi}\right)$ gives the
    same result.)
    \begin{eqnarray*}
	\Psi(P,Q) &\ge& 2\int_{\psi_1}^{\frac{1}{2}}
	(\pi-\psi_1)\left(\frac{2}{\pi^3}+\frac{2}{(1-\pi)^3}\right) d\pi\\
	&=& \frac{2(1+4\psi_1^2-4\psi_1)}{\psi_1(\psi_1-1)}.
    \end{eqnarray*}
    Substituting $\psi_1=\frac{1}{2}-\frac{V}{4}$ gives $\Psi(P,Q)\ge
    \frac{8V^2}{4-V^2}$.
%%%%%%%%%%%%%%%%%%%%%%%%%%
\end{description}
When $\gamma_f$ is not symmetric, one needs to use
(\ref{eq:general-I-f-in-terms-of-psi-1}) instead of the simpler
(\ref{eq:simpler-pinsker-Gamma}).  We consider two special cases.
\begin{description}
\item[$\chi^2$-Divergence] Here $f(t)=(t-1)^2$ and so $f''(t)=2$ and hence
    $\gamma(\pi)=f''\left(\frac{1-\pi}{\pi}\right)/\pi^3 = \frac{2}{\pi^3}$ which is not
    symmetric.
	Upon substituting $2/\pi^3$ for $\gamma(\pi)$ in
	(\ref{eq:general-I-f-in-terms-of-psi-1}) and evaluating the integrals
	we obtain
	\[
	\chi^2(P,Q) \ge 2 \min_{a\in[-2\psi_1,2\psi_1]} 
	\underbrace{\frac{1+4\psi_1^2-4\psi_1}{2\psi_1-a} -
	\frac{1+4\psi_1^2-4\psi_1}{2\psi_1-a-2}}_{=:J(a,\psi_1)} .
	\]
	One can then solve $\frac{\partial}{\partial a} J(a,\psi_1)=0$ for $a$
	and one obtains $a^*=2\psi_1-1$. Now $a^*> -2\psi_1$ only if
	$\psi_1>\frac{1}{4}$. One can check that when $\psi_1\le\frac{1}{4}$,
	then $a\mapsto J(a,\psi_1)$ is monotonically increasing for
	$a\in[-2\psi_1,2\psi_1]$ and hence the minimum occurs at
	$a^{*}=-2\psi_1$. Thus the value of $a$ minimising
	$J(a,\psi_1)$ is
	\[
	a^*= \test{\psi_1>1/4} (2\psi_1-1) + \test{\psi_1\le 1/4}(-2\psi_1).
	\]
	Substituting the optimal value of $a^*$ into $J(a,\psi_1)$ we obtain
	\[
	J(a^*,\psi_1)= \test{\psi_1>1/4}(2+8\psi_1^2-8\psi_1)    + 
	\test{\psi_1\le 1/4}\left(\frac{1+4\psi_1^2-4\psi}{4\psi} -
	\frac{1+4\psi_1^2-4\psi}{4\psi_1-2}\right) .
	\]
	Substituting $\psi_1=\frac{1}{2}-\frac{V}{4}$ and observing that $V<1
	\Rightarrow \psi_1>1/4$ we obtain
	\[
	\chi^2(P,Q) \ge \test{V<1} V^2 + \test{V\ge 1}\frac{V}{(2-V)}.
	\]
	Observe that the bound diverges to $\infty$ as $V\rightarrow 2$.

%%%%%%%%%%%%%%%%%%%%%%%%%%
\item[Kullback-Leibler Divergence] In this case 
    we have $f(t)=t\ln t$ and thus $f''(t)=1/t$ and 
    $\gamma_f(\pi)=\frac{1}{\pi^3}f''\left(\frac{1-\pi}{\pi}\right)=
    \frac{1}{\pi^2(1-\pi)}$ which is clearly not symmetric. From  
	(\ref{eq:general-I-f-in-terms-of-psi-1}) we obtain
	\[
	\KL(P,Q) \ge \min_{[-2\psi_1,2\psi_1]}
	\left(1-\frac{a}{2}-\psi_1\right)
	\ln\left(\frac{a+2\psi_1-2}{a-2\psi_1}\right) +
	\left(\frac{a}{2}+\psi_1\right)\ln\left(\frac{a+2\psi_1}{a-2\psi_1+2}\right).
	\]
	Substituting $\psi_1=\frac{1}{2}-\frac{V}{4}$ gives
	$
	    \KL(P,Q) \ge  \min_{a\in\left[\frac{V-2}{2},\frac{2-V}{2}\right]}
	    \delta_a (V),
	$
	where $\delta_a(V)=\left(\frac{V+2-2a}{4}\right)
	\ln\left(\frac{2a-2-V}{2a-2+V}\right)+
	\left(\frac{2a+2-V}{4}\right)\ln\left(\frac{2a+2-V}{2a+2+V}\right)$. 
	Set $\beta:=2a$ and we have (\ref{eq:my-pinsker-KL}).
\end{description}
\end{proof}
%%%%%%%%%%%%%%%%%%%%%%%%%%%%%%%%%%%%%%%%%%%%%%%%%%%%%%%%%%%%%%%%%%%%%%%%%%%%%

\section{Summary of Previous ``Monistic'' Approaches to Unification}
\label{section:monistic}
There are are range of different approaches to unifying machine learning from a
monistic perspective:

\emph{Low level data interchange:} There is a small amount of work on
    developing  standards for interchanging data sets
    \citep{GrossmanHornickMeyer2002, CareyMarGentleman2007,wettschereck2001edm}
    --- this is analogous to PDDL
    \citep{GhallabHoweKnoblockMcDermottRamVelosoWeldWilkins1998}.  There are
    also some limited higher level attempts such as 
    ontologies \citep{SoldatovaKing2006}
    and general frameworks \citep{FayyadSmyth1996}.

\emph{Modelling frameworks:}  To {\em solve} a machine learning
    problem, one needs models. There is a rich literature on
    graphical models\cite{Jordan1999},  factor 
    graphs \citep{KschischangFreyLoeliger2001} and Markov logic networks 
    \citep{Domingos:2004,RichardsonDomingos2006} which have allowed the 
    unification
    of sets of problems \citep{WorthenStark2001}, with a focus on
    the modelling and computational techniques for particular
    problems.

\emph{Comparison of frameworks:} There are several philosophical 
       frameworks/approaches to designing inference and learning algorithms.
       There are several works \citep{Barnett1999,BayarriBerger2004,Berger2003}
       that compare and contrast these. They are effectively comparing
       different monistic frameworks, not comparing problems.

       \emph{Overarching frameworks:} These include Bayesian 
       \citep{Robert1994}, information-theoretic
    \citep{Jenssen:2005,HarPhD93},
    game-theoretic \citep{VovkGammermanShafer2005, GrunwaldDawid2004},
    MDL \citep{Grunwald2007,Rissanen2007}, 
    regularised distance
    minimisation \citep{BorweinLewis1991,AltunSmola2006,
    Broniatowski:2004a}, and more 
    narrowly focussed
    ``unifying frameworks'' such as information
    geometry \citep{Dawid2007,Eguchi:2005}, exponential 
    families \citep{CanuSmola2006} and the information
    bottleneck \citep{TishbyPereiraBialek2000}.

%%%%%%%%%%%%%%%%%%%%%%%%%%%%%%%%%%%%%
\section{Examples and Prior Work on Surrogate Loss Bounds}
\label{section:surrogate}

Surrogate loss bounds have garnered increasing interest in the machine learning
community \citep{Zhang2004, BartlettJordanMcAuliffe2006, Steinwart:2007,
Steinwart2008}.  \citet[Chapter 3]{Steinwart2008} have presented a good summary
of recent work.

All of the recent work has been in terms of \emph{margin losses} of the form
\[
    L^\phi(\eta,\hh)=\eta\phi(\hh)+(1-\eta)\phi(-\hh).
\]
As \cite{Buja:2005} discuss, such margin losses can not capture the richness
of all possible proper scoring rules. 
\citet{BartlettJordanMcAuliffe2006} prove that for any $\hh$
\[
    \psi\left(L^{0-1}(\eta,\hh)-\minL^{0-1}(\eta)\right) \le  L^\phi(\eta,\hh)
    -\minL^\phi(\eta),
\]
where $\psi=\tilde{\psi}^{\lf\lf}$ is the LF biconjugate of $\tilde{\psi}$, 
\[
    \tilde{\psi}(\theta) = H^-\left(\frac{1+\theta}{2}\right) -
    H\left(\frac{1+\theta}{2}\right),
\]
$H(\eta)=\minL^\phi(\eta)$ and 
\[
    H^-(\eta)=\inf_{\alpha\colon \alpha(2\eta-1)\le 0}
    (\eta\phi(\alpha)+(1-\eta)\phi(-\alpha))
\]
is the optimal conditional risk under the constraint that the sign of the
argument $\alpha$ disagrees with $2\eta-1$.

We will consider two examples presented by \cite{BartlettJordanMcAuliffe2006}
and show that the bounds we obtain with the above theorem match the results we
obtain with Theorem~\ref{theorem:surrogate-loss}.
\begin{description}
    \item[Exponential Loss] Consider the  link
	$\hh=\psi(\heta)=\thalf\log\textstyle\frac{\heta}{1-\heta}$ with
	corresponding inverse link $\heta=\frac{1}{1+e^{-2\hh}}$.
	\citet{Buja:2005} showed that this link function combined with
	exponential margin loss $\phi(\gamma)=e^{-\gamma}$ results in a proper
	scoring rule
	\[
	L(\eta,\heta)=\eta\left(\frac{1-\heta}{\heta}\right)^{\frac{1}{2}} +
	(1-\eta) \left(\frac{\heta}{1-\heta}\right)^{\frac{1}{2}} .
	\]
	From (\ref{eq:w-in-terms-of-L}) we obtain
	\[
	w(\eta)= \frac{1}{2[\eta(1-\eta)]^{\frac{3}{2}}}.
	\]
	(Note \citet{Buja:2005} have missed the factor of $\thalf$.)
	Thus $W(\eta)=\frac{2\eta-1}{\sqrt{\eta(1-\eta)}}$ and
	$\Wb(\eta)=-2\sqrt{\eta(1-\eta)}$. Hence from 
	(\ref{eq:L-bar-general-form})  we obtain
	\begin{equation}
	      \minL(\eta)= 2\sqrt{\eta(1-\eta)} 
	      \label{eq:exp-loss-bayes-risk}
	\end{equation}
	and from (\ref{eq:B-w-bound-c-1-2})  we obtain that if
	$B_{\frac{1}{2}}(\eta,\heta)=\alpha$ then
	\begin{equation}
	    B(\eta,\heta)\ge 1-\sqrt{1-4\alpha^2} .
	    \label{eq:exp-loss-regret-bound}
	\end{equation}
	  Equations \ref{eq:exp-loss-bayes-risk} and
	    \ref{eq:exp-loss-regret-bound} match the results presented by 
	    \citet{BartlettJordanMcAuliffe2006}	upon noting that
	    $B_{\frac{1}{2}}(\eta,\heta)$ measures the loss in terms of
	    $\ell_{\frac{1}{2}}$ and \citet{BartlettJordanMcAuliffe2006} used
	    $\ell^{0-1}=2\ell_{\frac{1}{2}}$.
    \item[Truncated Quadratic Loss] 
	Consider the margin loss $\phi(\hh)=(1+\hh\vee 0)^2=(2\heta\vee 0)^2$
	with link function $\hh(\heta)=2\heta-1$. 
	From (\ref{eq:w-in-terms-of-L}) we obtain $\minL(\eta)=4\eta(1-\eta)$
	and from (\ref{eq:B-w-bound-c-1-2}) the regret bound 
	$B(\eta,\heta)\ge 4\alpha^2$. These match the results presented by
    \citet{BartlettJordanMcAuliffe2006} when again it is noted we used
    $\ell_{\frac{1}{2}}$ and they used $\ell^{0-1}$.
\end{description}
The above results are for $c_0=\thalf$. Generalisations of margin losses to the
case of uneven weights are presented by \citet[Section 3.5]{Steinwart2008}.
Nevertheless, since the same $\phi$ function is still used for both components
of the loss (albeit with unequal
weights) such a scheme can still not capture the full generality of all proper
scoring rules in the manner achieved by the results in 
Section~\ref{section:surrogate-loss-bounds}.

%%%%%%%%%%%%%%%%%%%%%%%%%%%%%%%%%%%%%%%%%%%%%%%%%%%%%%%%%%%%%%%%%%%%%%%%%
\section{A Brief History of Pinsker Inequalties}
\label{section:history}
    \cite{Pinsker1964} presented the first bound relating $\mathrm{KL}(P,Q)$ to
    $V(P,Q)$: $\mathrm{KL}\ge V^2/2$ and it is now known by his name  or 
    sometimes as the Pinsker-Csisz\'{a}r-Kullback
    inequality since \cite{Csiszar1967} presented another version
    and 
    \cite{Kullback1967} showed $\mathrm{KL}\ge V^2/2 +V^4/36$. Much later 
    \cite{Topsoe2001} showed $\mathrm{KL}\ge V^2/2 + V^4/36+V^6/270$.
    Non-polynomial bounds are due to \cite{Vajda1970}: $\mathrm{KL}\ge
    L_{\mathrm{Vajda}}(V):=\log\left(\frac{2+V}{2-V}\right)-\frac{2V}{2+V}$ and
    \cite{Toussaint1978} who showed $\mathrm{KL}\ge L_{\mathrm{Vajda}}(V) \vee
    (V^2/2+V^4/36+V^8/288)$. 

    Care needs to be taken when comparing results from the literature as
    different definitions for the divergences exist. For example
    \cite{GibbsSu2002} use a definition of $V$ that differs by a factor of 
    2 from ours.
    There are some isolated bounds relating $V$ to some other divergences,
    analogous to the classical Pinkser bound; \cite{Kumar2005} have presented
    a summary as well as new bounds for a wide range of \emph{symmetric}
    $f$-divergences by making assumptions on the likelihood ratio: $r\le
    p(x)/q(x)\le R<\infty$ for all $x\in\mathcal{X}$. 
    This line of reasoning has also been
    developed by \cite{DragomirGluvsvcevicPearce2001,Taneja2005,Taneja2005a}. 
    \cite{Topsoe:2000}  has presented some infinite series  representations for
    capacitory discrimination in terms of triangular discrimination which lead
    to inequalities between those two divergences.
    \citet[p.48]{Liese2008} give the inequality $V\le h\sqrt{4-h^2} $
    (which seems to be originally due to \cite{LeCam1986})
    which when rearranged corresponds exactly to the bound for $h^2$ in theorem
    \ref{theorem:special-cases-pinsker}.
    \cite{Withers1999} has also presented some inequalities between other
    (particular) pairs of divergences; his reasoning is also in terms of
    infinite series expansions. 

    \cite{ArnoldMarkowichToscaniUnterreiter} considered the case of $n=1$
    but arbitrary $\II_f$ (that is they bound an arbitrary $f$-divergence in
    terms of the variational divergence). Their argument is similar to the
    geometric proof of Theorem \ref{theorem:general-pinsker}.
    They do not compute any of the explicit
    bounds in theorem \ref{theorem:special-cases-pinsker} except they state
    (page 243)  $\chi^2(P,Q)\ge V^2$ which is looser than 
    (\ref{eq:my-chi-squared-bound}).

    \cite{Gilardoni:2006} showed (via an intricate argument) that if
    $f'''(1)$ exists, then $\II_f \ge \frac{f''(1) V^2}{2}$. He also showed some
    fourth order inequalities of the form $\II_f\ge c_{2,f} V^2 + c_{4,f} V^4$
    where the constants depend on the behaviour of $f$ at 1 in a complex way.
    \cite{Gilardoni2006,Gilardoni2006a} presented a completely different
    approach which obtains many of the results of 
    theorem~\ref{theorem:special-cases-pinsker}.\footnote{We were unaware
    of these two papers until completing the results presented in the main
    paper. 
    }
    \cite{Gilardoni2006a} improved Vajda's bound  slightly to $
    \mathrm{KL}(P,Q) \ge \ln\frac{2}{2-V} - \frac{2-V}{2}\ln\frac{2+V}{2}$.

    \cite{Gilardoni2006,Gilardoni2006a}
    presented a general tight lower bound for $\II_f(P,Q)$ in terms of
    $V(P,Q)$ which is difficult to evaluate explicitly in general:
    \[
    \II_f \ge
    \frac{V}{2}\left(\frac{f[g_R^{-1}(k(1/V))]}{g_R^{-1}(k(1/V))-1} +
    \frac{f[g_L^{-1}(k(1/V))]}{1-g_L^{-1}(k(1/V))}\right),
    \]
    where $k^{-1}(t)=\frac{1}{2}\left(\frac{1}{1-g_L^{-1}(t)} +
    \frac{1}{g_R^{-1}(t)-1}\right)$, $k(u)=(k^{-1})^{-1}(u)$, 
    $g(u)=(u-1)f'(u)-f(u)$, $g_R^{-1}[g(u)]=u$
    for $u\ge 1$ and $g_L^{-1}[g(u)]=u$ for $u\le 1$. 
    He presented a new parametric form for $\II_f=\mathrm{KL}$ in terms of
    Lambert's $W$ function.
    In general, the
    result is analogous to that of
    \cite{FedotovTopsoe2003} in that it is in a parametric form which, 
    if one wishes
    to evaluate for a particular $V$, one needs to do a one dimensional
    numerical search --- as complex as (\ref{eq:my-pinsker-KL}).
    However, when $f$ is such that
    $\II_f$ is symmetric, this simplifies to the elegant form $\II_f\ge
    \frac{2-V}{2}f\left(\frac{2+V}{2-V}\right)-f'(1)V$. 
    He presented explicit special cases for $h^2$, $J$,$\Delta$ and $I$
    identical to the results in Theorem \ref{theorem:special-cases-pinsker}.
    It  is not apparent how 
    the approach of \cite{Gilardoni2006,Gilardoni2006a} could be extended to
    more general situations such as that in Theorem
    \ref{theorem:general-pinsker} (i.e.~$n>1$).

    Finally \cite{BolleyVillani2005} have
    considered {\em weighted} versions of the Pinsker inequalities (bounds for
    a weighted generalisation of Variational divergence) in terms of
    KL-divergence that are related to transportation inequalities.

%%%%%%%%%%%%%%%%%%%%%%%%%%%%%%%%%%%%%%%%%%%%%%%%%%%%%%%%%%%%%%%%
\section{Examples of extended convolution factorisation}
\label{section:examples}

    In this section we present three examples of $f$ which can be written as
    $f=g\Box g$.

    If $g(t)=(t-1)^2$ (corresponding to Pearson $\chi^2$ divergence),
    $(g\Box g)(\tau)=\inf_{x\in\reals^+} (x-1)^2 +\tau(x/\tau
    -1)^2$. Differentiating the right-hand side with respect to $x$, setting to
    zero and solving for $x$ gives $x=\frac{4}{2(1+1/\tau)}$. Substituting we
    obtain $(g\Box g)(\tau)=\frac{(\tau-1)^2}{\tau-1}$ which is the $f$ for
    $\Delta(P,Q)$, the triangular discrimination.

    If $g(t)=t\ln(t)$, a similar straightforward calculation yields
    $(g\Box g)(\tau)= \frac{-2\sqrt{\tau}}{e}$.

    If $g(t)=(\sqrt{t}-1)^2$ (corresponding to Hellinger divergence) then a
    similar calculation yields $(g\Box g)(\tau) = \frac{1}{2}
    (\sqrt{\tau}-1)^2=g(\tau)/2$. Thus this $g$ plays a role analogous to a
    gaussian kernel in ordinary convolution. The significance of this is
    unclear.

    We summarise the results (and the associated $g^\lf$)
    in the following table.

    \begin{center}
    \begin{tabular}{ccl}
	\hline
	$g(t)$ & $(g\Box g)(\tau)$ & $g^\lf(s)$\\[1mm]
	\hline
	$(t-1)^2 $ &  $\frac{(\tau-1)^2}{\tau-1}$ & $\frac{s^2}{4}+s$\\[1mm]
	$t\ln t$  & $\frac{-2\sqrt{\tau}}{e}$ & $e^{s-1}$\\[1mm]
	$(\sqrt{t}-1)^2$ & $\frac{1}{2}(\sqrt{\tau}-1)^2 $ &
	$\frac{s}{1-s}\test{s<1} + \infty\test{s\ge 1}$\\[1mm]
	\hline
    \end{tabular}
    \end{center}

    Whilst it is indeed straightforward to compute $(g\Box g)$ given $g$
    (although a simple closed form is not always possible), it is far from
    obvious how to go from a given $f$ to a $g$ such that $f=g\Box g$.

\citet[page 69]{Hiriart-Urruty1993} show that for $f$ convex on $\reals^+$, $g$
convex and increasing on $\reals^+$,
\[
(g \circ f)^\lf(s) = \inf_{\alpha > 0} \alpha f^\lf(\textstyle\frac{s}{\alpha} ) +
g^\lf(\alpha) = f^\lf \Box g^\lf.
\]
This illuminates the difficulty of the above ``factorisation problem''. It is
equivalent to: given a convex increasing $f^\lf$, find a convex increasing 
$g^\lf$ such that $f^\lf=g^\lf \circ g^\lf$.

%%%%%%%%%%%%%%%%%%%%%%%%%%%%%%%%%%%%%%%%%%%%%%%%%%%%%%%%%%%%%%%%%%%%
\section{Empirical Estimators of $V_{B_{\mathcal{H}},\frac{1}{2}}(P,Q)$ and
SVMs}
\label{sec:appendix-svm-mmd}

This appendix further develops the observations made in
Section~\ref{section:linear-loss} regarding the relationship between divergence
and risk when $\mathcal{R}=B_{\mathcal{H}}$,
a unit ball in a reproducing kernel Hilbert space $\mathcal{H}$. In contrast to
the rest of the paper (which focussed on relationships involving the underlying
distributions), in this appendix we will consider the practical situation where
there is only an empirical sample. We will see how the general results have
interesting implications for sample based machine learning algorithms.
    
If we require an empirical estimate of $V_{\mathcal{R},\pi}(P,Q)$ we can
replace $P$ and $Q$ by empirical distributions. We will use {\em weighted} 
empirical distributions. Given an independent identically distributed sample
$\mathbf{w}=(w_1,\ldots,w_m)\in\mathcal{X}^m$ the $\bm{\alpha}$-weighted
empirical distribution $\hat{P}_{\mathbf{w}}^{\bm{\alpha}}$  with respect to
$\mathbf{w}$ is defined by
\[
d\hat{P}_{\mathbf{w}}^{\bm{\alpha}} := \sum_{i=1}^m \alpha_i
\delta(\cdot-w_i)
\]
where $\bm{\alpha}=(\alpha_1,\ldots,\alpha_m)$, $\alpha_i\ge 0$, $i=1,\ldots,m$
and  $\sum_{i=1}^m \alpha_i=1$.
We will write $\hat{\EE}_{\mathbf{w}}^{\bm{\alpha}}\phi:=
\EE_{\hat{P}_{\mathbf{w}}^{\bm{\alpha}}}\phi=\sum_{i=1}^m\alpha_i\phi(w_i)$.
Thus
\[
V_{\mathcal{R},\frac{1}{2}}^2(\hat{P}_{\mathbf{w}}^{\bm{\alpha}},
\hat{P}_{\mathbf{z}}^{\bm{\beta}}) =
\frac{1}{2}\|\hat{\EE}_{\mathbf{w}}^{\bm{\alpha}}\phi
-\hat{\EE}_{\mathbf{z}}^{\bm{\beta}}\|_{\mathcal{H}}^2.
\]
Suppose now that $P$ and $Q$ correspond to the positive and negative class
conditional distributions. Let $\mathbf{x}:=(x_1,\ldots,x_m)$ be a sample drawn
from $M=\pi P+(1-\pi) Q$ with corresponding label vector
$\mathbf{y}=(y_1,\ldots,y_m)$. Let $I:=\{1,\ldots,m\}$, 
$I^+:=\{i\in I\colon y_i=1\}$,
$I^-:=\{i\in I\colon y_i=-1\}$. Consider a weight vector
$\bm{\alpha}=(\alpha_1,\ldots,\alpha_m)$ over the whole sample. Thus
\[
\hat{\EE}_P \phi = \sum_{i\in I^+} \alpha_i \phi(x_i)\ \ \ \mbox{and}\ \ \
\hat{\EE}_Q \phi = \sum_{i\in I^-} \alpha_i \phi(x_i)
\]
where we also require
\begin{equation}
    \sum_{i\in I^+}\alpha_i =\frac{m^+}{m}\ \ \ \mbox{and}\ \ \ \sum_{i\in I^-}
    \alpha_i = \frac{m^-}{m}
\end{equation}
and hence
\[
\sum_{i\in I} \alpha_i y_i = \frac{m^+ - m^-}{m}.
\]
Substituting into (\ref{eq:borgwardt}) we have
\begin{eqnarray}
    2 V_{B_{\mathcal{H}},\frac{1}{2}} (\hat{P},\hat{Q}) &=&
       \left\langle \hat{\EE}_P
	\phi - \hat{\EE}_Q \phi, \hat{\EE}_P \phi -
	\hat{\EE}_Q\phi\right\rangle\nonumber\\
    &=&\left\langle\sum_{i\in I^+}\alpha_i\phi(x_i)-\sum_{i\in I^-}\alpha_i
	\phi(x_i), \sum_{j\in I^+}\alpha_j\phi(x_j)-\sum_{j\in
	I^-}\alpha_j\right\rangle\nonumber\\
    &=& \left\langle\sum_{i\in I}\alpha_i y_i\phi(x_i), 
        \sum_{j\in I}\alpha_j y_j \phi(x_j)\right\rangle\nonumber\\
    &=& \sum_{i\in I}\sum_{j\in I} \alpha_i\alpha_j y_i y_j \langle
	\phi(x_i),\phi(x_j)\rangle\nonumber\\
    &=&\sum_{i\in I}\sum_{j\in I} \alpha_i\alpha_j y_i y_j k(x_i,x_j) =:
	J(\bm{\alpha},\mathbf{x}). \label{eq:J-alpha-x-def}
\end{eqnarray}
We now consider three different choices of $\bm{\alpha}$.

{\bf Uniform weighting} If we set $\alpha_i=\frac{1}{m}$, $i=1,\ldots,m$,
	then (\ref{eq:J-alpha-x-def}) becomes
	\[
	\frac{1}{m^2}\sum_{i,j\in I} y_i y_j k(x_i,x_j) =
	\mathrm{MMD}_b^2[B_{\mathcal{H}},\bm{x}^+, \bm{x}^- ]
	\]
	where $\bm{x}^+:=(x_i)_{i\in I^+}$, $\bm{x}^-:=(x_i)_{i\in I^-}$ 
	and $\mathrm{MMD}_b$ is the biased estimator of the {\em Maximum Mean
	Discrepancy} \citep{GrettonBorgwardtRaschScholkopfSmola2008},  
	an alternate name for  $V_{\mathcal{R}}$.
	Observe that from theorem \ref{theorem:linear-loss-equivalence}, this
	case corresponds to using a Fisher linear discriminant in feature space
	\citep{DevGyoLug96} when it is assumed that the 
	within-class covariance matrices are 
	both the identity matrix. This follows by observing that the
	constructed hypothesis is identical in both cases.

 {\bf Pessimistic Weighting} Instead of weighting each sample equally, one
	can optimise over $\bm{\alpha}$. By theorem 
	\ref{theorem:linear-loss-equivalence}, minimizing
	$J(\bm{\alpha},\mathbf{x})$ over $\bm{\alpha}$ will maximize
	$\minLL^{\mathrm{lin}}$ and is thus the most pessimistic choice.
	Explicitly, we have 
	\begin{eqnarray}
	    \min_{\bm{\alpha}} & &
	     \sum_{i=1}^m \sum_{i=1}^m \alpha_i \alpha_j
	     y_i y_j k(x_i,x_j)\label{eq:svm-objective}\\
		\mbox{s.t.}\ & & \alpha_i\ge 0,\ \
		i=1,\ldots,m\label{eq:svm-constraint-1}\\
		& & \sum_{i=1}^m \alpha_i y_i =
		\frac{m^+-m^-}{m}\label{eq:svm-constraint-2}\\
		& & \sum_{i=1}^m \alpha_i =1\label{eq:svm-constraint-3}
	\end{eqnarray}
	which can be recognized as the support vector machine 
	\citep{CortesVapnik1995}. The SVM uses the sign of the ``witness''
	\citep{GrettonBorgwardtRaschScholkopfSmola2008},  
	$
	    x\mapsto \sum_{i=1}^m \alpha_i y_i k(x_i,x)
	$
	as its predictor.

    {\bf Interpolation between above two cases} A parametrized interpolation
	between the above two cases can be constructed by the addition of the
	constraints
	\begin{equation}
	    \alpha_i \le \frac{1}{\nu m}, \ \ i=1,\ldots,m,
	    \label{eq:nu-alpha-constraint}
	\end{equation}
	where $\nu\in(0,1]$ is an adjustable parameter. Observe that $\nu$
	controls the sparsity of $\bm{\alpha}$ since
	(\ref{eq:nu-alpha-constraint}), (\ref{eq:svm-constraint-1}) and
	(\ref{eq:svm-constraint-3}) together
	imply that $|\{i\in I\colon \alpha_i\ne 0\}|\ge \nu m$.
	\cite{CriBur00} have shown that
	(\ref{eq:svm-objective}),$\ldots$,(\ref{eq:nu-alpha-constraint}) 
	is equivalent to the $\nu$-SVM algorithm \citep{SchSmoWilBar00}.

While ``information-theoretic'' approaches to the SVM and weighted kernel
representations are hardly new\footnote{
    The use of kernel representations for classification is of course not new:
    from the classical kernel classifier (where $\alpha_i=1/m$ for all $i\in
    I$)~\cite[Chapter 10]{DevGyoLug96} to the Generalised
    Portrait~\citep{AizBraRoz64}, the Generalised
    Discriminant~\citep{BaudatAnouar2000} and the panoply of techniques inspired
    by Support Vector Machines~\citep{ScholkopfSmola2002,Herbrich2002}. None of
    these techniques is designed from the perspective of minimising a
    $f$-divergence.  

    \citet{PriXuFis99b} have developed an approach to machine learning problems
    based on information theoretic
    criteria~\citep{PriXuZhaFis00,JenErdPriElt04,XuErdJenPri05,
    Jen05,JenErdPriElt06,PaiXuPri06}.  \citet{JenErdPriElt04,JenErdPriElt06}
    considered kernel methods from the perspective of
    Renyi's quadratic entropy.  They do not exploit the formal 
    relationship between
    maximising divergence and minimising risk.  They interpret the SVM as being
    constructed from weighted Parzen windows density estimates.
    \cite{GrettonBorgwardtRaschScholkopfSmola2008} explained the relationship
    between their MMD estimators and those derived from (unweighted) Parzen
    windows estimates of the class-conditional distributions.
    Weighted Parzen windows estimates were used as a basis for building a
    classifier by \citet{BabichCamps1996}. Weighted empirical distributions are
    widely used in particle filtering \citep{CrisanDoucet2002}.

    \citet{McDermottKatagiri02} considered the direct
    optimisation of a classifier built on top of Parzen windows density
    estimates.  They showed that the minimum classification error criterion is
    equivalent to a Parzen windows estimate of the theoretical  Bayes risk.
    They re-derive the traditional approach of minimising an estimate of the
    expected loss. \cite{McDermottKatagiri03} extended their approach to the
    multi-class setting in a way that takes account of all the ``other''
    classes better in estimating the probability of error of a given class.

}, the results
presented here are novel and provide a simple and direct derivation of the SVM
via the generalised variational divergence.

If $V_{B_{\mathcal{H}},\frac{1}{2}}(\hat{P}_{\bm{w}},\hat{Q}_{\bm{z}})$ is used
as a test statistic to infer whether two samples $\bm{w}$ and $\bm{z}$ are
drawn from the same distribution (as
\citet{GrettonBorgwardtRaschScholkopfSmola2008} do), then when the distributions
from which $\bm{w}$ and $\bm{z}$ are drawn are close, the classification performance of
the corresponding classifier (i.e.~the classifier that uses the sign of the
witness function) will be close to the worst possible. Thus one will be
operating in a regime distinct from the normal situation, where the risk is
typically small.

Finally observe that the derivation of the SVM presented here could be viewed
as an application of an alternate ``inductive principle''  --- a general recipe 
for constructing
learning algorithms from learning task specification 
\citep{Vapnik1989,vapnik2006edb}.
The traditional Empirical Risk Minimization principle entails replacing 
$(P,Q)$ with $(\hat{P}_{\bm{x}^+},\hat{Q}_{\bm{x}^-})$ in the definition 
of $\minLL(\pi,P,Q)$. Then, in order to not overfit, one restricts the class of
functions from which hypotheses are drawn. That is, there are two
approximations:
\[
\minLL(\pi,P,Q) \ \ 
\underrightarrow{\mbox{{\tiny \ Empirical Approximation\ (uniform) \ }}}\ \ 
    \minLL(\pi,\hat{P}_{\bm{x}^+}, \hat{Q}_{\bm{x}^-}) \ \ 
\underrightarrow{\mbox{\tiny\ Restrict Class\ \ }}\ \ 
    \minLL_{\mathcal{R}}(\pi,\hat{P}_{\bm{x}^+}, \hat{Q}_{\bm{x}^-})  .
\]
Upon setting $\bm{\alpha}^+=(\alpha_i)_{i\in I^+}$ and
$\bm{\alpha}^-=(\alpha_i)_{i\in I^-}$, the derivation presented above, 
in contrast, can be summarised schematically by
\[
\mbox{``$\minLL(\pi,P,Q)$''} \ \ 
\underrightarrow{\mbox{\tiny\ Restrict Class\ \ }}\ \ 
    \minLL_{\mathcal{R}}(\pi,P,Q) \ \ 
    \underrightarrow{\mbox{{\tiny \ Empirical Approximation\ 
    ($\bm{\alpha}$-weighted) \ }}}\ \ 
\minLL_{\mathcal{R}}(\pi,\hat{P}_{\bm{x}^+}^{\bm{\alpha}^+}, 
\hat{Q}_{\bm{x}^-}^{\bm{\alpha}^-}) ,
\]
where a different loss (the ``linear'' loss)  was used at the start. 
With that loss function,
reversing the order of the two approximations would not work, and is
(thus) not equivalent to the ERM inductive principle. The first step makes
$\minLL$ well defined --- with no restriction it is not, hence the quotes; and will avoid overfitting in any case. The second step
is the more general ($\bm{\alpha}$-weighted) empirical approximation.

%%%%%%%%%%%%%%%%%%%%%%%%%%%%%%%%%%%%%%%%%%%%%%%%%%%%%%%%%%%%%%%%%%%%%%%%
%\small
\setlength{\bibsep}{1.3mm}\baselineskip=4.1mm
\bibliography{reductions}

\begin{thebibliography}{232}
\providecommand{\natexlab}[1]{#1}
\providecommand{\url}[1]{\texttt{#1}}
\expandafter\ifx\csname urlstyle\endcsname\relax
  \providecommand{\doi}[1]{doi: #1}\else
  \providecommand{\doi}{doi: \begingroup \urlstyle{rm}\Url}\fi

\bibitem[Abelson et~al.(1996)Abelson, Sussman, and
  Sussman]{AbelsonSussmanSussman1996}
H.~Abelson, G.J. Sussman, and J.~Sussman.
\newblock \emph{{Structure and Interpretation of Computer Programs}}.
\newblock MIT Press, 1996.

\bibitem[Acz\'{e}l(1984)]{Aczel1984}
J.~Acz\'{e}l.
\newblock Measuring information beyond communication theory --- why some
  generalized information measures maye be useful, others not.
\newblock \emph{Aequationes Mathematicae}, 27:\penalty0 1--19, 1984.

\bibitem[Aizerman et~al.(1964)Aizerman, Braverman, and Rozono\'er]{AizBraRoz64}
M.~A. Aizerman, {\'E}.~M. Braverman, and L.~I. Rozono\'er.
\newblock Theoretical foundations of the potential function method in pattern
  recognition learning.
\newblock \emph{Automation and Remote Control}, 25:\penalty0 821--837, 1964.

\bibitem[Ali and Silvey(1966)]{AliSilvey1966}
S.M. Ali and S.D. Silvey.
\newblock A general class of coefficients of divergence of one distribution
  from another.
\newblock \emph{Journal of the Royal Statistical Society. Series B
  (Methodological)}, 28\penalty0 (1):\penalty0 131--142, 1966.

\bibitem[Altun and Smola(2006)]{AltunSmola2006}
Y.~Altun and A.~Smola.
\newblock {Unifying Divergence Minimization and Statistical Inference via
  Convex Duality}.
\newblock In \emph{Proceedings of the Annual Conference on Learning Theory
  (COLT)}, 2006.

\bibitem[Antosik et~al.(1973)Antosik, Mikusinski, and
  Sikorski]{AntosikMikusinskiSikorski1973}
Piotr Antosik, Jan Mikusinski, and Roman Sikorski.
\newblock \emph{Theory of Distributions: The Sequential Approach}.
\newblock American Elsevier, 1973.

\bibitem[Ashby(1956)]{Ashby1956}
W.~R. Ashby.
\newblock \emph{An Introduction to Cybernetics}.
\newblock Chapman and Hall, 1956.

\bibitem[Babich and Camps(1996)]{BabichCamps1996}
G.A. Babich and O.I. Camps.
\newblock {Weighted Parzen Windows for Pattern Classification}.
\newblock \emph{{IEEE} Transactions on Pattern Analysis and Machine
  Intelligence}, 18\penalty0 (5):\penalty0 567--570, 1996.

\bibitem[Bach et~al.(2006)Bach, Heckerman, and
  Horvitz]{BachHeckermanHorvitz2006}
F.R. Bach, D.~Heckerman, and E.~Horvitz.
\newblock {Considering Cost Asymmetry in Learning Classifiers}.
\newblock \emph{Journal of Machine Learning Research}, 7:\penalty0 1713--1741,
  2006.

\bibitem[Baldwin and Clark(2006)]{BaldwinClark06}
Carliss~Y. Baldwin and Kim~B. Clark.
\newblock Modularity in the design of complex engineering systems.
\newblock In Dan~Braha Ali~Minai and Yaneer~Bar Yam, editors, \emph{Complex
  Engineered Systems: Science Meets Technology}. Springer, 2006.

\bibitem[Baldwin and Clark(forthcoming)]{BaldwinClarkxx}
Carliss~Y. Baldwin and Kim~B. Clark.
\newblock Between `knowledge' and `the economy': Notes on the scientific study
  of designs.
\newblock In D.~Forey and B.~Kahin, editors, \emph{Advancing Knowledge and the
  Knowledge Economy}, Cambridge, Mass., forthcoming. MIT Press.

\bibitem[Banerjee et~al.(2005{\natexlab{a}})Banerjee, Guo, and
  Wang]{Banerjee:2005}
A.~Banerjee, X.~Guo, and H.~Wang.
\newblock {On the optimality of conditional expectation as a Bregman
  predictor}.
\newblock \emph{IEEE Transactions on Information Theory}, 51\penalty0
  (7):\penalty0 2664--2669, 2005{\natexlab{a}}.

\bibitem[Banerjee et~al.(2005{\natexlab{b}})Banerjee, Merugu, Dhillon, and
  Ghosh]{Banerjee:2005a}
A.~Banerjee, S.~Merugu, I.S. Dhillon, and J.~Ghosh.
\newblock {Clustering with Bregman Divergences}.
\newblock \emph{The Journal of Machine Learning Research}, 6:\penalty0
  1705--1749, 2005{\natexlab{b}}.

\bibitem[Barnett(1999)]{Barnett1999}
Vic Barnett.
\newblock \emph{Comparative Statistical Inference}.
\newblock John Wiley and Sons, Chichester, 3rd edition, 1999.

\bibitem[Bartlett et~al.(1996)Bartlett, Long, and
  Williamson]{BartlettLongWilliamson1996}
Peter~L. Bartlett, Philip~M. Long, and Robert~C. Williamson.
\newblock Fat-shattering and the learnability of real-valued functions.
\newblock \emph{Journal of Computer and System Sciences}, 52\penalty0
  (3):\penalty0 434--452, 1996.

\bibitem[Bartlett and Mendelson(2002)]{BartlettMendelson2002}
P.L. Bartlett and S.~Mendelson.
\newblock {Rademacher and Gaussian Complexities: Risk Bounds and Structural
  Results}.
\newblock \emph{Journal of Machine Learning Research}, 3:\penalty0 463--482,
  2002.

\bibitem[Bartlett et~al.(2006)Bartlett, Jordan, and
  McAuliffe]{BartlettJordanMcAuliffe2006}
P.L. Bartlett, M.I. Jordan, and J.D. McAuliffe.
\newblock {Convexity, classification, and risk bounds}.
\newblock \emph{Journal of the American Statistical Association}, 101\penalty0
  (473):\penalty0 138--156, March 2006.

\bibitem[Baudat and Anouar(2000)]{BaudatAnouar2000}
G.~Baudat and F.~Anouar.
\newblock {Generalized Discriminant Analysis Using a Kernel Approach}.
\newblock \emph{Neural Computation}, 12\penalty0 (10):\penalty0 2385--2404,
  2000.

\bibitem[Bauschke et~al.(2008)Bauschke, Goebel, Lucet, and Wang]{Bauschke:2008}
H.~H. Bauschke, R.~Goebel, Y.~Lucet, and X.~Wang.
\newblock The proximal average: Basic theory.
\newblock \emph{SIAM Journal on Optimization}, 19:\penalty0 766--785, 2008.

\bibitem[Bayarri and Berger(2004)]{BayarriBerger2004}
M.J. Bayarri and J.O. Berger.
\newblock {The interplay of Bayesian and frequentist analysis}.
\newblock \emph{Statistical Science}, 19\penalty0 (1):\penalty0 58--80, 2004.

\bibitem[Ben-Bassat(1978)]{Ben-Bassat:1978}
M.~Ben-Bassat.
\newblock {epsilon-equivalence of feature selection rules}.
\newblock \emph{IEEE Transactions on Information Theory}, 24\penalty0
  (6):\penalty0 769--772, 1978.

\bibitem[Berger(1985)]{Berger1985}
James~O. Berger.
\newblock \emph{Statistical Decision Theory and Bayesian Analysis}.
\newblock Springer, New York, 1985.

\bibitem[Berger(2003)]{Berger2003}
J.O. Berger.
\newblock {Could Fisher, Jeffreys and Neyman Have Agreed on Testing?}
\newblock \emph{Statistical Science}, 18\penalty0 (1):\penalty0 1--32, 2003.

\bibitem[Beygelzimer et~al.(2005)Beygelzimer, Dani, Hayes, Langford, and
  Zadrozny]{BeygelzimerDaniHayesLangfordZadrozny}
A.~Beygelzimer, V.~Dani, T.~Hayes, J.~Langford, and B.~Zadrozny.
\newblock Error limiting reductions between classification tasks.
\newblock In \emph{Proceedings of the 22nd International Conference on Machine
  Learning}, Bonn, 2005.

\bibitem[Beygelzimer et~al.(2007)Beygelzimer, Langford, and
  Ravikumar]{BeygelzimerLangfordRavikumar2007}
Alina Beygelzimer, John Langford, and Pradeep Ravikumar.
\newblock Multiclass classification with filter trees.
\newblock Preprint, June 2007.

\bibitem[Beygelzimer et~al.(2008)Beygelzimer, Langford, and
  Zadrozny]{BeygelzimerLangfordZadrozny2008}
Alina Beygelzimer, John Langford, and Bianca Zadrozny.
\newblock Machine learning techniques --- reductions between prediction quality
  metrics.
\newblock Preprint, March 2008.
\newblock URL
  \url{http://hunch.net/$\sim$jl/projects/reductions/tutorial/paper/chapter.pd%
f}.

\bibitem[Birnbaum(1961)]{Birnbaum:1961}
Allan Birnbaum.
\newblock On the foundations of statistical inference: Binary experiments.
\newblock \emph{The Annals of Mathematical Statistics}, 32\penalty0
  (2):\penalty0 414--435, June 1961.

\bibitem[Blackwell(1951)]{Blackwell1951}
D.~Blackwell.
\newblock {Comparison of Experiments}.
\newblock In J.~Neyman, editor, \emph{Proceedings of the Second Berkeley
  Symposium on Mathematical Statistics and Probability, 1950}, pages 93--102,
  Berkeley and Los Angeles, 31 July -- 12 August 1951. University of California
  Press.

\bibitem[Blackwell(1953)]{blackwell1953ece}
D.~Blackwell.
\newblock {Equivalent Comparisons of Experiments}.
\newblock \emph{The Annals of Mathematical Statistics}, 24\penalty0
  (2):\penalty0 265--272, 1953.

\bibitem[Blass and Gurevich(2003)]{BlassGurevich2003}
Andreas Blass and Yuri Gurevich.
\newblock Algorithms: A quest for absolute definitions.
\newblock \emph{Bulletin of the European Association for Theoretical Computer
  Science}, 81:\penalty0 195--225, 2003.

\bibitem[Bolley and Villani(2005)]{BolleyVillani2005}
F.~Bolley and C.~Villani.
\newblock {Weighted Csisz\'{a}r-Kullback-Pinsker inequalities and applications
  to transportation inequalities}.
\newblock \emph{Annales de la Faculte des Sciences de Toulouse}, 14\penalty0
  (3):\penalty0 331--352, 2005.

\bibitem[Borgwardt et~al.(2006)Borgwardt, Gretton, Rasch, Kriegel, Sch\"olkopf,
  and Smola]{BorgwardtGrettonRaschKriegelScholkopfSmola2006}
K.M. Borgwardt, A.~Gretton, M.J. Rasch, H.P. Kriegel, B.~Sch\"olkopf, and A.J.
  Smola.
\newblock {Integrating structured biological data by Kernel Maximum Mean
  Discrepancy}.
\newblock \emph{Bioinformatics}, 22\penalty0 (14), 2006.

\bibitem[Borwein and Lewis(1991)]{BorweinLewis1991}
JM~Borwein and AS~Lewis.
\newblock {Duality relationships for entropy-like minimization problems.}
\newblock \emph{SIAM Journal of Control and Optimization}, 29\penalty0
  (2):\penalty0 325--338, 1991.

\bibitem[Bousquet(2006)]{Bousquet06}
Olivier Bousquet.
\newblock Making machine learning more scientific.
\newblock http://ml.typepad.com/ machine\verb+_+learning\verb+_+thoughts/
  2006/06/making\verb+_+machine\verb+_+.html, 2006.

\bibitem[Bowker(2005)]{bowker2005mps}
G.C. Bowker.
\newblock \emph{{Memory practices in the sciences}}.
\newblock MIT Press, 2005.

\bibitem[Bowker and Star(1999)]{BowkerStar99}
Geoffrey~C. Bowker and Susan~Leigh Star.
\newblock \emph{Sorting Things Out: Classification and its Consequences}.
\newblock MIT Press, Cambridge, Mass., 1999.

\bibitem[Bregman(1967)]{Bregman1967}
L.M. Bregman.
\newblock {The relaxation method of finding the common point of convex sets and
  its application to the solution of problems in convex programming}.
\newblock \emph{USSR Computational Mathematics and Mathematical Physics},
  7\penalty0 (3):\penalty0 200--217, 1967.

\bibitem[Broniatowski(2004)]{Broniatowski:2004a}
Michel Broniatowski.
\newblock Minimum divergence in inference and testing.
\newblock In \emph{Atti della XLII Riunione Scientifica}, Universit{\`a} di
  Bari, 2004. Societ{\`a} Italiana Statistica.

\bibitem[Broniatowski and Keziou(2009)]{Broniatowski:2005}
Michel Broniatowski and Amor Keziou.
\newblock Parametric estimation and tests through divergences and the duality
  technique.
\newblock \emph{Journal of Multivariate Analysis}, 100\penalty0 (1):\penalty0
  16--36, 2009.

\bibitem[Brown and Low(1996)]{Brown:1996}
L.D. Brown and M.G. Low.
\newblock {Asymptotic Equivalence of Nonparametric Regression and White Noise}.
\newblock \emph{Annals of Statistics}, 24\penalty0 (6):\penalty0 2384--2398,
  1996.

\bibitem[Brown and Zhao(2003)]{brown2003dae}
L.D. Brown and L.~Zhao.
\newblock {Direct asymptotic equivalence of nonparametric regression and the
  infinite dimensional location problem}.
\newblock Preprint, http://www-stat.wharton.upenn.edu/$\sim$lzhao/papers/,
  2003.

\bibitem[Brown et~al.(2002)Brown, Cai, Low, and Zhang]{Brown:2002}
L.D. Brown, T.T. Cai, M.G. Low, and C.H. Zhang.
\newblock {Asymptotic equivalence theory for nonparametric regression with
  random design}.
\newblock \emph{Annals of Statistics}, 30\penalty0 (3):\penalty0 688--707,
  2002.

\bibitem[Brunk et~al.(1957)Brunk, Ewing, and Utz]{BrunkEwingUtz1957}
H.D. Brunk, G.M. Ewing, and W.R. Utz.
\newblock Minimizing integrals in certain classes of monotone functions.
\newblock \emph{Pacific Journal of Mathematics}, 7\penalty0 (1):\penalty0
  833--847, 1957.

\bibitem[Buja et~al.(2005)Buja, Stuetzle, and Shen]{Buja:2005}
Andreas Buja, Werner Stuetzle, and Yi~Shen.
\newblock {Loss Functions for Binary Class Probability Estimation and
  Classification: Structure and Applications}.
\newblock Technical report, University of Pennsylvania, November 2005.

\bibitem[Canu and Smola(2006)]{CanuSmola2006}
S.~Canu and A.~Smola.
\newblock {Kernel methods and the exponential family}.
\newblock \emph{Neurocomputing}, 69\penalty0 (7-9):\penalty0 714--720, 2006.

\bibitem[Carey et~al.(2007)Carey, Mar, and Gentleman]{CareyMarGentleman2007}
V.J. Carey, J.~Mar, and R.~Gentleman.
\newblock {MLInterfaces: Towards uniform Behaviour of Machine Learning Tools in
  R}.
\newblock Technical report, Harvard University, 2007.

\bibitem[Carter(2002)]{Carter2002}
A.V. Carter.
\newblock {Deficiency distance between multinomial and multivariate normal
  experiments}.
\newblock \emph{Annals of Statistics}, 30\penalty0 (3):\penalty0 708--730,
  2002.

\bibitem[Cepedello-Boiso(1998)]{Cepedello-Boiso1998}
M.~Cepedello-Boiso.
\newblock On regularization in superreflexive {B}anach spaces by infimal
  convolution formulas.
\newblock \emph{Studia Mathematica}, 129\penalty0 (3):\penalty0 265--284, 1998.

\bibitem[Chaudhuri and Loh(2002)]{Chaudhuri:2002}
P.~Chaudhuri and W.Y. Loh.
\newblock {Nonparametric estimation of conditional quantiles using quantile
  regression trees}.
\newblock \emph{Bernoulli}, 8\penalty0 (5):\penalty0 561--576, 2002.

\bibitem[Choquet(1953)]{Choquet:1953}
G.~Choquet.
\newblock {Theory of capacities}.
\newblock \emph{{Annales de l'institut Fourier}}, 5\penalty0 (54):\penalty0
  131--295, 1953.

\bibitem[Conover and Iman(1981)]{conover1981rtb}
W.J. Conover and R.L. Iman.
\newblock {Rank Transformations as a Bridge Between Parametric and
  Nonparametric Statistics}.
\newblock \emph{The American Statistician}, 35\penalty0 (3):\penalty0 124--129,
  1981.

\bibitem[Cortes and Mohri(2004)]{cortes2004aov}
C.~Cortes and M.~Mohri.
\newblock {AUC optimization vs. error rate minimization}.
\newblock \emph{Advances in Neural Information Processing Systems}, 16, 2004.

\bibitem[Cortes and Vapnik(1995)]{CortesVapnik1995}
C.~Cortes and V.~Vapnik.
\newblock {Support-vector networks}.
\newblock \emph{Machine Learning}, 20\penalty0 (3):\penalty0 273--297, 1995.

\bibitem[Cossock and Zhang(2006)]{CossockZhang2006}
David Cossock and Tong Zhang.
\newblock Subset ranking using regression.
\newblock In G.~Lugosi and H.U. Simon, editors, \emph{Proceedings of COLT
  2006}, LNAI 4005, pages 605--619, 2006.

\bibitem[Crisan and Doucet(2002)]{CrisanDoucet2002}
D.~Crisan and A.~Doucet.
\newblock {A survey of convergence results on particle filtering methods for
  practitioners}.
\newblock \emph{IEEE Transactions on Signal Processing,}, 50\penalty0
  (3):\penalty0 736--746, 2002.

\bibitem[Crisp and Burges(2000)]{CriBur00}
D.~J. Crisp and C.~J.~C. Burges.
\newblock A geometric interpretation of $\nu$-{SVM} classifiers.
\newblock In S.~A. Solla, T.~K. Leen, and K.-R. M{\"u}ller, editors,
  \emph{Advances in Neural Information Processing Systems 12}, pages 244--250.
  {MIT} Press, 2000.

\bibitem[Csisz{\'a}r(1967)]{Csiszar1967}
I.~Csisz{\'a}r.
\newblock Information-type measures of difference of probability distributions
  and indirect observations.
\newblock \emph{Studia Scientiarum Mathematicarum Hungarica}, 2:\penalty0
  299--318, 1967.

\bibitem[Csisz\'{a}r(1978)]{Csiszar1978}
I.~Csisz\'{a}r.
\newblock Information measures: A critical survey.
\newblock In \emph{Transactions of the Seventh Prague Conference on Information
  Theory, Statistical Decision Functions, Random Processes}, pages 73--86,
  Dordrecht, 1978. D. Riedel.

\bibitem[Csisz\'{a}r(1995)]{Csiszar:1995}
I.~Csisz\'{a}r.
\newblock Generalized projections for non-negative functions.
\newblock \emph{Acta Mathematica Hungarica}, 68\penalty0 (1):\penalty0
  161--186, March 1995.

\bibitem[Cuevas and Fraiman(1997)]{CuevasFraiman1997}
Antonio Cuevas and Ricardo Fraiman.
\newblock A plug-in approach to support estimation.
\newblock \emph{The Annals of Statistics}, 25\penalty0 (6):\penalty0
  2300--2312, 1997.

\bibitem[Dawid(2007)]{Dawid2007}
A.P. Dawid.
\newblock The geometry of proper scoring rules.
\newblock \emph{Annals of the Institute of Statistical Mathematics},
  59\penalty0 (1):\penalty0 77--93, March 2007.

\bibitem[DeGroot(1962)]{DeGroot1962}
M.H. DeGroot.
\newblock {Uncertainty, Information, and Sequential Experiments}.
\newblock \emph{The Annals of Mathematical Statistics}, 33\penalty0
  (2):\penalty0 404--419, 1962.

\bibitem[DeGroot(1970)]{DeGroot1970}
Morris~H. DeGroot.
\newblock \emph{Optimal Statistical Decisions}.
\newblock McGraw-Hill Book Company, New York, 1970.

\bibitem[Devroye et~al.(1996)Devroye, Gy{\"o}rfi, and Lugosi]{DevGyoLug96}
Luc Devroye, L{\'a}szl{\'o} Gy{\"o}rfi, and G{\'a}bor Lugosi.
\newblock \emph{A Probabilistic Approach to Pattern Recognition}.
\newblock Springer, New York, 1996.

\bibitem[Dieudonn{\'e}(1981)]{Dieudonne1981}
Jean Dieudonn{\'e}.
\newblock \emph{History of Functional Analysis}.
\newblock North-Holland Publishing Company, Amsterdam, 1981.

\bibitem[Domingos(1999)]{Domingos:1999}
P.~Domingos.
\newblock {MetaCost: a general method for making classifiers cost-sensitive}.
\newblock In \emph{{Proceedings of the Fifth ACM SIGKDD International
  Conference on Knowledge Discovery and Data Mining}}, pages 155--164, New
  York, NY, USA, 1999. ACM Press.

\bibitem[Domingos and Richardson(2004)]{Domingos:2004}
P.~Domingos and M.~Richardson.
\newblock {Markov logic: A unifying framework for statistical relational
  learning}.
\newblock In \emph{Proceedings of the ICML-2004 Workshop on Statistical
  Relational Learning and its Connections to Other Fields}, pages 49--54, 2004.

\bibitem[Dragomir et~al.(2001)Dragomir, Glu\v{s}\v{c}evi\'{c}, and
  Pearce]{DragomirGluvsvcevicPearce2001}
S.S. Dragomir, V.~Glu\v{s}\v{c}evi\'{c}, and C.E.M. Pearce.
\newblock Csisz\'{a}r $f$-divergence, {O}strowski's inequality and mutual
  information.
\newblock \emph{Nonlinear Analysis}, 47:\penalty0 2375--2386, 2001.

\bibitem[Drummond(2006)]{Drummond}
C.~Drummond.
\newblock Discriminative vs. generative classifiers for cost sensitive
  learning.
\newblock In \emph{Proceedings of the Nineteenth Canadian Conference on
  Artificial Intelligence}, LNAI 4013, pages 479--490, Quebec, Canada, 2006.

\bibitem[Drummond and Holte(2006)]{DrummondHolte2006}
C.~Drummond and R.C. Holte.
\newblock {Cost curves: An improved method for visualizing classifier
  performance}.
\newblock \emph{Machine Learning}, 65\penalty0 (1):\penalty0 95--130, 2006.

\bibitem[Dudley(2003)]{Dudley:2003}
R.~M. Dudley.
\newblock \emph{Real Analysis and Probability}.
\newblock Number~74 in Cambridge studies in advanced mathematics. Cambridge
  University Press, 2nd edition, 2003.

\bibitem[Eguchi(2005)]{Eguchi:2005}
Shinto Eguchi.
\newblock {Information Geometry and Statistical Pattern Recognition}.
\newblock To appear in {\em Sugaku Exposition}, American Mathematical Society,
  2005.

\bibitem[Eguchi and Copas(2001)]{EguchiCopas2001}
Shinto Eguchi and John Copas.
\newblock Recent developments in discriminant analysis from an information
  geometric point of view.
\newblock \emph{Journal of the Korean Statistical Society}, 30:\penalty0
  247--264, 2001.

\bibitem[Fawcett(2004)]{Fawcett:2004}
T.~Fawcett.
\newblock {{ROC} Graphs: Notes and Practical Considerations for Researchers}.
\newblock \emph{Machine Learning}, 31, 2004.

\bibitem[Fawcett(2006)]{Fawcett:2006}
Tom Fawcett.
\newblock Roc graphs with instance-varying costs.
\newblock \emph{Pattern Recognition Letters}, 27\penalty0 (8):\penalty0
  882--891, 2006.

\bibitem[Fayyad et~al.(1996)Fayyad, Piatetsky-Shapiro, and
  Smyth]{FayyadSmyth1996}
Usama Fayyad, Gregory Piatetsky-Shapiro, and Padhraic Smyth.
\newblock Knowledge discovery and data mining: Towards a unifying framework.
\newblock In \emph{Proceedings of the Second International Conference on
  Knowledge Discovery and Data Mining (KDD-96)}. AAAI Press, 1996.

\bibitem[Fedotov et~al.(2003)Fedotov, Harremo{\"e}s, and
  Tops{\o}e]{FedotovTopsoe2003}
A.A. Fedotov, P.~Harremo{\"e}s, and F.~Tops{\o}e.
\newblock Refinements of {P}insker's inequality.
\newblock \emph{IEEE Transactions on Information Theory}, 49\penalty0
  (6):\penalty0 1491--1498, June 2003.

\bibitem[Feldman and {\"{O}}sterreicher(1989)]{FeldmanOsterreicher1989}
D.~Feldman and F.~{\"{O}}sterreicher.
\newblock A note on $f$-divergences.
\newblock \emph{Studia Scientiarum Mathematicarum Hungarica}, 24:\penalty0
  191--200, 1989.

\bibitem[Flach(2003)]{Flach:2003}
P.A. Flach.
\newblock {The geometry of ROC space: understanding machine learning metrics
  through ROC isometrics}.
\newblock In \emph{Proceedings of the 20th International Conference on Machine
  Learning (ICML'03)}, pages 194--201, 2003.

\bibitem[Flach and Wu(2005)]{Flach:2005}
Peter~A. Flach and Shaomin Wu.
\newblock {Repairing Concavities in {ROC} Curves}.
\newblock In \emph{Proceedings of the International Joint Conferences on
  Artificial Intelligence (IJCAI-2005)}, page 702, 2005.

\bibitem[Floridi(2004)]{Floridi2004}
L.~Floridi.
\newblock {Open Problems in the Philosophy of Information}.
\newblock \emph{Metaphilosophy}, 35\penalty0 (4):\penalty0 554--582, 2004.

\bibitem[Friedlander(1982)]{Friedlander1982}
F.G. Friedlander.
\newblock \emph{Introduction to the Theory of Distributions}.
\newblock Cambridge University Press, 1982.

\bibitem[Garey and Johnson(1979)]{GareyJohnson1979}
M.R. Garey and D.S. Johnson.
\newblock \emph{{Computers and Intractability: A Guide to the Theory of
  NP-Completeness}}.
\newblock WH Freeman \& Co. New York, NY, USA, 1979.

\bibitem[Gershenson et~al.(2003)Gershenson, Prasad, and
  Zhang]{GershensonPrasadZhang2003}
J.~K. Gershenson, G.~J. Prasad, and Y.~Zhang.
\newblock Product modularity: definitions and benefits.
\newblock \emph{Journal of Engineering Design}, 14:\penalty0 295--313, 2003.

\bibitem[Ghallab et~al.(1998)Ghallab, Howe, Knoblock, McDermott, Ram, Veloso,
  Weld, and Wilkins]{GhallabHoweKnoblockMcDermottRamVelosoWeldWilkins1998}
M.~Ghallab, A.~Howe, C.~Knoblock, D.~McDermott, A.~Ram, M.~Veloso, D.~Weld, and
  D.~Wilkins.
\newblock {PDDL Specification}, 1998.
\newblock AIPS-98 Planning Committee.

\bibitem[Gibbs and Su(2002)]{GibbsSu2002}
Alison~L. Gibbs and Francis~Edward Su.
\newblock On choosing and bounding probability metrics.
\newblock \emph{International Statistical Review}, 70:\penalty0 419--435, 2002.

\bibitem[Gilardoni(2006{\natexlab{a}})]{Gilardoni:2006}
G.~L. Gilardoni.
\newblock {On Pinsker's Type Inequalities and Csisz{\'a}r's $f$-divergences.
  Part I: Second and Fourth-Order Inequalities}.
\newblock arXiv:cs/0603097v2, April 2006{\natexlab{a}}.

\bibitem[Gilardoni(2006{\natexlab{b}})]{Gilardoni2006}
Gustavo~L. Gilardoni.
\newblock On the minimum $f$-divergence for a given total variation.
\newblock \emph{Comptes Rendus Acad\'{e}mie des sciences, Paris, Series 1},
  343, 2006{\natexlab{b}}.

\bibitem[Gilardoni(2006{\natexlab{c}})]{Gilardoni2006a}
Gustavo~L. Gilardoni.
\newblock On the relationship between symmetric $f$-divergence and total
  variation and an improved {V}ajda's inequality.
\newblock {Preprint, Departamento de Estat\'{\i}stica, Universidade de
  Bras\'{\i}lia}, April 2006{\natexlab{c}}.
\newblock URL \url{http://www.unb.br/
  ie/est/docentes/curriculo/gustavo\_homepage/artigos/improved.pdf}.

\bibitem[Gneiting and Raftery(2007)]{GneitingRaftery2007}
Tilmann Gneiting and Adrian~E. Raftery.
\newblock Strictly proper scoring rules, prediction, and estimation.
\newblock \emph{Journal of the American Statistical Association}, 102\penalty0
  (477):\penalty0 359--378, March 2007.

\bibitem[Goel and DeGroot(1979)]{Goel:1979}
P.K. Goel and M.H. DeGroot.
\newblock {Comparison of Experiments and Information Measures}.
\newblock \emph{The Annals of Statistics}, 7\penalty0 (5):\penalty0 1066--1077,
  1979.

\bibitem[Goel and Ginebra(2003)]{GoelGinebra2003}
P.K. Goel and J.~Ginebra.
\newblock {When is one experiment `always better than' another?}
\newblock \emph{Journal of the Royal Statistical Society Series D (The
  Statistician)}, 52\penalty0 (4):\penalty0 515--537, 2003.

\bibitem[Goldberg(2001)]{goldberg2001ctu}
P.W. Goldberg.
\newblock {When Can Two Unsupervised Learners Achieve PAC Separation?}
\newblock \emph{Proceedings of the 14th Annual Conference on Computational
  Learning Theory (COLT)}, pages 303--319, 2001.

\bibitem[Goldman et~al.(1989)Goldman, Rivest, and
  Schapire]{GoldmanRivestSchapire1989}
S.A. Goldman, R.L. Rivest, and R.E. Schapire.
\newblock {Learning binary relations and total orders}.
\newblock In \emph{Proceedings of the 30th Annual Symposium on Foundations of
  Computer Science}, pages 46--51, 1989.

\bibitem[Golic(1987)]{Golic:1987}
Jovan~D.J. Golic.
\newblock On the relationship between the information measures and the {B}ayes
  probability of error.
\newblock \emph{IEEE Transactions on Information Theory}, 33\penalty0
  (5):\penalty0 681--693, 1987.

\bibitem[Gretton et~al.(2008)Gretton, Borgwardt, Rasch, Sch\"olkopf, and
  Smola]{GrettonBorgwardtRaschScholkopfSmola2008}
A.~Gretton, K.M. Borgwardt, M.~Rasch, B.~Sch\"olkopf, and A.J. Smola.
\newblock {A Kernel Method for the Two-Sample-Problem}.
\newblock \emph{Journal of Machine Learning Research}, 2008.
\newblock submitted.

\bibitem[Grossman et~al.(2002)Grossman, Hornick, and
  Meyer]{GrossmanHornickMeyer2002}
Robert~L. Grossman, Mark~F. Hornick, and Gregor Meyer.
\newblock Data mining standards initiatives.
\newblock \emph{Commun. ACM}, 45\penalty0 (8):\penalty0 59--61, 2002.
\newblock ISSN 0001-0782.

\bibitem[Gr{\"u}nwald(2007)]{Grunwald2007}
Peter~D. Gr{\"u}nwald.
\newblock \emph{The Minimum Description Length Principle}.
\newblock MIT Press, Cambridge, MA, 2007.

\bibitem[Gr{\"u}nwald and Dawid(2004)]{GrunwaldDawid2004}
Peter~D. Gr{\"u}nwald and A.~Phillip Dawid.
\newblock Game theory, maximum entropy, minimum discrepancy and robust bayesian
  decision theory.
\newblock \emph{The Annals of Statistics}, 32\penalty0 (4):\penalty0
  1367--1433, 2004.

\bibitem[Gutenbrunner(1990)]{Gutenbrunner:1990}
Cornelius Gutenbrunner.
\newblock {On applications of the representation of $f$-divergences as averaged
  minimal Bayesian risk}.
\newblock In \emph{Transactions of the 11th Prague Conference on Information
  Theory, Statistical Decision Functions and Random Processes}, pages 449--456,
  Dordrecht; Boston, 1990. Kluwer Academic Publishers.

\bibitem[Hand(2008)]{Hand2008}
David~J. Hand.
\newblock Comment on ``{T}he skill-plot: A graphical technique for evaluating
  continuous diagnostic tests''.
\newblock \emph{Biometrics}, 63:\penalty0 259, 2008.

\bibitem[Hand and Till(2001)]{Hand:2001}
David~J. Hand and Robert~J. Till.
\newblock A simple generalisation of the area under the {ROC} curve for
  multiple class classification problems.
\newblock \emph{Machine Learning}, 45\penalty0 (2):\penalty0 171--186, 2001.

\bibitem[Hand(1994)]{Hand1994}
D.J. Hand.
\newblock {Deconstructing Statistical Questions}.
\newblock \emph{Journal of the Royal Statistical Society. Series A (Statistics
  in Society)}, 157\penalty0 (3):\penalty0 317--356, 1994.

\bibitem[Hand and Vinciotti(2003)]{HandVinciotti2003}
D.J. Hand and V.~Vinciotti.
\newblock {Local Versus Global Models for Classification Problems: Fitting
  Models Where it Matters}.
\newblock \emph{The American Statistician}, 57\penalty0 (2):\penalty0 124--131,
  2003.

\bibitem[Hanley and McNeil(1982)]{HanNei82}
J.A. Hanley and B.J. McNeil.
\newblock The meaning and use of the area under a receiver operating
  characteristic {(ROC)} curve.
\newblock \emph{Radiology}, 143:\penalty0 29--36, 1982.

\bibitem[Harremo{\"e}s(1993)]{HarPhD93}
P.~Harremo{\"e}s.
\newblock \emph{Time and Conditional Independence}.
\newblock Ph.d. thesis, Roskilde University, 1993.
\newblock Original in Danish entitled Tid og Betinget Uafh{\ae}ngighed. English
  translation partially available at http://www.math.ku.dk/~moes/index.html .

\bibitem[Helmbold et~al.(1999)Helmbold, Kivinen, and
  Warmuth]{HelmboldKivinenWarmuth1999}
D.P. Helmbold, J.~Kivinen, and M.K. Warmuth.
\newblock Relative loss bounds for single neurons.
\newblock \emph{IEEE Transactions on Neural Networks}, 10:\penalty0 1291--1304,
  1999.

\bibitem[Herbrich(2002)]{Herbrich2002}
Ralf Herbrich.
\newblock \emph{Learning Kernel Classifiers}.
\newblock MIT Press, Cambridge MA, 2002.

\bibitem[Heyer(1982)]{Heyer1982}
H.~Heyer.
\newblock \emph{{Theory of Statistical Experiments}}.
\newblock Springer, 1982.

\bibitem[Hiriart-Urruty and Mart\'{\i}nez-Legaz(2007)]{Hiriart-Urruty2007}
J.-B. Hiriart-Urruty and J.-E. Mart\'{\i}nez-Legaz.
\newblock Convex solutions of a functional equation arising in information
  theory.
\newblock \emph{Journal of Mathematical Analysis and Applications},
  328:\penalty0 1309--1320, 2007.

\bibitem[Hiriart-Urruty and
  Lemar\'{e}chal(1993{\natexlab{a}})]{Hiriart-Urruty1993}
Jean-Baptiste Hiriart-Urruty and Claude Lemar\'{e}chal.
\newblock \emph{Convex Analysis and Minimization Algorithms: Part II: Advanced
  Theory and Bundle Methods}.
\newblock Springer, Berlin, 1993{\natexlab{a}}.

\bibitem[Hiriart-Urruty and
  Lemar\'{e}chal(1993{\natexlab{b}})]{Hiriart-UrrutyLemarechal1993}
Jean-Baptiste Hiriart-Urruty and Claude Lemar\'{e}chal.
\newblock \emph{Convex Analysis and Minimization Algorithms: Part I:
  Fundamentals}.
\newblock Springer, Berlin, 1993{\natexlab{b}}.

\bibitem[Hiriart-Urruty and Lemar{\'e}chal(2001)]{hiriarturruty2001fca}
Jean-Baptiste Hiriart-Urruty and Claude Lemar{\'e}chal.
\newblock \emph{{Fundamentals of Convex Analysis}}.
\newblock Springer, Berlin, 2001.

\bibitem[James(1909)]{James1909}
William James.
\newblock \emph{A Pluralistic Universe}.
\newblock 1909.
\newblock Hibbert Lectures at Manchester College on the Present Situation in
  Philosophy.

\bibitem[Jenssen(2005{\natexlab{a}})]{Jenssen:2005}
R.~Jenssen.
\newblock \emph{{An Information Theoretic Approach to Machine Learning}}.
\newblock PhD thesis, Department of Physics, University of Troms{\o}, Norway,
  May 2005{\natexlab{a}}.

\bibitem[Jenssen et~al.(2004)Jenssen, Erdogmus, Principe, and
  Eltoft]{JenErdPriElt04}
R.~Jenssen, D.~Erdogmus, J.C. Principe, and T.~Eltoft.
\newblock Towards a unification of information theoretic learning and kernel
  methods.
\newblock In \emph{Proc. IEEE Workshop on Machine Learning for Signal
  Processing (MLSP2004)}, pages 93--102, 2004.

\bibitem[Jenssen et~al.(2006)Jenssen, Erdogmus, Pr\'{i}ncipe, and
  Eltoft]{JenErdPriElt06}
R.~Jenssen, D.~Erdogmus, J.C. Pr\'{i}ncipe, and T.~Eltoft.
\newblock Some equivalences between kernel methods and information theoretic
  methods.
\newblock \emph{Journal of VLSI Signal Processing}, 45\penalty0 (1):\penalty0
  49--65, 2006.
\newblock (Paper 3 of \cite{Jen05}).

\bibitem[Jenssen(2005{\natexlab{b}})]{Jen05}
Robert Jenssen.
\newblock An information theoretic approach to machine learning.
\newblock Doctor Scientiarum Thesis, Department of Physics, Faculty of Science,
  University of Troms\o, 2005{\natexlab{b}}.

\bibitem[Johnson(1982--1992; 2005--2007)]{Johnson1982+}
David~S. Johnson.
\newblock {NP-Completeness Columns}.
\newblock \emph{Journal of Algorithms; ACM Transactions on Algorithms}, 2--13;
  1--3, 1982--1992; 2005--2007.
\newblock http://www.research.att.com/$\sim$dsj/columns/.

\bibitem[Jordan(1999)]{Jordan1999}
M.I. Jordan.
\newblock \emph{{Learning in Graphical Models}}.
\newblock MIT Press, 1999.

\bibitem[Kailath(1967)]{Kailath:1967}
T.~Kailath.
\newblock {The Divergence and Bhattacharyya Distance Measures in Signal
  Selection}.
\newblock \emph{IEEE Transactions on Communications}, 15\penalty0 (1):\penalty0
  52--60, 1967.

\bibitem[Keziou(2003{\natexlab{a}})]{Keziou2003}
A.~Keziou.
\newblock Dual representations of $\phi$-divergences and applications.
\newblock \emph{Comptes Rendus Acad\'{e}mie des sciences, Paris, Series 1},
  336:\penalty0 857--862, 2003{\natexlab{a}}.

\bibitem[Keziou(2003{\natexlab{b}})]{Keziou2003a}
Amor Keziou.
\newblock \emph{Utilisation des Divergences entre Mesures en Statistique
  Inf\'{e}rentielle}.
\newblock PhD thesis, Universit\'{e} Paris 6, 2003{\natexlab{b}}.

\bibitem[Khosravifard et~al.(2007)Khosravifard, Fooladivanda, and
  Gulliver]{Khosravifard:2007}
Mohammadali Khosravifard, Dariush Fooladivanda, and T.~Aaron Gulliver.
\newblock {Confliction of the Convexity and Metric Properties in
  $f$-Divergences}.
\newblock \emph{IEICE Transactions on Fundamentals of Electronics,
  Communication and Computer Sciences}, E90-A\penalty0 (9):\penalty0
  1848--1853, 2007.

\bibitem[Kiefer(1977)]{Kiefer1977}
J.~Kiefer.
\newblock The foundations of statistics --- are there any?
\newblock \emph{Synthese}, 36:\penalty0 161--176, 1977.

\bibitem[Kiefer(1987)]{Kiefer1987}
Jack~Carl Kiefer.
\newblock \emph{Introduction to Statistical Inference}.
\newblock Springer-Verlag, New York, 1987.

\bibitem[Kivinen and Warmuth(2001)]{KivinenWarmuth2001}
Jyrki Kivinen and Manfred~K. Warmuth.
\newblock Relative loss bounds for multidimensional regression problems.
\newblock \emph{Machine Learning}, 45:\penalty0 301--329, 2001.

\bibitem[Kschischang et~al.(2001)Kschischang, Frey, and
  Loeliger]{KschischangFreyLoeliger2001}
F.R. Kschischang, B.J. Frey, and H.A. Loeliger.
\newblock {Factor graphs and the sum-product algorithm}.
\newblock \emph{IEEE Transactions on Information Theory}, 47\penalty0
  (2):\penalty0 498--519, 2001.

\bibitem[Kullback(1967)]{Kullback1967}
S.~Kullback.
\newblock Lower bound for discrimination information in terms of variation.
\newblock \emph{IEEE Transactions on Information Theory}, 13:\penalty0
  126--127, 1967.
\newblock Correction, volume 16, p. 652, September 1970.

\bibitem[Kumar and Chhina(2005)]{Kumar2005}
P.~Kumar and S.~Chhina.
\newblock A symmetric information divergence measure of the {C}sisz\'{a}r's
  $f$-divergence class and its bounds.
\newblock \emph{Computers and Mathematics with Applications}, 49:\penalty0
  575--588, 2005.

\bibitem[Lambert et~al.(2008)Lambert, Langford, Wortman, Chen, Reeves, Shoham,
  and Pennock]{Lambert:2008}
Nicolas Lambert, John Langford, Jennifer Wortman, Yiling Chen, Daniel Reeves,
  Yoav Shoham, and David Pennock.
\newblock Self-financed wagering mechanisms for forecasting.
\newblock In Lance Fortnow, John Riedl, and Tuomas Sandholm, editors,
  \emph{Proceedings of the ACM Conference on Electronic Commerce}, 2008.

\bibitem[Langford and Beygelzimer(2005)]{Langford:2005}
J.~Langford and A.~Beygelzimer.
\newblock {Sensitive error correcting output codes}.
\newblock In \emph{Proceedings of the 18th Annual Conference on Learning Theory
  (COLT-2005)}. Springer, 2005.

\bibitem[Langford et~al.(2006)Langford, Oliveira, and Zadrozny]{Langford:2006}
J.~Langford, R.~Oliveira, and B.~Zadrozny.
\newblock {Predicting Conditional Quantiles via Reduction to Classification}.
\newblock In \emph{Proceedings of the 22nd Conference in Uncertainty in
  Artificial Intelligence}. AUAI Press, 2006.

\bibitem[Langford(2007)]{Langford2007}
John Langford.
\newblock Alternative machine learning reductions definitions.
\newblock Machine Learning (Theory), http://hunch.net/?p=255, March 2007.

\bibitem[Langford(2006)]{Langford:2004}
John Langford.
\newblock Machine learning reductions tutorial.
\newblock Slides presented at the {\em Machine Learning Summer School}, July
  2006.

\bibitem[Langford and Zadrozny(2005)]{LangfordZadrozny2005}
John Langford and Bianca Zadrozny.
\newblock Estimating class membership probabilities using classifier learners.
\newblock In \emph{Proceedings of the Tenth International Workshop on
  Artificial Intelligence and Statistics (AISTAT'05)}, 2005.

\bibitem[Lasserre et~al.(2006)Lasserre, Bishop, and
  Minka]{LasserreCambridgeBishopMinka2006}
J.A. Lasserre, C.M. Bishop, and T.P. Minka.
\newblock {Principled Hybrids of Generative and Discriminative Models}.
\newblock In \emph{Proceedings of the 2006 IEEE Computer Society Conference on
  Computer Vision and Pattern Recognition -- Volume 1}, pages 87--94. IEEE
  Computer Society Washington, DC, USA, 2006.

\bibitem[LeCam(1964)]{LeCam1964}
L.~LeCam.
\newblock {Sufficiency and Approximate Sufficiency}.
\newblock \emph{The Annals of Mathematical Statistics}, 35\penalty0
  (4):\penalty0 1419--1455, 1964.

\bibitem[LeCam(1986)]{LeCam1986}
Lucien LeCam.
\newblock \emph{Asymptotic Methods in Statistical Decision Theory}.
\newblock Springer, 1986.

\bibitem[Li and Lin(2006)]{LiLin}
Ling Li and Hsuan-Tien Lin.
\newblock Ordinal regression by extended binary classification.
\newblock In \emph{Neural Information Processing Systems}, 2006.

\bibitem[Liese and Vajda(2006)]{LieVaj06}
F.~Liese and I.~Vajda.
\newblock On divergences and informations in statistics and information theory.
\newblock \emph{IEEE Transactions on Information Theory}, 52\penalty0
  (10):\penalty0 4394--4412, 2006.

\bibitem[Liese and Miescke(2008)]{Liese2008}
Friedrich Liese and Klaus-J. Miescke.
\newblock \emph{Statistical Decision Theory}.
\newblock Springer, New York, 2008.

\bibitem[Locke(1690)]{Locke1690}
John Locke.
\newblock \emph{An Essay Concerning Human Understanding}.
\newblock Thomas Basset, London, 1690.

\bibitem[Long and Servedio(2006)]{LongServedio2006}
P.M. Long and R.A. Servedio.
\newblock Discriminative learning can succeed where generative learning fails.
\newblock In G.~Lugosi and H.U. Simon, editors, \emph{COLT2006}, LNAI 4005,
  pages 319--334, Berlin, 2006. Springer-Verlag.

\bibitem[Long et~al.(2006)Long, Servedio, and Simon]{LongServedioSimon}
P.M. Long, R.A. Servedio, and H.U. Simon.
\newblock Discriminative learning can succeed where generative learning fails.
\newblock Preprint, correction to \cite{LongServedio2006}, November 2006.

\bibitem[Lutz and Musio(2005)]{LutzMusio2005}
R.~Lutz and M.~Musio.
\newblock {An Alternative Mathematical Foundation for Statistics}.
\newblock \emph{Acta Applicandae Mathematicae: An International Survey Journal
  on Applying Mathematics and Mathematical Applications}, 89\penalty0
  (1):\penalty0 217--249, 2005.

\bibitem[Lyell(1830)]{Lyell1830}
Charles Lyell.
\newblock \emph{{Principles of Geology}}.
\newblock John Murray, London, 1830.

\bibitem[Maxion and Roberts(2004)]{Maxion:2004}
R.~A. Maxion and R.~R. Roberts.
\newblock {Proper Use of {ROC} Curves in Intrusion/Anomaly Detection}.
\newblock Technical Report CS-TR-871, School of Computing Science, University
  of Newcastle upon Tyne, November 2004.

\bibitem[McCullagh and Nelder(1989)]{McCullaghNelder1989}
P.~McCullagh and J.A. Nelder.
\newblock \emph{{Generalized Linear Models}}.
\newblock Chapman \& Hall/CRC, 1989.

\bibitem[McDermott and Katagiri(2002)]{McDermottKatagiri02}
Erik McDermott and Shigeru Katagiri.
\newblock A {P}arzen window based derivation of minimum classification error
  from the theoretical {B}ayes classification risk.
\newblock In \emph{Proceedings of International Conference on Spoken Language
  Processing}, pages 2465--2468, 2002.

\bibitem[McDermott and Katagiri(2003)]{McDermottKatagiri03}
Erik McDermott and Shigeru Katagiri.
\newblock A new formalization of minimum classification error using a {P}arzen
  estimate of classification chance.
\newblock In \emph{Proceedings of the International Conference on Acoustics,
  Speech and Signal Processing}, 2003.
\newblock Paper 4406.

\bibitem[Minka(2005)]{Minka:2005}
T.~Minka.
\newblock {Discriminative models, not discriminative training}.
\newblock Technical Report MSR-TR-2005-144, Microsoft Research, Cambridge,
  October 2005.

\bibitem[Morse and Sacksteder(1966)]{morse1966si}
N.~Morse and R.~Sacksteder.
\newblock {Statistical Isomorphism}.
\newblock \emph{The Annals of Mathematical Statistics}, 37\penalty0
  (1):\penalty0 203--214, 1966.

\bibitem[M{\"u}ller(1997{\natexlab{a}})]{Muller1997}
Alfred M{\"u}ller.
\newblock Integral probability metrics and their generating classes of
  functions.
\newblock \emph{Advances in Applied Probability}, 29:\penalty0 429--443,
  1997{\natexlab{a}}.

\bibitem[M{\"u}ller(1997{\natexlab{b}})]{Muller1997a}
Alfred M{\"u}ller.
\newblock Stochastic orders generated by integrals: A unified study.
\newblock \emph{Advances in Applied Probability}, 29:\penalty0 414--428,
  1997{\natexlab{b}}.

\bibitem[Nelson(1987)]{Nelson1987}
E.~Nelson.
\newblock \emph{{Radically Elementary Probability Theory}}.
\newblock Princeton University Press, 1987.

\bibitem[Neyman and Pearson(1933)]{Neyman:1933}
J.~Neyman and E.~S. Pearson.
\newblock On the problem of the most efficient tests of statistical hypotheses.
\newblock \emph{Philosophical Transactions of the Royal Society of London.
  Series A, Containing Papers of a Math. or Phys. Character (1896-1934)},
  231:\penalty0 289--337, 1933.
\newblock URL \url{http://dx.doi.org/10.1098/rsta.1933.0009}.

\bibitem[Ng and Jordan(2002)]{Ng:2002}
A.~Y. Ng and M.I. Jordan.
\newblock {On Discriminative vs. Generative classifiers: A comparison of
  logistic regression and na{\"\i}ve Bayes}.
\newblock In \emph{Advances in Neural Information Processing Systems (NIPS)},
  volume~15, 2002.

\bibitem[Nguyen et~al.(2005)Nguyen, Wainwright, and Jordan]{Nguyen:2005}
X.~Nguyen, M.J. Wainwright, and M.I. Jordan.
\newblock On distance measures, surrogate loss functions, and distributed
  detection.
\newblock Technical Report 695, Department of Statistics, University of
  California, Berkeley, October 2005.

\bibitem[Noble(2006)]{Noble2006}
Denis Noble.
\newblock \emph{The Music of Life: Biology Beyond the Genome}.
\newblock Oxford University Press, 2006.

\bibitem[{\"Osterreicher} and Feldman(1981)]{OsterreicherFeldman1981}
F.~{\"Osterreicher} and D.~Feldman.
\newblock Divergenzen von {W}ahrscheinlichkeitsverteilungen ---
  {I}ntegralgeometrisch {B}etrachtet.
\newblock \emph{Acta Mathematica Academaiae Scientiarum Hungarica}, 37\penalty0
  (4):\penalty0 329--337, 1981.

\bibitem[{\"O}sterreicher and Vajda(1993)]{OsterreicherVajda1993a}
F.~{\"O}sterreicher and I.~Vajda.
\newblock {Statistical information and discrimination}.
\newblock \emph{IEEE Transactions on Information Theory}, 39\penalty0
  (3):\penalty0 1036--1039, 1993.

\bibitem[{\"O}sterreicher(2003)]{Osterreicher2003}
Ferdinand {\"O}sterreicher.
\newblock $f$-divergences --- representation theorem and metrizability.
\newblock Technical Report, Institute of Mathematics, University of Salzburg,
  September 2003.

\bibitem[Palmer and Goldberg(2006)]{palmer2006pcb}
N.~Palmer and P.W. Goldberg.
\newblock {PAC Classification based on PAC Estimates of Label Class
  Distributions}.
\newblock \emph{Arxiv preprint cs.LG/0607047}, 2006.

\bibitem[Pavia et~al.(2006)Pavia, Xu, and Pr\'{i}ncipe]{PaiXuPri06}
Ant\'{o}nio~R.C. Pavia, Jian-Wu Xu, and Jos\'{e}~C. Pr\'{i}ncipe.
\newblock Kernel principal components are maximum entropy projections.
\newblock In \emph{ICA 2006. Springer Lecture Notes in Computer Science 3889},
  pages 846--853, 2006.

\bibitem[Phelps(2001)]{Phelps2001}
Robert~R. Phelps.
\newblock \emph{Lectures on Choquet's Theorem}, volume 1757 of \emph{Lecture
  Notes in Mathematics}.
\newblock Springer, 2nd edition, 2001.

\bibitem[Pinsker(1964)]{Pinsker1964}
M.S. Pinsker.
\newblock \emph{{Information and Information Stability of Random Variables and
  Processes}}.
\newblock Holden-Day, 1964.

\bibitem[Pollard(2000)]{Pollard2000}
David Pollard.
\newblock {Some Thoughts on LeCam's Statistical Decision Theory}.
\newblock Preprint, Statistics Department, Yale University,
  http://www.stat.yale.edu/$\sim$pollard/Papers/thoughts.pdf, May 2000.

\bibitem[Poor and Thomas(1977)]{PoorThomas1977}
H.~Vincent Poor and John~B. Thomas.
\newblock {Applications of Ali-Silvey Distance Measures in the Design of
  Generalized Quantizers for Binary Decision Systems}.
\newblock \emph{IEEE Transactions on Communications}, 25\penalty0 (9):\penalty0
  893--900, September 1977.

\bibitem[Principe et~al.(2000{\natexlab{a}})Principe, Xu, and
  Fisher]{PriXuFis99b}
J.C. Principe, D.~Xu, and J.~Fisher.
\newblock {Information theoretic learning}.
\newblock In Simon Haykin, editor, \emph{Unsupervised Adaptive Filtering:
  Volume 1, Blind Source Seperation}, pages 265--319, New York,
  2000{\natexlab{a}}. Wiley.

\bibitem[Principe et~al.(2000{\natexlab{b}})Principe, Xu, Zhao, and
  {III}]{PriXuZhaFis00}
Jose~C. Principe, Dongxin Xu, Qun Zhao, and John W.~Fisher {III}.
\newblock Learning from examples with information theoretic criteria.
\newblock \emph{The Journal of VLSI Signal Processing}, 26\penalty0
  (1--2):\penalty0 61--77, 2000{\natexlab{b}}.

\bibitem[Rachev(1991)]{Rachev1991}
S.T. Rachev.
\newblock \emph{{Probability metrics and the stability of stochastic models}}.
\newblock Wiley series in probability and mathematical statistics, Chichester,
  1991.

\bibitem[Raudys(2001)]{Raudys:2001}
Sarunas Raudys.
\newblock \emph{Statistical and Neural Classifiers: An integrated approach to
  design}, chapter Taxonomy of Pattern Classification Algorithms.
\newblock Springer, London, 2001.

\bibitem[Richardson and Domingos(2006)]{RichardsonDomingos2006}
M.~Richardson and P.~Domingos.
\newblock {Markov logic networks}.
\newblock \emph{Machine Learning}, 62\penalty0 (1):\penalty0 107--136, 2006.

\bibitem[Rissanen(2007)]{Rissanen2007}
Jorma Rissanen.
\newblock \emph{{Information and Complexity in Statistical Modelling}}.
\newblock Springer, New York, 2007.

\bibitem[Robert(1994)]{Robert1994}
Christian~P. Robert.
\newblock \emph{{The Bayesian Choice: A Decision-Theoretic Motivation}}.
\newblock Springer, New York, 1994.

\bibitem[Rockafellar(1970)]{Rockafellar:1970}
R.~T. Rockafellar.
\newblock \emph{Convex Analysis}.
\newblock Princeton Landmarks in Mathematics and Physics. Princeton University
  Press, 1970.

\bibitem[Rockafellar and Wets(2004)]{RockafellarWets2004}
R.~Tyrrell Rockafellar and Roger J-B. Wets.
\newblock \emph{Variational Analysis}.
\newblock Springer-Verlag, Berlin, 2004.

\bibitem[Rubinstein and Hastie(1997)]{RubinsteinHastie1997}
Y.D. Rubinstein and T.~Hastie.
\newblock {Discriminative vs Informative Learning}.
\newblock In \emph{Knowledge Discovery and Data Mining}, pages 49--53. AAAI
  Press, 1997.

\bibitem[Sacksteder(1967)]{Sacksteder1967}
R.~Sacksteder.
\newblock {A Note on Statistical Equivalence}.
\newblock \emph{The Annals of Mathematical Statistics}, 38\penalty0
  (3):\penalty0 787--794, 1967.

\bibitem[Savage(1971)]{Savage:1971}
Leonard~J. Savage.
\newblock Elicitation of personal probabilities and expectations.
\newblock \emph{Journal of the American Statistical Association}, 66\penalty0
  (336):\penalty0 783--801, 1971.

\bibitem[Sayed and Kailath(2001)]{SayedKailath2001}
A.H. Sayed and T.~Kailath.
\newblock {A survey of spectral factorization methods}.
\newblock \emph{Numerical Linear Algebra with Applications}, 8\penalty0
  (6-7):\penalty0 467--496, 2001.

\bibitem[Schervish(1989)]{Schervish1989}
M.J. Schervish.
\newblock {A General Method for Comparing Probability Assessors}.
\newblock \emph{The Annals of Statistics}, 17\penalty0 (4):\penalty0
  1856--1879, 1989.

\bibitem[{Sch\"olkopf} et~al.(2000){Sch\"olkopf}, Smola, Williamson, and
  Bartlett]{SchSmoWilBar00}
B.~{Sch\"olkopf}, A.~Smola, R.~C. Williamson, and P.~L. Bartlett.
\newblock New support vector algorithms.
\newblock \emph{Neural Computation}, 12:\penalty0 1207--1245, 2000.

\bibitem[Sch{\"o}lkopf and Smola(2002)]{ScholkopfSmola2002}
Bernhard Sch{\"o}lkopf and Alexander~J. Smola.
\newblock \emph{Learning with Kernels}.
\newblock MIT Press, Cambridge MA, 2002.

\bibitem[Scott and Davenport(2007)]{Scott:2007}
C.~Scott and M.~Davenport.
\newblock {Regression Level Set Estimation via Cost-Sensitive Classification}.
\newblock \emph{IEEE Transactions on Signal Processing}, 55\penalty0 (6 Part
  1):\penalty0 2752--2757, 2007.

\bibitem[Shafer and Vovk(2001)]{ShaferVovk2001}
G.~Shafer and V.~Vovk.
\newblock \emph{{Probability and Finance: It's Only a Game!}}
\newblock J. Wiley \& Sons, 2001.

\bibitem[Shannon(1948)]{Shannon1948}
C.E. Shannon.
\newblock A mathematical theory of communication.
\newblock \emph{Bell System Technical Journal}, 27:\penalty0 379--423;
  623--656, 1948.

\bibitem[Shen(2005)]{Shen2005}
Yi~Shen.
\newblock \emph{Loss Functions for Binary Classification and Class Probability
  Estimation}.
\newblock PhD thesis, Department of Statistics, University of Pennsylvania,
  2005.

\bibitem[Shuford et~al.(1966)Shuford, Albert, and
  Massengill]{ShufordAlbertMassengill1966}
E.~Shuford, A.~Albert, and H.E. Massengill.
\newblock Admissible probability measurement procedures.
\newblock \emph{Psychometrika}, 31\penalty0 (2):\penalty0 125--145, June 1966.

\bibitem[Soldatova and King(2006)]{SoldatovaKing2006}
L.N. Soldatova and R.D. King.
\newblock {An ontology of scientific experiments}.
\newblock \emph{Interface: The Journal of The Royal Society}, 3\penalty0
  (11):\penalty0 795--803, 2006.

\bibitem[Song et~al.(2008)Song, Reid, Smola, and
  Williamson]{SongReidSmolaWilliamson2008}
Le~Song, Mark~D. Reid, Alexander~J. Smola, and Robert~C. Williamson.
\newblock Discriminative estimation of $f$-divergence.
\newblock Submitted to AISTATS09, October 2008.

\bibitem[Steinwart et~al.(2005)Steinwart, Hush, and Scovel]{steinwart2005dld}
I.~Steinwart, D.~Hush, and C.~Scovel.
\newblock {Density level detection is classification}.
\newblock \emph{Advances in Neural Information Processing Systems}, 17, 2005.

\bibitem[Steinwart(2006)]{Steinwart2006}
Ingo Steinwart.
\newblock How to compare different loss functions and their risks.
\newblock Preprint, Modeling, Algorithms and Informatics Group, CCS-3, Los
  Alamos National Laboratory, September 2006.

\bibitem[Steinwart(2007)]{Steinwart:2007}
Ingo Steinwart.
\newblock How to compare different loss functions and their risks.
\newblock \emph{Constructive Approximation}, 26\penalty0 (2):\penalty0
  225--287, August 2007.

\bibitem[Steinwart and Christmann(2008)]{Steinwart2008}
Ingo Steinwart and Andreas Christmann.
\newblock \emph{Support Vector Machines}.
\newblock Springer, New York, 2008.

\bibitem[Strasser(2000)]{MazanecStrasser2000}
Helmut Strasser.
\newblock Reduction of complexity.
\newblock In Josef~A. Mazanec and Helmut Strasser, editors, \emph{A
  Nonparametric Approach to Perceptions-Based Market Segmentation:
  Foundations}, pages 99--140, Wien, 2000. Springer.

\bibitem[Strasser(1985)]{Strasser1985}
Helmut Strasser.
\newblock \emph{Mathematical Theory of Statistics}.
\newblock Walter de Gruyter, Berlin, 1985.

\bibitem[Taneja(2005{\natexlab{a}})]{Taneja2005}
I.J. Taneja.
\newblock Refinement inequalities among symmetric divergence measures.
\newblock arXiv:math/0501303v2, April 2005{\natexlab{a}}.

\bibitem[Taneja(2005{\natexlab{b}})]{Taneja2005a}
I.J. Taneja.
\newblock Bounds on non-symmetric divergence measures in terms of symmetric
  divergence measures.
\newblock arXiv:math.PR/0506256v1, 2005{\natexlab{b}}.

\bibitem[Tasche(2001)]{Tasche:2001}
D.~Tasche.
\newblock {Conditional Expectation as Quantile Derivative}.
\newblock Arxiv preprint math.PR/0104190, 2001.

\bibitem[Temple(1954)]{Temple:1954}
W.~B. Temple.
\newblock Stieltjes integral representation of convex functions.
\newblock \emph{Duke Mathematical Journal}, 21:\penalty0 527--531, 1954.

\bibitem[Tishby et~al.(2000)Tishby, Pereira, and
  Bialek]{TishbyPereiraBialek2000}
N.~Tishby, F.C. Pereira, and W.~Bialek.
\newblock {The information bottleneck method}.
\newblock Arxiv preprint physics/0004057, 2000.

\bibitem[Tong and Koller(2000)]{TongKoller2000}
S.~Tong and D.~Koller.
\newblock {Restricted Bayes optimal classifiers}.
\newblock In \emph{Proceedings of the 17th National Conference on Artificial
  Intelligence (AAAI)}, pages 658--664, 2000.

\bibitem[Tops{\o}e(2001)]{Topsoe2001}
F.~Tops{\o}e.
\newblock {Bounds for entropy and divergence for distributions over a
  two-element set}.
\newblock \emph{J. Ineq. Pure \& Appl. Math}, 2\penalty0 (2), 2001.

\bibitem[Tops{\o}e(2000)]{Topsoe:2000}
Flemming Tops{\o}e.
\newblock {Some inequalities for information divergence and related measures of
  discrimination}.
\newblock \emph{IEEE Transactions on Information Theory}, 46\penalty0
  (4):\penalty0 1602--1609, 2000.

\bibitem[Tops{\o}e(2006)]{Topsoe:2006}
Flemming Tops{\o}e.
\newblock Between truth and description.
\newblock Presented at Prague Stochastics 2006, June 2006.

\bibitem[Torgersen(1991)]{torgersen1991cse}
E.N. Torgersen.
\newblock \emph{{Comparison of Statistical Experiments}}.
\newblock Cambridge University Press, 1991.

\bibitem[Toussaint(1978)]{Toussaint1978}
G.T. Toussaint.
\newblock {Probability of error, expected divergence and the affinity of
  several distributions}.
\newblock \emph{IEEE Transactions on Systems, Man and Cybernetics}, 8:\penalty0
  482--485, 1978.

\bibitem[Turkle and Papert(1992)]{TurklePapert1992}
Sherry Turkle and Seymour Papert.
\newblock Epistemological pluralism and the revaluation of the concrete.
\newblock \emph{Journal of Mathematical Behavior}, 11\penalty0 (1):\penalty0
  3--33, March 1992.

\bibitem[Turney(2000)]{Turney2000}
P.~Turney.
\newblock {Types of cost in inductive concept learning}.
\newblock In \emph{Workshop on Cost-Sensitive Learning at the Seventeenth
  International Conference on Machine Learning}, pages 15--21, 2000.

\bibitem[Unterreiter et~al.(2000)Unterreiter, Arnold, Markowich, and
  Toscani]{ArnoldMarkowichToscaniUnterreiter}
Andreas Unterreiter, Anton Arnold, Peter Markowich, and Giuseppe Toscani.
\newblock On generalized {C}sisz\'{a}r-{K}ullback inequalities.
\newblock \emph{Monatshefte f{\"u}r Mathematik}, 131:\penalty0 235--253, 2000.

\bibitem[Vajda(1970)]{Vajda1970}
I.~Vajda.
\newblock Note on discrimination and variation.
\newblock \emph{IEEE Transactions on Information Theory}, 16:\penalty0
  771--773, 1970.

\bibitem[van~der Vaart(2002)]{vandervaart2002swl}
A.~van~der Vaart.
\newblock {The statistical work of Lucien Le Cam}.
\newblock \emph{Annals of Statistics}, 30\penalty0 (3):\penalty0 631--682,
  2002.

\bibitem[Vapnik(2006)]{vapnik2006edb}
V.~Vapnik.
\newblock \emph{{Estimation of Dependences Based on Empirical Data}}.
\newblock Springer-Verlag, New York, 2006.
\newblock 2nd edition.

\bibitem[Vapnik(1989)]{Vapnik1989}
V.~N. Vapnik.
\newblock Inductive principles of the search for empirical dependences (methods
  based on weak convergence of probability measures).
\newblock In \emph{COLT '89: Proceedings of the second annual workshop on
  Computational learning theory}, pages 3--21, San Francisco, CA, USA, 1989.
  Morgan Kaufmann Publishers Inc.

\bibitem[Vovk et~al.(2005)Vovk, Gammerman, and Shafer]{VovkGammermanShafer2005}
V.~Vovk, A.~Gammerman, and G.~Shafer.
\newblock \emph{{Algorithmic learning in a random world}}.
\newblock Springer, 2005.

\bibitem[Wald(1949)]{Wald1949}
Abraham Wald.
\newblock Statistical decision functions.
\newblock \emph{The Annals of Mathematical Statistics}, 20\penalty0
  (2):\penalty0 165--205, June 1949.

\bibitem[Wald(1950)]{Wald1950}
Abraham Wald.
\newblock \emph{Statistical Decision Functions}.
\newblock John Wiley \& Sons, New York, 1950.

\bibitem[Wettschereck and Muller(2001)]{wettschereck2001edm}
D.~Wettschereck and S.~Muller.
\newblock {Exchanging data mining models with the predictive modelling markup
  language}.
\newblock \emph{International Workshop on Integration and Collaboration Aspects
  of Data Mining, Decision Support and Meta-Learning}, 2001.

\bibitem[Wikipedia(2007)]{GrothendieckRelative}
Wikipedia.
\newblock Grothendieck's relative point of view.
\newblock Wikipedia, September 2007.
\newblock
  http://en.wikipedia.org/wiki/Grothendieck's\_relative\_point\_of\_view
  Accessed 17/09/2007.

\bibitem[Winkler et~al.(1990)Winkler, Mu{\~n}oz, Cervera, Bernardo,
  Blattenberger, Kadane, Lindley, Murphy, Oliver, and
  R{\'\i}os-Insua]{Winkler:1990}
R.~Winkler, Javier Mu{\~n}oz, Jos{\'e} Cervera, Jos{\'e} Bernardo, Gail
  Blattenberger, Joseph Kadane, Dennis Lindley, Allan Murphy, Robert Oliver,
  and David R{\'\i}os-Insua.
\newblock Scoring rules and the evaluation of probabilities.
\newblock \emph{TEST}, 5\penalty0 (1):\penalty0 1--60, March 1990.

\bibitem[Withers(1999)]{Withers1999}
Lang Withers.
\newblock Some inequalities relating different measures of divergence between
  two probability distributions.
\newblock \emph{IEEE Transactions on Information Theory}, 45\penalty0
  (5):\penalty0 1728--1735, 1999.

\bibitem[Worthen and Stark(2001)]{WorthenStark2001}
A.P. Worthen and W.E. Stark.
\newblock {Unified design of iterative receivers using factor graphs}.
\newblock \emph{IEEE Transactions on Information Theory}, 47\penalty0
  (2):\penalty0 843--849, 2001.

\bibitem[Xie and Priebe(2002)]{xie2002wgm}
J.~Xie and C.E. Priebe.
\newblock {A weighted generalization of the Mann-Whitney-Wilcoxon statistic}.
\newblock \emph{Journal of Statistical Planning and Inference}, 102\penalty0
  (2):\penalty0 441--466, 2002.

\bibitem[Xu et~al.(2005)Xu, Erdogmus, Jenssen, and Pr\'{i}ncipe]{XuErdJenPri05}
Jian-Wu Xu, Deniz Erdogmus, Robert Jenssen, and Jos\'{e}~C. Pr\'{i}ncipe.
\newblock An information-theoretic perspective to kernel independent component
  analysis.
\newblock In \emph{Proceedings of International Conference on Acoustics, Speech
  and Signal Processing (ICASSP'05)}, volume~5, pages 249--252, 2005.

\bibitem[Yang and Le~Cam(1999)]{YangCam1999}
G.L. Yang and L.~Le~Cam.
\newblock {A Conversation with Lucien Le Cam}.
\newblock \emph{Statistical Science}, 14\penalty0 (2):\penalty0 223--241, 1999.

\bibitem[Zadrozny et~al.(2003)Zadrozny, Langford, and
  Abe]{ZadroznyLangfordAbe2003}
B.~Zadrozny, J.~Langford, and N.~Abe.
\newblock {Cost-sensitive learning by cost-proportionate example weighting}.
\newblock In \emph{Third IEEE International Conference on Data Mining}, pages
  435--442, 2003.

\bibitem[Zhang(2004{\natexlab{a}})]{Zhang:2004}
J.~Zhang.
\newblock {Divergence Function, Duality, and Convex Analysis}.
\newblock \emph{Neural Computation}, 16\penalty0 (1):\penalty0 159--195,
  2004{\natexlab{a}}.

\bibitem[Zhang and Matsuzoe(2008)]{ZhangMatsuzoe2008}
Jun Zhang and Hiroshi Matsuzoe.
\newblock Dualistic riemannian manifold structure induced from convex
  functions.
\newblock \emph{Advances in Mechanics and Mathematics.}, 2008.
\newblock To appear.

\bibitem[Zhang(2004{\natexlab{b}})]{Zhang2004}
Tong Zhang.
\newblock Statistical behaviour and consistency of classification methods based
  on convex risk minimization.
\newblock \emph{Annals of Mathematical Statistics}, 32:\penalty0 56--134,
  2004{\natexlab{b}}.

\bibitem[Zolotarev(1984)]{Zolotarev1984}
V.M. Zolotarev.
\newblock Probability metrics.
\newblock \emph{Theory of Probability and its Applications}, 38\penalty0
  (2):\penalty0 278--302, 1984.

\end{thebibliography}
\end{document}